\documentclass[twoside]{article}

\usepackage[accepted]{aistats2026}

\usepackage{graphicx} 
\usepackage[utf8]{inputenc} 
\usepackage[T1]{fontenc}    
\usepackage{hyperref}       
\usepackage{url}            
\usepackage{booktabs}       
\usepackage{amsfonts}       
\usepackage{nicefrac}       
\usepackage{microtype}      
\usepackage{xcolor}         

\usepackage[utf8]{inputenc} 
\usepackage[T1]{fontenc} 
\usepackage{hyperref} 
\usepackage{amsmath}
\usepackage{amsthm}
\usepackage{amssymb}
\usepackage{bbm}
\usepackage{paralist}
\usepackage{enumitem}
\usepackage{float}
\usepackage{algorithm}
\usepackage{algorithmic}
\floatstyle{ruled}
\restylefloat{algorithm}
\usepackage{xcolor}
\usepackage{mathtools}
\usepackage{dsfont}
\usepackage{subcaption}
\usepackage{yfonts}
\usepackage{mathrsfs}
\usepackage{tabularx}
\usepackage{dsfont}
\usepackage{comment}
\usepackage{bold-extra}
\usepackage{bm}
\usepackage{tabularray}
\usepackage{adjustbox}
\usepackage[table]{xcolor}
\usepackage{subfiles} 

\theoremstyle{plain}
\newtheorem{theorem}{Theorem}[section]
\newtheorem{proposition}[theorem]{Proposition}
\newtheorem{lemma}[theorem]{Lemma}
\newtheorem{corollary}[theorem]{Corollary}
\theoremstyle{definition}

\newtheorem{assumption}[theorem]{Assumption}
\theoremstyle{remark}
\newtheorem{remark}[theorem]{Remark}

\newcommand{\Sph}{\mathcal{S}}

\newcommand{\bigO}{\mathcal{O}}

\newcommand{\NN}{\mathbb{N}}
\newcommand{\RR}{\mathbb{R}}
\newcommand{\PP}{\mathbb{P}}

\newcommand{\EE}{\mathbb{E}}

\newcommand{\Hcone}{\mathcal{H}}

\newcommand{\convs}{\mathrm{conv}_s}
\newcommand{\conv}{\mathrm{conv}}
\newcommand{\Bin}{\operatorname{Bin}}
\newcommand{\vht}{\textnormal{\texttt{GHC}}}
\newcommand{\vhc}{\textnormal{\texttt{CHC}}}

\newcommand{\extc}{\textnormal{\texttt{CC}}}

\newcommand{\indic}{\mathbbm{1}}

\usepackage{wrapfig}
\usepackage{multirow}
\usepackage{pdflscape} 
\usepackage{tabularray}
\usepackage{titlesec}
\usepackage{titletoc}
\usepackage[round]{natbib}

\usepackage[table]{xcolor}
\definecolor{lightgray}{gray}{0.9}

\begin{document}

\twocolumn[

\aistatstitle{Minimizing Human Intervention in Online Classification}

\aistatsauthor{%
  William R\'eveillard\textsuperscript{1} \And Vasileios Saketos\textsuperscript{1} \AND
  Alexandre Proutiere\textsuperscript{1} \And Richard Combes\textsuperscript{2}
}

\aistatsaddress{%
  \textsuperscript{1}KTH Royal Institute of Technology  \\ \texttt{\{wilrev,saketos,alepro\}@kth.se}\And
  \textsuperscript{2}Univ. Paris-Saclay, CNRS, CentraleSup\'elec \\  \texttt{richard.combes@centralesupelec.fr}
}]

\begin{abstract}
Training or fine-tuning large language model (LLM)–based systems often requires costly human feedback, yet there is limited understanding of how to minimize such intervention while maintaining strong error guarantees. We study this problem for LLM-based classification systems in an active learning framework: an agent sequentially labels $d$-dimensional query embeddings drawn i.i.d. from an unknown distribution by either calling a costly expert or guessing with no feedback, with the goal of minimizing regret relative to an oracle with free expert access. When the horizon $T$ is at least exponential in the embedding dimension $d$, the geometry of the class regions can be learned. In this regime, we propose the Conservative Hull-based Classifier (\vhc), which maintains convex hulls of expert-labeled queries and calls the expert when a query lands outside all known hulls. \vhc\ attains $\mathcal{O}(\log^d T)$ regret in $T$ and is minimax optimal for $d=1$. Otherwise, the geometry cannot be reliably learned in general. We show that for queries drawn from a subgaussian mixture and $T \le e^d$, a Center-based Classifier (\extc) achieves regret proportional to $N\log{N}$ where $N$ is the number of labels. To bridge these regimes, we introduce the Generalized Hull-based Classifier (\vht), a practical extension of \vhc\ that enables more aggressive guessing via a tunable parameter. Our approach is validated on real-world question-answering datasets using state-of-the-art text embedding models.
\end{abstract}

Question answering and customer support systems often start with no prior knowledge and must learn to respond to user queries over time. 
Initially, human experts are needed to answer incoming queries, and each answered query is stored to aid future responses. However, expert interventions are costly, and the system should minimize unnecessary queries to the expert. This raises a fundamental question: {\it How can we design a system that progressively builds competence in answering queries while minimizing reliance on expert interventions?} We address this question in a setting where each query is represented by an embedding, typically produced by a language model. We formalize this problem as an active learning task. At each round $t=1,2,\dots$, an agent observes a query embedding $q_t \in \RR^d$ drawn i.i.d. from an unknown distribution. There are $N$ latent answer labels, each corresponding to some unknown region in the embedding space. The agent must assign a label to the query: she may either predict a label immediately or ask an expert to obtain the true label (incurring a cost). Crucially, if the agent guesses on its own, no label is revealed. 
The goal is to devise algorithms that minimize the cumulative cost after $T$ queries, where the cost comes from wrongly guessing the label or calling the expert. Our contributions are as follows:

{\bf a. Algorithms} We introduce three algorithms: most importantly, the Conservative Hull-based Classifier (\vhc) and its generalization, \vht. Both exploit the geometry of class regions in the embedding space to guide decisions. \vhc\ is conservative and never makes incorrect predictions; \vht\ introduces a tunable threshold that allows for more frequent guessing. This threshold controls the trade-off between accuracy and expert use, and is particularly beneficial in high-dimensional spaces such as those defined by LLM embeddings. We additionally present a simpler Center-Based Classifier (\extc) that relies on estimates of the class regions centers to make guesses.

{\bf b. Theoretical Guarantees} We provide bounds on the regret of \vhc\ and \extc. Using tools from convex geometry, we show that \vhc\ achieves a regret scaling as $\bigO(\log^d{T})$ after $T$ queries, where $d$ is the embedding dimension. These bounds are distribution-free and robust. Moreover, we show that in dimension $d=1$, \vhc\ is minimax optimal.
To complement our distribution-free bounds, we additionally show that for well-separated subgaussian mixtures with equal weights, \extc attains regret proportional to $N\log{N}$ when $T \le e^d.$

{\bf c. Empirical Validation} We evaluate \vhc, \extc\ and \vht\ on both synthetic data and real-world question-answering tasks, using embeddings from various text embedding models. We demonstrate that leveraging higher-dimensional embeddings significantly reduces the overall regret. Additionally, we show that \vht\ outperforms \extc\ and baselines from the literature in all settings.

\section{RELATED WORK}\label{sec:relatedwork}


There is a growing literature on learning problems where an agent must  decide when to defer to an expert or abstain from providing an answer. Our work is related to two frameworks where this objective is central: learning to defer and active learning. We further review adjacent literature on clustering.  Additional related work on online classification, embedding-based retrieval and convex geometry is presented in Appendix~\ref{sec:additional_related_work}.

{\bf Learning to Defer} studies how to combine automated predictions with human oversight \citep{madras2018,mozannar2020}. It is framed as an offline problem where the learner has access to a training dataset of inputs, ground-truth labels, and expert predictions. The primary objective is to minimize a test-time loss that accounts for both prediction error and the cost of human intervention. Our online setting differs fundamentally: rather than leveraging a pre-existing dataset, the agent faces a cold-start problem and must decide at each round whether to call the expert, building competence over time. Consequently, we evaluate performance via cumulative regret rather than test-time risk.

{\bf Active learning.} Online learning has a rich history in theoretical computer science and statistical machine learning \citep{littlestone1988,bendavid2009,rakhlin2015}. Of particular relevance to our work is the subfield of active learning, where the learner decides when to query for a label to minimize annotation costs \citep{dasgupta2011,hanneke2014}. Much of the literature addresses the \emph{pool-based} setting, where the entire dataset is available from the start \citep{lewis1994,Houlsby2011,Kane2017,gentile2022}, and the goal is to minimize the number of queries required to learn a classifier with low error rate, ignoring prediction errors during training. In the \emph{stream-based} setting, data points are observed sequentially, but  the objective typically remains minimizing the error rate at test-time instead of the cumulative number of prediction errors \citep{Freund1997,sabato2018}. A foundational approach to this problem is the CAL algorithm \citep{Cohn1994}, which queries for the label as soon as there is some disagreement among classifiers consistent with previously queried labels, i.e., it queries only when the label cannot be inferred. Although known to be optimal in terms of the number of label requests among zero-error classifiers in the realizable setting \citep{hanneke2014}, it is generally computationally inefficient. Our \vhc\ algorithm (Section \ref{sec:d-dim}) can be understood as a computationally efficient approximation of CAL, when class membership is dictated by a convex partition of the input space. Our setting is most similar to online selective sampling \citep{hanneke2021}, where the object of study is the same as ours, i.e., the trade-off between the number of prediction errors and the number of label queries. Several theoretical works focus on linear binary classification with inputs either adversarially selected with a hard margin constraint \citep{cesa2004,Lu2016}, or stochastic under the Tsybakov noise condition \citep{Wang2016}, whereas we study the multiclass setting and derive bounds over a wide class of distributions by exploiting the geometric structure of the class regions. 

{\bf Online Selective Classification}. Finally, our setting is closely related to selective classification (also known as classification with a reject option), where the learner may opt to not make a prediction. Typically, data is available in batches \citep{Chow1970,bartlett2008,elyaniv2010}, and the goal is to balance the prediction error probability assuming a prediction is made with the probability of making a prediction, also known as the \emph{coverage}. This setting was shown to bear similarity with stream-based active learning in \citet{elyaniv2012}. Recent work has explored online variants where labels are revealed only when the learner \emph{predicts} \citep{cortes2018,bechavod2019} or \emph{abstains} \citep{gangrade2021}. The latter work is the most relevant to ours. They derive bounds on the cumulative number of incorrect guesses and label queries, but assuming a finite set of candidate classifiers, whereas ours is infinite (all classifiers partitioning the embedding space as convex regions). 

{\bf Clustering} refers to the task of grouping data points based on similarity, and is hence related to our problem. It is widely used to summarize and organize large datasets \citep{pandove2018}, typically under the assumption that all data is available upfront in a batch setting. Clustering in high dimensions is notoriously difficult, both statistically and computationally \citep{Steinbach2004}. To overcome this, most batch clustering theoretical works assume strong generative or separability conditions. A rich line of work aims at characterizing the optimal clustering error rate and devising algorithms reaching these limits. For isotropic Gaussian Mixture Models (GMMs) \citep{pearson1894,Renshaw1987}, it is shown in \citet{lu2016statistical} that the average number of misclustered points of Lloyd's algorithm \citep{lloyd1982} achieves the minimax error rate. More recent analyses extend to slightly more general settings, for example anisotropic GMMs \citep{chen2024} and mixture with sub-exponential tails \citep{dreveton2024}, for which optimal rates are obtained with a variant of Lloyd's algorithm.
All such results rely on strong distributional assumptions. In contrast, our framework is distribution-free, and we notably derive regret bounds that hold even in the uniform setting, where no natural clusters exist. In fact, the class regions in our setting need only be defined by convex polytopes of the embedding space, a structure we use to specify the expert’s labeling policy. Furthermore, our framework is online and incorporates a decision trade-off between prediction and expert consultation not found in these works.

Many modern applications—such as news feeds, social media, and interactive search—generate data as continuous streams. These scenarios have motivated the development of online clustering methods, which must operate in real time as data arrives. 
\citet{choromanska2011} propose an online clustering algorithm that maintains an ensemble of batch $k$-means algorithms as ``experts'' and re-weights them by approximating their current clustering cost. \citet{cohen2021} consider online $k$-means clustering: at each step the algorithm maintains $k$ centers and incurs loss equal to the squared distance from the new point to the nearest center, aiming to minimize regret against the best fixed clustering in hindsight. 
These methods focus on minimizing a distance-based cost rather than the classification/regret perspective we consider. Moreover, they do not capture the fundamental trade-off in our model between querying an expert and making an unsupported prediction.
\section{MODEL AND OBJECTIVES}\label{sec:models}

\subsection{Online Classification Problem}\label{sec:problem}

We consider an online optimization problem in which users submit queries to an agent who can consult an expert for assistance. The interactions between the users, the agent, and the expert are as follows. In each round $t\ge 1$:

{\bf a. Query} A user generates a query characterized by its embedding or representation $q_t\in {\cal E}\subset \mathbb{R}^d$ in an i.i.d. manner according to an unknown distribution $\mu$. The agent must address the query. We assume that there are $N$ possible labels $i \in [N]:=\{1,\ldots,N\}$. Throughout the paper, the embedding space is either the hypercube ${\cal E}={\cal I}^d:=[0,1]^d$ or the unit sphere ${\cal E}=\mathcal{S}^{d-1} := \{ x \in \mathbb{R}^d : \Vert x \Vert_2 = 1 \}$ (this is usually the case with text embedding models).

{\bf b. Agent's Guess with or without Expert Guidance} To label the query $q_t$, the agent has access to a dataset ${\cal D}_t$ consisting of query-label pairs that the expert labeled up to round $t$. She can then either (i) ask the expert, or (ii) make a guess. In the former case, the expert provides the correct label denoted by $i_t\in [N]$, and the pair $(q_t,i_t)$ is appended to ${\cal D}_t$, i.e., ${\cal D}_{t+1}\leftarrow {\cal D}_t\cup \{(q_t,i_t)\}$. In the latter case, nothing is added to ${\cal D}_t$ (the correct label is not observed). 

{\bf c. Partially Observed Reward} The reward received at the end of round $t$ is as follows. Let $\hat\imath_t$ denote the label provided by the agent. If the agent asked the expert, then $\hat\imath_t = i_t$ (she has the correct label), but the agent experiences a negative reward equal to $\alpha  \le 0$ corresponding to the cost of calling the expert. 
If she decides to guess without the expert guidance, then she does not observe the reward, and the latter is $\beta \ge 0$ if the label is correct ($\hat\imath_t= i_t$), and $\gamma \le \alpha $ otherwise.


A learning algorithm is defined as a sequence of decisions made by the agent over time. Formally, such an algorithm $\pi$ specifies the decision in round $t$ as a function of the current query $q_t$ and the past observations ${\cal D}_t$. Specifically, it is $\sigma(q_t, {\cal D}_t)$-measurable (where $\sigma(X)$ denotes the sigma-algebra generated by the random variable $X$).
We denote the random reward obtained by the learning algorithm $\pi$ in round $t$ as $r_{\pi}(t)$.

\subsection{Voronoi Regret}\label{sec:voronoi}

\begin{figure}
\begin{center}
\includegraphics[width=0.6\linewidth]{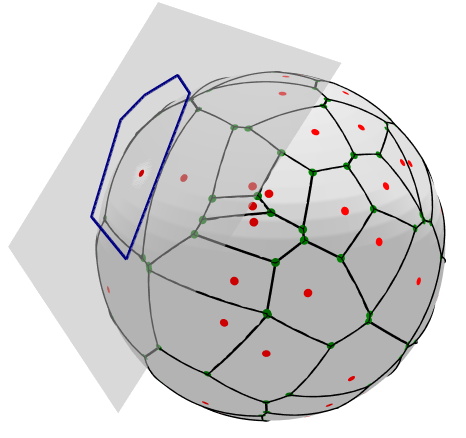}
\end{center}
\caption{Voronoi tessellation of $\mathcal{S}^2$. In blue, gnomonic projection of a cell onto the tangent plane at its seed (used in Section \ref{sec:regretVHC}).}
\label{fig:gnomonic_projection}
{\vspace{-0.8cm}}
\end{figure}
The performance of the learning algorithm used by the agent is measured by the expected cumulative reward over the first $T$ rounds. Alternatively, this performance can be evaluated in terms of regret, defined as the difference between the expected reward of the agent and that of an Oracle algorithm with knowledge of the expert labeling policy. Throughout the paper, we assume that the expert labeling policy is dictated by a partition ${\cal C}_1,\ldots, {\cal C}_N$ of ${\cal E}$, unknown to the agent. The cell ${\cal C}_i$ corresponds to label $i$: whenever $q_t\in {\cal C}_i$, the expert provides label $i$. 

A natural example of partition is constructed as follows. Assume that each label $i$ also has a representation $s_i\in {\cal E}$ that is referred to as its {\it seed}. $s_i$ may be interpreted as the typical query corresponding to the correct label $i$ ($i$ is the correct label to queries whose embeddings would be close to $s_i$). $\mathcal{E}$ can then be partitioned as the Voronoi tessellation generated by the seeds $s_1,\ldots,s_N$. This means that for any $t$, $i_t\in \arg\min_{i \in [N]} \| q_t - s_i \|_2$ and for any $i$, $\mathcal{C}_i = \{q \in \mathcal{E} : \| q-s_i\|_2 \le \| q-s_j\|_2, \;\forall j \in [N] \setminus \{i\}\}.$ This example of partition is justified as follows: When the query distribution $\mu$ is such that $\mathbb{P}(i_t=i)=1/N$ and has a conditional density $p(q_t |i_t=i)=f(\Vert s_i-q_t \Vert_2) $ where f is decreasing,
the expert's label corresponds to the Maximum Likelihood Estimator (MLE) for the correct label, assuming the $s_i$ are known. Furthermore, when $\mathcal{E} = \mathcal{S}^{d-1}$ is the unit sphere, the expert assigns to a query $q$ the index of the seed maximizing the cosine similarity $q^\top s_i$, as $q,s_i$ have unit norm (cosine similarity is a standard measure of semantic similarity in embedding-based Natural Language Processing (NLP) models \citep{reimers-gurevych-2019-sentence}). In this case, the boundaries of the Voronoi cells are arcs of great circles, and the resulting cells form \emph{spherical} convex polytopes, as illustrated in Fig.~\ref{fig:gnomonic_projection}.

Since the Oracle algorithm knows the correct labels, it earns a reward $\beta$ per round. We hence define the regret of an algorithm $\pi$ up to round $T$ as $$R_\pi(T) = \beta T - \sum_{t=1}^T \mathbb{E}[r_{\pi}(t)].$$ Our objective is to devise an algorithm that minimizes this regret. 

The i.i.d. query assumption introduced in Section \ref{sec:problem} is essential to make this objective meaningful: in an adversarial setting, any algorithm must incur linear regret. Indeed, in dimension one, an (even oblivious) adversary can binary-search the label boundary so that the agent either pays the expert nearly every round or suffers a constant error probability per round. Refer to Remark \ref{rem:adversarial} for a more detailed argument.


\subsection{Notation and Assumptions}\label{sec:assumptions}

For a finite set of points $M= \{ m_1,\dots,m_n\} \subset \mathbb{R}^d$, we denote their convex hull as $\conv(M):=\{\sum_{j=1}^{n}\alpha_jm_j: \alpha \ge 0, \sum_{j} \alpha_j =1\}$, and their \emph{spherical} convex hull as $\conv_s(M):=\Sph^{d-1} \cap \{\sum_{j=1}^n \alpha_j m_j : \alpha \ge 0\}$. A subset of $\mathbb{R}^d$ is said to be a convex polytope (resp. \emph{proper} spherical convex polytope) if it can be written as $\conv(M)$ (resp. $\convs(M)$) for some finite set $M \subset \RR^d$ (resp. $M \subset \Sph^{d-1}$).
We will use the unifying notation $\operatorname{hull}_{\mathcal{E}}(M)$ to denote $\conv(M)$ if $\mathcal{E}=\mathcal{I}^d$ (hypercube) and $\conv_s(M)$ if $\mathcal{E}=\Sph^{d-1}$ (unit sphere). We write $f(T)=\bigO(g(T))$ if there exists a constant $K$ that may depend on every model parameter besides $T$ ( including the dimension) such that $\vert f(T) \vert \le K\vert g(T) \vert$ for $T$ sufficiently large. Refer to Appendix \ref{sec:notation} for a full notation table.

Our theoretical results are derived using some of the following assumptions.

\begin{assumption}\label{as:embedding_space}
    The embedding space $\mathcal{E}$ is either the hypercube $\mathcal{I}^d$ or the unit sphere ${\cal S}^{d-1}.$ 
\end{assumption}
\begin{assumption}\label{as:cells}
{\it (i)} When $\mathcal{E}={\cal I}^d$, the cells $\{{\cal C}_i\}_{i\in [N]}$ are convex polytopes 
and form a partition of ${\cal E}$. When $\mathcal{E}={\cal S}^{d-1}$, the cells $\{{\cal C}_i\}_{i\in [N]}$ form a partition of $\mathcal{E}$ and are proper spherical convex polytopes, and each cell $\mathcal{C}_i$ is contained in some open half-sphere $\Sph_{e_i}^{+}=\{v \in \Sph^{d-1}, v^{\top}e_i > 0\}$ where the pole $e_i$ may vary with $i$. \\
{\it (ii)} In some contexts, we will further assume that the partition of ${\cal E}$ is a Voronoi tessellation with seeds $s_1,\dots,s_N \in \mathcal{E}.$
\end{assumption}

When ${\cal E} = \Sph^{d-1}$, the assumption that each cell lies within an open half-sphere is instrumental for applying the {\it gnomonic projection} (see Section 3.1 in \citet{Besau2014-rw}) in our regret analysis. Let $V$ denote the \emph{natural volume measure} on $\mathcal{E}$, namely the Lebesgue measure $\lambda$ on $\mathbb{R}^d$ if $\mathcal{E}=\mathcal{I}^d$, or the spherical Lebesgue measure $\omega$ on $\Sph^{d-1}$ if $\mathcal{E}=\Sph^{d-1}$. 

\begin{assumption}\label{as:density}
Depending on the context, we assume either: \\
{\it (i)} \emph{(Bounded density)} $\mu$ is absolutely continuous with respect to $V$ with bounded density
$f_{\mu}=\frac{d\mu}{dV}$ : there are constants $0 < c,C < \infty$ such that $c\le f_{\mu}(x)\le C$ for $V$\nobreakdash-a.e. $x\in\mathcal{E}$. \\
{\it (ii)} \emph{(Subgaussian mixture)}  $\mu$ is a mixture of $N$ $\sigma$-subgaussian distributions on $\mathbb{R}^d$ with component means $s_i$ and mixture weights $p_i$. Furthermore, the mixture centers are sufficiently separated: $\delta^2_{\min} \ge 80\sigma^2d$ where $\delta_{\min}:=\min_{i \ne j} \Vert s_i - s_j \Vert_2$.

\end{assumption}

The plausibility of the center separation assumption depends on the geometry of the embedding space, a point we detail in Appendix \ref{app:separation_assumption}.



\section{ALGORITHMS AND REGRET GUARANTEES}\label{sec:algorithm}

In this section, we present our algorithms together with their regret analyses. The first algorithm, \vhc\ (Conservative Hull-based Classifier), is designed for large horizons $T$. It infers the geometry of the cells and makes cautious predictions, calling the expert whenever the correct label cannot be deduced from past observations. For smaller horizons, when $T \le e^d$, we introduce \extc\ (Center-based Classifier), a simple algorithm tailored to subgaussian mixtures: it queries the expert until the component centers are accurately estimated, and then predicts using the label of the nearest estimated center. Finally, we propose \vht\ (Generalized Hull-based Classifier), a flexible extension of \vhc\ with a tunable threshold parameter that allows the learner to guess more frequently.

\subsection{Conservative Hull-Based Classifier (\vhc)}\label{sec:d-dim}

In round $t$, \vhc\ maintains, for each possible label $i$, an estimated set $\hat{\mathcal{C}}_{i,t}$ of queries whose correct label is surely $i$ based on past observations. This set is constructed based on the previous queries in $\mathcal{Q}_{i,t}:=\{q \in \mathcal{E}:(q,i) \in \mathcal{D}_t\}$ for which the agent asked the expert and the expert label was $i$. The definition of $\hat{\mathcal{C}}_{i,t}$ depends on the choice of ${\cal E}$.

(i) If $\mathcal{E}=\mathcal{I}^d$ (hypercube), $\hat{\mathcal{C}}_{i,t}=\conv(\mathcal{Q}_{i,t})$ 
is the convex hull of the queries in $\mathcal{C}_i$ that triggered an expert call. \\
(ii) If $\mathcal{E}=\Sph^{d-1}$ (unit sphere), to ensure that  $\hat{\mathcal{C}}_{i,t}$ is included in ${\cal E}$, we need to consider the spherical convex hull of $\mathcal{Q}_{i,t}$, $\hat{\mathcal{C}}_{i,t}=\conv_s(\mathcal{Q}_{i,t})$.\\
The \vhc~algorithm, whose pseudo-code is given 
in Algorithm \ref{alg:vhc}, returns label $i$ if $q_t\in \hat{\cal C}_{i,t}$. If $q_t\notin \cup_i \hat{\cal C}_{i,t}$, then the expert is called and provides the label. By construction, $\hat{\mathcal{C}}_{i,t}\subseteq \mathcal{C}_i$, so \vhc\ always returns the correct label. In Fig. \ref{fig:mixg}, we provide an example in dimension two of the convex hulls $\hat{\mathcal{C}}_{i,t}$ for $t=200$ when $\mu$ is a mixture of truncated Gaussian distributions.

\begin{algorithm}[H]
\caption{Conservative Hull-based Classifier (\vhc)}
\label{alg:vhc}
\begin{algorithmic}[1]
\STATE Initialize $\mathcal{Q}_{i,1}\leftarrow\emptyset$  for $i\in [N]$
\FOR{$t=1,\dots,T$}
\IF{$\exists\,i \in [N]: q_t\in \operatorname{hull}_{\mathcal{E}}(\mathcal{Q}_{i,t})$} 
\STATE $\hat{\imath}_t \leftarrow i$
\ELSE
\STATE call expert, and set $\hat{\imath}_t \leftarrow i_t$
\STATE $\mathcal{Q}_{{i_t,t+1}}\leftarrow \mathcal{Q}_{{i_t,t}}\cup\{q_t\}$
\ENDIF
\ENDFOR
\end{algorithmic}
\end{algorithm}

\begin{figure}[H]
\centering
\includegraphics[width=0.6\linewidth]{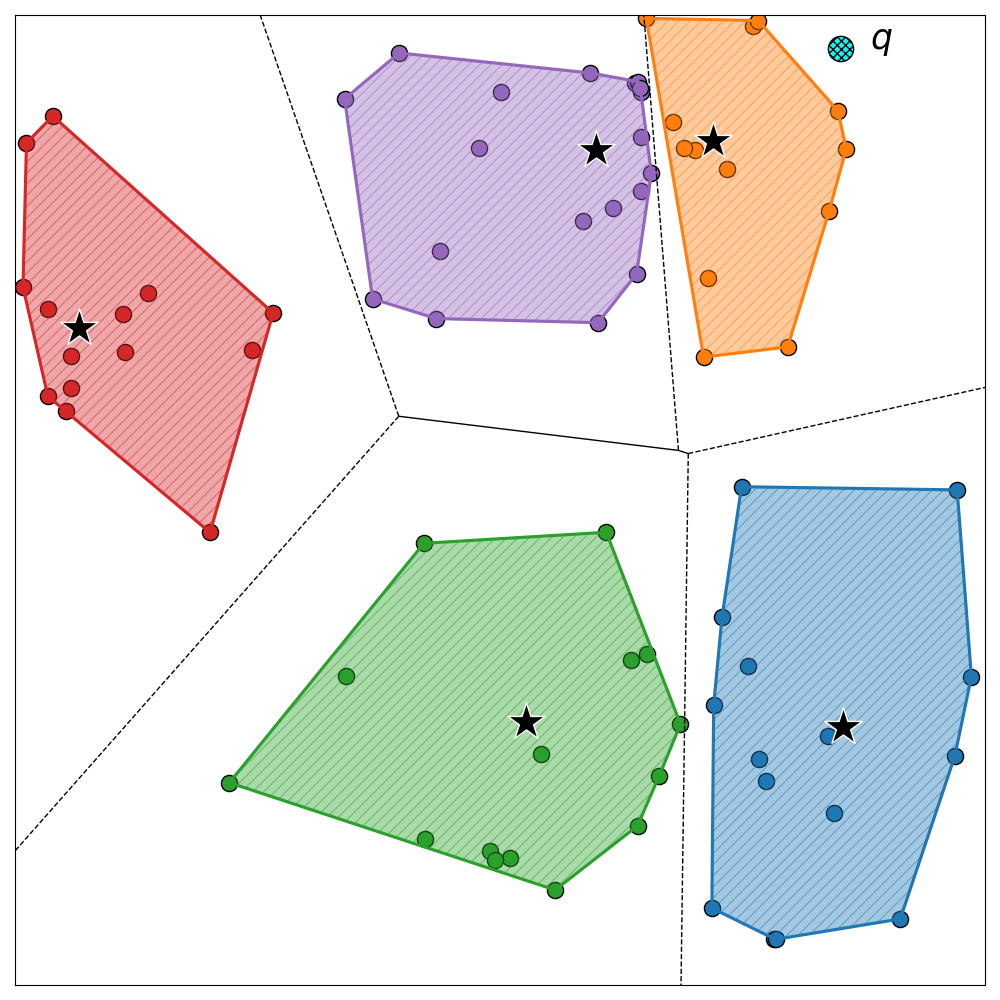}
\caption{Hulls $\hat{\cal C}_{i,t}$ of \vhc\ at $t=200$. $\mu$ is a mixture of truncated Gaussian distributions with equal weights and covariance matrix $0.01I$. Stars are the seeds, circles are the queries that required an expert call.}
\label{fig:mixg}
\end{figure}

\paragraph{Computational Complexity} Importantly, to implement \vhc, we do not need to compute convex hulls (this would be computationally prohibitive—algorithms such as QuickHull have a worst-case complexity that is exponential in $d/2$, see Theorem 3.2 in \citet{barber1996}). Indeed, we only need to be able to check if a query $q \in \mathcal{E}$ belongs to the convex hull of some finite set of queries ${\cal Q}_i=\{ p_1,\dots,p_n\}$ .
If $\mathcal{E}=\mathcal{I}^d$, this amounts to checking if there exists $\alpha \ge 0 \in \mathbb{R}^n$ such that $\sum_{i=1}^{n}\alpha_i =1$ and $q=\sum_{i=1}^{n}\alpha _i p_i$. If $\mathcal{E}=\Sph^{d-1}$, the same condition must be checked but without requiring $\sum_{i=1}^{n}\alpha_i =1$. Checking these conditions can be formulated as linear programs with $n$ variables and $n+d+1$ or $n+d$ constraints respectively. These programs can be solved in a time polynomial in the dimension and the number of queries by interior-point methods, see e.g. \citet{khachiyan1979polynomial,karmarkar1984new}. 


\paragraph{Regret Analysis of \vhc\ in Arbitrary Dimension}\label{sec:regretVHC}

To state the regret guarantees for \vhc, we need to define the number $F(P)$ of {\it flags} of a polytope $P$. A flag of $P$ is a sequence $(F_j)_{j=0}^{d-1}$ of faces\footnote{Faces of a polytope are specific planar surfaces on its boundary, e.g., for a cube, the $0$-dim faces are the vertices, the $1$-dim faces are the $12$ edges, and the $2$-dim faces are the $6$ squares forming the boundary.} of $P$ such that $\dim(F_j)=j$ and $F_0 \subset \dots \subset F_{d-1}$. For example, for $d=2$ and $P=[0,1]^2$, a flag is a sequence formed by a vertex of the square and an edge incident to that vertex, and $F(P)=4 \times 2 =8$. Refer to Section 2.1 in \citet{besau2018} for details. The following regret guarantees for \vhc\ hold for any convex polytope partition $\{\mathcal{C}_i\}_{i \in [N]}$ of the embedding space by the expert.

\begin{theorem}\label{thm:d-dimensional-regret}
Under Assumptions \ref{as:cells}(i) and \ref{as:density}(i): \\
(a) if $\mathcal{E}=\mathcal{I}^d$ and $d \ge 2$, then 
the regret of \vhc\ satisfies 
\begin{align*}R_{\vhc}(T) &\le (\beta-\alpha)\frac{C}{c}\frac{\sum_{i=1}^{N}F(\mathcal{C}_i)}{(d+1)^{d-1}d!}\log^{d}(T) \\
&+\bigO(\log^{d-1}(T)\log\log(T)).\end{align*}
(b) if $\mathcal{E}=\Sph^{d-1}$, $d \ge 3$ and each cell $\mathcal{C}_i$ is contained in an open halfsphere $\Sph_{e_i}^{+}$, 
then the regret of \vhc\ satisfies \begin{align*}R_{\vhc}(T) &\le (\beta-\alpha)\frac{KC}{c}\frac{\sum_{i=1}^{N}F(\mathcal{C}_i)}{d^{d-2}(d-1)!}\log^{d-1}(T) \\
&+\bigO(\log^{d-2}(T)\log\log(T)),\end{align*} 
where $K=\displaystyle \max_{i \in [N]}\left(\frac{\max_{y \in \mathcal{C}_i} y^{\top}e_i}{\min_{y \in \mathcal{C}_i}y^{\top}e_i}\right)^{d}$.  
\end{theorem}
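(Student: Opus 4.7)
The starting point is that \vhc\ is conservative: since $\hat{\mathcal{C}}_{i,t}\subseteq\mathcal{C}_i$, every label output without expert help is correct, so $R_{\vhc}(T)=(\beta-\alpha)\EE[E_T]$ with $E_T=\sum_{i=1}^{N}N_i(T)$ and $N_i(T):=|\mathcal{Q}_{i,T}|$. The goal reduces to bounding $\EE[N_i(T)]$. A simple induction shows $\operatorname{hull}_{\mathcal{E}}(\mathcal{Q}_{i,t})=\operatorname{hull}_{\mathcal{E}}(\{q_s:s<t,\,q_s\in\mathcal{C}_i\})$: any $\mathcal{C}_i$-query the algorithm does not forward to the expert is already in the current hull. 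Letting $M_{i,T}\sim\Bin(T,\mu(\mathcal{C}_i))$ count the rounds with $q_t\in\mathcal{C}_i$ and $X_1,\dots,X_{M_{i,T}}$ be the corresponding queries (conditionally i.i.d.\ from $\mu$ restricted and normalized to $\mathcal{C}_i$, with density $g$ satisfying $c/\mu(\mathcal{C}_i)\le g\le C/\mu(\mathcal{C}_i)$ by Assumption \ref{as:density}(i)), a query extends the running hull iff it is an extreme point of its predecessors together with itself. By exchangeability,
$$\EE[N_i(T)\mid M_{i,T}=m]=\sum_{j=1}^{m}\frac{\EE[V_j]}{j},$$
where $V_j$ denotes the number of vertices of $\operatorname{hull}_{\mathcal{E}}(X_1,\dots,X_j)$.

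The heart of the proof is a sharp bound on $\EE[V_j]$. For (a), the classical B\'ar\'any--Buchta asymptotic for i.i.d.\ uniform samples on a polytope $P\subset\mathcal{I}^d$ with $d\ge 2$ (see \citet{besau2018}) yields a leading term $F(P)\log^{d-1}(j)/\bigl((d+1)^{d-1}(d-1)!\bigr)$ plus an $O(\log^{d-2}(j)\log\log j)$ remainder. A thinning/coupling argument---write $g=(c/C)\,u+(1-c/C)\,h$ with $u$ uniform on $\mathcal{C}_i$, then compare with the random polytope on the $\Bin(n,c/C)$ uniform samples extracted---loses only a multiplicative factor $C/c$ in the leading coefficient. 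Plugging this bound into the vertex sum, using $\sum_{j=1}^m\log^{d-1}(j)/j=\log^d(m)/d+O(\log^{d-1}m)$, and the crude bound $M_{i,T}\le T$ gives
$$\EE[N_i(T)]\le\frac{C}{c}\cdot\frac{F(\mathcal{C}_i)}{(d+1)^{d-1}d!}\log^d(T)+O\!\left(\log^{d-1}(T)\log\log T\right),$$
and summing over $i$ produces (a).

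For (b), I transport the problem to $\RR^{d-1}$ via the gnomonic projection $\pi_i:\Sph_{e_i}^{+}\to T_{e_i}\Sph^{d-1}$ illustrated in Fig.~\ref{fig:gnomonic_projection}. Because $\pi_i$ sends great-circle arcs to Euclidean line segments, $\pi_i(\mathcal{C}_i)$ is a $(d-1)$-dimensional polytope with the \emph{same} flag count $F(\mathcal{C}_i)$, and spherical convex hulls are mapped to ordinary convex hulls, so extremality of a point is preserved. The Jacobian of $\pi_i^{-1}$ equals $(y^\top e_i)^{-d}$ and is uniformly bounded on $\mathcal{C}_i$ by Assumption \ref{as:cells}(i); this is exactly where the constant $K$ enters, forcing the pushforward density on $\pi_i(\mathcal{C}_i)$ to be within a factor $KC/c$ of uniform. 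Applying B\'ar\'any--Buchta in dimension $d-1\ge 2$ and repeating the summation gives the claimed $\log^{d-1}(T)$ rate with constant $F(\mathcal{C}_i)/(d^{d-2}(d-1)!)$.

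The main obstacle is the last step of the vertex bound: transferring the sharp B\'ar\'any--Buchta constant from uniform samples to densities within factor $C/c$ of uniform while losing at most a multiplicative $C/c$ in the leading coefficient; the standard thinning coupling delivers this but some care is needed to ensure it does not inflate the $O(\log^{d-2}\log\log)$ remainder (in (b), similarly for $\log^{d-3}\log\log$, combined with controlling $K$ uniformly over the cell). A secondary technicality is the randomness of $M_{i,T}$, which is routinely handled by binomial concentration combined with $M_{i,T}\le T$ on the atypical event.
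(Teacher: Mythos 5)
Your proposal is correct and rests on the same three pillars as the paper's proof (the B\'ar\'any--Buchta asymptotic for random polytopes in a polytope, a rejection-sampling/thinning step to pass from the uniform law to a density pinched between $c$ and $C$, and the gnomonic projection with Jacobian ratio $K$ for the spherical case), but your bookkeeping of the regret is genuinely different. The paper writes $R_{\vhc}(T)=(\beta-\alpha)\sum_{t=1}^T\sum_i\EE[\mu(\mathcal{C}_i\setminus\hat{\mathcal{C}}_{i,t})]$, indexes by the round $t$, and must therefore control $\EE[\varphi(n_i(t))]$ for $n_i(t)\sim\Bin(t,\mu(\mathcal{C}_i))$ via a dedicated binomial concentration lemma. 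You instead index by the $j$-th query falling in $\mathcal{C}_i$, observe (correctly) that $\operatorname{hull}_{\mathcal{E}}(\mathcal{Q}_{i,t})$ coincides with the hull of \emph{all} $\mathcal{C}_i$-queries seen so far, and identify an expert call with the event that $X_j$ is a vertex of $\operatorname{hull}(X_1,\dots,X_j)$, giving $\EE[N_i(T)\mid M_{i,T}=m]=\sum_{j\le m}\EE[V_j]/j$; since the summands are nonnegative and $M_{i,T}\le T$, the leading term follows with no concentration argument at all. That is a real simplification. The two routes are of course linked by Efron's identity, $\EE[V_j]/j=\EE[1-\nu(P_{j-1})]$, and this is also where the one delicate point of your write-up sits: the vertex count $V_j$ is \emph{not} monotone under adding points, so the thinning coupling (keep each $p_i$ with probability $c/f(p_i)$, so that the kept points are uniform) cannot be applied to $V_j$ directly --- you should first convert to the missing measure $\EE[1-\nu(P_{j-1})]$, which \emph{is} monotone under deleting points ($P_n^J\subseteq P_n$ implies $\nu(P\setminus P_n)\le\nu(P\setminus P_n^J)$), apply the thinning there to collect the factor $C/c$, and only then translate back; this is exactly how the paper's Corollary~\ref{cor:thinned_polytope_volume} proceeds. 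A last cosmetic remark: the Jacobian of the \emph{inverse} gnomonic map $g_e^{-1}$ is $(1+\|x\|^2)^{-d/2}=(y^\top e_i)^{+d}$, not $(y^\top e_i)^{-d}$; this does not affect the ratio $\bigl(\max_{y}y^\top e_i/\min_{y}y^\top e_i\bigr)^{d}$ and hence the constant $K$.
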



The proof of Theorem \ref{thm:d-dimensional-regret}, provided in Appendix \ref{subsec:proof_regret_bound}, begins by noting that since \vhc\ always returns the correct label, its regret is given by $R_{\vhc}(T) = (\beta - \alpha)\mathbb{E}[C_T]$  where $C_T$ denotes the total number of expert calls made up to round $T$. As the probability of calling the expert for a given query $q_t$ is precisely the probability that $q_t$ falls outside the current convex hull estimates at round $t$, we obtain the following expression for the regret:
\begin{align*}R_{\vhc}(T) &=(\beta-\alpha)\sum_{t=1}^{T} \sum_{i=1}^{N}\mathbb{E}[\mu(\mathcal{C}_i \setminus \hat{\mathcal{C}}_{i,t})].\end{align*}
To upper bound $\mathbb{E}[\mu(\mathcal{C}_i \setminus \hat{\mathcal{C}}_{i,t})]$, we build upon a result from \citet{barany1993}, which characterizes the expected volume of the convex hull of $n$ points sampled uniformly at random from a convex polytope in $\mathbb{R}^d$. We extend this result in two directions: (i) to distributions with bounded density via a rejection sampling argument; (ii) to the spherical setting where points lie on $\Sph^{d-1}$, by mapping spherical polytopes to Euclidean polytopes in $\mathbb{R}^{d-1}$ using the {gnomonic projection}. The dependence on $C/c$ (and $CK/c$ in the spherical case) arises from the rejection sampling step and is likely an artifact of the proof technique—we conjecture that this dependence could be removed with a tighter analysis.

We demonstrate in Appendix \ref{app:dist_independent_lower_bound} that our analysis is sharp by deriving a distribution-free asymptotic lower bound on the regret of \vhc\ that matches Theorem \ref{thm:d-dimensional-regret} when $\mu$ is uniform. This suggests that \vhc\ does not exploit favorable scenarios where $\mu$ is highly concentrated around the seed queries of the cells.




{\bf Voronoi Regret.} We finally derive explicit regret upper bounds when the cells form a Voronoi tessellation, as described in Section \ref{sec:voronoi}. The following corollary is obtained from Theorem \ref{thm:d-dimensional-regret} by controlling the total number of flags, $\sum_{i=1}^{N} F(\mathcal{C}_i)$. The proof, given in Appendix \ref{subsec:flag_bounds}, leverages an upper bound on the number of faces of a convex polytope from \citet{McMullen1970}, along with the observation that the number of facets of each cell $\mathcal{C}_i$ is at most $N - 1 + 2d$ when $\mathcal{E} = {\cal I}^d$ (the hypercube), and at most $N - 1$ when $\mathcal{E} = {\cal S}^{d-1}$ (the unit sphere). 

\begin{corollary}\label{cor:explicit_regret_bound}
Under Assumptions \ref{as:cells}(ii) and \ref{as:density}(i):\\
(a) if $\mathcal{E}=\mathcal{I}^d$ and $d \ge 2$, then the regret of \vhc\ satisfies
\begin{align*}
R_{\vhc}(T) &\le \frac{8(\beta-\alpha)CN}{3c(d+1)^{d-1}}\left(\frac{2e(N+2d)}{d-1}\right)^{d/2}\log^{d}(T) \\
&+\bigO(\log^{d-1}(T)\log\log(T)).
\end{align*}
(b) if $\mathcal{E}=\Sph^{d-1}$, $d \ge 3$ and each cell $\mathcal{C}_i$ is contained in an open half-sphere $\Sph_{e_i}^{+}$,  then the regret of \vhc\ satisfies 
\begin{align*}R_{\vhc}(T) &\le \frac{4(\beta-\alpha)KCN}{cd^{d-2}}\left(\frac{2eN}{d-2}\right)^{(d-1)/2}\log^{d-1}(T) \\
&+\bigO(\log^{d-2}(T)\log\log(T)).\end{align*}
\end{corollary}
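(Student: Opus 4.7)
The plan is to derive an explicit bound on $\sum_{i=1}^{N} F(\mathcal{C}_i)$ and substitute it into Theorem~\ref{thm:d-dimensional-regret}. I will first show that each Voronoi cell has at most $f_{\max}$ facets, where $f_{\max}=N-1+2d$ if $\mathcal{E}=\mathcal{I}^d$ and $f_{\max}=N-1$ if $\mathcal{E}=\Sph^{d-1}$, and then bound its flag count using McMullen's Upper Bound Theorem \citep{McMullen1970}.

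For the facet count, I use the Voronoi assumption directly. The cell $\mathcal{C}_i$ in $\mathcal{I}^d$ equals the intersection of the $N-1$ bisector half-spaces $\{q:\|q-s_i\|_2\le \|q-s_j\|_2\}$ with the $2d$ facets of the hypercube, giving at most $N-1+2d$ facets. On the sphere the bisectors become great hyperspheres $\{q:q^{\top}s_i\ge q^{\top}s_j\}$ and, since $\Sph^{d-1}$ has no boundary, the count drops to $N-1$. In the spherical setting I will then gnomonically project each $\mathcal{C}_i$ (as already done in the proof of Theorem~\ref{thm:d-dimensional-regret}) to a Euclidean $(d-1)$-polytope with the same face lattice, reducing the flag-counting problem to the Euclidean setting in dimension $d-1$.

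For the flag count, I note that every flag of a $d$-polytope $P$ is obtained by choosing a vertex $F_0$ and then a nested chain of incident faces. When $P$ is simple (the generic Voronoi case: every vertex incident to exactly $d$ facets), any $k$ edges at a vertex span a unique $k$-face, yielding $F(P)=d!\,f_0(P)$, and a comparable bound holds in general up to a bounded multiplicative constant. McMullen's Upper Bound Theorem gives $f_0(P)\le \binom{f_{d-1}(P)-\lceil d/2\rceil}{\lfloor d/2\rfloor}+\binom{f_{d-1}(P)-1-\lceil (d-1)/2\rceil}{\lfloor (d-1)/2\rfloor}$. Combining this with the elementary estimate $\binom{n}{k}\le (en/k)^k$ and the facet bound from Step~1 yields $F(\mathcal{C}_i)\le \frac{8d!}{3}\bigl(2e(N+2d)/(d-1)\bigr)^{d/2}$ in the hypercube case and the analogous $(d-1)$-dimensional inequality on the sphere.

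Summing over the $N$ cells and substituting into parts (a) and (b) of Theorem~\ref{thm:d-dimensional-regret}, the $d!$ and $(d-1)!$ in the denominators cancel against the factors coming from the flag count, leaving precisely the expressions stated in the corollary. The main obstacle is bookkeeping rather than conceptual: McMullen's bound must be tracked carefully to pin down the numerical constants $8/3$ and $4$ in the leading term, and the binomial simplifications must remain valid for every $d\ge 2$ (resp.\ $d\ge 3$) rather than only asymptotically in $d$, which forces a careful treatment of the floor and ceiling operations in the cyclic-polytope bound.
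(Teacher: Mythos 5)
Your overall architecture matches the paper's proof: bound the number of facets of each cell by $N-1+2d$ on the hypercube and $N-1$ on the sphere, convert facets to flags via McMullen's Upper Bound Theorem and the cyclic polytope, estimate the binomials with $\binom{n}{k}\le(en/k)^k$, and substitute into Theorem~\ref{thm:d-dimensional-regret}; the constants $8/3$ and $4$ come out of exactly the bookkeeping you anticipate (bounding $\tfrac{N+2d-1}{N+2d-1-\lfloor d/2\rfloor}\le 4/3$ and $\tfrac{N-1}{N-1-\lfloor(d-1)/2\rfloor}\le 2$), and the gnomonic-projection reduction of the spherical case to a Euclidean $(d-1)$-polytope with the same face lattice is also the paper's route. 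One small point you flag but should make explicit: on the sphere the open-halfsphere assumption forces each cell to have at least $d$ facets, hence $N\ge d+1$, which is what keeps the binomial coefficients well-defined.

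The genuine gap is in your passage from vertex counts to flag counts. The identity $F(P)=d!\,f_0(P)$ holds only for \emph{simple} polytopes, and Assumption~\ref{as:cells}(ii) imposes no general-position condition on the seeds, so the cells need not be simple (degenerate seed configurations, or bisectors meeting the hypercube's lower-dimensional faces, create vertices on more than $d$ facets). Your fallback claim that ``a comparable bound holds in general up to a bounded multiplicative constant'' is false: the flags through a vertex $v$ are in bijection with the flags of the vertex figure $P/v$, which can be arbitrarily numerous (already in $\mathbb{R}^3$, the apex of a pyramid over an $n$-gon carries $2n$ flags rather than $3!=6$), so $F(P)/f_0(P)$ is not bounded by any function of $d$ alone. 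And even if a constant-factor version held, it would not deliver the explicit leading constants the corollary asserts. The clean repair—and what the paper does—is to dualize the other way: pass to the polar polytope $P^{o}$, which has $k=f_{d-1}(P)$ vertices and satisfies $F(P^{o})=F(P)$, and invoke the upper bound theorem for \emph{flag} numbers, $F(P^{o})\le F(C_d(k))$; since the cyclic polytope is simplicial, $F(C_d(k))=d!\,f_{d-1}(C_d(k))$, and the exact facet-count formula for $C_d(k)$ then yields the bound with no simplicity hypothesis on $\mathcal{C}_i$.
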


It may seem counterintuitive that the constant preceding the leading term (e.g., $\log^{d} T$ when ${\cal E} = {\cal I}^d$) decreases with increasing $d$. This behavior suggests that the error term conceals a more intricate dependency on the dimension. Precisely characterizing this dependency is difficult and touches upon unresolved questions in convex geometry (see Appendix \ref{sec:wet_part} for further discussion). Nonetheless, our upper bounds on the regret of \vhc\ are most informative in regimes where $T$ exceeds a threshold that grows exponentially with $d$, or equivalently, when the dimension is not excessively large. This is consistent with results from \citet{chakraborti2021}, which show that the expected volume of a convex hull formed by random points remains negligible until the number of samples is exponential in $d$. In our setting, this implies that a significant reduction in expert queries only begins once this sample threshold is crossed. 


\paragraph{Minimax Optimality of \vhc\ in Dimension One}\label{sec:regretVHC1}

The following theorem, whose proof is given in Appendix \ref{subsec:proof_ub_dim_1}, provides a sharper regret bound in the case $d=1$, under a weaker condition of $\mu$. An \emph{exact} but unwieldy formula for $R_{\vhc}(T)$ is also given by equation \eqref{eq:exact_regret} in Appendix \ref{subsec:proof_ub_dim_1}. 
Whether this density-free bound can be generalized to dimension $d \ge 2$ is unclear, and it is discussed in Appendix \ref{app:dim_1_vs_dim_d}.

\begin{theorem}\label{thm:1-dimensional-regret-improved}
Assume that\footnote{The theorem holds if ${\cal E}$ is an arbitrary interval of $\mathbb{R}$.} ${\cal E}=[0,1]$ and that $\mu$ has no atoms. Then for all $T \ge 1,$ the regret of \vhc\ satisfies $$R_{\vhc}(T) \le 2(\beta-\alpha)N\log\left(T+1\right).$$
\end{theorem}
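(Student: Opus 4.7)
The plan is to reduce the regret to an expected count of expert calls, compute this count cell by cell using a one-dimensional exchangeability argument, and finish with a logarithmic bound that is aggregated across cells via Jensen.

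First, because $\hat{\mathcal{C}}_{i,t}\subseteq\mathcal{C}_i$ by construction, \vhc\ never produces an incorrect label, so every round yields reward $\beta$ (free correct guess) or $\alpha$ (expert call). Hence $R_{\vhc}(T)=(\beta-\alpha)\EE[C_T]$, where $C_T$ is the total number of expert calls.

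Second, in one dimension every cell $\mathcal{C}_i$ is an interval, and the estimated hull $\hat{\mathcal{C}}_{i,t}=[\min\mathcal{Q}_{i,t},\max\mathcal{Q}_{i,t}]$ is itself an interval. Hence a query $q_t\in\mathcal{C}_i$ triggers an expert call if and only if $q_t$ is a new extremum among the queries that have previously landed in $\mathcal{C}_i$. Let $K_i$ count expert calls attributed to $\mathcal{C}_i$ and let $N_i(T)\sim\mathrm{Bin}(T,p_i)$ with $p_i:=\mu(\mathcal{C}_i)$. Conditionally on $N_i(T)=n$, the queries in $\mathcal{C}_i$ are i.i.d.\ from $\mu|_{\mathcal{C}_i}$, and atomlessness rules out ties almost surely, so by exchangeability the $k$-th such query is the min or max of the first $k$ with probability $\min(1,2/k)$. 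Summing over $k$ gives $\EE[K_i\mid N_i(T)=n]=2H_n-1$ for $n\ge 1$ (and $0$ for $n=0$), where $H_n=\sum_{k=1}^n 1/k$.

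Third, one unconditions using the identity $H_n=\int_0^1(1-u^n)/(1-u)\,du$ together with a change of variables, which yields the exact closed form
\[ \EE[K_i] \;=\; 2\int_0^{p_i}\frac{1-(1-x)^T}{x}\,dx \;-\; \bigl(1-(1-p_i)^T\bigr). \]
Bernoulli's inequality gives $(1-(1-x)^T)/x\le\min(T,1/x)$, so splitting the integral at $x=1/T$ produces a per-cell bound essentially of the form $\log(1+p_iT)$. Summing over $i$, the concavity of $x\mapsto\log(1+x)$ with the normalization $\sum_i p_i=1$ yields, by Jensen, $\sum_i\log(1+p_iT)\le N\log(1+T/N)\le N\log(T+1)$, producing $\EE[C_T]\le 2N\log(T+1)$ and hence the claimed regret bound.

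The main obstacle is extracting the constant exactly: since $H_n-\log(n+1)\to\gamma>0$, a naive bound of the form $H_n\le 1+\log n$ leaves an additive slack of order $N$ that would ruin the claim. The tight argument therefore relies on the exact integral representation of Step~3 and on the concavity of $p\mapsto\int_0^p(1-(1-x)^T)/x\,dx$, which combined with $\sum_i p_i=1$ is the mechanism that lets the logarithmic contributions aggregate into a single $N\log(T+1)$ rather than leaving a residual term proportional to $N$.
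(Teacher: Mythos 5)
Your Steps 1 and 2 are correct, and the exact formula in Step 3, $\EE[K_i]=2\int_0^{p_i}\frac{1-(1-x)^T}{x}\,dx-(1-(1-p_i)^T)$, checks out (it follows from $H_n=\int_0^1\frac{1-u^n}{1-u}\,du$ and the substitution $x=p_i(1-u)$). Your route is genuinely different from the paper's: the paper works round by round, writes the call probability at round $t$ as $\sum_i\EE[\mu(\mathcal{C}_i\setminus\hat{\mathcal{C}}_{i,t})]$, invokes the order-statistics identity $\EE[\nu(P\setminus P_n)]=\tfrac{2}{n+1}$ (Proposition~\ref{prop:reduction-to-uniform}, proved via a CDF transform to the uniform case), and then applies $\EE[\tfrac{1}{X+1}]\le\tfrac{1}{(n+1)p}$ for $X\sim\Bin(n,p)$ so that the factors $\mu(\mathcal{C}_i)$ cancel, leaving at most $\tfrac{2N}{t+1}$ per round and $2N\log(T+1)$ after summing over $t$. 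You aggregate over time first (record counting within each cell) and over cells last; the two routes pass through the same underlying fact that the $k$-th point in a cell triggers a call with probability $\min(1,2/k)$.

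The gap is the final aggregation, which you assert but do not carry out --- and which your own exact formula shows cannot deliver the stated constant uniformly. Specialize to $N=1$ (so $p_1=1$): your formula gives $\EE[C_T]=2H_T-1$ exactly, while the target is $2\log(T+1)$; since $H_T-\log(T+1)\to\gamma\approx0.577>\tfrac{1}{2}$, the inequality $2H_T-1\le2\log(T+1)$ fails for every $T\ge6$ (already $3.900$ versus $3.892$ at $T=6$). Hence no concavity argument for $p\mapsto\int_0^p\frac{1-(1-x)^T}{x}\,dx$ can close the proof as stated: the extremal configuration of your optimization is precisely the case that violates the bound. (This also surfaces a discrepancy with the paper's own proof: there $\hat{\mathcal{C}}_{i,t}$ is treated as the hull of $\Bin(t,\mu(\mathcal{C}_i))$ points, although the algorithm builds it only from rounds $1,\dots,t-1$, i.e.\ from $\Bin(t-1,\mu(\mathcal{C}_i))$ points; with the corrected count the paper's chain of bounds yields $\sum_{t=1}^{T}\tfrac{2}{t}=2H_T$ per cell, in agreement with your exact computation, rather than $\sum_{t=1}^{T}\tfrac{2}{t+1}\le2\log(T+1)$.) To finish along your lines you would have to either relax the target, e.g.\ to $2N\bigl(1+\log(1+T/N)\bigr)$, or restrict to $N\ge2$ and exploit the constraint $\sum_i p_i=1$ quantitatively; as written, the ``mechanism'' invoked in your last paragraph is a claim, not a proof.
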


Finally, we present, in the following theorem proved in Appendix \ref{app:lower_bound}, minimax regret lower bounds (i.e., satisfied by {\it any} algorithm). These bounds match the regret guarantees for \vhc\ up to a universal constant, hence 
\vhc\ is minimax optimal in dimension one. 

\begin{theorem}[Lower bound on the minimax regret]\label{proposition:Lower bound on the minimax regret}
Assume that ${\cal E}=[0,1]$ and that $\mu$ is uniform on ${\cal E}$. Denote by $\theta = (s_1,...,s_N) \in [0,1]^{N}$ a set of $N$ query seeds in $[0,1]$. Then for all $T\ge 1,$ the minimax regret satisfies 
\begin{align*}
	\inf_{\pi} \max_{\theta \in [0,1]^N} R_{\pi}(T,\theta) 
	&\ge (\beta-\alpha)\frac{N-1}{64 \sqrt{2}} \log\left(\frac{T+1}{2}\right) \\
    &= \Omega( (\beta-\alpha)(N-1) \log{T}).
\end{align*}
\end{theorem}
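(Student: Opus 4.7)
The plan is to apply Yao's minimax principle and reduce the lower bound to a Bayes risk calculation under a prior supported on configurations with widely-separated, ``decoupled'' boundaries. Specifically, I pick a prior $\nu$ on $\theta = (s_1,\dots,s_N)$ such that the $N-1$ Voronoi boundaries $\tau_i := (s_i + s_{i+1})/2$ are independent and each $\tau_i$ is uniform on the disjoint interval $I_i := [i/N - 1/(4N),\, i/N + 1/(4N)]$. A consistent seed configuration exists: setting $s_1 := 1/(2N)$ and $s_{i+1} := 2\tau_i - s_i$ keeps all seeds sorted and inside $[0,1]$ for this choice of $I_i$. By Yao,
\[
\inf_\pi \max_{\theta \in [0,1]^N} R_\pi(T,\theta) \ \ge\ \inf_\pi \EE_{\theta \sim \nu}\bigl[R_\pi(T,\theta)\bigr].
\]

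The key structural observation is that, because the $I_i$ are pairwise disjoint, a query $q_t \in I_i$ gives information only about $\tau_i$: the expert label merely reveals whether $q_t < \tau_i$ or $q_t > \tau_i$, and the correctness of any guess on $q_t$ depends only on $\tau_i$. Since the $\tau_j$ are a priori independent, the posterior on $\tau_i$ coincides with the posterior given only past rounds in $I_i$. The Bayes-optimal policy therefore splits across sub-intervals, and writing $R^{(i)}_{\pi_i}$ for the expected regret accumulated on rounds with $q_t \in I_i$,
\[
\inf_\pi \EE_\nu\bigl[R_\pi(T,\theta)\bigr] \ =\ \sum_{i=1}^{N-1} \inf_{\pi_i} \EE\bigl[R^{(i)}_{\pi_i}\bigr].
\]
Each term on the right is a single-boundary Bayes problem on $I_i$ with expected number of relevant queries $T|I_i| = T/(2N)$.

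The core step is then to prove a single-boundary lower bound of the form $\Omega\bigl((\beta-\alpha)\log T'\bigr)$ on $T'$ expected queries. I would use a multi-scale Le Cam argument: for each dyadic scale $\ell = 1,\dots,\lfloor\log_2(T'/2)\rfloor$, compare the two alternatives $\tau$ and $\tau + 2^{-\ell}|I_i|$, observing that the observation distributions agree on every round \emph{except} those in which the algorithm calls the expert on a query lying in the $2^{-\ell}|I_i|$-ambiguity region; a standard Pinsker bound on the total variation distance then forces $\Omega(\beta-\alpha)$ regret at each scale. Summing over the $\log_2 T'$ scales yields the $\log T'$ factor, and the Pinsker constant is the origin of the $1/(64\sqrt{2})$ in the claimed bound. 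An equivalent heuristic is the information-theoretic one: each expert call reveals at most one bit about $\tau$, so after $k$ calls the posterior width is $\gtrsim e^{-k}$, and balancing the expert cost $(\beta-\alpha) k$ against the cumulative wrong-guess cost $(\beta-\gamma)(T'-k)e^{-k}/4$ yields $k^\star \asymp \log T'$ and total regret $\asymp (\beta-\alpha)\log T'$.

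Summing the $N-1$ per-boundary bounds produces $\inf_\pi \EE_\nu[R_\pi] \ge c(\beta-\alpha)(N-1)\log(T/N)$; for $T\ge 2$ this is of the claimed $\log((T+1)/2)$ order, after absorbing the $\log N$ term into the absolute constant. I expect the main obstacle to be the rigorous single-boundary bound: the ``ambiguity region'' at each scale is random and adapts to the algorithm's history, so the Le Cam/information-theoretic argument has to be applied conditionally on the current state, which requires careful martingale or stopping-time bookkeeping to sum per-scale contributions cleanly while tracking the explicit constant $1/(64\sqrt{2})$.
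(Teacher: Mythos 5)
Your high-level strategy --- pass to a Bayesian prior, decouple the $N-1$ boundaries, and prove a single-boundary $\Omega(\log T)$ bound --- is the same as the paper's, but two of your steps have genuine problems. First, the prior construction does not work as stated. If you force the boundaries $\tau_i=(s_i+s_{i+1})/2$ to be \emph{independent} and uniform on disjoint intervals of width $1/(2N)$ and recover the seeds via $s_{i+1}=2\tau_i-s_i$, the seed deviations telescope: $s_{i+1}=2\sum_{j\le i}(-1)^{i-j}\tau_j+(-1)^i s_1$, so perturbations of size $1/(4N)$ in each $\tau_j$ can accumulate to order $i/N$, which for $i\approx N$ is order one. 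Already at $i=2$ one can have $\tau_1$ at the top of $I_1$ and $\tau_2$ at the bottom of $I_2$, forcing $\tau_2<s_2$, i.e.\ the intended boundary lies on the wrong side of its seed and the Voronoi cells are not what you prescribed. The paper sidesteps this by putting the product prior on the \emph{seeds} ($s_i$ uniform on $[(i-1)/N,i/N]$, which keeps them sorted automatically), accepting that consecutive boundaries are dependent, and working directly with the resulting Irwin--Hall posterior of each boundary.

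Second, the single-boundary bound --- the actual content of the theorem --- is left as a sketch, and the obstacle you yourself flag (the ambiguity region is random and adapts to the algorithm's history) is precisely what the rigorous argument must overcome. The paper's mechanism is different from your multi-scale Le Cam plan and avoids the adaptivity issue entirely: it first shows the MAP estimator is Bayes-optimal and uses $\gamma\le\alpha$ to lower-bound the instantaneous regret by $(\beta-\alpha)\,\mathbb{P}(\hat\imath_t\ne i_t\mid e_t=0)$ \emph{whether or not the expert is called} (so no cost-balancing heuristic is needed); it then introduces, for each round $t$, the event $E_{i,t}$ that no query among the first $t$ has landed within $1/(2t)$ of the boundary, shows $\mathbb{P}(E_{i,t})=\tfrac12(1-1/t)^t\ge\tfrac18$, and deduces that on $E_{i,t}$ the posterior interval of the boundary retains mass $\gtrsim 1/t$, so the MAP errs with probability $\gtrsim 1/t$ at round $t$; summing $1/t$ gives the $\log$ factor and the explicit constant. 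Note also that your decomposition confines the informative queries for boundary $i$ to an interval of measure $1/(2N)$, so your per-boundary horizon is $T/(2N)$ and your final bound is of order $(N-1)\log(T/N)$; the $\log N$ deficit cannot be ``absorbed into the absolute constant'' uniformly over $N$ and $T$, and the constant $1/(64\sqrt2)$ does not fall out of a Pinsker bound in the way you assert. The ideas are in the right neighborhood, but both the prior and the core per-round estimate need to be replaced before this becomes a proof.
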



\subsection{Center-based Classifier (\extc)}\label{sec:etc_algo} To complement our regret bounds in the large $T$ regime, which hold under very mild distributional assumptions (Assumption \ref{as:density}(i)), we present regret bounds in the $T \le e^d$ regime for a simple Center-based Classifier (\extc), under Assumption \ref{as:density}(ii) on $\mu$. In this section, the expert labeling policy is assumed to be given by the Voronoi tessellation with seeds $s_i$, as described in Section \ref{sec:voronoi}. \extc\ proceeds in two phases:

\underline{Phase 1 (Explore)} In the first phase, the expert is called at each round $t$, and \extc\ stores estimates $\hat{s}_i(t):=\frac{1}{\vert \mathcal{Q}_{i,t}\vert}\sum_{q \in \mathcal{Q}_{i,t}}q$ once each label has been observed at least once. These yield an estimate for the minimum center gap $\delta_{\min}$ as $\hat{\delta}_{\min}(t)=\min_{i \ne j} \Vert \hat{s}_i(t)-\hat{s}_j(t)\Vert_2.$
The first phase ends when each label has been observed sufficiently, i.e., at $T_1:=\min\{t \le T: \forall i \in [N] \, \vert \mathcal{Q}_{i,t}\vert \ge \frac{108\sigma^2}{\hat{\delta}^2_{\min}(t)}\left(d+2\log{T}\right)\}$ with the convention $T_1=T$ if the stopping condition is not reached before round $T$.

\underline{Phase 2 (Commit)} In the second phase ($t \in [T_1+1,T]$), the center estimates are no longer updated. \extc\ guesses at each round the label of the closest estimated center: $\hat{\imath}_t=\arg \min_i \Vert q_t-\hat{s}_i(T_1) \Vert_2.$ 

\begin{theorem}\label{thm:ETc_regret_main} Let $p_{\min}:=\min_{i \in [N]} p_i.$ Under Assumptions \ref{as:cells}(ii) and \ref{as:density}(ii), if $T \le e^d,$ the regret of \extc\ satisfies 
\begin{align*}R_{\extc}(T) &\le \frac{41}{5}\frac{(\beta-\alpha)(\log{N}+1)}{p_{\min}}+2(\beta-\gamma)(N+1).\end{align*} 
\end{theorem}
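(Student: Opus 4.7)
The strategy is to decompose the regret according to the two phases of \extc{} and bound each term separately. Since the expert provides the correct label in Phase 1, every Phase 1 round contributes exactly $\beta - \alpha$ to the instantaneous regret; in Phase 2, a round contributes $\beta - \gamma$ if and only if the guess is wrong. Writing this out,
\begin{equation*}
R_{\extc}(T) \;=\; (\beta-\alpha)\, \mathbb{E}[T_1 \wedge T] \;+\; (\beta-\gamma)\,\mathbb{E}\!\left[\sum_{t=T_1+1}^{T} \indic\{\hat{\imath}_t \neq i_t\}\right].
\end{equation*}

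To bound $\mathbb{E}[T_1 \wedge T]$, I would first argue that the stopping rule effectively requires only a constant number of samples per label. The rule demands $|\mathcal{Q}_{i,t}| \ge \frac{108\sigma^2}{\hat{\delta}_{\min}^2(t)}(d + 2\log T)$ for every $i$. Combining $T \le e^d$ (so $d + 2\log T \le 3d$) with the separation assumption $\delta_{\min}^2 \ge 80 \sigma^2 d$, and using subgaussian concentration of a single mixture sample around its component mean to obtain $\hat{\delta}_{\min}(t) \ge \delta_{\min}/2$ with high probability as soon as each label has been seen once, the threshold shrinks to some absolute constant $K$. A coupon-collector tail estimate---equivalently, a negative-binomial Chernoff bound---on the time for $N$ categorical trials with minimum probability $p_{\min}$ to reach $K$ successes per type then yields $\mathbb{E}[T_1 \wedge T] \le \tfrac{41}{5}(\log N + 1)/p_{\min}$.

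For the Phase 2 term, the plan is to introduce the good event $\mathcal{G} = \{\|\hat{s}_i(T_1) - s_i\|_2 \le \delta_{\min}/4 \text{ for all } i\}$. The stopping threshold $\frac{108\sigma^2}{\hat{\delta}_{\min}^2(t)}(d + 2\log T)$ is precisely calibrated so that a subgaussian concentration bound on the sample mean yields $\mathbb{P}(\mathcal{G}^c) \lesssim N/T$. On $\mathcal{G}$, a short triangle-inequality computation shows that a query $q_t = s_{i_t} + \xi_t$ is correctly classified whenever $\|\xi_t\|_2 < \delta_{\min}/4$; since $\xi_t$ is $\sigma$-subgaussian and $\delta_{\min}^2 \ge 80\sigma^2 d$, a standard tail bound for the norm of a subgaussian vector gives $\mathbb{P}(\|\xi_t\|_2 \ge \delta_{\min}/4) \le e^{-cd}$ for some absolute $c > 1$. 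Summing over the $T \le e^d$ Phase 2 rounds and adding the contribution of $\mathcal{G}^c$ yields an $O(N)$ bound on the expected number of misclassifications, matching the $2(\beta-\gamma)(N+1)$ term up to constants.

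The main obstacle will be the data-dependent stopping time $T_1$: the threshold involves $\hat{\delta}_{\min}(t)$, a random, non-monotone quantity that can be artificially inflated or depressed by extreme single-sample fluctuations. I would need to carefully couple the single-sample concentration argument that lower-bounds $\hat{\delta}_{\min}(t)$ to the coupon-collector analysis, while simultaneously ensuring the post-$T_1$ averages are sharp enough to keep the per-round Phase 2 misclassification probability at $e^{-cd}$. Matching the specific constant $41/5$ appears to require a tight negative-binomial tail computation rather than a loose union bound.
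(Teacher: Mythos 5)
Your decomposition and overall architecture coincide with the paper's proof: regret equals $(\beta-\alpha)\mathbb{E}[T_1]$ plus $(\beta-\gamma)$ times the expected number of Phase-2 mistakes, a high-probability event controls $\Vert \hat{s}_i(T_1)-s_i\Vert_2\le\delta_{\min}/4$, a coupon-collector argument handles Phase 1, and a subgaussian norm tail handles Phase 2, with the bad-event probability calibrated to $N/T$. Two points, however, need repair.

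First, you write $q_t=s_{i_t}+\xi_t$, thereby identifying the expert's (Voronoi) label with the generative mixture component. Under Assumption \ref{as:density}(ii) these need not coincide: the expert labels by nearest seed, so a query generated from component $j$ whose noise is large may receive a different label, which in particular contaminates the running means $\hat{s}_i(t)$ — they are averages of the queries the \emph{expert} labeled $i$, not of i.i.d.\ draws from component $i$. The paper isolates this with a dedicated lemma: the event $E=\{\forall t\le T:\ i_t=j_t\}$ holds with probability at least $1-Te^{-\frac{c-8}{12}d}$, the entire analysis is carried out assuming the expert returns generative labels, and the discrepancy costs an extra $(\beta-\gamma)T^2e^{-6d}$ term, negligible for $T\le e^d$ and $c=80$. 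Your argument needs this (or an equivalent) conditioning step before the sample-mean concentration can be applied.

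Second, the route you sketch to the constant $41/5$ does not reach it. Lower-bounding $\hat{\delta}_{\min}(t)\ge\delta_{\min}/2$ from a single sample per label makes the stopping threshold at most $\tfrac{108\cdot 4\cdot 3}{80}\approx 16.2$ samples per label, which via the coupon-collector bound yields roughly $17(\log{N}+1)/p_{\min}$ rather than $\tfrac{41}{5}(\log{N}+1)/p_{\min}$. The paper instead shows (Lemma \ref{lem:first_phase2}(ii)) that once every label has $m=\tfrac{192\sigma^2(d+\log(NT/\delta))}{\delta_{\min}^2}$ samples — about $192\cdot 3/80=7.2$ under your reductions — the uniform concentration event forces $\Vert\hat{s}_i-s_i\Vert_2\le\delta_{\min}/8$, hence $\hat{\delta}_{\min}\ge 3\delta_{\min}/4$, hence $\hat{m}(t)\le m$ and the stopping rule has already fired by then; the constant $\tfrac{41}{5}=1+\tfrac{192\cdot 3}{80}$ then falls out of the crude bound $\mathbb{E}[\tau_m]\le\lceil m\rceil\,\mathbb{E}[\tau_1]\le(1+m)(\log{N}+1)/p_{\min}$, not from the sharp negative-binomial tail computation you anticipated needing. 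Your Phase-2 argument is otherwise the paper's: with $\delta=N/T$ the bad event contributes $O(N)$ and the per-round mistake probability $e^{-cd}$ summed over $T\le e^d$ rounds contributes $O(1)$.
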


For mixtures with equal weights, the regret upper bound of \extc\ is proportional to $N\log{N}.$ In Appendix \ref{sec:etc_algo_proof}, we present a more general upper bound of \extc's regret which hold for all regimes of $T$ along with its proof. 
It contains two main steps: (i) the definition of $T_1$ ensures that the center estimation error at the end of the first phase is bounded as $\Vert s_i - \hat{s}_i(T_1) \Vert_2 \le \delta_{\min}/4 $ with high probability; (ii) this entails that the probability of a wrong guess in the second phase is exponentially small in $d$. 

When $\mu$ is a subgaussian mixture, Theorem \ref{thm:ETc_regret_main} demonstrates that \extc\ is effective in the $T \le e^d$ regime, complementing the distribution-free guarantees of \vhc\ which are most meaningful for large $T$. These regimes can be bridged: a hybrid approach—running \extc\ in the early phases and switching to \vhc\ once $T$
exceeds $e^d$—would inherit the regret guarantees of Corollary \ref{cor:explicit_regret_bound} and Theorem \ref{thm:ETc_regret_main} across both regimes. However, a more practical approach is to design a single algorithm that performs well in all settings. To this end, we introduce in the next section a tunable version of \vhc.


\subsection{Generalized Hull-based Classifier (\vht)}\label{sec:threshold}
When the embedding space is high-dimensional—as is often the case when using representations from LLMs—the \vhc\ algorithm introduced in Section \ref{sec:d-dim} may struggle to perform effectively. In particular, the number of expert-labeled queries must be at least linear in the dimension $d$ for the convex hulls to have non-zero volume, and exponential in $d$ to cover a substantial portion of the space. Consequently, unless the query budget is large, \vhc\ tends to be overly conservative, frequently deferring to the expert. Moreover, 
\vhc\ fails to fully capitalize on favorable cases where the distribution $\mu$ is sharply concentrated around the seeds of each label. 
The \extc\ algorithm on the other hand might not perform well when the horizon $T$ is not much larger than $\frac{\log{N}}{p_{\min}}$ (before the seeds are well estimated). 
To address these limitations, we introduce \vht($\tau$), an extension of \vhc\ specifically designed for high-dimensional settings. This variant incorporates a tunable threshold $\tau$ that enables the algorithm to take calculated risks, particularly when the density of $\mu$ is skewed toward known regions.

In \vht($\tau$), there is an initial phase where \vhc\ is applied until all $N$ labels have been observed. Then, all the hulls contain at least one query, and the agent guesses $\hat{\imath}_t=i$ when $d(q_t,\hat{\mathcal{C}}_{i,t}) \le \tau d(q_t,\hat{\mathcal{C}}_{j,t})$ for all $j \ne i$ for some threshold $\tau \in [0,1]$, where $d(q,S):=\inf_{q' \in S}\Vert q-q'\Vert_2$  denotes the Euclidean distance from $q$ to a set $S$. Otherwise, the expert provides the label $i_t$ and $\hat{\mathcal{C}}_{i_t,t}$ is updated. Note that the hulls $\hat{\mathcal{C}}_{i,t}$ depend on $\tau$ and will typically be formed with fewer samples as $\tau$ increases. If $\tau=0,$ \vht(0) corresponds to \vhc; if $\tau=1$, the expert is never called almost surely once all hulls are non-empty. Intuitively, if $\mu$ is concentrated around the seeds $s_i$, it is more likely that the distances $d(q_t,\hat{\mathcal{C}}_{i,t})$ are well separated. This separation allows for more aggressive guessing (by increasing the threshold $\tau$) with little additional risk, ultimately leading to lower regret. The pseudo-code of \vht\ is provided in Algorithm \ref{alg:vht}. 

\begin{algorithm}[H]
\caption{Generalized Hull-based Classifier (\vht($\tau$))}
\label{alg:vht}
\begin{algorithmic}[1]
\STATE Initialize $\mathcal{Q}_{i,1}\leftarrow\emptyset$  for $i \in [N]$
\FOR{$t=1,\dots,T$}
  \WHILE{$\exists\,i \in [N]: \mathcal{Q}_{i,t}=\emptyset$} 
        \STATE Apply Algorithm \ref{alg:vhc}
  \ENDWHILE
  \IF{$ \exists i \in [N]:$ $d(q_t,\operatorname{hull}_{\mathcal{E}}(\mathcal{Q}_{i,t}))\le \tau\min_{j\neq i} d(q_t,\operatorname{hull}_{\mathcal{E}}(\mathcal{Q}_{j,t}))$} 
    \STATE $\hat{\imath}_t \leftarrow i$ 
  \ELSE \STATE Call expert, and set $\hat{\imath}_t\leftarrow i_t$ \STATE $\mathcal{Q}_{i_t,t+1}\leftarrow \mathcal{Q}_{i_t,t}\cup\{q_t\}$ \ENDIF
\ENDFOR
\end{algorithmic}
\end{algorithm}

Obtaining regret bounds for \vht\ is far more challenging than for \vhc, the main difficulty being that the convex hulls $\hat{\mathcal{C}}_{i,t}$ are no longer formed by i.i.d. queries. This is discussed in more detail in Appendix \ref{sec:analysis_threshold}. 

In Fig. \ref{fig:guessing_regions}, we present, for $\mathcal{E}=[0,1]^2$, the {decision regions} of \vht($\tau$) in round $t=250$ for a few values of $\tau$. Here, $\mu$ is a mixture of Gaussian distributions with small variance. If $q_t$ lands in the ``correct guess'' region (resp. ``wrong guess'' region), the agent guesses and labels $q_t$ correctly (resp. incorrectly). If $q_t$ lands in the ``expert call'' region, the expert provides its correct label. This figure illustrates the benefit of choosing a larger $\tau$ for distributions that are strongly concentrated around representative queries, which is typically the case in practice. Additional examples are provided in Fig. \ref{fig:guessing_regions_full} in Appendix \ref{sec:analysis_threshold}.
\begin{figure}[ht]
    \centering      \includegraphics[width=\linewidth]{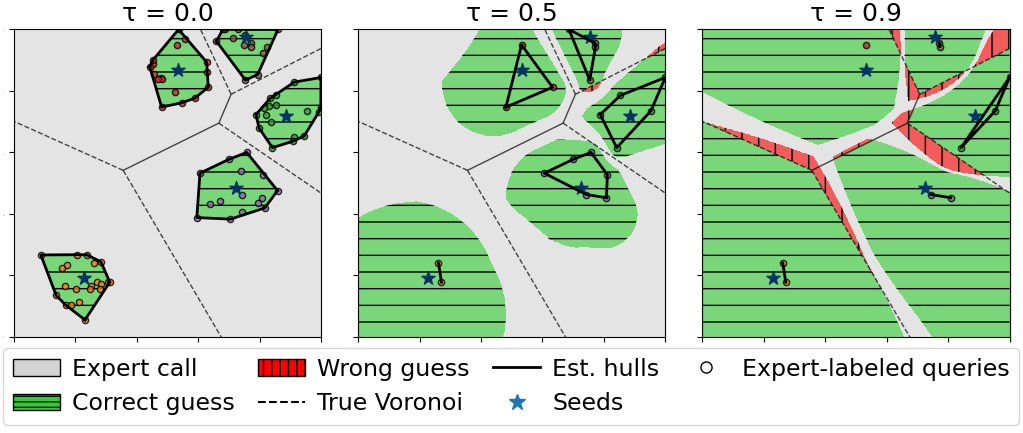}
    \caption{Decision regions of \vht($\tau$) for a mixture of truncated Gaussian distributions, covariance matrix $0.0025I$.}
    \label{fig:guessing_regions}
{\vspace{-0.4cm}}
\end{figure}

\paragraph{Computational Complexity}
To run \vht($\tau$) for $\tau > 0$, we need to compute the distance of $q$ to a (spherical) convex hull of queries $p_1,\dots,p_n$. If $\mathcal{E}=\mathcal{I}^d$, this can be performed in polynomial time without computing the convex hull, simply by computing the value of  the quadratic program $\min_{\alpha \in \mathbb{R}^n} \; \Vert q-\sum_{i=1}^{n} \alpha_ip_i \Vert_2^2$ subject to $\alpha_i \ge 0$, $\sum_{i=1}^{n} \alpha_i=1$. If $\mathcal{E}=\Sph^{d-1}$, the distance to the spherical convex hull is instead given as the value of the same optimization problem but where the last constraint is replaced by $\sum_{i=1}^{n}\alpha_ip_i \in \Sph^{d-1}$. We show in Appendix \ref{sec:proofs_threshold} that this amounts to solving the non-negative least squares problem $\min_{\alpha \ge 0} \Vert q - \sum_{i=1}^{n} \alpha_i p_i \Vert_2^2$. If $\alpha^\star$ is its solution and $\sum_i\alpha_i^\star p_i \ne 0$, we establish that $d(q,\convs\{p_1,\dots,p_n\})=\sqrt{2-2\Vert \sum_{i=1}^{n}\alpha_i^\star p_i \Vert_2}$.

\section{NUMERICAL EXPERIMENTS}\label{sec:experiments}
In this section, we evaluate \vht\ on a real-world dataset using various embedding models, and compare \vht\ with \extc\ and baselines from the literature. The reward values are instantiated as $\alpha=-1$, $\beta=+1$, $\gamma=-10$. Additional experiments on both synthetic and real-world datasets are deferred to Appendix \ref{app:experiments}.
The code used for our experiments is available at \href{https://github.com/wilrev/MinimizingHumanIntervention}{\texttt{github.com/wilrev/MinimizingHumanIntervention}}.

To evaluate \vht\ in a realistic setting, we introduce the Quora Question Group (QQG) dataset, which is constructed from the Quora Question Pairs dataset~\citep{DBLP:journals/corr/WangHF17}. Each question in the dataset is assigned one of $N=1103$ labels, where each label corresponds to a group of questions that can be addressed by a single answer.
We compare the performance of three different retrievers/embedding models: Nomic\footnote{huggingface.co/nomic-ai/nomic-embed-text-v1} \citep{nussbaum2024nomic}, E5\footnote{huggingface.co/intfloat/e5-large}\citep{wang2022text}  and Mistral\_E5 \footnote{huggingface.co/intfloat/e5-mistral-7b-instruct} 
\citep{wang-etal-2024-improving-text} (see Appendix \ref{OurLLMS}). We first encode each question with the chosen retrievers, producing embeddings of dimension 784 for Nomic, 1,024 for E5, and 4,096 for Mistral\_E5. Given the relatively small size of our datasets (each label corresponds to a few questions only), we initialize the algorithm by picking an example question for each label, thereby obviating the first phase of \vht\ (where \vhc\ is applied until every label is represented). 

Fig. \ref{fig:img1_qqg} shows the average cumulative regret 
for the three models when we use their respective best performing threshold. Results are averaged over three independent runs. As expected, Mistral\_E5, the most recent and largest model in our experiments with 7 billion parameters, outperforms both E5 (330 million parameters) and Nomic (110 million parameters). Fig. \ref{fig:img2_qqg} shows each model’s performance across all threshold values $\tau$. 
We observe that the best performing thresholds are quite high, which is expected since our embeddings are high-dimensional.


In Appendix \ref{sec:additional_real_world_experiments}, we extend our evaluation to the ComQA \citep{abujabal2019comqa} and CQADupStack \citep{hoogeven2015} datasets. ComQA contains open-domain questions derived from Wikipedia, whereas CQADupStack comprises expert-domain questions from technical forums, including topics such as physics and mathematics. We provide additional details about the datasets used in our experiments in Appendix \ref{Datasets}.

\begin{figure}[htbp]
  \centering
  \begin{subfigure}[b]{0.45\textwidth}
    \centering
    \includegraphics[width=\linewidth]{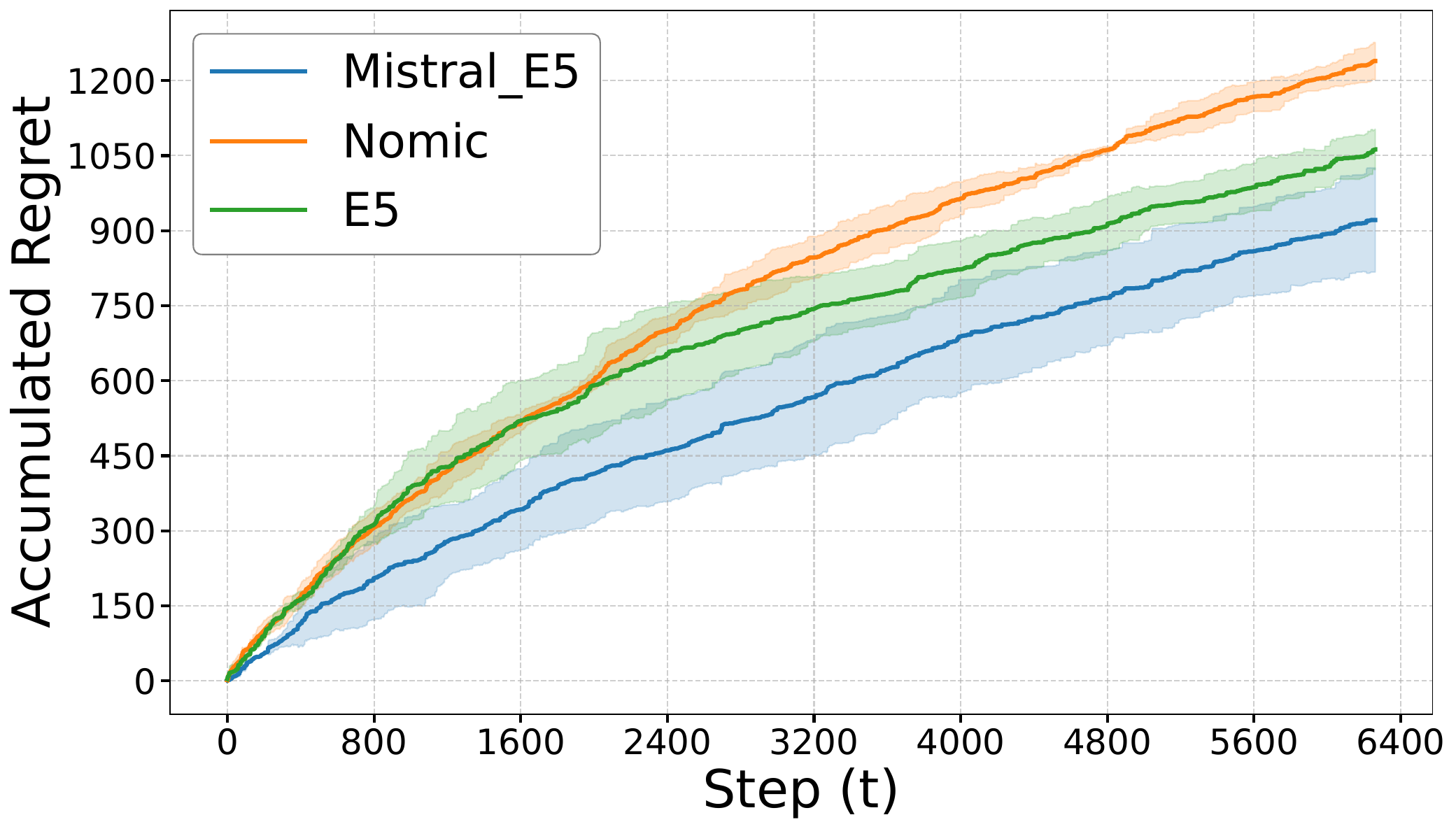}
    \caption{Regret vs. step ($\tau=0.9$)}
    \label{fig:img1_qqg}
  \end{subfigure}\hfill
  \begin{subfigure}[b]{0.45\textwidth}
    \centering
    \includegraphics[width=\linewidth]{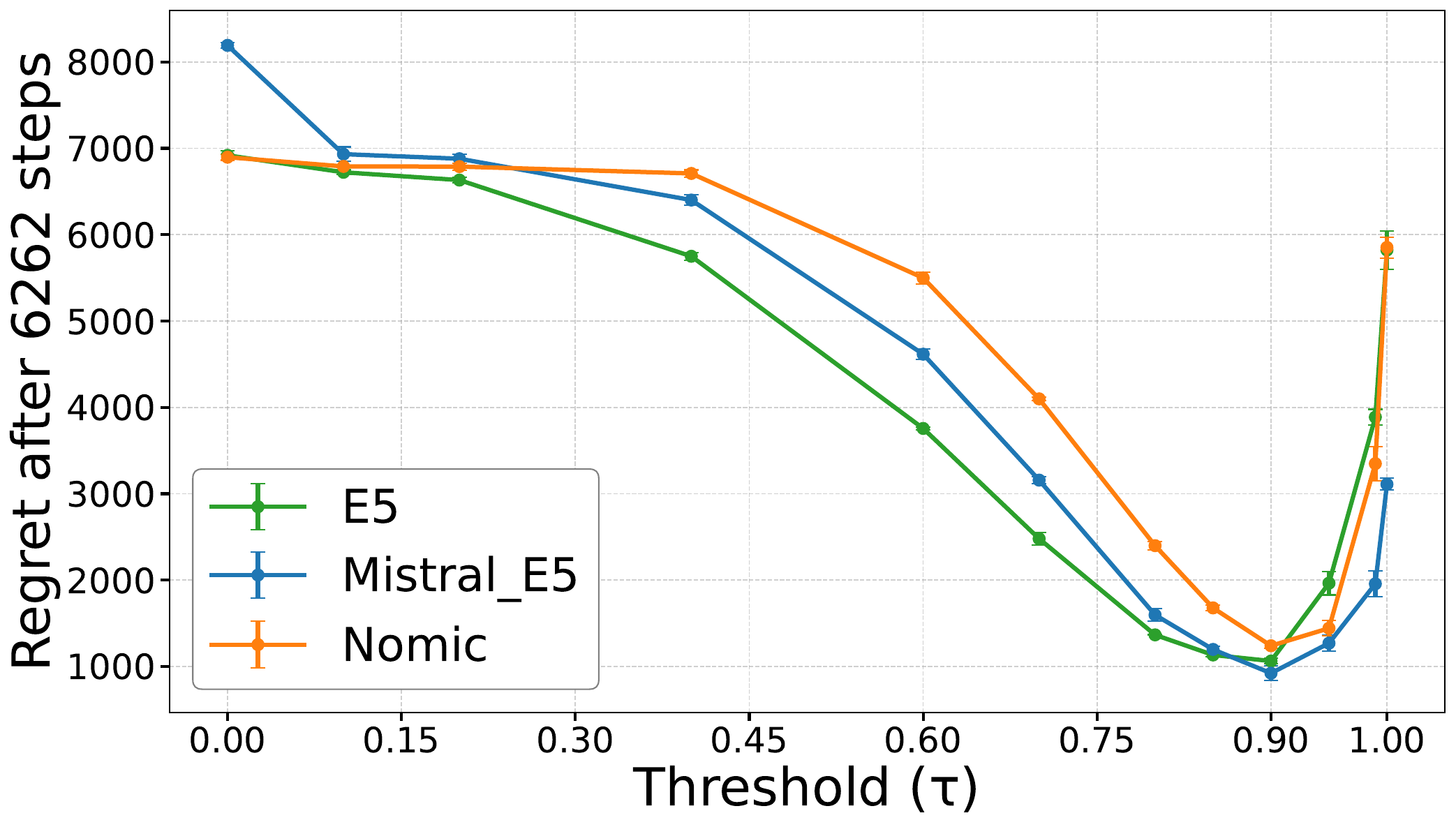}
    \caption{Regret vs. threshold ($T=6262$)}
    \label{fig:img2_qqg}
  \end{subfigure}\hfill
  \caption{Comparison of \vht\ using different text embedding models on Quora Question Groups dataset.}
  \label{fig:Experiements with real data}
\end{figure}

We then compare the performance of \vht, \extc\ (Section~\ref{sec:etc_algo}) and two baselines on the QQG dataset, using embeddings generated by Mistral\_E5. 
The first baseline (\texttt{SKM}) is based on sequential $k$-means \citep{MacQueen1967}. After a supervised initialization phase, this algorithm updates centroids at each round and always guesses based on the closest centroid. 
The second baseline (\texttt{AMP}) is an active variant of the multiclass perceptron \citep{crammer2003,cesa2004}. It defers to the expert when its confidence margin falls below a tunable threshold $\tau \in [0,1].$ Both baselines are described in more detail in Appendix~\ref{app:synthetic_ghc_skm}. In Fig.~\ref{fig:Algorithm_comparison_with_real_data}, we report the average cumulative regret of each algorithm over three independent runs, selecting the best performing threshold for \vht\ and \texttt{AMP} among a range of thresholds (see Appendix \ref{sec:comparison_of_different_algorithm} for the specific ranges tested). \extc\ exhibits approximately linear regret, which is expected since $\log{N}/p_{\min}$ is not negligible with respect to the horizon $T$ in this experiment (see first paragraph of Section \ref{sec:threshold}). The unsupervised baseline \texttt{SKM} also exhibits similar regret scaling, which indicates that both algorithms fail to effectively learn the data structure over time. In contrast, both \vht\ and \texttt{AMP} exhibit sublinear regret. \vht\ consistently achieves the lowest cumulative regret, and its performance gap with \texttt{AMP} widens over time, highlighting the superior long-term learning capability of \vht. Additional comparisons other datasets and with different embedding models are deferred to Appendix \ref{sec:comparison_of_different_algorithm}.

\begin{figure}[htbp]
  \centering
  \begin{subfigure}[b]{0.45\textwidth}
    \centering
    \includegraphics[width=\linewidth]{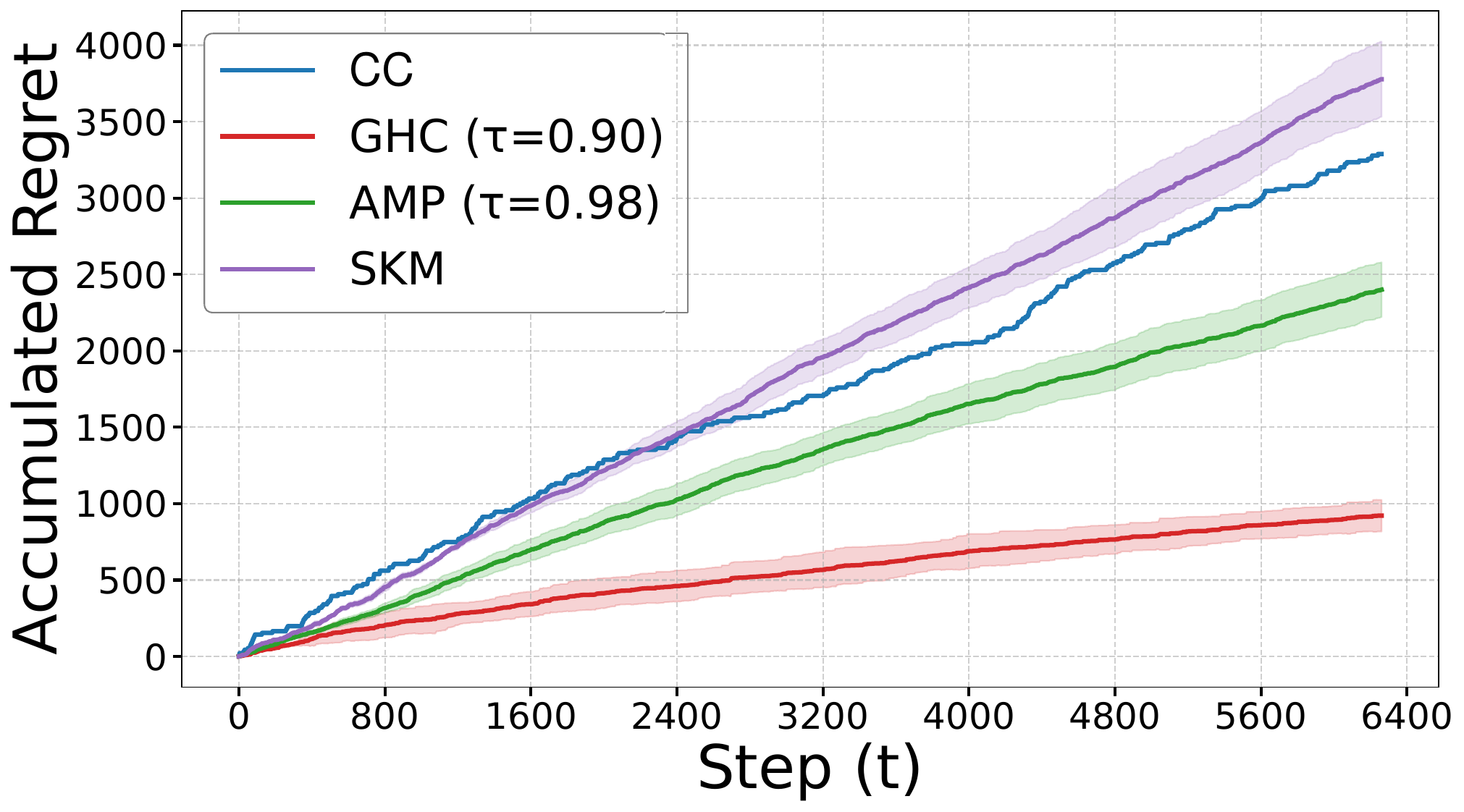}
    \caption{Regret vs. step}
    \label{fig:img_mistral}
  \end{subfigure}\hfill
  \caption{Comparison of different Algorithms on Quora Question Groups dataset with Mistral\_E5.}
  \label{fig:Algorithm_comparison_with_real_data}
\end{figure}


\section{CONCLUSION}

We proposed algorithms tailored to different budget regimes to address the problem of online classification with expert guidance. In particular, we developed the \extc\ and \vhc\ algorithms, which enjoy strong regret guarantees in low- and high-horizon settings, respectively, and further proposed the more versatile \vht\ algorithm, which demonstrates strong empirical performance across all regimes. A natural next step is to provide a theoretical regret analysis of this latter algorithm.

More broadly, the online learning problem studied here represents an instance of a broader class of challenges aimed at minimizing costly human intervention during model training or fine-tuning. We anticipate that many more such problems will emerge, especially in the context of fine-tuning or adapting large foundation models using human feedback.

\section{ACKNOWLEDGMENTS}

This work was supported by computational resources from the National Academic Infrastructure for Supercomputing in Sweden (NAISS), funded in part by the Swedish Research Council (grant no. 2025/5-542).

\bibliographystyle{apalike}
\bibliography{ref}

@inproceedings{reimers-gurevych-2019-sentence,
    title = "Sentence-{BERT}: Sentence Embeddings using {S}iamese {BERT}-Networks",
    author = "Reimers, Nils  and
      Gurevych, Iryna",
    editor = "Inui, Kentaro  and
      Jiang, Jing  and
      Ng, Vincent  and
      Wan, Xiaojun",
    booktitle = "Proceedings of the 2019 Conference on Empirical Methods in Natural Language Processing and the 9th International Joint Conference on Natural Language Processing (EMNLP-IJCNLP)",
    month = nov,
    year = "2019",
    address = "Hong Kong, China",
    publisher = "Association for Computational Linguistics",
    url = "https://aclanthology.org/D19-1410/",
    doi = "10.18653/v1/D19-1410",
    pages = "3982--3992",
}

@inproceedings{DBLP:conf/nips/LewisPPPKGKLYR020,
  author       = {Patrick S. H. Lewis and
                  Ethan Perez and
                  Aleksandra Piktus and
                  Fabio Petroni and
                  Vladimir Karpukhin and
                  Naman Goyal and
                  Heinrich K{\"{u}}ttler and
                  Mike Lewis and
                  Wen{-}tau Yih and
                  Tim Rockt{\"{a}}schel and
                  Sebastian Riedel and
                  Douwe Kiela},
  editor       = {Hugo Larochelle and
                  Marc'Aurelio Ranzato and
                  Raia Hadsell and
                  Maria{-}Florina Balcan and
                  Hsuan{-}Tien Lin},
  title        = {Retrieval-Augmented Generation for Knowledge-Intensive {NLP} Tasks},
  booktitle    = {Advances in Neural Information Processing Systems 33: Annual Conference
                  on Neural Information Processing Systems 2020, NeurIPS 2020, December
                  6-12, 2020, virtual},
  year         = {2020},
  url          = {https://proceedings.neurips.cc/paper/2020/hash/6b493230205f780e1bc26945df7481e5-Abstract.html},
  timestamp    = {Tue, 19 Jan 2021 15:57:07 +0100},
  biburl       = {https://dblp.org/rec/conf/nips/LewisPPPKGKLYR020.bib},
  bibsource    = {dblp computer science bibliography, https://dblp.org}
}

@article{wang2022text,
  title={Text embeddings by weakly-supervised contrastive pre-training},
  author={Wang, Liang and Yang, Nan and Huang, Xiaolong and Jiao, Binxing and Yang, Linjun and Jiang, Daxin and Majumder, Rangan and Wei, Furu},
  journal={arXiv preprint arXiv:2212.03533},
  year={2022}
}

@misc{li2023general,
      title={Towards General Text Embeddings with Multi-stage Contrastive Learning}, 
      author={Zehan Li and Xin Zhang and Yanzhao Zhang and Dingkun Long and Pengjun Xie and Meishan Zhang},
      year={2023},
      eprint={2308.03281},
      archivePrefix={arXiv},
      primaryClass={cs.CL}
}

@inproceedings{devlin-etal-2019-bert,
    title = "{BERT}: Pre-training of Deep Bidirectional Transformers for Language Understanding",
    author = "Devlin, Jacob  and
      Chang, Ming-Wei  and
      Lee, Kenton  and
      Toutanova, Kristina",
    editor = "Burstein, Jill  and
      Doran, Christy  and
      Solorio, Thamar",
    booktitle = "Proceedings of the 2019 Conference of the North {A}merican Chapter of the Association for Computational Linguistics: Human Language Technologies, Volume 1 (Long and Short Papers)",
    month = jun,
    year = "2019",
    address = "Minneapolis, Minnesota",
    publisher = "Association for Computational Linguistics",
    url = "https://aclanthology.org/N19-1423",
    doi = "10.18653/v1/N19-1423",
    pages = "4171--4186",
    abstract = "We introduce a new language representation model called BERT, which stands for Bidirectional Encoder Representations from Transformers. Unlike recent language representation models (Peters et al., 2018a; Radford et al., 2018), BERT is designed to pre-train deep bidirectional representations from unlabeled text by jointly conditioning on both left and right context in all layers. As a result, the pre-trained BERT model can be fine-tuned with just one additional output layer to create state-of-the-art models for a wide range of tasks, such as question answering and language inference, without substantial task-specific architecture modifications. BERT is conceptually simple and empirically powerful. It obtains new state-of-the-art results on eleven natural language processing tasks, including pushing the GLUE score to 80.5 (7.7 point absolute improvement), MultiNLI accuracy to 86.7{\%} (4.6{\%} absolute improvement), SQuAD v1.1 question answering Test F1 to 93.2 (1.5 point absolute improvement) and SQuAD v2.0 Test F1 to 83.1 (5.1 point absolute improvement).",
}

@article{crammer2003,
author = {Crammer, Koby and Singer, Yoram},
title = {Ultraconservative online algorithms for multiclass problems},
year = {2003},
issue_date = {3/1/2003},
publisher = {JMLR.org},
volume = {3},
number = {null},
issn = {1532-4435},
url = {https://doi.org/10.1162/jmlr.2003.3.4-5.951},
doi = {10.1162/jmlr.2003.3.4-5.951},
abstract = {In this paper we study a paradigm to generalize online classification algorithms for binary classification problems to multiclass problems. The particular hypotheses we investigate maintain one prototype vector per class. Given an input instance, a multiclass hypothesis computes a similarity-score between each prototype and the input instance and sets the predicted label to be the index of the prototype achieving the highest similarity. To design and analyze the learning algorithms in this paper we introduce the notion of ultraconservativeness. Ultraconservative algorithms are algorithms that update only the prototypes attaining similarity-scores which are higher than the score of the correct label's prototype. We start by describing a family of additive ultraconservative algorithms where each algorithm in the family updates its prototypes by finding a feasible solution for a set of linear constraints that depend on the instantaneous similarity-scores. We then discuss a specific online algorithm that seeks a set of prototypes which have a small norm. The resulting algorithm, which we term MIRA (for Margin Infused Relaxed Algorithm) is ultraconservative as well. We derive mistake bounds for all the algorithms and provide further analysis of MIRA using a generalized notion of the margin for multiclass problems. We discuss the form the algorithms take in the binary case and show that all the algorithms from the first family reduce to the Perceptron algorithm while MIRA provides a new Perceptron-like algorithm with a margin-dependent learning rate. We then return to multiclass problems and describe an analogous multiplicative family of algorithms with corresponding mistake bounds. We end the formal part by deriving and analyzing a multiclass version of Li and Long's ROMMA algorithm. We conclude with a discussion of experimental results that demonstrate the merits of our algorithms.},
journal = {J. Mach. Learn. Res.},
month = mar,
pages = {951–991},
numpages = {41}
}

@article{Springer2024RepetitionIL,
  title={Repetition Improves Language Model Embeddings},
  author={Jacob Mitchell Springer and Suhas Kotha and Daniel Fried and Graham Neubig and Aditi Raghunathan},
  journal={ArXiv},
  year={2024},
  volume={abs/2402.15449},
  url={https://api.semanticscholar.org/CorpusID:267897400}
}

@inproceedings{wang-etal-2024-improving-text,
    title = "Improving Text Embeddings with Large Language Models",
    author = "Wang, Liang  and
      Yang, Nan  and
      Huang, Xiaolong  and
      Yang, Linjun  and
      Majumder, Rangan  and
      Wei, Furu",
    editor = "Ku, Lun-Wei  and
      Martins, Andre  and
      Srikumar, Vivek",
    booktitle = "Proceedings of the 62nd Annual Meeting of the Association for Computational Linguistics (Volume 1: Long Papers)",
    month = aug,
    year = "2024",
    address = "Bangkok, Thailand",
    publisher = "Association for Computational Linguistics",
    url = "https://aclanthology.org/2024.acl-long.642",
    doi = "10.18653/v1/2024.acl-long.642",
    pages = "11897--11916",
    abstract = "In this paper, we introduce a novel and simple method for obtaining high-quality text embeddings using only synthetic data and less than 1k training steps. Unlike existing methods that often depend on multi-stage intermediate pre-training with billions of weakly-supervised text pairs, followed by fine-tuning with a few labeled datasets, our method does not require building complex training pipelines or relying on manually collected datasets that are often constrained by task diversity and language coverage. We leverage proprietary LLMs to generate diverse synthetic data for hundreds of thousands of text embedding tasks across 93 languages. We then fine-tune open-source decoder-only LLMs on the synthetic data using standard contrastive loss. Experiments demonstrate that our method achieves strong performance on highly competitive text embedding benchmarks without using any labeled data. Furthermore, when fine-tuned with a mixture of synthetic and labeled data, our model sets new state-of-the-art results on the BEIR and MTEB benchmarks.",
}

@article{behnamghader2024llm2vec,
  title={Llm2vec: Large language models are secretly powerful text encoders},
  author={BehnamGhader, Parishad and Adlakha, Vaibhav and Mosbach, Marius and Bahdanau, Dzmitry and Chapados, Nicolas and Reddy, Siva},
  journal={arXiv preprint arXiv:2404.05961},
  year={2024}
}

@inproceedings{hoogeven2015,
author = {Hoogeveen, Doris and Verspoor, Karin M. and Baldwin, Timothy},
title = {CQADupStack: A Benchmark Data Set for Community Question-Answering Research},
year = {2015},
isbn = {9781450340403},
publisher = {Association for Computing Machinery},
address = {New York, NY, USA},
url = {https://doi.org/10.1145/2838931.2838934},
doi = {10.1145/2838931.2838934},
abstract = {This paper presents a benchmark dataset, CQADupStack, for use in community question-answering (cQA) research. It contains threads from twelve StackExchange subforums, annotated with duplicate question information. We provide pre-defined training and test splits, both for retrieval and classification experiments, to ensure maximum comparability between different studies using the set. Furthermore, it comes with a script to manipulate the data in various ways. We give an analysis of the data in the set, and report benchmark results on a duplicate question retrieval task using well established retrieval models.},
booktitle = {Proceedings of the 20th Australasian Document Computing Symposium},
articleno = {3},
numpages = {8},
location = {Parramatta, NSW, Australia},
series = {ADCS '15}
}

@article{muennighoff2022mteb,
  title={MTEB: Massive text embedding benchmark},
  author={Muennighoff, Niklas and Tazi, Nouamane and Magne, Lo{\"\i}c and Reimers, Nils},
  journal={arXiv preprint arXiv:2210.07316},
  year={2022}
}

@article{thakur2021beir,
  title={Beir: A heterogenous benchmark for zero-shot evaluation of information retrieval models},
  author={Thakur, Nandan and Reimers, Nils and R{\"u}ckl{\'e}, Andreas and Srivastava, Abhishek and Gurevych, Iryna},
  journal={arXiv preprint arXiv:2104.08663},
  year={2021}
}

@inproceedings{karpukhin-etal-2020-dense,
    title = "Dense Passage Retrieval for Open-Domain Question Answering",
    author = "Karpukhin, Vladimir  and
      Oguz, Barlas  and
      Min, Sewon  and
      Lewis, Patrick  and
      Wu, Ledell  and
      Edunov, Sergey  and
      Chen, Danqi  and
      Yih, Wen-tau",
    editor = "Webber, Bonnie  and
      Cohn, Trevor  and
      He, Yulan  and
      Liu, Yang",
    booktitle = "Proceedings of the 2020 Conference on Empirical Methods in Natural Language Processing (EMNLP)",
    month = nov,
    year = "2020",
    address = "Online",
    publisher = "Association for Computational Linguistics",
    url = "https://aclanthology.org/2020.emnlp-main.550",
    doi = "10.18653/v1/2020.emnlp-main.550",
    pages = "6769--6781",
    abstract = "Open-domain question answering relies on efficient passage retrieval to select candidate contexts, where traditional sparse vector space models, such as TF-IDF or BM25, are the de facto method. In this work, we show that retrieval can be practically implemented using dense representations alone, where embeddings are learned from a small number of questions and passages by a simple dual-encoder framework. When evaluated on a wide range of open-domain QA datasets, our dense retriever outperforms a strong Lucene-BM25 system greatly by 9{\%}-19{\%} absolute in terms of top-20 passage retrieval accuracy, and helps our end-to-end QA system establish new state-of-the-art on multiple open-domain QA benchmarks."
}

@Article{barany1993,
author={B{\'a}r{\'a}ny, Imre
and Buchta, Christian},
title={Random polytopes in a convex polytope, independence of shape, and concentration of vertices},
journal={Mathematische Annalen},
year={1993},
month={Sep},
day={01},
volume={297},
number={1},
pages={467-497},
issn={1432-1807},
doi={10.1007/BF01459511},
url={https://doi.org/10.1007/BF01459511}
}

@article{besau2018,
title = {Flag numbers and floating bodies},
journal = {Advances in Mathematics},
volume = {338},
pages = {912-952},
year = {2018},
issn = {0001-8708},
doi = {https://doi.org/10.1016/j.aim.2018.09.006},
url = {https://www.sciencedirect.com/science/article/pii/S0001870818303372},
author = {Florian Besau and Carsten Schütt and Elisabeth M. Werner},
keywords = {Convex floating body of polytope, Weighted floating body of polytope, Total number of flags, Flag simplex, Spherical polytope, Hyperbolic polytope},
}

@Article{Billera2000,
author={Billera, Louis J.
and Ehrenborg, Richard},
title={Monotonicity of the cd-index for polytopes},
journal={Mathematische Zeitschrift},
year={2000},
month={Mar},
day={01},
volume={233},
number={3},
pages={421-441},
issn={1432-1823},
doi={10.1007/s002090050480},
url={https://doi.org/10.1007/s002090050480}
}

@article{McMullen1970,
  title={The maximum numbers of faces of a convex polytope},
  author={Peter McMullen},
  journal={Mathematika},
  year={1970},
  volume={17},
  pages={179-184},
  url={https://api.semanticscholar.org/CorpusID:122025424}
}

@book{Ziegler1995, 
title={Lecture on Polytopes},
author={G. M. Ziegler},
year={1998},
  publisher = "Springer"}

@article{barany2020,
     author = {B\'ar\'any, Imre and Fradelizi, Matthieu and Goaoc, Xavier and Hubard, Alfredo and Rote, G\"unter},
     title = {Random polytopes and the wet part for arbitrary probability distributions},
     journal = {Annales Henri Lebesgue},
     pages = {701--715},
     publisher = {\'ENS Rennes},
     volume = {3},
     year = {2020},
     doi = {10.5802/ahl.44},
     language = {en},
     url = {https://www.numdam.org/articles/10.5802/ahl.44/}
}

@article{nussbaum2024nomic,
  title={Nomic embed: Training a reproducible long context text embedder},
  author={Nussbaum, Zach and Morris, John X and Duderstadt, Brandon and Mulyar, Andriy},
  journal={arXiv preprint arXiv:2402.01613},
  year={2024}
}

@article{barany1988, 
title={Convex bodies, economic cap coverings, random polytopes}, volume={35}, 
DOI={10.1112/S0025579300015266}, 
number={2}, 
journal={Mathematika}, 
author={Bárány, I. and Larman, D. G.}, 
year={1988}, 
pages={274–291},
}

@ARTICLE{barany1989,
  title     = "Intrinsic volumes and f-vectors of random polytopes",
  author    = "B{\'a}r{\'a}ny, Imre",
  journal   = "Math. Ann.",
  publisher = "Springer Science and Business Media LLC",
  volume    =  285,
  number    =  4,
  pages     = "671--699",
  month     =  dec,
  year      =  1989,
  copyright = "https://www.springernature.com/gp/researchers/text-and-data-mining",
  language  = "en"
}

@article{barany2008random,
  title={Random points and lattice points in convex bodies},
  author={B{\'a}r{\'a}ny, Imre},
  journal={Bulletin of the American Mathematical Society},
  volume={45},
  number={3},
  pages={339--365},
  year={2008}
}

@ARTICLE{schutt1991,
  title     = "The convex floating body and polyhedral approximation",
  author    = "Sch{\"u}tt, Carsten",
  abstract  = "We consider the convex floating body of a polytope and
               polyhedral approximation of a convex body.",
  journal   = "Isr. J. Math.",
  publisher = "Springer Science and Business Media LLC",
  volume    =  73,
  number    =  1,
  pages     = "65--77",
  month     =  feb,
  year      =  1991,
  language  = "en"
}

@article{barber1996,
author = {Barber, C. Bradford and Dobkin, David P. and Huhdanpaa, Hannu},
title = {The quickhull algorithm for convex hulls},
year = {1996},
issue_date = {Dec. 1996},
publisher = {Association for Computing Machinery},
address = {New York, NY, USA},
volume = {22},
number = {4},
issn = {0098-3500},
url = {https://doi.org/10.1145/235815.235821},
doi = {10.1145/235815.235821},
journal = {ACM Trans. Math. Softw.},
month = dec,
pages = {469–483},
numpages = {15},
keywords = {halfspace intersection, convex hull, Voronoi diagram, Delaunay triangulation}
}

@Article{chakraborti2021,
author={Chakraborti, Debsoumya
and Tkocz, Tomasz
and Vritsiou, Beatrice-Helen},
title={A note on volume thresholds for random polytopes},
journal={Geometriae Dedicata},
year={2021},
month={Aug},
day={01},
volume={213},
number={1},
pages={423-431},
issn={1572-9168},
doi={10.1007/s10711-020-00589-5},
url={https://doi.org/10.1007/s10711-020-00589-5}
}

@article{DBLP:journals/corr/abs-2002-10957,
  author       = {Wenhui Wang and
                  Furu Wei and
                  Li Dong and
                  Hangbo Bao and
                  Nan Yang and
                  Ming Zhou},
  title        = {MiniLM: Deep Self-Attention Distillation for Task-Agnostic Compression
                  of Pre-Trained Transformers},
  journal      = {CoRR},
  volume       = {abs/2002.10957},
  year         = {2020},
  url          = {https://arxiv.org/abs/2002.10957},
  eprinttype    = {arXiv},
  eprint       = {2002.10957},
  timestamp    = {Fri, 19 Apr 2024 15:54:57 +0200},
  biburl       = {https://dblp.org/rec/journals/corr/abs-2002-10957.bib},
  bibsource    = {dblp computer science bibliography, https://dblp.org}
}

@article{DBLP:journals/corr/abs-1807-03748,
  author       = {A{\"{a}}ron van den Oord and
                  Yazhe Li and
                  Oriol Vinyals},
  title        = {Representation Learning with Contrastive Predictive Coding},
  journal      = {CoRR},
  volume       = {abs/1807.03748},
  year         = {2018},
  url          = {http://arxiv.org/abs/1807.03748},
  eprinttype    = {arXiv},
  eprint       = {1807.03748},
  timestamp    = {Mon, 13 Aug 2018 16:48:25 +0200},
  biburl       = {https://dblp.org/rec/journals/corr/abs-1807-03748.bib},
  bibsource    = {dblp computer science bibliography, https://dblp.org}
}

@inproceedings{gao-etal-2021-simcse,
    title = "{S}im{CSE}: Simple Contrastive Learning of Sentence Embeddings",
    author = "Gao, Tianyu  and
      Yao, Xingcheng  and
      Chen, Danqi",
    editor = "Moens, Marie-Francine  and
      Huang, Xuanjing  and
      Specia, Lucia  and
      Yih, Scott Wen-tau",
    booktitle = "Proceedings of the 2021 Conference on Empirical Methods in Natural Language Processing",
    month = nov,
    year = "2021",
    address = "Online and Punta Cana, Dominican Republic",
    publisher = "Association for Computational Linguistics",
    url = "https://aclanthology.org/2021.emnlp-main.552",
    doi = "10.18653/v1/2021.emnlp-main.552",
    pages = "6894--6910",
    abstract = "This paper presents SimCSE, a simple contrastive learning framework that greatly advances the state-of-the-art sentence embeddings. We first describe an unsupervised approach, which takes an input sentence and predicts itself in a contrastive objective, with only standard dropout used as noise. This simple method works surprisingly well, performing on par with previous supervised counterparts. We find that dropout acts as minimal data augmentation and removing it leads to a representation collapse. Then, we propose a supervised approach, which incorporates annotated pairs from natural language inference datasets into our contrastive learning framework, by using {``}entailment{''} pairs as positives and {``}contradiction{''} pairs as hard negatives. We evaluate SimCSE on standard semantic textual similarity (STS) tasks, and our unsupervised and supervised models using BERT base achieve an average of 76.3{\%} and 81.6{\%} Spearman{'}s correlation respectively, a 4.2{\%} and 2.2{\%} improvement compared to previous best results. We also show{---}both theoretically and empirically{---}that contrastive learning objective regularizes pre-trained embeddings{'} anisotropic space to be more uniform, and it better aligns positive pairs when supervised signals are available.",
}

@inproceedings{abujabal2019comqa,
  title     = {ComQA: A Community-sourced Dataset for Complex Factoid Question Answering with Paraphrase Clusters},
  author    = {Abdalghani Abujabal and Rishiraj Saha Roy and Mohamed Yahya and Gerhard Weikum},
  booktitle = {Proceedings of the 2019 Conference of the North American Chapter of the Association for Computational Linguistics: Human Language Technologies},
  pages     = {307--317},
  address   = {Minneapolis, MN},
  month     = {June},
  year      = {2019},
  publisher = {Association for Computational Linguistics},
  doi       = {10.18653/v1/N19-1027},
  url       = {https://aclanthology.org/N19-1027}
}

@article{DBLP:journals/corr/WangHF17,
  author       = {Zhiguo Wang and
                  Wael Hamza and
                  Radu Florian},
  title        = {Bilateral Multi-Perspective Matching for Natural Language Sentences},
  journal      = {CoRR},
  volume       = {abs/1702.03814},
  year         = {2017},
  url          = {http://arxiv.org/abs/1702.03814},
  eprinttype    = {arXiv},
  eprint       = {1702.03814},
  timestamp    = {Mon, 13 Aug 2018 16:47:19 +0200},
  biburl       = {https://dblp.org/rec/journals/corr/WangHF17.bib},
  bibsource    = {dblp computer science bibliography, https://dblp.org}
}

@article{bajaj2016ms,
  title={Ms marco: A human generated machine reading comprehension dataset},
  author={Bajaj, Payal and Campos, Daniel and Craswell, Nick and Deng, Li and Gao, Jianfeng and Liu, Xiaodong and Majumder, Rangan and McNamara, Andrew and Mitra, Bhaskar and Nguyen, Tri and others},
  journal={arXiv preprint arXiv:1611.09268},
  year={2016}
}

@article{jiang2023mistral,
  title={Mistral 7B},
  author={Jiang, Albert Q and Sablayrolles, Alexandre and Mensch, Arthur and Bamford, Chris and Chaplot, Devendra Singh and Casas, Diego de las and Bressand, Florian and Lengyel, Gianna and Lample, Guillaume and Saulnier, Lucile and others},
  journal={arXiv preprint arXiv:2310.06825},
  year={2023}
}

@article{besau2014-rw,
 ISSN = {00222518, 19435258},
 URL = {http://www.jstor.org/stable/26318192},
 abstract = {In this paper, characterizations of binary operations between convex bodies on the Euclidean unit sphere are established. The main result shows that the convex hull is essentially the only non-trivial projection covariant operation between pairs of convex bodies contained in open hemispheres. Moreover, it is proved that any continuous and projection covariant binary operation between all proper spherical convex bodies must be trivial.},
 author = {Florian Besau and Franz Schuster},
 journal = {Indiana University Mathematics Journal},
 number = {4},
 pages = {1263--1288},
 publisher = {Indiana University Mathematics Department},
 title = {Binary Operations in Spherical Convex Geometry},
 urldate = {2025-05-10},
 volume = {65},
 year = {2016}
}

@BOOK{ratcliffe2019,
  title     = "Foundations of hyperbolic manifolds",
  author    = "Ratcliffe, John G",
  publisher = "Springer Nature",
  series    = "Graduate texts in mathematics",
  edition   =  3,
  month     =  nov,
  year      =  2019,
  address   = "Cham, Switzerland",
  language  = "en"
}

@book{ross2010,
  author    = {Ross, Sheldon M.},
  title     = {Introduction to Probability Models},
  edition   = {10th},
  publisher = {Academic Press},
  address   = {San Diego, CA},
  year      = {2010},
  isbn      = {9780123756862},
}

@InProceedings{cohen2021,
  title = 	 { Online k-means Clustering },
  author =       {Cohen-Addad, Vincent and Guedj, Benjamin and Kanade, Varun and Rom, Guy},
  booktitle = 	 {Proceedings of The 24th International Conference on Artificial Intelligence and Statistics},
  pages = 	 {1126--1134},
  year = 	 {2021},
  editor = 	 {Banerjee, Arindam and Fukumizu, Kenji},
  volume = 	 {130},
  series = 	 {Proceedings of Machine Learning Research},
  month = 	 {13--15 Apr},
  publisher =    {PMLR},
  pdf = 	 {http://proceedings.mlr.press/v130/cohen-addad21a/cohen-addad21a.pdf},
  url = 	 {https://proceedings.mlr.press/v130/cohen-addad21a.html},
}

@article{prochno2022,
title = {Best and random approximation of a convex body by a polytope},
journal = {Journal of Complexity},
volume = {71},
pages = {101652},
year = {2022},
note = {Approximation and Geometry in High Dimensions},
issn = {0885-064X},
doi = {https://doi.org/10.1016/j.jco.2022.101652},
url = {https://www.sciencedirect.com/science/article/pii/S0885064X22000176},
author = {J. Prochno and C. Schütt and E.M. Werner},
keywords = {Affine surface area, Convex body, Delone triangulation constant, Polytope},
}

@article{schutt1990,
 ISSN = {00255521, 19031807},
 URL = {http://www.jstor.org/stable/24492560},
 author = {Carsten Schütt and Elisabeth M. Werner},
 journal = {Mathematica Scandinavica},
 number = {2},
 pages = {275--290},
 publisher = {Mathematica Scandinavica},
 title = {The Convex Floating Body},
 urldate = {2025-05-09},
 volume = {66},
 year = {1990}
}

@article{rosenblatt1952,
author = {Murray Rosenblatt},
title = {{Remarks on a Multivariate Transformation}},
volume = {23},
journal = {The Annals of Mathematical Statistics},
number = {3},
publisher = {Institute of Mathematical Statistics},
pages = {470 -- 472},
year = {1952},
doi = {10.1214/aoms/1177729394},
URL = {https://doi.org/10.1214/aoms/1177729394}
}

@Inbook{Artstein-Avidan2012,
author="Artstein-Avidan, Shiri
and Florentin, Dan
and Milman, Vitali",
editor="Klartag, Bo'az
and Mendelson, Shahar
and Milman, Vitali D.",
title="Order Isomorphisms on Convex Functions in Windows",
bookTitle="Geometric Aspects of Functional Analysis: Israel Seminar 2006--2010",
year="2012",
publisher="Springer Berlin Heidelberg",
address="Berlin, Heidelberg",
pages="61--122",
abstract="In this paper we give a characterization of all order isomorphisms on some classes of convex functions. We deal with the class Cvx(K) consisting of lower-semi-continuous convex functions defined on a convex set K, and its subclass {\$}{\$}Cv{\{}x{\}}{\_}{\{}0{\}}(K){\$}{\$}of non negative functions attaining the value zero at the origin. We show that any order isomorphism on these classes must be induced by a point map on the epi-graphs of the functions, and determine the exact form of this map. To this end we study convexity preserving maps on subsets of {\$}{\$}{\{}{\backslash}mathbb{\{}R{\}}{\}}^{\{}n{\}}{\$}{\$}, and also in this area we have some new interpretations, and proofs.",
isbn="978-3-642-29849-3",
doi="10.1007/978-3-642-29849-3_4",
url="https://doi.org/10.1007/978-3-642-29849-3_4"
}

@inproceedings{karmarkar1984new,
  title={A new polynomial-time algorithm for linear programming},
  author={Karmarkar, Narendra},
  booktitle={Proceedings of the sixteenth annual ACM symposium on Theory of computing},
  pages={302--311},
  year={1984}
}

@inproceedings{khachiyan1979polynomial,
  title={A polynomial algorithm in linear programming},
  author={Khachiyan, Leonid Genrikhovich},
  booktitle={Doklady Akademii Nauk},
  volume={244},
  number={5},
  pages={1093--1096},
  year={1979},
  organization={Russian Academy of Sciences}
}

@InProceedings{choromanska2011,
  title = 	 {Online Clustering with Experts},
  author = 	 {Choromanska, Anna and Monteleoni, Claire},
  booktitle = 	 {Proceedings of the Workshop on On-line Trading of Exploration and Exploitation 2},
  pages = 	 {1--18},
  year = 	 {2012},
  editor = 	 {Glowacka, Dorota and Dorard, Louis and Shawe-Taylor, John},
  volume = 	 {26},
  series = 	 {Proceedings of Machine Learning Research},
  address = 	 {Bellevue, Washington, USA},
  month = 	 {02 Jul},
  publisher =    {PMLR},
  pdf = 	 {http://proceedings.mlr.press/v26/choromanska12a/choromanska12a.pdf},
  url = 	 {https://proceedings.mlr.press/v26/choromanska12a.html},
}

@article{pandove2018,
author = {Pandove, Divya and Goel, Shivan and Rani, Rinkl},
title = {Systematic Review of Clustering High-Dimensional and Large Datasets},
year = {2018},
issue_date = {April 2018},
publisher = {Association for Computing Machinery},
address = {New York, NY, USA},
volume = {12},
number = {2},
issn = {1556-4681},
url = {https://doi.org/10.1145/3132088},
doi = {10.1145/3132088},
journal = {ACM Trans. Knowl. Discov. Data},
month = jan,
articleno = {16},
numpages = {68},
keywords = {large scale data mining, dimensionality reduction, data clustering process, data clustering applications, clustering tendency, Cluster analysis}
}

@inproceedings{bechavod2019,
 author = {Bechavod, Yahav and Ligett, Katrina and Roth, Aaron and Waggoner, Bo and Wu, Steven Z.},
 booktitle = {Advances in Neural Information Processing Systems},
 editor = {H. Wallach and H. Larochelle and A. Beygelzimer and F. d\textquotesingle Alch\'{e}-Buc and E. Fox and R. Garnett},
 pages = {},
 publisher = {Curran Associates, Inc.},
 title = {Equal Opportunity in Online Classification with Partial Feedback},
 url = {https://proceedings.neurips.cc/paper_files/paper/2019/file/084afd913ab1e6ea58b8ca73f6cb41a6-Paper.pdf},
 volume = {32},
 year = {2019}
}

@article{anderssen1976concerning,
  author  = {Anderssen, Richard S. and Brent, Richard P. and Daley, David J. and Moran, Patrick A. P.},
  title   = {Concerning $\displaystyle\int_{0}^{1}\!\cdots\!\int_{0}^{1}\bigl(x_{1}^{2} + \cdots + x_{k}^{2}\bigr)^{1/2}\,dx_{1}\cdots dx_{k}$ and a Taylor Series Method},
  journal = {SIAM Journal on Applied Mathematics},
  volume  = {30},
  number  = {1},
  pages   = {22--30},
  year    = {1976},
  doi     = {10.1137/0130003},
  url     = {https://www.jstor.org/stable/2100577}
}

@inproceedings{chen2024,
	author = {Chen, Xin and Zhang, Anderson Ye},
	booktitle = {Advances in Neural Information Processing Systems},
	editor = {A. Globerson and L. Mackey and D. Belgrave and A. Fan and U. Paquet and J. Tomczak and C. Zhang},
	pages = {113698--113741},
	publisher = {Curran Associates, Inc.},
	title = {Achieving Optimal Clustering in Gaussian Mixture Models with Anisotropic Covariance Structures},
	url = {https://proceedings.neurips.cc/paper_files/paper/2024/file/ce26d21662c979d515164b416d4571fe-Paper-Conference.pdf},
	volume = {37},
	year = {2024}
}

@article{lu2016statistical,
	title={Statistical and Computational Guarantees of Lloyd's Algorithm and its Variants}, 
	author={Yu Lu and Harrison H. Zhou},
	year={2016},
	eprint={1612.02099},
	archivePrefix={arXiv},
	primaryClass={math.ST},
	url={https://arxiv.org/abs/1612.02099}, 
}

@ARTICLE{lloyd1982,
	author={Lloyd, S.},
	journal={IEEE Transactions on Information Theory}, 
	title={Least squares quantization in PCM}, 
	year={1982},
	volume={28},
	number={2},
	pages={129-137},
	keywords={},
	doi={10.1109/TIT.1982.1056489}}

@InProceedings{dreveton2024,
	title = 	 {Universal Lower Bounds and Optimal Rates: Achieving Minimax Clustering Error in Sub-Exponential Mixture Models},
	author =       {Dreveton, Maximilien and G{\"o}zeten, Alperen and Grossglauser, Matthias and Thiran, Patrick},
	booktitle = 	 {Proceedings of Thirty Seventh Conference on Learning Theory},
	pages = 	 {1451--1485},
	year = 	 {2024},
	editor = 	 {Agrawal, Shipra and Roth, Aaron},
	volume = 	 {247},
	series = 	 {Proceedings of Machine Learning Research},
	month = 	 {30 Jun--03 Jul},
	publisher =    {PMLR},
	pdf = 	 {https://proceedings.mlr.press/v247/dreveton24a/dreveton24a.pdf},
	url = 	 {https://proceedings.mlr.press/v247/dreveton24a.html},
	abstract = 	 {Clustering is a pivotal challenge in unsupervised machine learning and is often investigated through the lens of mixture models. The optimal error rate for recovering cluster labels in Gaussian and sub-Gaussian mixture models involves ad hoc signal-to-noise ratios. Simple iterative algorithms, such as Lloyd’s algorithm, attain this optimal error rate. In this paper, we first establish a universal lower bound for the error rate in clustering any mixture model, expressed through Chernoff information, a more versatile measure of model information than signal-to-noise ratios. We then demonstrate that iterative algorithms attain this lower bound in mixture models with sub-exponential tails, notably emphasizing location-scale mixtures featuring Laplace-distributed errors. Additionally, for datasets better modelled by Poisson or Negative Binomial mixtures, we study mixture models whose distributions belong to an exponential family. In such mixtures, we establish that Bregman hard clustering, a variant of Lloyd’s algorithm employing a Bregman divergence, is rate optimal.}
}

@Inbook{Steinbach2004,
author="Steinbach, Michael
and Ert{\"o}z, Levent
and Kumar, Vipin",
editor="Wille, Luc T.",
title="The Challenges of Clustering High Dimensional Data",
bookTitle="New Directions in Statistical Physics: Econophysics, Bioinformatics, and Pattern Recognition",
year="2004",
publisher="Springer Berlin Heidelberg",
address="Berlin, Heidelberg",
pages="273--309",
isbn="978-3-662-08968-2",
doi="10.1007/978-3-662-08968-2_16",
url="https://doi.org/10.1007/978-3-662-08968-2_16"
}

@article{rakhlin2015,
  author  = {Alexander Rakhlin and Karthik Sridharan and Ambuj Tewari},
  title   = {Online Learning via Sequential Complexities},
  journal = {Journal of Machine Learning Research},
  year    = {2015},
  volume  = {16},
  number  = {6},
  pages   = {155--186},
  url     = {http://jmlr.org/papers/v16/rakhlin15a.html}
}

@inproceedings{bendavid2009,
  author       = {Shai Ben{-}David and
                  D{\'{a}}vid P{\'{a}}l and
                  Shai Shalev{-}Shwartz},
  title        = {Agnostic Online Learning},
  booktitle    = {{COLT} 2009 - The 22nd Conference on Learning Theory, Montreal, Quebec,
                  Canada, June 18-21, 2009},
  year         = {2009},
  url          = {http://www.cs.mcgill.ca/\%7Ecolt2009/papers/032.pdf\#page=1},
  timestamp    = {Thu, 04 Feb 2021 08:43:06 +0100},
  biburl       = {https://dblp.org/rec/conf/colt/Ben-DavidPS09.bib},
  bibsource    = {dblp computer science bibliography, https://dblp.org}
}

@INPROCEEDINGS{littlestone1988,
  author={Littlestone, Nick},
  booktitle={28th Annual Symposium on Foundations of Computer Science (sfcs 1987)}, 
  title={Learning quickly when irrelevant attributes abound: A new linear-threshold algorithm}, 
  year={1987},
  volume={},
  number={},
  pages={68-77},
  keywords={Boolean functions;Libraries;Pattern recognition;Feature extraction;Computer vision;Detectors;Neural networks;Computer networks;Algorithm design and analysis;Data preprocessing},
  doi={10.1109/SFCS.1987.37}}

@ARTICLE{pearson1894,
  title     = "{III}. Contributions to the mathematical theory of evolution",
  author    = "Pearson, K",
  journal   = "Philos. Trans. R. Soc. Lond. A",
  publisher = "The Royal Society",
  volume    =  185,
  number    =  0,
  pages     = "71--110",
  month     =  dec,
  year      =  1894,
  language  = "en"
}

@ARTICLE{Renshaw1987,
  title     = "Statistical analysis of finite mixture distributions",
  author    = "Renshaw, Arthur E and Titterington, D M and Smith, A F M and
               Makov, H E",
  journal   = "J. R. Stat. Soc. Ser. A",
  publisher = "JSTOR",
  volume    =  150,
  number    =  3,
  pages     = "283",
  year      =  1987
}

@inproceedings{cesa2004,
 author = {Cesa-bianchi, Nicol\`{o} and Gentile, Claudio and Zaniboni, Luca},
 booktitle = {Advances in Neural Information Processing Systems},
 editor = {L. Saul and Y. Weiss and L. Bottou},
 pages = {},
 publisher = {MIT Press},
 title = {Worst-Case Analysis of Selective Sampling for Linear-Threshold Algorithms},
 url = {https://proceedings.neurips.cc/paper_files/paper/2004/file/92426b262d11b0ade77387cf8416e153-Paper.pdf},
 volume = {17},
 year = {2004}
}

@inproceedings{MacQueen1967,
  author    = {J. B. MacQueen},
  title     = {Some Methods for Classification and Analysis of Multivariate Observations},
  booktitle = {Proceedings of the Fifth Berkeley Symposium on Mathematical Statistics and Probability},
  editor    = {L. M. Le Cam and J. Neyman},
  volume    = {1},
  pages     = {281--297},
  publisher = {University of California Press},
  year      = {1967}
}

@article{Hsu2012,
  title     = "A tail inequality for quadratic forms of subgaussian random
               vectors",
  author    = "Hsu, Daniel and Kakade, Sham and Zhang, Tong",
  journal   = "Electron. Commun. Probab.",
  publisher = "Institute of Mathematical Statistics",
  volume    =  17,
  number    = "52",
  month     =  jan,
  year      =  2012
}

@INPROCEEDINGS{Kane2017,
  title           = "Active classification with comparison queries",
  booktitle       = "2017 {IEEE} 58th Annual Symposium on Foundations of
                     Computer Science ({FOCS})",
  author          = "Kane, Daniel M and Lovett, Shachar and Moran, Shay and
                     Zhang, Jiapeng",
  publisher       = "IEEE",
  month           =  oct,
  year            =  2017,
  conference      = "2017 IEEE 58th Annual Symposium on Foundations of Computer
                     Science (FOCS)",
  location        = "Berkeley, CA"
}

@inproceedings{madras2018,
 author = {Madras, David and Pitassi, Toni and Zemel, Richard},
 booktitle = {Advances in Neural Information Processing Systems},
 editor = {S. Bengio and H. Wallach and H. Larochelle and K. Grauman and N. Cesa-Bianchi and R. Garnett},
 pages = {},
 publisher = {Curran Associates, Inc.},
 title = {Predict Responsibly: Improving Fairness and Accuracy by Learning to Defer},
 volume = {31},
 year = {2018}
}

@InProceedings{mozannar2020,
  title = 	 {Consistent Estimators for Learning to Defer to an Expert},
  author =       {Mozannar, Hussein and Sontag, David},
  booktitle = 	 {Proceedings of the 37th International Conference on Machine Learning},
  pages = 	 {7076--7087},
  year = 	 {2020},
  editor = 	 {III, Hal Daumé and Singh, Aarti},
  volume = 	 {119},
  series = 	 {Proceedings of Machine Learning Research},
  month = 	 {13--18 Jul},
  publisher =    {PMLR},
}

@article{dasgupta2011,
author = {Dasgupta, Sanjoy},
title = {Two faces of active learning},
year = {2011},
issue_date = {April, 2011},
publisher = {Elsevier Science Publishers Ltd.},
address = {GBR},
volume = {412},
number = {19},
issn = {0304-3975},
url = {https://doi.org/10.1016/j.tcs.2010.12.054},
doi = {10.1016/j.tcs.2010.12.054},
abstract = {An active learner has a collection of data points, each with a label that is initially hidden but can be obtained at some cost. Without spending too much, it wishes to find a classifier that will accurately map points to labels. There are two common intuitions about how this learning process should be organized: (i) by choosing query points that shrink the space of candidate classifiers as rapidly as possible; and (ii) by exploiting natural clusters in the (unlabeled) data set. Recent research has yielded learning algorithms for both paradigms that are efficient, work with generic hypothesis classes, and have rigorously characterized labeling requirements. Here we survey these advances by focusing on two representative algorithms and discussing their mathematical properties and empirical performance.},
journal = {Theor. Comput. Sci.},
month = apr,
pages = {1767–1781},
numpages = {15},
keywords = {Active learning, Learning theory}
}

@article{hanneke2014,
author = {Hanneke, Steve},
title = {Theory of Disagreement-Based Active Learning},
year = {2014},
issue_date = {Jun 2014},
publisher = {Now Publishers Inc.},
address = {Hanover, MA, USA},
volume = {7},
number = {2–3},
issn = {1935-8237},
doi = {10.1561/2200000037},
journal = {Found. Trends Mach. Learn.},
month = jun,
pages = {131–309},
numpages = {182}
}

@InProceedings{lewis1994,
author="Lewis, David D.
and Gale, William A.",
editor="Croft, Bruce W.
and van Rijsbergen, C. J.",
title="A Sequential Algorithm for Training Text Classifiers",
booktitle="SIGIR '94",
year="1994",
publisher="Springer London",
address="London",
pages="3--12",
isbn="978-1-4471-2099-5"
}

@article{Houlsby2011,
  title={Bayesian Active Learning for Classification and Preference Learning},
  author={Neil Houlsby and Ferenc Husz{\'a}r and Zoubin Ghahramani and M{\'a}t{\'e} Lengyel},
  journal={ArXiv},
  year={2011},
  volume={abs/1112.5745},
  url={https://api.semanticscholar.org/CorpusID:13612582}
}

@InProceedings{gentile2022,
  title = 	 {Achieving Minimax Rates in Pool-Based Batch Active Learning},
  author =       {Gentile, Claudio and Wang, Zhilei and Zhang, Tong},
  booktitle = 	 {Proceedings of the 39th International Conference on Machine Learning},
  pages = 	 {7339--7367},
  year = 	 {2022},
  editor = 	 {Chaudhuri, Kamalika and Jegelka, Stefanie and Song, Le and Szepesvari, Csaba and Niu, Gang and Sabato, Sivan},
  volume = 	 {162},
  series = 	 {Proceedings of Machine Learning Research},
  month = 	 {17--23 Jul},
  publisher =    {PMLR},
  pdf = 	 {https://proceedings.mlr.press/v162/gentile22a/gentile22a.pdf},
}

@Article{Freund1997,
author={Freund, Yoav
and Seung, H. Sebastian
and Shamir, Eli
and Tishby, Naftali},
title={Selective Sampling Using the Query by Committee Algorithm},
journal={Machine Learning},
year={1997},
month={Aug},
day={01},
volume={28},
number={2},
pages={133-168},
abstract={We analyze the ``query by committee'' algorithm, a method for filtering informative queries from a random stream of inputs. We show that if the two-member committee algorithm achieves information gain with positive lower bound, then the prediction error decreases exponentially with the number of queries. We show that, in particular, this exponential decrease holds for query learning of perceptrons.},
issn={1573-0565},
doi={10.1023/A:1007330508534},
url={https://doi.org/10.1023/A:1007330508534}
}

@article{sabato2018,
  author  = {Sivan Sabato and Tom Hess},
  title   = {Interactive Algorithms: Pool, Stream and Precognitive Stream},
  journal = {Journal of Machine Learning Research},
  year    = {2018},
  volume  = {18},
  number  = {229},
  pages   = {1--39},
  url     = {http://jmlr.org/papers/v18/16-424.html}
}

@Article{Lu2016,
author={Lu, Jing
and Zhao, Peilin
and Hoi, Steven C. H.},
title={Online Passive-Aggressive Active learning},
journal={Machine Learning},
year={2016},
month={May},
day={01},
volume={103},
number={2},
pages={141-183},
issn={1573-0565},
doi={10.1007/s10994-016-5555-y},
}

@article{Wang2016,
  title = {Noise-Adaptive Margin-Based Active Learning and Lower Bounds under Tsybakov Noise Condition},
  volume = {30},
  ISSN = {2159-5399},
  url = {http://dx.doi.org/10.1609/aaai.v30i1.10206},
  DOI = {10.1609/aaai.v30i1.10206},
  number = {1},
  journal = {Proceedings of the AAAI Conference on Artificial Intelligence},
  publisher = {Association for the Advancement of Artificial Intelligence (AAAI)},
  author = {Wang,  Yining and Singh,  Aarti},
  year = {2016},
  month = mar 
}

@article{Cohn1994,
  title = {Improving generalization with active learning},
  volume = {15},
  ISSN = {1573-0565},
  url = {http://dx.doi.org/10.1007/BF00993277},
  DOI = {10.1007/bf00993277},
  number = {2},
  journal = {Machine Learning},
  publisher = {Springer Science and Business Media LLC},
  author = {Cohn,  David and Atlas,  Les and Ladner,  Richard},
  year = {1994},
  month = may,
  pages = {201–221}
}

@inproceedings{gangrade2021,
author = {Gangrade, Aditya and Kag, Anil and Cutkosky, Ashok and Saligrama, Venkatesh},
title = {Online selective classification with limited feedback},
year = {2021},
isbn = {9781713845393},
publisher = {Curran Associates Inc.},
address = {Red Hook, NY, USA},
booktitle = {Proceedings of the 35th International Conference on Neural Information Processing Systems},
articleno = {1113},
numpages = {13},
series = {NIPS '21}
}

@article{elyaniv2010,
  author  = {Ran El-Yaniv and Yair Wiener},
  title   = {On the Foundations of Noise-free Selective Classification},
  journal = {Journal of Machine Learning Research},
  year    = {2010},
  volume  = {11},
  number  = {53},
  pages   = {1605--1641},
  url     = {http://jmlr.org/papers/v11/el-yaniv10a.html}
}

@article{elyaniv2012,
  author  = {Ran El-Yaniv and Yair Wiener},
  title   = {Active Learning via Perfect Selective Classification},
  journal = {Journal of Machine Learning Research},
  year    = {2012},
  volume  = {13},
  number  = {9},
  pages   = {255--279},
  url     = {http://jmlr.org/papers/v13/el-yaniv12a.html}
}

@article{Chow1970,
  title = {On optimum recognition error and reject tradeoff},
  volume = {16},
  ISSN = {1557-9654},
  url = {http://dx.doi.org/10.1109/TIT.1970.1054406},
  DOI = {10.1109/tit.1970.1054406},
  number = {1},
  journal = {IEEE Transactions on Information Theory},
  publisher = {Institute of Electrical and Electronics Engineers (IEEE)},
  author = {Chow,  C.},
  year = {1970},
  month = jan,
  pages = {41–46}
}

@article{bartlett2008,
  author  = {Peter L. Bartlett and Marten H. Wegkamp},
  title   = {Classification with a Reject Option using a Hinge Loss},
  journal = {Journal of Machine Learning Research},
  year    = {2008},
  volume  = {9},
  number  = {59},
  pages   = {1823--1840},
  url     = {http://jmlr.org/papers/v9/bartlett08a.html}
}

@InProceedings{cortes2018,
  title = 	 {Online Learning with Abstention},
  author =       {Cortes, Corinna and DeSalvo, Giulia and Gentile, Claudio and Mohri, Mehryar and Yang, Scott},
  booktitle = 	 {Proceedings of the 35th International Conference on Machine Learning},
  pages = 	 {1059--1067},
  year = 	 {2018},
  editor = 	 {Dy, Jennifer and Krause, Andreas},
  volume = 	 {80},
  series = 	 {Proceedings of Machine Learning Research},
  month = 	 {10--15 Jul},
  publisher =    {PMLR},
  pdf = 	 {http://proceedings.mlr.press/v80/cortes18a/cortes18a.pdf},
  url = 	 {https://proceedings.mlr.press/v80/cortes18a.html},
}

@InProceedings{hanneke2021,
  title = 	 { Toward a General Theory of Online Selective Sampling: Trading Off Mistakes and Queries },
  author =       {Hanneke, Steve and Yang, Liu},
  booktitle = 	 {Proceedings of The 24th International Conference on Artificial Intelligence and Statistics},
  pages = 	 {3997--4005},
  year = 	 {2021},
  editor = 	 {Banerjee, Arindam and Fukumizu, Kenji},
  volume = 	 {130},
  series = 	 {Proceedings of Machine Learning Research},
  month = 	 {13--15 Apr},
  publisher =    {PMLR},
  pdf = 	 {http://proceedings.mlr.press/v130/hanneke21a/hanneke21a.pdf},
}

\clearpage
\appendix
\section*{Checklist} 

\begin{enumerate}

  \item For all models and algorithms presented, check if you include:
  \begin{enumerate}
    \item A clear description of the mathematical setting, assumptions, algorithm, and/or model. [Yes] Our model, assumptions and algorithms are described in the main text.
    \item An analysis of the properties and complexity (time, space, sample size) of any algorithm. [Yes] The algorithm's computational complexities are discussed in Sections \ref{sec:algorithm} and \ref{sec:threshold}. A regret analysis of \vhc\ is also provided.
    \item (Optional) Anonymized source code, with specification of all dependencies, including external libraries. [Yes] The code used for the experiments is provided in the supplemental material, and will be made publicly accessible upon publication.
  \end{enumerate}

  \item For any theoretical claim, check if you include:
  \begin{enumerate}
    \item Statements of the full set of assumptions of all theoretical results. [Yes] Assumptions are collected in Section \ref{sec:assumptions}  and every theoretical result is stated with the relevant ones.
    \item Complete proofs of all theoretical results. [Yes]  Detailed proofs are provided in the appendices.
    \item Clear explanations of any assumptions. [Yes]  Assumptions are motivated when they are introduced in Section \ref{sec:models} (e.g. i.i.d. assumption, convex polytope assumption).
  \end{enumerate}

  \item For all figures and tables that present empirical results, check if you include:
  \begin{enumerate}
    \item The code, data, and instructions needed to reproduce the main experimental results (either in the supplemental material or as a URL). [Yes] All the code used to produce figures and tables is provided in the supplemental material, and will be made publicly accessible upon publication.
    \item All the training details (e.g., data splits, hyperparameters, how they were chosen). [Yes] The threshold parameter of \vht\ used in the experiments is specified, and all additional details about our setup are provided In Appendix \ref{app:experiments}.
    \item A clear definition of the specific measure or statistics and error bars (e.g., with respect to the random seed after running experiments multiple times). [Yes] The numerical results of Section \ref{sec:experiments} and Appendix \ref{app:experiments} are reported with the corresponding standard deviation when applicable.
    \item A description of the computing infrastructure used. (e.g., type of GPUs, internal cluster, or cloud provider). [Yes] Hardware specifications and compute resources are provided in Appendix \ref{sec:compute_resources}.
  \end{enumerate}

  \item If you are using existing assets (e.g., code, data, models) or curating/releasing new assets, check if you include:
  \begin{enumerate}
    \item Citations of the creator If your work uses existing assets. [Yes] The prior works that introduced the datasets or embedding models we use for our numerical experiments are all cited in the paper and direct links are provided.
    \item The license information of the assets, if applicable. [Not Applicable]
    \item New assets either in the supplemental material or as a URL, if applicable. [Yes] The QQG dataset is provided in the supplemental material.
    \item Information about consent from data providers/curators. [Not Applicable]
    \item Discussion of sensible content if applicable, e.g., personally identifiable information or offensive content. [Not Applicable]
  \end{enumerate}

  \item If you used crowdsourcing or conducted research with human subjects, check if you include:
  \begin{enumerate}
    \item The full text of instructions given to participants and screenshots. [Not Applicable] We conduct no such experiment.
    \item Descriptions of potential participant risks, with links to Institutional Review Board (IRB) approvals if applicable. [Not Applicable]
    \item The estimated hourly wage paid to participants and the total amount spent on participant compensation. [Not Applicable]
  \end{enumerate}

\end{enumerate}

\clearpage
\onecolumn
\startcontents[app]

\printcontents[app]{l}{1}{\section*{Table of Contents}}{}

\section{NOTATION}
\label{sec:notation}

\begingroup
\setlength{\fboxsep}{3pt}%
\setlength{\fboxrule}{0.4pt}%
\fbox{%
  \begin{minipage}{\dimexpr\textwidth-2\fboxsep-2\fboxrule\relax}
    \renewcommand{\arraystretch}{1.2}
    \begin{tabularx}{\linewidth}{@{} l|X @{}}
      \multicolumn{2}{@{}l}{\textbf{Problem Setting}} \\
      \hline
      $N$ & Number of labels/classes. \\
      $\mathcal{E}$ & Embedding space: hypercube $\mathcal{I}^d$ or unit sphere $S^{d-1}$. \\
      $d$ & Dimension of the embedding space $\mathcal{E}$. \\
      $\alpha, \beta, \gamma$ & reward for expert call, correct guess and incorrect guess respectively. \\
      $s_i \in \mathcal{E}$ & Unknown seed (representative) of queries with label $i$. \\
      $\mathcal{C}_i$ & Voronoi cell for queries with label $i$, defined by seeds $s_j$. \\
      $T$ & Time horizon (budget). \\
      $q_t \in \mathcal{E}$ & Query received at round $t$. \\
      $\mu$ & Distribution from which $q_t$ is sampled. \\
      $f_{\mu}$ & Density of $\mu$ w.r.t. Lebesgue or spherical measure. \\
      $C,c$ & Upper and lower bounds on $f_{\mu}.$ \\
      $\delta_{\min}$ & Minimum gap $\min_{i \ne j} \Vert s_i-s_j\Vert_2$ between centers. \\
      $i_t \in [N]$ & Correct label of $q_t$. \\
      $\hat{\imath}_t \in [N]$ & Agent's guess for label of $q_t$. \\
      $\mathcal{D}_t$ & Expert-labeled pairs available at step $t$. \\
      $\mathcal{F}_t$ & Filtration of history up to round $t$. \\
      $r_{\pi}(t)$ & Reward at round $t$. \\
      $R_{\pi}(T)$ & Cumulative regret for algorithm $\pi$: $\beta T - \sum_{t=1}^T \mathbb{E}[r_{\pi}(t)]$. \\
      $C_T$ & Total number of expert calls. \\
      $M_T$ & Total number of incorrect guesses (mistakes). \\
      \hline
      \multicolumn{2}{@{}l}{\textbf{Algorithm \& Analysis}} \\
      \hline
      $\conv$ & Euclidean convex hull. \\
      $\convs$ & Spherical convex hull. \\
      $\operatorname{hull}_{\mathcal{E}}$ & $\conv$ if $\mathcal{E}=\mathcal{I}^d$, $\convs$ if $\mathcal{E}=\Sph^{d-1}$. \\
      $d(q, S) = \inf_{s \in S}\|q - s\|_2$ & Euclidean distance from $q$ to $S$. \\
      $\mathcal{Q}_{i,t}$ & Set of queries labeled as $i$ by the expert up to round $t$. \\
      $\hat{\mathcal{C}}_{i,t}$ & Convex hull of $\mathcal{Q}_{i,t}$, estimate of $\mathcal{C}_i$ at round $t$. \\
      $\tau \in [0, 1]$ & Threshold parameter. \\
      \vhc  &  Conservative Hull-based Classifier. \\
      \vht($\tau$)  &  Generalized Hull-based Classifier with threshold $\tau$. \\
      \extc & Center-based Classifier. \\
      $f(T)=\bigO(g(T))$ & There exists $K$ independent of $T$ (potentially depending on other parameters) such that $\vert f(T)| \le K\vert g(T) \vert$ for $T \ge T_0 \in \NN$ \\
      $n_i(t)=\sum_{s=1}^{t}\mathbbm{1}\{i_s=i\}$ & Number of queries with true label $i$ up to $t$. \\
      $\mu_{\mathcal{C}_i}$ & Conditional distribution $\mu(\cdot \cap \mathcal{C}_i)/\mu(\mathcal{C}_i)$. \\
      $w^\nu$ & $\nu$-measure of the wet part. \\
      $F(P)$ & Number of flags of polytope $P$, i.e., increasing sequences $(F_j)_{j=0}^{d-1}$ of $j$-dimensional faces of $P$. \\
      $\lambda$ & Lebesgue measure on $\mathbb{R}^d$. \\
      $\omega$ & Spherical Lebesgue measure on $\Sph^{d-1}$. \\
      $\log$ & Natural logarithm (base $e$). \\
      $\Bin(n,p)$ & Binomial distribution with parameters $n$ and $p$.
    \end{tabularx}
  \end{minipage}%
}%
\endgroup

\newpage
\section{PROOFS OF SECTION \ref{sec:algorithm}}\label{sec:proofs_conservative}

In this section, we provide complete proofs of all the statements in Section \ref{sec:algorithm}.

\subsection{Proofs of Section \ref{sec:regretVHC}}\label{subsec:proofs_bounded_density}

\begin{remark} Throughout this appendix, when the notations $\mathcal{O}(\cdot)$ or $o(\cdot)$ are used, the limit is w.r.t. the number of sampled points $n$ or the number of queries $t$, $T$ depending on the context. All other problem parameters—such as the cells and their dimension—are fixed, so the hidden constants may depend on them.
\end{remark}

\subsubsection{Proof of Theorem \ref{thm:d-dimensional-regret}}\label{subsec:proof_regret_bound}

\paragraph{(a) Case $\mathcal{E}=\mathcal{I}^d$.} We rely on the following Theorem from \citet{barany1993}.

\begin{proposition}[Theorem 2 from \citet{barany1993}]\label{prop:polytope_volume}
Let $P$ a convex polytope in $\mathbb{R}^d$ with $d \ge 2$ and $n \ge 1$ points $p_1,\dots,p_n$ sampled independently and uniformly at random in $P$, with convex hull $P_n$. Then $$\mathbb{E}[\lambda(P \setminus P_n)] = \frac{\lambda(P)F(P)}{(d+1)^{d-1}(d-1)!}\frac{\log^{d-1}{n}}{n}+\bigO\left(\frac{\log^{d-2}(n)\log{\log{n}}}{n}\right)$$ where $F(P)$ is the number of flags of $P$, i.e., the number of sequences $F_0 \subset F_1 \subset \dots \subset F_{d-1}$ of $i$-dimensional faces of $P$.
\end{proposition}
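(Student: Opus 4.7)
The plan is to prove the Bárány-Buchta asymptotic formula via the classical ``wet part'' technique from stochastic geometry. First, I would rewrite the expected missing volume by Fubini's theorem as
\[
\mathbb{E}[\lambda(P \setminus P_n)] = \int_P \mathbb{P}(x \notin P_n) \, dx .
\]
A point $x \in P$ fails to lie in $P_n$ precisely when some closed half-space separates $x$ from all of $p_1,\dots,p_n$. Setting $v(x) := \inf\{\lambda(P \cap H)/\lambda(P) : H \text{ a closed half-space with } x \notin H\}$, which measures the smallest normalized cap that can be cut off by a separating hyperplane, I would establish two-sided estimates of the form
\[
(1-v(x))^n \;\le\; \mathbb{P}(x \notin P_n) \;\le\; c_d\, v(x)^{-(d-1)} (1-v(x))^n
\]
using a finite $\varepsilon$-net over the sphere of half-space normals plus a union bound (the upper estimate relies on a standard ``economic cap covering'' lemma, see e.g. B\'ar\'any's survey).

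Second, I would localize the integral. Since $(1-v(x))^n$ is exponentially small once $v(x) \gg (\log n)/n$, only a thin ``wet'' boundary strip $\{x \in P : v(x) \lesssim (\log n)/n\}$ contributes at the leading order. I would partition this strip into disjoint regions, each associated with a flag $F_0 \subset F_1 \subset \cdots \subset F_{d-1}$ of $P$: the region attached to a flag consists of those $x$ whose minimizing separating hyperplane has outer normal lying in the normal cone of that flag. This dual decomposition is exhaustive and avoids double counting, and reduces the computation to evaluating the contribution of a single flag and summing over all $F(P)$ of them.

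Third comes the local calculation at a fixed flag. Choose affine coordinates adapted to the flag so that the vertex $F_0$ sits at the origin, $F_1$ lies along the first coordinate axis, $F_2$ along the first two axes, and so on. In these coordinates the germ of $P$ is, up to a Jacobian depending only on the flag, the positive orthant $\{u_1,\dots,u_d \ge 0\}$, and elementary convex geometry shows that $v(x)$ is asymptotic to $u_1 u_2 \cdots u_d / ((d+1)^{d-1} \lambda(P))$ up to a flag-dependent linear change of variables (the $(d+1)^{d-1}$ factor comes from the volume of the cap $\{u : u_1+\cdots+u_d \le c\}$ versus that of the box $\{u : u_i \le c\}$). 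Substituting and approximating $\mathbb{P}(x \notin P_n) \simeq \exp(-n v(x))$ in the wet strip, the local integral reduces, via the standard identity
\[
\int_0^{A} \!\!\cdots\! \int_0^{A} e^{-n u_1\cdots u_d / \lambda(P)}\, du_1\cdots du_d \;=\; \frac{\lambda(P)}{n}\cdot\frac{\log^{d-1} n}{(d-1)!} \;+\; \mathcal{O}\!\left(\frac{\log^{d-2} n}{n}\right) ,
\]
to the claimed leading contribution. Summing over the $F(P)$ flags yields the asserted main term.

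The main technical obstacle is to quantify the errors sharply enough that the remainder is genuinely $\mathcal{O}(\log^{d-2}(n)\log\log(n)/n)$. Three estimates require care. First, $\mathbb{P}(x\notin P_n)$ must be controlled uniformly in the transition zone $v(x) \asymp (\log n)/n$; the upper bound above is not tight there and must be refined using exact binomial tail estimates, with the polynomial prefactor $v(x)^{-(d-1)}$ giving exactly the $\log\log n$ loss. Second, the replacement of $P$ locally by its ``flag orthant'' model introduces curvature-type corrections that must be integrated explicitly and shown to be of the next order. Third, the dual cells attached to different flags interface along thin transition regions; showing that these interfaces contribute only $\mathcal{O}(\log^{d-2}(n)/n)$ uses that the normal cones of the flags form a conic decomposition of the unit sphere whose $(d-2)$-dimensional boundary is lower-dimensional. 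Together these three error estimates account for the specific $\log\log n$ factor in the stated remainder.
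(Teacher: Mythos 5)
First, a point of reference: the paper does not prove this statement at all---it is imported verbatim as Theorem~2 of \citet{barany1993}, so the relevant comparison is with B\'ar\'any--Buchta's original argument, which is a long exact-constant computation built on economic cap coverings, an exact integral representation of $\mathbb{P}(x \notin P_n)$ over separating caps, and Laplace-type asymptotics localized at flags. Your sketch captures the correct geometric landscape (Fubini, localization to a wet boundary strip, flag decomposition, and the product-of-coordinates integral), and it would indeed yield the correct \emph{order} $\Theta\bigl(F(P)\log^{d-1}(n)/n\bigr)$---this is essentially the floating-body sandwich the paper itself uses in its Appendix for the \emph{lower} bound on the regret of \vhc.

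However, there is a genuine quantitative gap in your third step, and it kills exactly the content of the theorem: the constant. Substituting $\mathbb{P}(x\notin P_n)\simeq e^{-n v(x)}$ in the critical zone and integrating produces the \emph{floating-body} constant $\frac{F(P)}{d!\,d^{d-1}}$ (compare Corollary~1.2 of \citet{besau2018}, quoted in the paper's Appendix), not the stated constant $\frac{F(P)}{(d+1)^{d-1}(d-1)!}$; these differ by the dimension-dependent factor $d\bigl(\tfrac{d}{d+1}\bigr)^{d-1}$. The discrepancy arises because $\mathbb{P}(x\notin P_n)$ is the probability of a \emph{union} over all separating half-spaces, which exceeds $e^{-nv(x)}$ by a constant factor precisely in the transition zone $v(x)\asymp (\log n)/n$ that dominates the integral; your two-sided bound with the polynomial prefactor $v(x)^{-(d-1)}$ is only order-sharp and cannot pin the constant down. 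A concrete sanity check at $d=2$: for a convex polygon with $r$ vertices ($F(P)=2r$, $\lambda(P)=1$), the theorem gives the R\'enyi--Sulanke constant $\tfrac{2r}{3}\tfrac{\log n}{n}$, whereas your scheme yields $\tfrac{r}{2}\tfrac{\log n}{n}$. Relatedly, your local formula for $v(x)$ is incorrect: in the orthant model the minimal cap through $u=(u_1,\dots,u_d)$ is the simplex cut off by the hyperplane $\sum_i x_i/(d\,u_i)=1$, of volume $\tfrac{d^d}{d!}\prod_i u_i$---there is no cap-versus-box ratio that produces $(d+1)^{d-1}$; that constant only emerges from the exact (non-exponential) probability computation that B\'ar\'any--Buchta carry out. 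So as written, your plan proves a weaker two-sided estimate, not the stated asymptotic equality.
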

Using a rejection sampling argument, it can be generalized to bounded densities as follows.
\begin{corollary}\label{cor:thinned_polytope_volume}
    Let $P$ a convex polytope in $\mathbb{R}^d$ with $d \ge 2$ and $n \ge 1$ points $p_1,\dots,p_n$ sampled  independently from a distribution $\nu$ supported on $P$ with a density $f_{P}=\mathrm{d}\nu/\mathrm{d}\lambda$ that satisfies $c \le f_{P}(x) \le C$ for $\lambda$-a.e. $x \in P$ for $c>0$, $C<\infty$. Denote by $P_n$ the convex hull of $p_1,\dots,p_n$. Then $$\mathbb{E}\left[\nu(P\setminus P_n)\right] \le \frac{C}{c}\frac{F(P)}{(d+1)^{d-1}(d-1)!}\frac{\log^{d-1}{n}}{n}+\bigO\left(\frac{\log^{d-2}(n)\log{\log{n}}}{n}\right) $$
\end{corollary}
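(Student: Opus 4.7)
The plan is to reduce the bounded-density case to the uniform case of Proposition \ref{prop:polytope_volume} via a thinning coupling. The key observation is that since $f_P \ge c$, each sample from $\nu$ can be viewed as a uniform sample on $P$ with some probability. Concretely, let $p_1,\dots,p_n$ be i.i.d.\ from $\nu$ and $U_1,\dots,U_n$ i.i.d.\ uniform on $[0,1]$ independent of everything. Call $p_i$ \emph{marked} if $U_i \le c/f_P(p_i)$ (which is in $[0,1]$ since $f_P \ge c$). A short density computation shows that the marked samples are i.i.d.\ uniform on $P$, and that the total number of marked samples $M$ satisfies $M \sim \mathrm{Bin}(n,c\lambda(P))$, independently of the marked samples' values themselves.

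Next, let $\widetilde P_M := \conv\{p_i : p_i \text{ marked}\} \subseteq P_n$. Using the pointwise bound $f_P \le C$, we have $\nu(A) \le C\lambda(A)$ for every Borel $A$, so
\begin{align*}
\nu(P\setminus P_n) \;\le\; C\,\lambda(P\setminus P_n) \;\le\; C\,\lambda(P\setminus \widetilde P_M).
\end{align*}
Taking expectations and conditioning on $M$, we obtain
\begin{align*}
\mathbb{E}\bigl[\nu(P\setminus P_n)\bigr] \;\le\; C \sum_{m=0}^{n}\mathbb{P}(M=m)\,g(m),
\end{align*}
where $g(m) := \mathbb{E}[\lambda(P\setminus \widetilde P_M) \mid M=m]$ is controlled by Proposition \ref{prop:polytope_volume} whenever $m \ge d+1$, and $g(m) \le \lambda(P)$ otherwise.

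The main technical step is to handle the randomness of $M$ around its expectation $\bar m := nc\lambda(P)$. Using the Proposition,
\begin{align*}
g(m) \;=\; \lambda(P)\,\frac{F(P)}{(d+1)^{d-1}(d-1)!}\,\frac{\log^{d-1}m}{m} \;+\; \bigO\!\left(\frac{\log^{d-2}(m)\log\log m}{m}\right),
\end{align*}
so we must evaluate $\mathbb{E}[\log^{d-1} M / M]$ (truncated away from $M=0$). A Chernoff bound gives $M \ge \bar m/2$ with probability $1-e^{-\Omega(n)}$; on this event one expands $\log^{d-1} M$ around $\log^{d-1}\bar m$ to get $\log^{d-1} M / M \le (\log^{d-1}\bar m / \bar m)(1+o(1))$, while the complementary event contributes at most $\lambda(P)\,\mathbb{P}(M<\bar m/2) = o(1/n^k)$ for every $k$. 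Plugging in $\bar m = nc\lambda(P)$ and using $\log^{d-1}(nc\lambda(P)) = \log^{d-1}n + \bigO(\log^{d-2}n)$, the leading term simplifies as
\begin{align*}
C \cdot \lambda(P)\cdot\frac{F(P)}{(d+1)^{d-1}(d-1)!}\cdot\frac{\log^{d-1}n}{nc\lambda(P)} \;=\; \frac{C}{c}\cdot\frac{F(P)}{(d+1)^{d-1}(d-1)!}\cdot\frac{\log^{d-1}n}{n},
\end{align*}
with all remainders absorbed into $\bigO(\log^{d-2}(n)\log\log n / n)$. The hard part of the argument is precisely this last averaging step: one must verify that the nonlinearity of $m\mapsto \log^{d-1}m/m$ does not inflate the leading constant when $m$ is replaced by its random value $M$, which is why a Chernoff-type concentration on $M$ combined with a Taylor expansion of $\log^{d-1}$ is used rather than a blunt Jensen inequality.
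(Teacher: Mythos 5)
Your proposal is correct and follows essentially the same route as the paper's proof: the same rejection-sampling coupling (keep $p_i$ with probability $c/f_P(p_i)$ so the kept points are uniform on $P$ and their number is $\Bin(n,c\lambda(P))$), the same inclusion $\widetilde P_M\subseteq P_n$ combined with $\nu\le C\lambda$, and the same Chernoff-plus-Taylor treatment of $\mathbb{E}[\log^{d-1}M/M]$, which the paper packages as Lemma~\ref{lem:binomial-concentration}. No gaps.
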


We suspect that the dependency in $C/c$ of Corollary \ref{cor:thinned_polytope_volume} is an artifact of our proof technique and could be improved. This corollary is proved further below.

\begin{proof}[Proof of Theorem \ref{thm:d-dimensional-regret}(a)]
Let us denote by $(\mathcal{F}_s)_{s \le t}$ the filtration generated by the query and expert feedback history history up to round $t$. The conditional probability of calling the expert at time $t$ is given as \begin{align}\label{eq:expert-proba}\mathbb{P}(q_t \in \bigcup_{i=1}^{N}(\mathcal{C}_i \setminus \hat{\mathcal{C}}_{i,t}) | \mathcal{F}_{t-1})=\sum_{i=1}^{N}\mu(\mathcal{C}_i \setminus \hat{\mathcal{C}}_{i,t}).\end{align} Since \vhc\ always guesses correctly, the total regret suffered at time $T$ is then $$R_{\vhc}(T) = (\beta-\alpha)\mathbb{E}[C_T]= (\beta-\alpha)\sum_{t=1}^{T}\mathbb{E}[\mathbb{P}(q_t \in \bigcup_{i=1}^{N}(\mathcal{C}_i \setminus \hat{\mathcal{C}}_{i,t}) | \mathcal{F}_{t-1})]= (\beta-\alpha)\sum_{t=1}^{T}\sum_{i=1}^{N}\mathbb{E}[\mu(\mathcal{C}_i \setminus \hat{\mathcal{C}}_{i,t})].$$ 

By assumption \ref{as:density}(i), the density of $q_t$ is bounded above by $C$ and bounded below by $c$. The conditional density of $q_t$ given $q_t \in \mathcal{C}_i$ is given by $f_{\mathcal{C}_i}(q):=f_{\mu}(q)/\mu(\mathcal{C}_i) \in [c/\mu(\mathcal{C}_i),C/\mu(\mathcal{C}_i)]$ so it is also bounded above and below, and the ratio of the upper and lower bounds on $f_{\mathcal{C}_i}$ is also equal to $C/c.$ Let $n_i(t)$ be the number of points in $\mathcal{C}_i$ at time $t$. Since $\mathbb{P}(q_t \in \mathcal{C}_i) = \mu(\mathcal{C}_i)$, $n_i(t)$ is binomial with parameters $(t,\mu(\mathcal{C}_i))$ and we have $ \mathbb{E}[n_i(t)] = t\mu(\mathcal{C}_i).$ Conditionally to $n_i(t)=n,$ $\hat{\mathcal{C}}_{i,t}$ is distributed as the convex hull $P_n$ of $n$ points sampled at random within $\mathcal{C}_i$ with density $f_{\mathcal{C}_i}.$ Denoting by $\mu_{\mathcal{C}_i}$ the conditional distribution of $q_t$ given that $q_t \in \mathcal{C}_i,$ $\mathbb{E}[\mu(\mathcal{C}_i \setminus \hat{\mathcal{C}}_{i,t}) | n_i(t)=n]=\mathbb{E}[\mu(\mathcal{C}_i \setminus P_n) | n_i(t)=n]=\mu(\mathcal{C}_i)\mathbb{E}[\mu_{\mathcal{C}_i}(\mathcal{C}_i \setminus P_n)]$   by independence between $P_n$ and $n_i(t).$
At time $t$, applying Corollary \ref{cor:thinned_polytope_volume} conditionally to $n_i(t)=n>0$ we obtain $$\mathbb{E}[\mu(\mathcal{C}_i \setminus \hat{\mathcal{C}}_{i,t}) | n_i(t)=n] = \frac{C}{c}\frac{\mu(\mathcal{C}_i)F(\mathcal{C}_i)}{(d+1)^{d-1}(d-1)!}\frac{\log^{d-1}n}{n}+\bigO\left(\frac{\log^{d-2}(n)\log{\log{n}}}{n}\right).$$
If $n=0$, we simply have $\mathbb{E}[\mu(\mathcal{C}_i \setminus \hat{\mathcal{C}}_{i,t}) | n_i(t)=n]=\mu(\mathcal{C}_i).$
Thus, for $\varphi_1$ defined by $\varphi_1(x)=\log^{d-1}(x)/x$ for $x>0$ and $\varphi_1(0)=0$ and $\varphi_2$ defined by $\varphi_2(x)=\log^{d-2}(x)(\log\log x)/x$ for $x>1$, $\varphi_2(0)=\mu(\mathcal{C}_i),\varphi_2(1)=0,$ we have $$\mathbb{E}[\mu(\mathcal{C}_i \setminus \hat{\mathcal{C}}_{i,t})]
 =\frac{C}{c}\frac{\mu(\mathcal{C}_i)F(\mathcal{C}_i)}{(d+1)^{d-1}(d-1)!}\mathbb{E}[\varphi_1(n_i(t))]+\bigO\left(\mathbb{E}[\varphi_2(n_i(t))]\right).$$ 
 
Applying Lemma \ref{lem:binomial-concentration} to $n_i(t)$ gives  $$\mathbb{E}[\varphi_1(n_i(t))]=\frac{\log^{d-1}(t\mu(\mathcal{C}_i))}{t\mu(\mathcal{C}_i)}+\bigO(1/t)$$ and $$\mathbb{E}[\varphi_2(n_i(t))]=\frac{\log^{d-2}(t\mu(\mathcal{C}_i))\log\log(t\mu(\mathcal{C}_i))}{t\mu(\mathcal{C}_i)}+\bigO(1/t).$$Using the fact that $t\mu(\mathcal{C}_i) \le t$, we obtain $$\mathbb{E}[\mu(\mathcal{C}_i \setminus \hat{\mathcal{C}}_{i,t})] \le \frac{C}{c}\frac{F(\mathcal{C}_i)}{(d+1)^{d-1}(d-1)!}\frac{\log^{d-1}(t)}{t}+\bigO\left(\frac{\log^{d-2}{t}\log\log{t}}{t}\right).$$ This then entails a bound on the regret as $$R_{\vhc}(T) \le (\beta-\alpha)\frac{C}{c}\sum_{i=1}^{N}\frac{F(\mathcal{C}_i)}{(d+1)^{d-1}(d-1)!}\sum_{t=1}^{T}\frac{\log^{d-1}(t)}{t} +\bigO\left(\sum_{t=1}^{T}\frac{\log^{d-2}{t}\log\log{t}}{t}\right).$$
To finish the proof, we simplify the sums over $t$. For $x\ge t_0=\lceil e^{d-1}\rceil$, $x \mapsto \varphi_1(x)$ is decreasing, so that \begin{align*}\sum_{t=1}^{T}\frac{\log^{d-1}(t)}{t}&=\sum_{t=1}^{t_0-1}\frac{\log^{d-1}{t}}{t}+\sum_{t=t_0}^{T}\frac{\log^{d-1}{t}}{t} \\
&\le \int_{1}^{T}\frac{\log^{d-1}{x}}{x}\mathrm{d}x+\bigO(1) \\
&=\frac{\log^{d}{T}}{d}+\bigO(1).\end{align*} Similarly, $x \mapsto \varphi_2(x)$ is decreasing for $x$ large enough, so that \begin{align*}\sum_{t=1}^{T}\frac{\log^{d-2}{t}\log\log{t}}{t}&=\int_{e^2}^{T}\frac{\log^{d-2}{x}\log\log{x}}{x}\mathrm{d}x+\bigO(1) \\
&=\frac{\log^{d-1}(T)((d-1)\log\log(T)-1)}{(d-1)^2}+\bigO(1).\end{align*} Finally, 
\begin{align*} R_{\vhc}(T) &\le (\beta-\alpha)\frac{C}{c}\sum_{i=1}^{N}\frac{F(\mathcal{C}_i)}{(d+1)^{d-1}(d-1)!}\frac{\log^{d}{T}}{d}+\bigO(\log^{d-1}(T)\log\log(T))\\
&=(\beta-\alpha)\frac{C}{c}\frac{\sum_{i=1}^{N}F(\mathcal{C}_i)}{(d+1)^{d-1}d!}\log^{d}{T}+\bigO(\log^{d-1}(T)\log\log(T)).\end{align*}

\end{proof}

\begin{proof}[Proof of Corollary \ref{cor:thinned_polytope_volume}]
In the proof, we omit the subscript $P$ and simply denote the density $f_P$ by $f$.
For a uniform sample $q_1,\dots,q_n$ from $P$ with density $u(x)=1/\lambda(P)$ for $x\in P$, we denote its convex hull by $Q_n=\operatorname{conv}(q_1,\dots,q_n)$. 
We want to upper bound $E_{f,n}:=\mathbb{E}[\nu(P \setminus P_n)]$ by relating it to $E_{u,n}:=\mathbb{E}[\lambda(P\setminus Q_n)]$, which can be bounded by Proposition \ref{prop:polytope_volume}. We proceed as follows:

\begin{enumerate}[label=(\roman*)]
\item Start from the sample $p_1,\dots,p_n$ generated from $\nu$. We denote by $f$ their density.
\item For each $i \in [n]$, sample $U_i$ uniform on $[0,1]$ independently, and keep $p_i$ if and only if $U_i \le c/f(p_i).$  Note that $0 < c/C \le c/f(p_i) \le 1.$
\end{enumerate}
Let $J$ be the set of indices of the points $p_i$ that are kept.
\begin{itemize}
    \item By construction, $\mathbb{P}(i \in J | p_i=x)=\mathbb{P}(U_i \le c/f(p_i) | p_i=x)=\mathbb{P}(U_i \le c/f(x))=c/f(x)$ so for any measurable set $A$, $\mathbb{P}(p_i \in A \cap i \in J)=\int_{A}f(x)\frac{c}{f(x)}\mathrm{d}x=c\lambda(A)$ and $\mathbb{P}(i \in J)=c\lambda(P)$. This implies that the conditional density of the kept points is $u$. Therefore, conditionally on $K:=\vert J \vert=k$, the set of kept points $\{p_i |i \in J\}$ is distributed as $\{q_1,\dots,q_k\}$ with $q_i$ independent and  uniform on $P$. 
    \item Since $\mathbb{P}(i \in J)=c\lambda(P)$, the random number of kept points $K$ follows a $\Bin(n,c\lambda(P))$ distribution. 
\item Consider $P_n^J=\operatorname{conv}\{p_i |i \in J\}$. Clearly $P_n^J \subseteq P_n$ so $\lambda(P \setminus P_n) \le \lambda(P \setminus P_n^J).$
\end{itemize}
This means that $\nu(P \setminus P_n)=\int_{P \setminus P_n}f(x)\mathrm{d}x \le C\lambda(P \setminus P_n) \le C\lambda(P \setminus P_n^J)$, so that
\begin{align*}E_{f,n} &\le C\mathbb{E}[\lambda(P \setminus P_n^J)] \\
&=C\sum_{k=0}^{n}\mathbb{P}(K=k)\mathbb{E}[\lambda(P \setminus P_n^J) |K=k].\end{align*}

Conditionally on $K=k,$ since $\{p_i | i \in J\}$ is distributed as $\{q_1,\dots,q_k\},$ $P_n^J$ is distributed as $Q_k$. Thus $$E_{f,n} \le  C\sum_{k=0}^{n}\mathbb{P}(K=k)E_{u,k}.$$ 

Then, by Proposition \ref{prop:polytope_volume}, $E_{u,k} =\frac{\lambda(P)F(P)}{(d+1)^{d-1}(d-1)!}\frac{\log^{d-1}(k)}{k}+\bigO\left(\frac{\log^{d-2}k\log{\log{k}}}{k}\right)$ if $k>1$, and $\lambda(P)$ if $k=0.$ Applying Lemma \ref{lem:binomial-concentration}  (as in the proof of Theorem \ref{thm:d-dimensional-regret}) gives $\mathbb{E}[E_{u,K}] \le \frac{\lambda(P)F(P)}{(d+1)^{d-1}(d-1)!}\frac{\log^{d-1}(nc\lambda(P))}{nc\lambda(P)}+\bigO\left(\frac{\log^{d-2}n\log{\log{n}}}{n}\right)$ which ensures

$$ E_{f,n} \le \frac{C}{c}\frac{F(P)}{(d+1)^{d-1}(d-1)!}\frac{\log^{d-1}(n)}{n}+\bigO\left(\frac{\log^{d-2}n\log{\log{n}}}{n}\right). $$ 

\end{proof}

\paragraph{(b) Case $\mathcal{E}=\Sph^{d-1}.$} To derive the regret bound in the spherical setting, we first obtain a variant of Corollary \ref{cor:thinned_polytope_volume} by
projecting spherical convex polytopes to Euclidean polytopes via the \emph{gnomonic projection} (see e.g. \citet{Besau2014-rw}), similarly to \citet{besau2018} in the proof of their Theorem 1.3. This result holds for distributions with bounded densities w.r.t. the spherical Lebesgue measure $\omega.$
\begin{corollary}\label{cor:spherical_polytope_volume}
    Let $P$ a spherically convex polytope in $\Sph^{d-1}$ contained in an open halfsphere $\Sph_e^{+}=\{v \in \Sph^{d-1}, v^{\top}e > 0\}$ with $d \ge 3$ and $n \ge 1$ points $p_1,\dots,p_n$ sampled  independently from a distribution $\nu$ supported on $P$ with a density $f_P=\mathrm{d}\nu/\mathrm{d}\omega$ that satisfies $c \le f_P(x) \le C$ for $\omega$-a.e. $x \in P$ for $c>0$, $C<\infty$. Denote by $P_n$ the spherical convex hull of $p_1,\dots,p_n$. Then $$\mathbb{E}\left[\nu(P\setminus P_n)\right] \le \frac{C}{c}\left(\frac{\max_{y \in P} y^{\top}e}{\min_{y \in P}y^{\top}e}\right)^{d}\frac{F(P)}{d^{d-2}(d-2)!}\frac{\log^{d-2}{n}}{n}+\bigO\left(\frac{\log^{d-3}(n)\log{\log{n}}}{n}\right) $$
\end{corollary}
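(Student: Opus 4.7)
The plan is to reduce to the Euclidean result of Corollary \ref{cor:thinned_polytope_volume} via the gnomonic projection $\pi: \Sph_e^+ \to H_e$ defined by $\pi(v) = v/(v^\top e)$, where $H_e := \{x \in \mathbb{R}^d : x^\top e = 1\}$ is the affine tangent hyperplane at $e$, identified with $\mathbb{R}^{d-1}$. I will exploit two classical properties of $\pi$: (i) it is a face-lattice isomorphism from spherically convex polytopes contained in $\Sph_e^+$ onto Euclidean convex polytopes in $H_e$, so in particular $F(\pi(P)) = F(P)$; and (ii) it maps spherical convex hulls of finite sets to their Euclidean convex hulls, i.e., $\pi(\convs\{p_1,\dots,p_n\}) = \conv\{\pi(p_1),\dots,\pi(p_n)\}$. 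Both properties follow from the fact that $\pi$ is the central radial projection and sends arcs of great circles to straight line segments.

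Next I would transform the measure. The Jacobian identity for the gnomonic projection reads $d\omega(v) = (v^\top e)^d\, du$ with $u = \pi(v)$. Pushing $\nu$ forward to $\nu' := \pi_*\nu$ on $P' := \pi(P) \subset \mathbb{R}^{d-1}$ therefore yields a Lebesgue density $f_{P'}(u) = f_P(\pi^{-1}(u))\,(\pi^{-1}(u)^\top e)^d$. Combining $c \le f_P \le C$ with $\min_{y \in P} y^\top e \le \pi^{-1}(u)^\top e \le \max_{y \in P} y^\top e$ for $u \in P'$ gives $c\,(\min_{y \in P} y^\top e)^d \le f_{P'}(u) \le C\,(\max_{y \in P} y^\top e)^d$, so the ratio of its upper to lower bound equals $(C/c)\bigl(\max_{y \in P} y^\top e / \min_{y \in P} y^\top e\bigr)^d$.

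I would then invoke Corollary \ref{cor:thinned_polytope_volume} for $P'$ in ambient dimension $d-1 \ge 2$ (this is where $d \ge 3$ is used) with the $n$ i.i.d. samples $\pi(p_1),\dots,\pi(p_n) \sim \nu'$. Since $\pi$ is a bijection between $P$ and $P'$, pushforward invariance gives $\nu(P \setminus P_n) = \nu'(P' \setminus \pi(P_n))$, and by (ii) $\pi(P_n)$ coincides with the Euclidean convex hull of the projected samples. Substituting $d \to d-1$ in the constants of Corollary \ref{cor:thinned_polytope_volume} turns $(d+1)^{d-1}(d-1)!$ into $d^{d-2}(d-2)!$ and $\log^{d-1}$ into $\log^{d-2}$; combining this with $F(P') = F(P)$ and the density-ratio bound above recovers the claimed leading constant and error term.

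The main obstacle is the Jacobian identity $d\omega(v) = (v^\top e)^d\, du$. It can be derived by parametrizing $\Sph_e^+$ via $u \mapsto (u,1)/\sqrt{1+\|u\|^2}$ and computing $\sqrt{\det G}$, where $G = (I - uu^\top/(1+\|u\|^2))/(1+\|u\|^2)$ is the induced metric tensor; this is routine but somewhat bookkeeping-heavy in arbitrary dimension, so in practice I would cite Section~3.1 of \citet{Besau2014-rw}, following the template used in \citet{besau2018}. A secondary care point is tracking the direction of the two-sided inequality on $f_{P'}$, so that the upper bound on $f_P$ is paired with $(\max y^\top e)^d$ (and symmetrically for the lower bound) to recover the exponent $d$ and the $\max/\min$ ratio exactly as stated.
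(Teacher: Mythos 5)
Your proposal is correct and follows essentially the same route as the paper's proof: gnomonic projection onto the tangent hyperplane, the Jacobian/pushforward density computation yielding the factor $(v^{\top}e)^{d}$ (the paper writes it equivalently as $(1+\Vert x\Vert^2)^{-d/2}$ and converts via $\Vert x \Vert^2 = 1/(y^{\top}e)^2-1$), the resulting density-ratio bound $(C/c)\bigl(\max_{y\in P} y^{\top}e/\min_{y\in P} y^{\top}e\bigr)^{d}$, and an application of Corollary~\ref{cor:thinned_polytope_volume} in ambient dimension $d-1$ with $F$ preserved under the projection. No gaps.
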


The derivation of the regret bound is then exactly the same as in the case $\mathcal{E}=\mathcal{I}^d$ and is consequently omitted.

\begin{proof}[Proof of Corollary \ref{cor:spherical_polytope_volume}]
Let \begin{align*}
g_e : \Sph_e^{+} &\to \mathbb{R}^{d-1} \\
x &\mapsto \frac{x}{x^\top e} - e
\end{align*}
 denote the gnomonic projection from the open halfsphere $\Sph_e^{+}$ onto the hyperplane tangent to $\Sph^{d-1}$ at the pole $e$ of $\Sph_e^{+}$. This projection is bijective and its inverse is given by $g_e^{-1}(y)=(y+e)/\Vert y + e  \Vert_2$ \citep{besau2018}.
Let $Q= g_e(P)$. $Q$ is a Euclidean convex polytope in $\RR^{d-1}$. Let $ \rho=g_e \# \nu$ denote the pushforward measure of $\nu$ onto $Q$, i.e., $\rho(A)=\nu(g_e^{-1}(A)) $ for all measurable set $A \subseteq Q.$ $\rho$ is a probability measure on $Q$. Let $q_i = g_e(p_i)$ for $i=1, \dots, n$. The points $q_i$ are independent samples from $\rho$. Let $Q_n = \conv(q_1, \dots, q_n)$ be their Euclidean convex hull.

A key property of the gnomonic projection is that it maps spherical geodesics (arcs of great circles) to Euclidean straight lines. Consequently, it maps the spherical convex hull of a set of points to the Euclidean convex hull of the projected points:
\[
g_e(P_n) = g_e(\convs(p_1, \dots, p_n)) = \conv(g_e(p_1), \dots, g_e(p_n)) = Q_n.
\]
Since $g_e$ is injective, it preserves the set difference
\[
g_e(P \setminus P_n) = g_e(P) \setminus g_e(P_n) = Q \setminus Q_n
\]
so that $P \setminus P_n = g_e^{-1}(Q \setminus Q_n)$.
By the definition of the pushforward measure $\rho$,
\[
\rho(Q \setminus Q_n) = \nu(P \setminus P_n).
\]
Taking the expectation yields
\begin{equation} \label{eq:expectation_equality}
\EE[\rho(Q \setminus Q_n)] = \EE[\nu(P \setminus P_n)].
\end{equation}

We now verify the conditions of Corollary \ref{cor:thinned_polytope_volume} for our measure $\rho$ on $Q$. By assumption, $\nu$ has a density $f_{P}=\mathrm{d}\nu/\mathrm{d}\omega$ w.r.t. the spherical Lebesgue measure $\omega$ satisfying $0 < c \le f_{P}(y) \le C$ for $\omega$-a.e. every $y \in P.$ The pushforward measure $\rho = g_e \# \nu$ has a density $f_{Q} = d\rho/d\lambda$ on $Q$ given by $f_{Q}(x) \propto f_{P}(g_e^{-1}(x))(1 + \|x\|^2)^{-d/2}$ for $x \in Q$ (see \citet{besau2018}, p.36). On the compact $Q$, the density $f_{Q}$ is bounded below by $c' = \min_{x \in Q} f_{Q}(x) > 0$ and above by $C' = \max_{x \in Q} f_{Q}(x) < \infty$, where $\frac{C'}{c'}=\frac{C}{c}\left(\frac{1+r^2_{\max}}{1+r^2_{\min}}\right)^{d/2}$ for $r_{\max}=\max_{x \in g_e(P)} \Vert x \Vert$, $r_{\min}=\min_{x \in g_e(P)} \Vert x \Vert$. Note that for $x=g_e(y) \in g_e(P)$ , $\Vert x \Vert^2= \Vert \frac{y}{y^{\top}e}-e\Vert^2=\frac{1}{(y^{\top}e)^2}-1$ since $y,e \in \Sph^{d-1}.$ Thus $$\left(\frac{1+r^2_{\max}}{1+r^2_{\min}}\right)^{d/2}=\left(\frac{\max_{y \in P} 1/(y^{\top}e)^2}{\min_{y \in P} 1/(y^{\top}e)^2}\right)^{d/2}=\left(\frac{\max_{y \in P} y^{\top}e}{\min_{y \in P} y^{\top}e}\right)^{d}$$ since $y^{\top}e>0$ for every $y \in P.$

Thus, the conditions of Corollary \ref{cor:thinned_polytope_volume} are satisfied for $\rho$ on $Q \subset \RR^{d-1}$. Applying it gives
\begin{equation}\label{eq:projected_expectation_bound}
\EE[\rho(Q \setminus Q_n)] \le \frac{C'}{c'} \frac{F(P)}{d^{d-2}(d-2)!} \frac{\log^{d-2}n}{n} + \bigO\left(\frac{\log^{d-3}(n)\log{\log{n}}}{n}\right).
\end{equation}
Combining inequality \eqref{eq:projected_expectation_bound} with the equality of expectations \eqref{eq:expectation_equality}, we obtain the desired result for the spherical case:
\[
\EE[\nu(P \setminus P_n)] \le \frac{C'}{c'}  \frac{F(P)}{(d+1)^{d-1}(d-1)!} \frac{\log^{d-1}n}{n} + \bigO\left(\frac{\log^{d-3}(n)\log{\log{n}}}{n}\right).
\]
\end{proof}

\begin{remark}
As can be seen in the proof of Corollary \ref{cor:spherical_polytope_volume}, the constant $K=\min_{i \in [N]}\left(\frac{\max_{y \in \mathcal{C}_i} y^{\top}e_i}{\min_{y \in \mathcal{C}_i} y^{\top}e_i}\right)^{d}$ that appears in the statement of Theorem \ref{thm:d-dimensional-regret}(b) comes from the bounds on the conditional densities after applying the gnomonic projections $g_{e_i}$ to each cell $\mathcal{C}_i.$ The $e_i$, $i \in [N]$ can be chosen to minimize $K$. In Fig. \ref{fig:gnomonic_projection}, we used $e_i=s_i,$ but this choice is not optimal in general.
\end{remark}

\subsubsection{Proof of Corollary \ref{cor:explicit_regret_bound}}\label{subsec:flag_bounds}

The bound on the number of flags that we derive is the following.

\begin{lemma}\label{lem:flag_bound_merged} The total number of flags $\sum_{i=1}^{N}F(\mathcal{C}_i)$ in the Voronoi tessellation of $\mathcal{E}$ is bounded as
\begin{itemize}
    \item  $\frac{8}{3}Nd!\left(\frac{2e(N+2d)}{d-1}\right)^{d/2}$ if $\mathcal{E}=\mathcal{I}^d$
    \item $4N(d-1)!\left(\frac{2eN}{d-2}\right)^{(d-1)/2}$ if $\mathcal{E}=\Sph^{d-1}$
\end{itemize}
\end{lemma}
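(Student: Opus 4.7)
The plan is to bound each $F(\mathcal{C}_i)$ uniformly via three steps: count facets of each cell, pass from facets to a vertex count of the polar dual via McMullen's Upper Bound Theorem, and then convert vertices to flags using simplicity of generic Voronoi cells.

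First, I would count facets of each $\mathcal{C}_i$. A facet of $\mathcal{C}_i$ either separates $\mathcal{C}_i$ from some neighbor $\mathcal{C}_j$ (at most $N-1$ contributions, since each is supported on a hyperplane bisecting some $(s_i,s_j)$) or lies on the ambient boundary. In the hypercube case $\mathcal{E}=\mathcal{I}^d$, the ambient boundary contributes at most $2d$ coordinate facets, giving $f_{d-1}(\mathcal{C}_i) \le N-1+2d$. In the spherical case $\mathcal{E}=\Sph^{d-1}$ there is no ambient boundary, so $f_{d-1}(\mathcal{C}_i) \le N-1$; furthermore, the gnomonic projection $g_{e_i}$ employed in the proof of Corollary \ref{cor:spherical_polytope_volume} identifies $\mathcal{C}_i$ with a Euclidean $(d-1)$-polytope having the same face lattice. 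Both cases thus reduce to a Euclidean problem with working dimension $D\in\{d,d-1\}$ and facet bound $n$.

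Next, by polar duality, $\mathcal{C}_i^*$ is a $D$-polytope with $n$ vertices, so McMullen's Upper Bound Theorem gives $f_0(\mathcal{C}_i) = f_{D-1}(\mathcal{C}_i^*) \le f_{D-1}(C(n,D))$, where $C(n,D)$ is the cyclic polytope. Combining the standard bound $f_{D-1}(C(n,D)) \le 2\binom{n}{\lfloor D/2\rfloor}$ with $\binom{n}{k}\le(en/k)^k$ and $\lfloor D/2\rfloor \ge (D-1)/2$, I obtain
\[
f_0(\mathcal{C}_i) \le 2\left(\frac{2en}{D-1}\right)^{D/2}.
\]

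Finally, the number of flags through a vertex $v$ of a $D$-polytope $P$ equals the number of complete flags of its vertex figure $P/v$; when $P$ is simple, $P/v$ is a $(D-1)$-simplex carrying exactly $D!$ flags, so $F(P)=D!\,f_0(P)$. Voronoi tessellations are simple in generic position, and degenerate configurations can be handled by a small seed perturbation that splits each non-simple vertex into several simple ones in a flag-non-decreasing way. Combining all the above yields $F(\mathcal{C}_i) \le 2D!\bigl(\tfrac{2en}{D-1}\bigr)^{D/2}$; summing over the $N$ cells and substituting $D=d,\,n=N+2d$ for the hypercube and $D=d-1,\,n=N$ for the sphere produces the claimed bounds, up to the constants $8/3$ and $4$, which would be recovered by tracking the exact formula $f_{D-1}(C(n,D)) = \binom{n-\lceil D/2\rceil}{\lfloor D/2\rfloor}+\binom{n-1-\lfloor(D-1)/2\rfloor}{\lfloor(D-1)/2\rfloor}$ and the parity of $D$ instead of the slack bound $2\binom{n}{\lfloor D/2\rfloor}$.

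The main obstacle I anticipate is justifying the flag-vertex identity $F(P)=D!\,f_0(P)$ outside the simple case. The perturbation argument requires verifying that flag counts are lower semicontinuous in the seed positions, so that no flags appear out of thin air in the simple limit; this is plausible but somewhat delicate to write out. A cleaner alternative would be to establish $F(P) \le D!\,f_{D-1}(C(n,D))$ directly for arbitrary $D$-polytopes $P$ with $n$ facets (which holds with equality in the simplicial case and, one can check, in the few examples where simplicity fails), sidestepping the simplicity issue altogether.
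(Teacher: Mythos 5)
Your overall architecture matches the paper's: bound the number of facets of each cell ($N-1+2d$ on the hypercube, $N-1$ on the sphere after reducing to a Euclidean $(d-1)$-polytope via the gnomonic projection), pass to the polar dual, invoke an upper bound theorem against the cyclic polytope, and convert to flags. The facet counts, the dimension bookkeeping $D\in\{d,d-1\}$, and the final binomial manipulations are all correct and essentially identical to what the paper does.

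The gap is in your flag-counting step. The identity $F(P)=D!\,f_0(P)$ is not merely unavailable outside the simple case --- it fails in the direction you need. For the square pyramid in $\RR^3$ one has $f_0=5$ but $F=32>30=3!\,f_0$ (each flag of a $3$-polytope is a vertex--edge--face chain: the square base contributes $8$ and each triangle $6$), and for the octahedron $F=48>36=3!\,f_0$. So bounding $f_0(\mathcal{C}_i)$ via the Upper Bound Theorem and multiplying by $D!$ does not bound $F(\mathcal{C}_i)$ without the simplicity hypothesis, and the entire burden of the proof then falls on the perturbation argument: you must prove that resolving each degenerate vertex never decreases the total flag count \emph{and} that the perturbed cells still have at most $n$ facets, neither of which is carried out --- the first is exactly the lower-semicontinuity claim you yourself flag as delicate. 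The paper sidesteps this by working with flags throughout: $F(\mathcal{C}_i)=F(\mathcal{C}_i^{o})$, the dual has at most $n$ vertices, and the flag version of the upper bound theorem (Theorem 6.6 in the cited Billera reference) gives $F(\mathcal{C}_i^{o})\le F(C_D(n))$; since the cyclic polytope is simplicial, $F(C_D(n))=D!\,f_{D-1}(C_D(n))$, and the exact formula for $f_{D-1}(C_D(n))$ yields the stated constants. This is precisely the ``cleaner alternative'' you sketch in your closing paragraph, so you have correctly identified the fix --- but as written, your main route does not close.
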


We first show the following result, which is essentially a dual version of the upper bound from Section 2 of \citet{besau2018}. 
\begin{lemma}\label{lem:flag_bound_intermediary}
    If a convex polytope $P$ of $\mathbb{R}^d$ has $k$ faces of dimension $d-1$, $$F(P) \le \left\{\begin{matrix}
(2\ell)!\frac{k}{k-\ell}\binom{k-\ell}{\ell} \text{ if } d=2\ell \\ 2(2\ell+1)!\binom{k-\ell-1}{\ell} \text{ if } d=2\ell+1. 
\end{matrix}\right.$$
\end{lemma}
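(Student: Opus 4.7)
The plan is to use polar duality to reduce the problem to counting flags of a polytope with $k$ \emph{vertices} (rather than $k$ facets), and then apply McMullen's Upper Bound Theorem. Since $P$ is a $d$-polytope with $k$ facets, its polar dual $P^*$ is a $d$-polytope with $k$ vertices. The map $F \mapsto F^\ast$ sending each proper face of $P$ to its conjugate face of $P^*$ is an order-reversing bijection on face lattices with $\dim F + \dim F^\ast = d-1$, so it induces a bijection between flags of $P$ and flags of $P^*$. In particular $F(P) = F(P^*)$, and it suffices to show that for any $d$-polytope $Q$ with $k$ vertices, $F(Q) \le d!\, f_{d-1}(C(k,d))$, where $C(k,d)$ is the cyclic polytope.

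The next step is a reduction to the simplicial case. Given $Q$ with $k$ vertices, I would construct a simplicial $d$-polytope $Q'$ with the same $k$ vertices such that $F(Q) \le F(Q')$. This follows from the standard \emph{pulling} construction (see e.g.\ Ziegler, \emph{Lectures on Polytopes}, Ch.~8): a generic perturbation, or equivalently a sequence of vertex pulls, yields a simplicial refinement of the face lattice that subdivides every non-simplex face of $Q$ into simplices without introducing new vertices. Each subdivision only refines the face lattice, so every flag of $Q$ lifts to at least one flag of $Q'$, giving $F(Q) \le F(Q')$.

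For the simplicial polytope $Q'$, every facet is a $(d-1)$-simplex, and the flags of a $(d-1)$-simplex are in bijection with the total orderings of its $d$ vertices, so each facet contributes exactly $d!$ flags. Since each flag of $Q'$ is uniquely determined by its top facet together with a flag of that facet,
\[
F(Q') \;=\; d!\,f_{d-1}(Q').
\]
By McMullen's Upper Bound Theorem, $f_{d-1}(Q') \le f_{d-1}(C(k,d))$, and the Gale evenness condition gives the closed forms
\[
f_{d-1}(C(k,d)) \;=\; \begin{cases} \dfrac{k}{k-\ell}\binom{k-\ell}{\ell}, & d = 2\ell, \\[4pt] 2\binom{k-\ell-1}{\ell}, & d = 2\ell+1. \end{cases}
\]
Multiplying by $d!$ recovers the two expressions in the lemma.

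The main obstacle is the inequality $F(Q) \le F(Q')$ after pulling. The count $f_{d-1}$ provably increases under pulling, but for full flags one needs to check that no flag is destroyed: the argument is that if $F_0 \subset \cdots \subset F_{d-1}$ is a flag of $Q$ and $F_j$ is subdivided in $Q'$, then any simplex in the subdivision containing $F_{j-1}$ and contained in $F_j$ (which exists because the subdivision is a refinement) yields a flag of $Q'$ extending the original below degree $j$. Making this combinatorial bookkeeping precise is the only delicate point; everything else is bookkeeping with the cyclic polytope face formulas.
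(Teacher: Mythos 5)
Your proposal is correct and follows essentially the same route as the paper: dualize to a $d$-polytope with $k$ vertices (flags are preserved), bound its flag count by that of the cyclic polytope $C_d(k)$, use simpliciality to write $F(C_d(k)) = d!\,f_{d-1}(C_d(k))$, and finish with the Gale-evenness closed forms. The only difference is that you unpack the flag-vector Upper Bound Theorem into its standard proof (pulling to a simplicial polytope plus McMullen's $f$-vector UBT), whereas the paper invokes that statement directly as Theorem 6.6 of Billera--Bj\"orner; the "delicate point" you flag is precisely the content of that cited result.
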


\begin{proof}[Proof of Lemma \ref{lem:flag_bound_intermediary}]
    Consider the dual polytope $P^o$ of $P$, which has $k$ vertices and the same number of flags as $P$. By the upper bound theorem (\citet{McMullen1970}, or Theorem 6.6 in  \citet{Billera2000}), $F(P^o) \le F(C_d(k)) $ where $C_d(k)$ is the cyclic polytope in $\mathbb{R}^d$ with $k$ vertices. Since $C_d(k)$ is simplicial, $F(C_d(k))=d!f_{d-1}(C_d(k))$,
    where $f_{d-1}(C_d(k))$ is the number of $(d-1)$-dimensional faces of $C_d(k)$. Hence, we have $$F(P) \le d!f_{d-1}(C_d(k)).$$ An exact formula for $f_{d-1}(C_d(k))$ is  known and is given by  $$f_{d-1}(C_d(k)) = \left\{\begin{matrix}
\frac{k}{k-\ell}\binom{k-\ell}{\ell} \text{ if } d=2\ell \\ 2\binom{k-\ell-1}{\ell} \text{ if } d=2\ell+1
\end{matrix}\right.$$ (see \citet{Ziegler1995}, p.25). This concludes the proof.
\end{proof}
\begin{proof}[Proof of Lemma \ref{lem:flag_bound_merged}]

First consider the case  $\mathcal{E}=\mathcal{I}^d$. As $\mathcal{C}_i$ is a convex polytope formed by at most $N-1+2d$ hyperplanes/facets of the hypercube $\mathcal{E}$ ($N-1$ for the hyperplanes, $2d$ for the facets of the hypercube), it has at most $N-1+2d$ faces of dimension $d-1$. The previous lemma applies and it gives $$\sum_{i=1}^{N}F(\mathcal{C}_i) \le \left\{\begin{matrix}
N(2\ell)!\frac{N+2d-1}{N+2d-1-\ell}\binom{N+2d-1-\ell}{\ell} \text{ if } d=2\ell \\ 2N(2\ell+1)!\binom{N+2d-1-\ell-1}{\ell} \text{ if } d=2\ell+1. 
\end{matrix}\right.$$
In particular, using $\binom{n}{k} \le (\frac{en}{k})^k$, $\sum_{i=1}^{N}F(\mathcal{C}_i) \le 2Nd!\frac{N+2d-1}{N+2d-1-\lfloor d/2 \rfloor}(\frac{e(N+2d-1-\lfloor d/2 \rfloor)}{\lfloor d/2 \rfloor}) ^{\lfloor d/2 \rfloor}.$ Since $\lfloor d/2 \rfloor \le \frac{N+2d-1}{4}$, we have $N+2d-1-\lfloor d/2 \rfloor \ge \frac{3}{4}(N+2d-1)$, so $\frac{N+2d-1}{N+2d-1-\lfloor d/2 \rfloor} \le 4/3,$ and $$\sum_{i=1}^{N}F(\mathcal{C}_i) \le  \frac{8}{3}Nd!\left(\frac{2e(N+2d)}{d-1}\right)^{d/2}.$$

When $\mathcal{E}=\Sph^{d-1}$, the argument is slightly simpler since only the $N-1$ boundaries of the Voronoi tessellation can form facets of the cells (the space $\Sph^{d-1}$ does not have boundaries to consider that might increase their number). More precisely, $\mathcal{C}_i$ is a spherically convex polytope formed by at most $N-1$ great spheres of $\mathcal{E}$, so it has at most $N-1$ faces of dimension $d-2$. We first remark that under the open halfsphere assumption, each cell has at least $d$ facets (Theorem 6.3.7 in \citet{ratcliffe2019}). Consequently, we must have $N \ge d + 1$. A spherical analogue of Lemma \ref{lem:flag_bound_intermediary} applies and it gives $$\sum_{i=1}^{N}F(\mathcal{C}_i) \le \left\{\begin{matrix}
N(2\ell)!\frac{N-1}{N-1-\ell}\binom{N-1-\ell}{\ell} \text{ if } d-1=2\ell \\ 2N(2\ell+1)!\binom{N-1-\ell-1}{\ell} \text{ if } d-1=2\ell+1 
\end{matrix}\right.$$
(note that the binomial coefficients are well defined since the open halfsphere assumption implies $N \ge d+1$).
In particular, using $\binom{n}{k} \le (\frac{en}{k})^k$, $\sum_{i=1}^{N}F(\mathcal{C}_i) \le 2N(d-1)!\frac{N-1}{N-1-\lfloor (d-1)/2 \rfloor}(\frac{e(N-1-\lfloor (d-1)/2 \rfloor)}{\lfloor (d-1)/2 \rfloor}) ^{\lfloor (d-1)/2 \rfloor}.$ Since $\lfloor (d-1)/2 \rfloor \le \frac{N-1}{2}$, we have $N-1-\lfloor (d-1)/2 \rfloor \ge \frac{1}{2}(N-1)$, so $\frac{N-1}{N-1-\lfloor (d-1)/2 \rfloor} \le 2,$ and $$\sum_{i=1}^{N}F(\mathcal{C}_i) \le  4N(d-1)!\left(\frac{2eN}{d-2}\right)^{(d-1)/2}.$$
\end{proof}
Injecting the result of Lemma \ref{lem:flag_bound_merged} in the regret bound of Theorem \ref{thm:d-dimensional-regret} directly gives Corollary \ref{cor:explicit_regret_bound}.

\subsubsection{Proof of Theorem \ref{thm:1-dimensional-regret-improved}}\label{subsec:proof_ub_dim_1}
We first show the following result.
\begin{proposition}\label{prop:reduction-to-uniform}
    Consider $n \ge 1$ points $p_1,\dots,p_n$ sampled independently at random from a distribution $\nu$  supported on an interval $P \subseteq \mathbb{R}$ with continuous cumulative distribution function. Sort the $p_i$ in increasing order as $p_{(1)} \le \dots \le p_{(n)}$ and denote  $P_n=[p_{(1)},p_{(n)}]$ their convex hull. We have $$\mathbb{E}[\nu({P \setminus P_n})]=\frac{2}{n+1}.$$
   
\end{proposition}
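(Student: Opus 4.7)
The plan is to reduce the claim to the well-known fact about the extremal order statistics of the uniform distribution via the probability integral transform. Let $F$ denote the cumulative distribution function of $\nu$. Since $F$ is continuous by hypothesis, the random variables $U_i := F(p_i)$ are i.i.d.\ uniform on $[0,1]$. Sorting preserves order, so $U_{(k)} = F(p_{(k)})$ for every $k$, and in particular $U_{(1)} = F(p_{(1)})$ and $U_{(n)} = F(p_{(n)})$.

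The key observation is that $P \setminus P_n$ is the union of the two tails $\{x \in P : x < p_{(1)}\}$ and $\{x \in P : x > p_{(n)}\}$, so
\begin{equation*}
\nu(P \setminus P_n) \;=\; F(p_{(1)}) + \bigl(1 - F(p_{(n)})\bigr) \;=\; U_{(1)} + (1 - U_{(n)}).
\end{equation*}
Taking expectations and using the standard identities $\mathbb{E}[U_{(1)}] = \tfrac{1}{n+1}$ and $\mathbb{E}[U_{(n)}] = \tfrac{n}{n+1}$ for the extremal order statistics of $n$ i.i.d.\ uniforms on $[0,1]$ immediately yields $\mathbb{E}[\nu(P \setminus P_n)] = \tfrac{2}{n+1}$.

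There is no real obstacle here; the only technical point worth noting is that continuity of $F$ is needed both to justify the uniform distribution of the $U_i$ and to guarantee that the endpoints contribute nothing, so that $\nu(P \setminus P_n) = \nu(P \setminus [p_{(1)}, p_{(n)}])$ with probability one. The two identities for $\mathbb{E}[U_{(1)}]$ and $\mathbb{E}[U_{(n)}]$ can either be invoked as classical (they follow from the fact that $U_{(1)} \sim \mathrm{Beta}(1,n)$ and $U_{(n)} \sim \mathrm{Beta}(n,1)$) or derived in one line each by integrating $\mathbb{P}(U_{(1)} > t) = (1-t)^n$ and $\mathbb{P}(U_{(n)} \le t) = t^n$ over $[0,1]$.
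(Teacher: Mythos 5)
Your proof is correct and follows essentially the same route as the paper's: both apply the probability integral transform $U_i = F(p_i)$ to reduce to uniform order statistics, write the uncovered mass as $1-(U_{(n)}-U_{(1)})$ (your two-tail decomposition is the same quantity), and invoke $\mathbb{E}[U_{(k)}]=k/(n+1)$. No gaps.
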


 \begin{proof}[Proof  of Proposition \ref{prop:reduction-to-uniform}]
 Consider $F_P$ the cumulative distribution function (c.d.f.) of $p_i$. Since $F_P$ is continuous, the $q_i:=F_P(p_i)$ are independent and uniformly distributed on $[0,1]$. Since $F_P$ is nondecreasing, we also have $q_{(1)} \le \dots \le q_{(n)}.$
Letting $Q_n=[q_{(1)},q_{(n)}]$ the convex hull of $\{q_i\}_{i=1}^{n}$, we thus have $F_P(P_n)=Q_n$ (the c.d.f. preserves convex hulls in dimension one).
Since $P$ is an interval, $P_n \subseteq P$, and \begin{align*}
\nu({P \setminus P_n})&=\nu(P)-\nu(P_n)\\
&=1-(F_P(p_{(n)})-F_P(p_{(1)})) \\
& = 1 - (q_{(n)}-q_{(1)}).
\end{align*}

The order statistics of uniform random variables on $[0,1]$ are well-known: $\EE[q_{(i)}]=\frac{i}{n+1},$ consequently $1-\EE[q_{(n)}-q_{(1)}]=1-\frac{n-1}{n+1}=\frac{2}{n+1}.$ \end{proof}
We can now apply Proposition \ref{prop:reduction-to-uniform}  to $\nu=\mu_{\mathcal{C}_i}$ for each $i \in [N]$ to derive the regret bound of Theorem \ref{thm:1-dimensional-regret-improved}.
\begin{proof}[Proof of Theorem \ref{thm:1-dimensional-regret-improved}]

Recall that $R_{\vhc}(T) = 2\sum_{t=1}^{T}\sum_{i=1}^{N} \mathbb{E}[\mu(\mathcal{C}_i \setminus \hat{\mathcal{C}}_{i,t})]$. Let  $n_i(t) \sim \Bin(t,\mu(\mathcal{C}_i))$ the number of points that landed in $\mathcal{C}_i$ up to time $t$. Conditionally to $n_i(t)=n$, $\hat{\mathcal{C}}_{i,t}$ is distributed as the convex hull $Q_n$ of $n$ points sampled at random within $\mathcal{C}_i$ with density $f_{\mathcal{C}_i}(x)=f_{\mu}(x)/\mu(\mathcal{C}_i)$. Thus
$\mathbb{E}[\mu(\mathcal{C}_i \setminus \hat{\mathcal{C}}_{i,t}) | n_i(t)]=\mathbb{E}[\mu(\mathcal{C}_i \setminus Q_{n_i(t)})|n_i(t)]= \mu(\mathcal{C}_i) \mathbb{E}[\mu_{\mathcal{C}_i}(\mathcal{C}_i \setminus Q_{n_i(t)})|n_i(t)]$. Applying Proposition \ref{prop:reduction-to-uniform} and noting that for $n=0$, $2/(n+1)$ bounds from above the L.H.S. of Proposition \ref{prop:reduction-to-uniform}, we obtain $$\mathbb{E}[\mu(\mathcal{C}_i \setminus \hat{\mathcal{C}}_{i,t}) | n_i(t)]\le \frac{2}{n_i(t)+1}\mu(\mathcal{C}_i) .$$ This gives $R_{\vhc}(T) \le 4\sum_{t=1}^{T}\sum_{i=1}^{N}\mathbb{E}[\frac{1}{n_i(t)+1}]\mu(\mathcal{C}_i)$.
If $X \sim \Bin(n,p)$, \begin{align*}\mathbb{E}\left[\frac{1}{X+1}\right]&=\sum_{k=0}^{n}\frac{1}{k+1}{n \choose k}p^k(1-p)^{n-k} \\
&=\frac{1}{(n+1)p} \sum_{k=0}^{n}{n+1 \choose k+1}p^{k+1}(1-p)^{n-k} \\
&=\frac{1}{(n+1)p} \sum_{k=1}^{n+1}{n+1 \choose k}p^{k}(1-p)^{n+1-k} \\
&=\frac{1}{(n+1)p}(1-(1-p)^{n+1}) \\
 &\le \frac{1}{(n+1)p}\end{align*}
thus $\mathbb{E}[\frac{1}{n_i(t)+1}] \le \frac{1}{(t+1)\mu(\mathcal{C}_i)}$. This yields a regret of \begin{align*}
R_{\vhc}(T) &\le 2(\beta-\alpha)\sum_{t=1}^{T}\frac{1}{t+1}\sum_{i=1}^{N}\frac{\mu(\mathcal{C}_i) }{\mu(\mathcal{C}_i)} \\
&\le 2(\beta-\alpha)N\log\left(T+1\right).
\end{align*}
\end{proof}

\begin{remark}\label{remark:exact_regret}
By carefully inspecting the proof, one can notice that  \begin{align*}
    \mathbb{E}\left[\mu(\mathcal{C}_i \setminus \hat{\mathcal{C}}_{i,t})\right] &= \mathbb{E}\left[\mathbb{E}\left[\mu(\mathcal{C}_i \setminus \hat{\mathcal{C}}_{i,t}) | n_i(t)\right]\right] \\&=\mathbb{E}\left[\mathbbm{1}\{n_i(t)=0\}+\frac{2}{n_i(t)+1}\mathbbm{1}\{n_i(t) \ne 0\}\right] \\
    &=(1-\mu(\mathcal{C}_i))^t+2\left[\frac{(1-(1-\mu(\mathcal{C}_i))^{t+1}}{(t+1)\mu(\mathcal{C}_i)}-(1-\mu(\mathcal{C}_i))^t\right] \\
    &= \frac{2(1-(1-\mu(\mathcal{C}_i))^{t+1}}{(t+1)\mu(\mathcal{C}_i)}-(1-\mu(\mathcal{C}_i))^t.
\end{align*}
An \emph{exact} expression of the regret of \vhc\ can then be obtained: \begin{align}\label{eq:exact_regret}R_{\vhc}(T) &= (\beta-\alpha) \sum_{i=1}^N \sum_{t=1}^T \left[ \frac{2(1-(1-\mu(\mathcal{C}_i))^{t+1})}{t+1} - \mu(\mathcal{C}_i)(1-\mu(\mathcal{C}_i))^t \right] \nonumber \\
&=(\beta-\alpha)\left(2N\left(\sum_{t=1}^{T}\frac{1}{t+1}\right) -2\sum_{i=1}^{N}\sum_{t=1}^{T}\frac{(1-\mu(\mathcal{C}_i))^{t+1}}{t+1}-\sum_{i=1}^{N}(1-\mu(\mathcal{C}_i)(1-(1-\mu(\mathcal{C}_i))^T)\right) \nonumber \\
&=(\beta-\alpha)\left(2N\left(\sum_{t=1}^{T}\frac{1}{t+1}\right) -2\sum_{i=1}^{N}\sum_{t=1}^{T}\frac{(1-\mu(\mathcal{C}_i))^{t+1}}{t+1}-\left[N-1-\sum_{i=1}^{N}(1-\mu(\mathcal{C}_i)^{T+1}\right]\right).\end{align}
\end{remark}


\subsubsection{Proof of Theorem \ref{proposition:Lower bound on the minimax regret}}\label{app:lower_bound}
The proof strategy of Proposition \ref{proposition:Lower bound on the minimax regret} is to first derive a regret lower bound on the Bayesian regret for an adequately chosen prior on the set of $N$ seeds $\theta=(s_1,\dots,s_N) \in [0,1]^N$. The minimax regret bound then follows as an immediate corollary. 
First, note that given $\theta$, the correct label to $q_t \in [0,1]$ is the index of the closest seed:
\begin{align*}
	i_t(\theta) = \arg\min_{i \in [N]} \vert q_t - s_i\vert.
\end{align*}
At any time $t$, the learner may ask the expert and obtain the correct label $i_t(\theta)$. Let  $e_t = 1$ if the expert is consulted, and $e_t=0$ otherwise. 
We define $v_t = e_t  i_t(\theta)$ the expert output at time $t$.  
The learner attempts to guess the label $i_t(\theta)$ with some estimate $\hat{\imath}_t$ using the past observations.
We use the notation $v^t = (v_1,...,v_t)$, $e^t = (e_1,...,e_t)$ and $q^t = (q_1,...,q_t)$.

An algorithm is therefore comprised of two components:
\begin{enumerate}[label=(\roman*)]
	\item Estimation: $\hat{\imath}_t$ which can be an arbitrary function of $q_t,v^{t}$
	\item Consultation: $e_t$ which can be an arbitrary function of $q_t,v^{t-1}$ 
\end{enumerate}

\paragraph{Optimality of MAP estimation.}\label{sec:Optimality of MAP estimation}
Consider $\theta$ drawn according to a prior distribution with density $p_\theta$, and define the Bayesian regret
\begin{align*}
	\bar{R}(T):=\int_{[0,1]^N} R(T,\theta) p_\theta(\theta) d\theta  = (\beta-\gamma)\sum_{t=1}^{T} \mathbb{P}(\hat{\imath}_t\ne i_t(\theta)) + (\beta-\alpha)\sum_{t=1}^{T}\mathbb{E}(e_t) 
\end{align*} 
where we emphasize that $\theta$ is random and drawn according to $p_{\theta}$, and we omit the subscript $\pi$ in the regret notation for visual clarity.

\begin{proposition}\label{prop:map_optimality} The Maximum A Posteriori (MAP) estimator \begin{align*}
	\hat{\imath}_t \in \arg\max_{i \in [N]} \mathbb{P}( i_t(\theta)=i | q_t,v^{t-1} )	
\end{align*}
minimizes $\bar{R}(T).$
\end{proposition}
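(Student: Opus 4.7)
My plan is to exploit the structural separation in the Bayesian regret $\bar{R}(T)$: the consultation variable $e_t$ is by definition measurable with respect to $(q_t, v^{t-1})$, so for a \emph{fixed} consultation rule the second term $(\beta-\alpha)\sum_t \mathbb{E}[e_t]$ is already pinned down, and the estimator choice $\hat{\imath}_t$ only enters through the first term $(\beta-\gamma)\sum_t \mathbb{P}(\hat{\imath}_t \ne i_t(\theta))$. It therefore suffices to show that, round by round and for an arbitrary consultation rule, the MAP estimator maximizes $\mathbb{P}(\hat{\imath}_t = i_t(\theta))$ among all estimators measurable with respect to $(q_t,v^t)$.

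The key step is a pointwise-minimization argument via the tower property. I would write
\begin{equation*}
\mathbb{P}(\hat{\imath}_t = i_t(\theta)) \;=\; \mathbb{E}\!\left[\mathbb{P}\!\left(\hat{\imath}_t = i_t(\theta)\,\middle|\, q_t,v^{t-1}\right)\right],
\end{equation*}
and then analyze the inner conditional probability by cases on the value of $e_t$, which is $(q_t,v^{t-1})$-measurable. When $e_t=1$, we have $v_t=i_t(\theta)$ so the posterior of $i_t(\theta)$ given $(q_t,v^t)$ is a point mass at $v_t$; the MAP then trivially coincides with $\hat{\imath}_t = v_t$ and the round contributes zero error. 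When $e_t=0$, the observation $v_t=0$ is a deterministic function of $(q_t,v^{t-1})$ and hence uninformative, so $\mathbb{P}(i_t(\theta)=i\mid q_t,v^t) = \mathbb{P}(i_t(\theta)=i\mid q_t,v^{t-1})$; choosing $\hat{\imath}_t \in \arg\max_{i}\mathbb{P}(i_t(\theta)=i\mid q_t,v^{t-1})$ maximizes the conditional correctness probability for every realization of $(q_t,v^{t-1})$.

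Integrating this pointwise optimum back through the tower expectation gives a bound that dominates any other measurable estimator, yielding the claim. The main mild subtleties—rather than obstacles—are (i) justifying measurability of the arg max selection (standard: a measurable selector exists because $[N]$ is finite and the posterior probabilities are measurable in $(q_t,v^{t-1})$), and (ii) handling ties in the arg max, which are irrelevant since any tie-breaking rule yields the same conditional correctness probability. No further assumption on the consultation rule is needed, so the proof conclusion holds uniformly over all policies for $e_t$.
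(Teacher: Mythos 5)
Your proposal is correct and follows essentially the same route as the paper: fix the consultation rule (so the $(\beta-\alpha)\sum_{t}\mathbb{E}[e_t]$ term is unaffected by the choice of estimator), condition on $(q_t,v^{t-1})$ via the tower property, and observe that the MAP maximizes the conditional probability of a correct guess pointwise. Your explicit case split on $e_t$ (point-mass posterior when $e_t=1$, uninformative $v_t=0$ when $e_t=0$) is in fact slightly more careful than the paper's argument, which pulls $\mathbbm{1}\{\hat{\imath}_{t}=i\}$ out of the conditional expectation and only addresses the consulted rounds in a follow-up remark.
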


\begin{proof}[Proof of Proposition \ref{prop:map_optimality}]
Let us prove that there exists an optimal Bayesian algorithm such that $\hat{\imath}_t$ is the MAP estimator:
\begin{align*}
	\hat{\imath}_t \in \arg\max_{i \in [N]} \mathbb{P}( i_t(\theta)=i | q_t,v^{t-1} )	
\end{align*}
Now consider an arbitrary algorithm, its Bayesian regret is
\begin{align*}
	\bar{R}(T) 
	&= (\beta-\gamma)\sum_{t=1}^{T} \left[ 1 - \sum_{i=1}^{N} \mathbb{P}( \hat{\imath}_{t} = i ,  i_t(\theta)=i ) \right] + (\beta-\alpha)\sum_{t=1}^{T}\mathbb{E}(e_t) \\
	&= (\beta-\gamma)\sum_{t=1}^{T} \left[ 1 - \sum_{i=1}^{N} \mathbb{E}\left( \mathbb{P}( \hat{\imath}_{t} = i ,  i_t(\theta)=i | q_t,v^{t-1} )\right) \right] + (\beta-\alpha)\sum_{t=1}^{T}\mathbb{E}(e_t)  \\
	&= (\beta-\gamma)\sum_{t=1}^{T} \left[ 1 - \sum_{i=1}^{N} \mathbb{E}\left( \mathbbm{1}\{\hat{\imath}_{t} = i\} \mathbb{P}(  i_t(\theta)=i | q_t,v^{t-1} ) \right)\right] + (\beta-\alpha)\sum_{t=1}^{T}\mathbb{E}(e_t)  \\
	&\ge (\beta-\gamma)\sum_{t=1}^{T} \left[ 1 - \mathbb{E}\left( \max_{i \in [N]}  \mathbb{P}(  i_t(\theta)=i | q_t,v^{t-1} ) \right) \right] + (\beta-\alpha)\sum_{t=1}^{T}\mathbb{E}(e_t) 
\end{align*}
where we used the tower property of expectations and the fact that $\hat{\imath}_{t} \in [N]$, and where the lower bound is attained if $\hat{\imath}_{t}$ is the MAP 
	$\hat{\imath}_t \in \arg\max_{i \in [N]} \mathbb{P}( i_t(\theta)=i| q_t,v^{t-1} )$.
\end{proof}
We have proven that, given any algorithm, we may modify it by replacing its estimation part by the MAP estimator and doing so always results in lower Bayesian regret. We emphasize that changing the estimation part while keeping the consultation part constant does not change $q^t$ nor does it change $v^t$, so that it has no impact on the information available at time $t$ in order to perform the estimation. 

\paragraph{Some properties of the MAP.}\label{sec:Some properties of the MAP}

Note that if the expert is consulted  ($e_t=1$), then the MAP is simply $\hat{\imath}_t = i_t(\theta)$ since the correct label is available. 
Furthermore, if $q_t \in \conv( \{ q_{t_j}: e_{t_j} = 1, v_{t_j} = i \})$ then the MAP is $\hat{\imath}_t = i$, since the only possible label is $i$. 

Fig.~\ref{fig:figure_lower_bound} illustrates the construction behind the lower bound in dimension one.
\begin{figure}[htpb]
	\centering
	\includegraphics[width=0.8\textwidth]{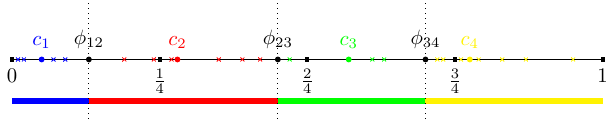}
	\caption{Lower bound construction in dimension one, $\phi_{ij}=(s_i+s_j)/2$}
	\label{fig:figure_lower_bound}
\end{figure}

\begin{proof}[Proof of Theorem \ref{proposition:Lower bound on the minimax regret}]
Consider $2 \le t \le T$ (the theorem is trivial if $T=1$) and the following prior on $\theta$: $s_1,...,s_N$ are drawn independently and $s_i$ is uniformly distributed on the interval $I_i = [(i-1)/N,i/N]$. 
Either the expert is consulted, so that $e_t = 1$ and a regret of $\beta-\alpha$ is incurred, or a regret at least equal to $\beta-\gamma$ times the error rate of the MAP estimator is incurred. 
Since $\gamma \le \alpha$, the instantaneous regret $1-r(t)$ is lower bounded as: 
\begin{align*}
	1-r(t) &\ge 
    (\beta-\gamma)\mathbb{P}(e_t = 0)   \mathbb{P}(\hat{\imath}_t \ne i_t(\theta) |  e_t = 0) + (\beta-\alpha)\mathbb{P}(e_t = 1)  \\
    &\ge (\beta-\alpha)\left(\mathbb{P}(e_t = 0)   \mathbb{P}(\hat{\imath}_t \ne i_t(\theta) |  e_t = 0) + 1-\mathbb{P}(e_t = 0)\right)   \\
    &\ge (\beta-\alpha)\mathbb{P}(\hat{\imath}_t \ne i_t(\theta) |  e_t = 0).
\end{align*}
We now focus on the error rate of the MAP knowing that $e_t=0$.
Denote by $l_{i,t} = \min \{ q_{s}: s \le t, e_{s} = 1, v_{s} = i \}$ for $2 \le i \le N$ and $l_{1,t} = 0$.
Similarly $r_{i,t} = \max \{ q_{s}: e_{s} = 1, v_{s} = i \}$ for $1 \le i \le N-1$ and $r_{N,t} = 0$.
Since $s_i \in I_i$ for all $i$ we have $s_i \le s_{i+1}$ for all $i$ and it follows that $l_{i,t} \le r_{i,t} \le l_{i+1,t}$ for all $i$. 
If $q_t \in \cup_{i \in [N]} [l_{i,t}, r_{i,t}]$ then $\hat{\imath}_t = i_t(\theta)$ so that the correct label is guessed with probability one. 
If $q_t \not\in \cup_{i \in [N]}[l_{i,t}, r_{i,t}]$ define $i$ such that $r_{i} \le q_t \le l_{i+1}$ so that there are only two possible labels: namely $i$ and $i+1$, and the learner must guess whether or not $q_t \le \phi_{i,i+1} =  (s_{i} + s_{i+1})/2$. 
Since $s_i$ and $s_{i+1}$ are uniformly distributed on $I_i$ and $I_{i+1}$ we have $\phi_{i,i+1} = (i+z-1/2)/N$ in distribution where $z$ follows the Irwin-Hall distribution with two parameters, i.e., $z \in [0,2]$ is distributed as the sum of two i.i.d uniformly distributed random variables over $[0,1]$ with density $\min(z,2-z)$.
To ease notation we define $h(z) = \min(z,2-z)$ and $I_i(a,b) = \int_{N a - i+1/2}^{N b - i+1/2} h(z) dz$.

Therefore
\begin{align*}
	\mathbb{P}(  i_t(\theta) = i | q_t,v_{t-1},e_t=0 )  
	&= \mathbb{P}( q_t \le \phi_{i,i+1} | q_t,v_{t-1} ) \\
	&= \mathbb{P}( q_t \le \phi_{i,i+1} | \phi_{i,i+1} \in [r_{i,t},l_{i+1,t}] ) \\
	&= \mathbb{P}( q_t \le (i+z-1/2)/N | (i+z-1/2)/N \in [r_{i,t},l_{i+1,t}] ) \\
	&= \mathbb{P}( N q_t - i+1/2 \le z | z \in [N r_{i,t}- i+1/2 ,N l_{i+1,t}- i+1/2 ] ) \\
	&= \frac{ I_i(q_t, l_{i+1,t}) }{ I_i( r_{i,t}, l_{i+1,t}) } 
\end{align*}
and similarly
\begin{align*}
	\mathbb{P}(   i_t(\theta) = i-1 | q_t,v_{t-1},e_t=0 ) 
	&= \frac{I_i(r_{i,t},q_t)}{ I_i( r_{i,t},l_{i+1,t}) }
\end{align*}
Furthermore, the error rate of the MAP knowing $q_t$ is given by
\begin{align*}
	\mathbb{P}(\hat{\imath}_t \ne  i_t(\theta) | q_t,v_{t-1}, e_t = 0) &= \min_{j \in \{i,i+1\}} \mathbb{P}(   i_t(\theta) = j | q_t,v_{t-1},e_t=0 )  \\
									      &= \frac{\min( I_i(q_t,l_{i+1,t}),  I_i(r_{i,t},q_t) )} {I_i( r_{i,t}, l_{i+1,t})} 
\end{align*}
and the error rate is found by integrating this function over $[0,1]$ since $q_t$ is drawn uniformly at random:
\begin{align*}
	\mathbb{P}(\hat{\imath}_t \ne  i_t(\theta) | e_t = 0) 
&=  \sum_{i=1}^{N-1}  \int_{0}^1 \mathbb{E} \left( \frac{\min( I_i(q,l_{i+1,t}),  I_i(r_{i,t},q) )} {I_i( r_{i,t}, l_{i+1,t})}  \indic(q \in [r_{i,t}, l_{i+1,t}]) \mathrm{d}q  \right).
\end{align*}
While this expression can be computed in closed form, it is unwieldy, and we will now lower bound it. Define the median point $m_t \in [r_{i,t}, l_{i+1,t}]$ such that 
\begin{align*}
	I_i(m_t,l_{i+1,t}) = I_i(r_{i,t},m_t) = \frac{1}{2} I_i(r_{i,t},l_{i+1,t})
\end{align*}
Consider $q \le m_t$, we have
\begin{align*}
	\min( I_i(q,l_{i+1,t}),  I_i(r_{i,t},q) ) &=  I_i(r_{i,t},q) \\
						  &= I_i(r_{i,t},m_t) - I_i(q,m_t) \\
						  &\ge \frac{1}{2} I_i(r_{i,t},l_{i+1,t}) - 2 |m_t-q| \\
\end{align*}
where we used the definition of $m_t$ and the fact that $h(z) \le 2$. Similarly let $q \ge m_t$, we have
\begin{align*}
	\min( I_i(q,l_{i+1,t}),  I_i(r_{i,t},q) ) &=  I_i(q,l_{i+1,t},q) \\
						  &= I_i(m_t,l_{i+1,t}) - I_i(m_t,q) \\
						  &\ge \frac{1}{2} I_i(r_{i,t},l_{i+1,t}) - 2 |m_t-q| \\
\end{align*}
Putting both cases together, and using the fact that the expression is always positive:
\begin{align*}
	\frac{\min( I_i(q,l_{i+1,t}),  I_i(r_{i,t},q))}{I_i( r_{i,t}, l_{i+1,t})}  \ge \max\left(0, \frac{1}{2} - \frac{ 2 |m_t-q|}{ I_i(r_{i,t},l_{i+1,t})}\right).
\end{align*}
Integrating this bound over $q$,
\begin{align*}
	\int_{0}^1 \frac{\min( I_i(q,l_{i+1,t}),  I_i(r_{i,t},q) )} {I_i( r_{i,t}, l_{i+1,t})}  \indic(q \in [r_{i,t}, l_{i+1,t}]) \mathrm{d}q
	&\ge \int_{0}^1  \max\left(0, \frac{1}{2} - \frac{ 2 |m_t-q|}{ I_i(r_{i,t},l_{i+1,t})}\right) \mathrm{d}q \\
	&= \frac{1}{4} I_i(r_{i,t},l_{i+1,t})
\end{align*}
so the error rate is lower bounded by 
\begin{align*}
	\mathbb{P}(\hat{\imath}_t \ne  i_t(\theta) | e_t = 0) 
&\ge  \frac{1}{4} \sum_{i=1}^{N-1}  \mathbb{E} \left(  I_i(r_{i,t},l_{i+1,t}) \right).
\end{align*}

Consider $\epsilon_t = 1/t \le 1/\sqrt{2}$ since $t \ge 2$ and consider the event $E_{i,t}$ that $\phi_{i,i+1} \in [(i-1/2+1/\sqrt{2})/N, (i-1/2 + 2 - 1/\sqrt{2})/N]$, and that $q_s \not\in [\phi_{i,i+1} - \epsilon_t/2,\phi_{i,i+1} + \epsilon_t/2]$ for all $s \le t$. 
This event has probability 
\begin{align*}
	\mathbb{P}(E_{i,t}) &= \int_{ (i-1/2+1/\sqrt{2})/N}^{(i-1/2 + 2 - 1/\sqrt{2})/N}  \mathbb{P}(  q_s \not\in [p - \epsilon_t/2,p + \epsilon_t/2] s \le t  | \phi_{i,i+1} = p) \mathbb{P}(\phi_{i,i+1} = p) dp \\
	&=  I(1/\sqrt{2},2-1/\sqrt{2}) (1-\epsilon_t)^t \\
	&=  \frac{1}{2} (1-\epsilon_t)^t \\
\end{align*}
and when it occurs $r_{i,t} \le \phi_{i,i+1} - \epsilon_t/2 \le \phi_{i,i+1} - \epsilon_t/2 \le l_{i+1,t}$ and in turn $1/\sqrt{8}\le N r_{i,t} - (i-1/2) \le N l_{i+1,t} - (i-1/2) \le 2 - 1/\sqrt{8}$, which then implies that $I_i(r_{i,t},l_{i+1,t} \ge (l_{i+1,t} - r_{i,t}) \min_{z \in [r_{i,t},l_{i+1,t}]} h(z) \ge \epsilon_t /\sqrt{8}$. Hence taking expectations
\begin{align*}
	\mathbb{E}\left(  I_i(r_{i,t},l_{i+1,t}) \right) &\ge \frac{\epsilon_t}{2\sqrt{2}} \mathbb{P}(E_{i,t}) \\
							  &=\frac{1}{t 4 \sqrt{2}} \left(1- \frac{1}{t}\right)^t
\end{align*}
Putting everything together gives a lower bound on the Bayesian regret
\begin{align*}
	\int_{[0,1]^N} R(T,\theta) p_\theta(\theta) d\theta  
	&\ge (\beta-\alpha)\sum_{t=1}^{T} \mathbb{P}(\hat{\imath}_t \ne  i_t(\theta) | e_t = 0) \\
	&\ge (\beta-\alpha)\frac{N-1}{16 \sqrt{2}} \sum_{t=2}^{T} \frac{1}{t} \left(1- \frac{1}{t}\right)^t \\
    &\ge  (\beta-\alpha)\frac{N-1}{64 \sqrt{2}}\log\left(\frac{T+1}{2}\right)
\end{align*}

where we used $(1-\frac{1}{t})^t \ge (1-\frac{1}{2})^2$ for $t \ge 2$ and $\sum_{t=2}^{T}\frac{1}{t} \ge \log\left(\frac{T+1}{2}\right)$ in the last inequality.

We have proven that, for any $T \ge 1$, the Bayesian regret of any algorithm is lower bounded by $$(\beta-\alpha)\frac{N-1}{64 \sqrt{2}} \log\left(\frac{T+1}{2}\right)= \Omega( (N-1) \log{T}).$$
By corollary, the same bound holds for the minimax regret.
\end{proof}

\begin{remark}[Adversarial Setting]\label{rem:adversarial} As mentioned in Section \ref{sec:models}, any algorithm must suffer linear regret in an adversarial setting. consider dimension $d=1$, the interval $\mathcal{E}=[0,1]$  and $N=2$ labels. Denote by $\phi$ the boundary between the two labels so that 
$\mathcal{C}_1=[0,\phi]$ and $\mathcal{C}_2=[\phi,1]$, and the adversary selects the query points as follows: $$q_t=\frac{l_t+r_t}{2},$$ obeying the following recursion:
\begin{enumerate}[label=(\roman*)]
\item If $t=1$ then $[l_1,r_1]=[0,1]$
\item If $t \ge 2$ and $\phi \le q_t$ then $[l_{t+1},r_{t+1}]=[l_t,q_t]$
\item If $t \ge 2$ and $\phi > q_t$ then $[l_{t+1},r_{t+1}]=[q_t,r_t]$
\end{enumerate}
The query points can be seen as a binary search procedure over 
$[0,1]$ in an attempt to find $\phi$. Then, for every time $t$:
\begin{enumerate}[label=(\roman*)]
    \item either the learner calls the expert, which incurs a cost of $\alpha$
    \item otherwise the learner must attempt to guess whether or not $\phi \le q_t$ and the optimal guessing strategy is the MAP estimator, as explained in the proof of Theorem \ref{proposition:Lower bound on the minimax regret}. 
\end{enumerate}

By construction, even in the most favorable case where the learner called the expert at all times from time $1$ to time $t-1$, then the error rate of the MAP estimator is exactly $1/2$, because the a posteriori distribution of $\phi$ is uniform over $[l_t,r_t]$ and the a posteriori probability of $\phi \le q_t$
and  $\phi > q_t$ are equal, i.e., both labels are equiprobable. In other words, the adversary can always make sure that the MAP does not perform better than guessing uniformly at random. This incurs an average cost of guessing of $(\beta+\gamma)/2$.

It is noted that the adversary is oblivious, so that it does not adapt to the decisions selected by the learner, and the regret is indeed linear.
\end{remark}
\subsection{Proofs of Section \ref{sec:etc_algo}}\label{sec:etc_algo_proof}

In this section, we present a more general version of Theorem \ref{thm:ETc_regret} along with its proof. We recall that the expert labeling policy is given by the Voronoi tessellation with seeds $s_i$, as described in Section \ref{sec:voronoi}. We show that in the regime $T \le e^d$, if the queries are distributed as a subgaussian mixture with sufficiently separated centers, a simple Center-based Classifier (\extc) can achieve $N\log{N}$ regret after $T$ rounds when the mixture is homogeneous. \extc's general regret bound holds under the following condition, which implies \ref{as:density}(ii).

\begin{assumption}\label{as:subgaussian}
The distribution $\mu$ of the queries is given by the following mixture model:  $i \in [N]$ is chosen with probability $p_{i}$, then $q_t$ is sampled from $s_i+w_i$  where $s_i \in \mathbb{R}^d$ is the component center and $w_i$ is subgaussian with parameter $\sigma>0$: for all $\lambda \in \RR^d$, $$\mathbb{E}[\exp(\lambda^{\top}w_i)] \le \exp\left(\frac{\sigma^2\Vert \lambda \Vert_2^2}{2} \right).$$
Furthermore, the mixture centers are sufficiently separated: $$\delta^2_{\min} \ge c\sigma^2d$$ where $\delta_{\min}=\min_{i \ne j} \Vert s_i - s_j \Vert_2$ and $c \ge 32$.
\end{assumption}

On $\mathcal{E}=\mathcal{I}^d$, Assumption \ref{as:subgaussian} is for instance verified if  $w_i \in [-L,L]^d$ for some $L^2 \le \delta_{\min}^2/(cd)$. Note that we always have $\delta_{\min}^2 \le d$ on $\mathcal{I}^d$, so  the latter requires $L^2 \le 1/c.$

Below, we state our general upper bound on \extc's regret, from which Theorem \ref{thm:ETc_regret} directly follows. 

\begin{theorem}\label{thm:ETc_regret} Under Assumption \ref{as:subgaussian}, the regret of \extc\ is bounded as \begin{align*}R_{\extc}(T) &\le \frac{(\beta-\alpha)(\log{N}+1)}{p_{\min}}\left(1+\frac{192(d+2\log{T})}{cd}\right) +(2\beta-\gamma-\alpha)N+(\beta-\gamma)T(e^{-\frac{c-32}{48}d}+Te^{-\frac{c-8}{12}d}).\end{align*} 

In particular, if $\delta_{\min}^2 \ge 80\sigma^2d$ and $T \le e^d,$
\begin{align*}R_{\extc}(T) &\le \frac{41}{5}\frac{(\beta-\alpha)(\log{N}+1)}{p_{\min}}+2(\beta-\gamma)(N+1).\end{align*} 
\end{theorem}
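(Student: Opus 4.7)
The natural decomposition of the regret is
\begin{align*}
R_{\extc}(T) = (\beta - \alpha)\, \mathbb{E}[T_1] + (\beta - \gamma) \sum_{t = T_1+1}^{T} \mathbb{P}(\hat{\imath}_t \neq i_t),
\end{align*}
so two quantities need to be controlled: the expected length of the exploration phase, and the per-round misclassification probability during the commit phase.

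For the first term, I would work on a high-probability good event $\mathcal{G}$ built from three ingredients. (i) A coupon-collector concentration bound showing that every label is seen at least once by round $(1+\log N)/p_{\min}$ up to a constant factor; this ingredient alone delivers the leading term $(\beta-\alpha)(1+\log N)/p_{\min}$. (ii) A binomial Chernoff bound ensuring that $|\mathcal{Q}_{i,t}| \ge p_i t /2$ for all $i$ and all $t$ beyond a suitable threshold. (iii) A standard subgaussian concentration bound for vector means giving $\|\hat{s}_i(t) - s_i\|_2 \le \delta_{\min}/8$ as soon as $|\mathcal{Q}_{i,t}| \ge \frac{108\sigma^2}{\delta_{\min}^2}(d+2\log T)$, with the $2\log T$ tuned so the failure probability is $1/T^2$. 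On $\mathcal{G}$, (iii) and the triangle inequality give $\hat{\delta}_{\min}(t) \ge (3/4)\delta_{\min}$, hence $108\sigma^2/\hat{\delta}_{\min}^2(t) \le 192\sigma^2/\delta_{\min}^2 \le 192/(cd)$ using the separation assumption. Substituting back into the stopping rule, the algorithm halts once $|\mathcal{Q}_{i,t}| \ge \frac{192}{cd}(d+2\log T)$ for all $i$, which by (i)--(ii) happens no later than
\begin{align*}
\frac{1+\log N}{p_{\min}}\Bigl(1+\frac{192(d+2\log T)}{cd}\Bigr).
\end{align*}
Off of $\mathcal{G}$, one uses the trivial bound $T_1 \le T$; the probability of $\mathcal{G}^c$, obtained from a union of subgaussian and Chernoff tails, produces exactly the residual of the form $T(e^{-(c-32)d/48} + T e^{-(c-8)d/12})$ in the statement.

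For the commit phase, fix $t > T_1$ and condition on $\mathcal{G}$, so that $\|\hat{s}_j(T_1) - s_j\|_2 \le \delta_{\min}/8$ for every $j$. Writing $q_t = s_{i_t} + w_{i_t}$ with $w_{i_t}$ a $\sigma$-subgaussian vector, the event $\hat{\imath}_t = j \neq i_t$ implies $\|q_t - \hat{s}_j(T_1)\|_2 \le \|q_t - \hat{s}_{i_t}(T_1)\|_2$, and two applications of the triangle inequality reduce this to $2\|w_{i_t}\|_2 \ge \delta_{\min} - \delta_{\min}/4 = 3\delta_{\min}/4$. Subgaussian norm concentration then gives $\mathbb{P}(\|w_{i_t}\|_2 \ge 3\delta_{\min}/8) \le \exp(-\Omega(d))$ as soon as $\delta_{\min}^2 \ge c\sigma^2 d$ with $c$ larger than a universal threshold, and summing over $t$ contributes another residual of the same exponential-in-$d$ type.

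The main technical obstacle is that $T_1$ is a data-dependent stopping time and $\hat{s}_i(T_1)$ is evaluated at it, so one cannot apply a single subgaussian deviation bound to $\hat{s}_i(T_1)$ directly. The fix is the standard peeling/union argument: require the subgaussian concentration in (iii) to hold simultaneously for every plausible value $n$ of $|\mathcal{Q}_{i,t}|$ (or equivalently for every $t \le T$), which is precisely what inflates the deviation threshold by the $\log T$ factor and is responsible for the $d+2\log T$ appearing in the stopping criterion. Beyond this, matching the explicit numerical constants in the statement is a careful bookkeeping exercise over the Chernoff and subgaussian tail parameters; the second, cleaner bound in Theorem \ref{thm:ETc_regret_main} then follows by setting $c=80$ and $T \le e^d$, which collapses both exponential residuals into absolute constants and absorbs them into the additive $2(\beta-\gamma)(N+1)$ term.
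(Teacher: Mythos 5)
Your decomposition, good-event architecture, coupon-collector bound for the leading term, union-over-$t$ fix for the data-dependent stopping time, and triangle-inequality-plus-subgaussian-norm argument for the commit phase all mirror the paper's proof (Lemma \ref{lem:first_phase2}, Propositions \ref{prop:first_phase_regret_ETC} and \ref{prop:second_phase_regret_etc}). However, there is one genuine gap: you write $q_t = s_{i_t} + w_{i_t}$, silently identifying the expert's label $i_t$ with the generative component of the mixture. Under Assumption \ref{as:subgaussian} the query is drawn as $q_t = s_{j_t} + w_{j_t}$ for a randomly chosen component $j_t$, while the expert labels by the Voronoi rule $i_t = \arg\min_i \Vert q_t - s_i\Vert_2$; these can disagree. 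This matters in two places. First, the samples $\mathcal{Q}_{i,t}$ used to form $\hat{s}_i(t)$ are mixture draws conditioned on landing in the Voronoi cell $\mathcal{C}_i$, not i.i.d. copies of $s_i + w_i$; the conditional mean is not $s_i$ in general, so your ingredient (iii) — ``a standard subgaussian concentration bound for vector means'' centered at $s_i$ — does not apply directly. Second, the commit-phase error is measured against $i_t$, not $j_t$.

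The paper repairs this by first showing that the event $E = \{\forall t \in [T],\ i_t = j_t\}$ holds with probability at least $1 - Te^{-\frac{c-8}{12}d}$ (a consequence of $\mathbb{P}(\Vert w_{j}\Vert_2 \ge \delta_{\min}/2)$ being exponentially small in $d$), then carrying out the entire analysis in the ``generative'' setting and paying $(\beta-\gamma)T\cdot\mathbb{P}(E^c)$ for the coupling. This is precisely the origin of the $(\beta-\gamma)T^2 e^{-\frac{c-8}{12}d}$ term in the statement, which your accounting — attributing all exponential residuals to the failure of your good event $\mathcal{G}$ built from coupon-collector, Chernoff, and mean-concentration tails — does not produce. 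The fix is routine but necessary; without it the core concentration step of your Phase 1 analysis is not justified. The remaining discrepancies (Chernoff on $|\mathcal{Q}_{i,t}|$ versus the paper's $\mathbb{E}[\tau_m]\le\lceil m\rceil\,\mathbb{E}[\tau_1]$ restart trick; $\delta_{\min}/8$ versus $\delta_{\min}/4$ estimation radii) are immaterial bookkeeping choices.
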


We denote by $j_t$ the \emph{generative} label of $q_t$, i.e., $q_t=s_{j_t}+w_{j_t}$. For the analysis, it is easier to work under the assumption that the expert provides the generative labels $j_t$ instead of the Voronoi labels $i_t$. We will make use of the following lemma. 
\begin{lemma}
    Under assumption \ref{as:subgaussian}, the event $E:= \{\forall  t \in [T]  \, i_t = j_t\}$ occurs with probability $\mathbb{P}(E) \ge 1-  Te^{-\frac{c-8}{12}d}.$
\end{lemma}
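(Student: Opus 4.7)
The goal is to identify when the expert's Voronoi label $i_t = \arg\min_i \|q_t - s_i\|_2$ coincides with the generative label $j_t$ (so that $q_t = s_{j_t} + w_{j_t}$), and then use subgaussian concentration plus a union bound. The first step is to produce a clean sufficient condition for $i_t = j_t$ in terms of $\|w_{j_t}\|_2$ alone. Writing $\|q_t - s_{j_t}\|_2 = \|w_{j_t}\|_2$ and, for any $i \neq j_t$, applying the reverse triangle inequality:
\begin{equation*}
\|q_t - s_i\|_2 = \|(s_{j_t}-s_i) + w_{j_t}\|_2 \ge \|s_{j_t}-s_i\|_2 - \|w_{j_t}\|_2 \ge \delta_{\min} - \|w_{j_t}\|_2.
\end{equation*}
Hence $\|w_{j_t}\|_2 < \delta_{\min}/2$ forces $\|q_t - s_i\|_2 > \|w_{j_t}\|_2 = \|q_t - s_{j_t}\|_2$ for every $i \neq j_t$, so that $i_t = j_t$. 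Consequently $E^c \subseteq \bigcup_{t=1}^T \{\|w_{j_t}\|_2 \ge \delta_{\min}/2\}$.

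Next I would invoke a standard tail bound for the Euclidean norm of a $\sigma$-subgaussian vector in $\mathbb{R}^d$ of the Hsu--Kakade--Zhang form: for every $u > 0$,
\begin{equation*}
\PP\bigl(\|w_{j_t}\|_2^2 \ge \sigma^2 (d + 2\sqrt{du} + 2u)\bigr) \le e^{-u}.
\end{equation*}
Setting $u = \tfrac{c-8}{12}d$ (which is $\ge 0$ since $c \ge 32$), a direct computation gives
\begin{equation*}
d + 2\sqrt{du} + 2u = d\Bigl(1 + 2\sqrt{\tfrac{c-8}{12}} + \tfrac{c-8}{6}\Bigr) \le \tfrac{c}{4}d,
\end{equation*}
which can be verified for $c \ge 32$ by squaring both sides of $2\sqrt{(c-8)/12} \le c/4 - 1 - (c-8)/6$. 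Combined with the separation assumption $\delta_{\min}^2 \ge c\sigma^2 d$, this yields $\sigma^2(d + 2\sqrt{du} + 2u) \le \delta_{\min}^2/4$, and therefore
\begin{equation*}
\PP\bigl(\|w_{j_t}\|_2 \ge \delta_{\min}/2\bigr) \le e^{-(c-8)d/12}.
\end{equation*}

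Finally, a union bound over $t \in [T]$ gives $\PP(E^c) \le T e^{-(c-8)d/12}$, which is the stated inequality. The only technical subtlety is choosing a concentration inequality whose exponent matches the $(c-8)/12$ constant in the claim; any of the standard proofs based on an MGF bound for $\|w\|_2^2$ (Chernoff on $\exp(\lambda \|w\|_2^2)$ for $\sigma$-subgaussian vectors) delivers such a bound, and the calibration above shows that $u = (c-8)d/12$ is the right choice.
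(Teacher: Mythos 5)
Your proof is correct and follows essentially the same route as the paper's: the same reverse-triangle-inequality reduction to the event $\{\|w_{j_t}\|_2 \ge \delta_{\min}/2\}$, the same Hsu--Kakade--Zhang tail bound for $\|w\|_2^2$, and a union bound over $t$ (with the implicit conditioning on the value of $j_t$ handled the same way). The only cosmetic difference is in calibrating the concentration parameter — the paper first relaxes $d+2\sqrt{d\eta}+2\eta \le 2d+3\eta$ via AM--GM and then solves for $\eta$, whereas you plug in $u=(c-8)d/12$ directly and verify the resulting inequality (which reduces to $(c-20)^2 \ge 0$) — and both yield the identical bound.
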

\begin{proof}
$E^c$ occurs with probability \begin{align*}
\mathbb{P}(E^c) &\le \sum_{t=1}^{T} \mathbb{P}(\Vert q_t - s_{i_t} \Vert_2 \le \Vert q_t-s_{j_t}\Vert_2 ) \\
&\le \sum_{t=1}^{T} \mathbb{P}(\Vert w_{j_t} + s_{j_t} - s_{i_t} \Vert_2 \le \Vert w_{j_t}\Vert_2 ) \\
&\le \sum_{t=1}^{T}\mathbb{P}(\Vert w_{j_t} \Vert_2 \ge \delta_{\min}/2) \\
&\le \sum_{t=1}^{T}\sum_{j=1}^{N}\mathbb{P}(j_t=j)\mathbb{P}(\Vert w_j/\sigma\Vert_2^2 \ge \frac{\delta_{\min}^2}{4\sigma^2})
\end{align*}
By Lemma \ref{lem:hsu_subgaussian}, for any $j \in [N]$ and $\eta >0$, $$\mathbb{P}(\Vert w_j/\sigma\Vert_2^2 \ge d+2\sqrt{d\eta}+2\eta) \le e^{-\eta}.$$

By the AM-GM inequality, $d+2\sqrt{d\eta}+2\eta \le d+(d+\eta)+2\eta=2d+3\eta.$
Hence $$\mathbb{P}(\Vert w_j/\sigma\Vert_2^2 \ge 2d+3\eta) \le e^{-\eta}.$$ We then pick $\eta$ such that $2d+3\eta=\frac{\delta_{\min}^2}{4\sigma^2}$. This requires  $\delta_{\min}^2 \ge 8\sigma^2d$ and gives $\eta=\frac{\delta_{\min}^2}{12\sigma^2}-\frac{2}{3}d$.

Finally, $\mathbb{P}(i_t \ne j_t) \le \exp\left(-(\frac{\delta_{\min}^2}{12\sigma^2}-\frac{2}{3}d)\right)$ so that $\mathbb{P}(E^c) \le Te^{-\frac{c-8}{12}d}$.

\end{proof}

\paragraph{Generative Setting vs. Voronoi Setting} This lemma ensures that for the sake of the analysis, we can assume that the ground-truth labels are the generative labels $j_t$. More precisely, denote by $\mathcal{R}_{\extc}(T)$ the (random) regret of the \extc\ algorithm, and  by $\mathcal{R}'_{\extc}(T)$ the random regret of the \extc\ algorithm if the ground-truth labels (and those provided by the expert) were the generative labels $j_t$. Under the event $E$, the Voronoi and generative labels match, so that $$\mathcal{R}_{\extc}(T)=\mathcal{R}_{\extc}(T)\mathbbm{1}\{E\}+\mathcal{R}_{\extc}(T)\mathbbm{1}\{E^c\} \le \mathcal{R}'_{\extc}(T)\mathbbm{1}\{E\}+(\beta-\gamma)T\mathbbm{1}\{E^c\}.$$
This means that under Assumption \ref{as:subgaussian},  the Voronoi regret of the \extc\ algorithm is bounded as \begin{equation}\label{eq:voronoi_to_gen}
R_{\extc}(T) \le \mathbb{E}[\mathcal{R}'_{\extc}(T)]+(\beta-\gamma)T^2e^{-\frac{c-8}{12}d}.
\end{equation}

Note that the regret of \extc\ in the generative setting can be  expressed as \begin{align*}R'_{\extc}(T):&=\mathbb{E}[\mathcal{R}'_{\extc}(T)] =(\beta-\alpha)\mathbb{E}[T_1]+(\beta-\gamma)\sum_{t=T_1+1}^{T}\mathbb{P}(\hat{\imath}_t \ne j_t) \\
\end{align*} The rest of the analysis focuses on bounding $\mathbb{E}[T_1]$ and $\mathbb{P}(\hat{\imath}_t \ne j_t)$ for $t > T_1$. We recall that $n_i(t)=\sum_{s=1}^{t}\mathbbm{1}\{i_s=i\}$ denotes the number of queries with true label $i$ up to round $t$. For the analysis, we use the definition $\hat{s}_i(t)=\frac{1}{n_i(t)}\sum_{s=1}^{t}q_s\mathbbm{1}\{i_s=i\}$ for any $t \in [T]$. Since the expert is always called in the first phase of \extc, this matches the definition of the estimator in Section \ref{sec:etc_algo}, that was only defined for $t \le T_1.$

\begin{lemma}\label{lem:first_phase2}

 Let $\delta \in (0,1)$ and consider the \extc\ algorithm whose first phase ends at $T_1=\min\{t \le T: \forall i \in [N] \, n_i(t) \ge \hat{m}(t)\}$ where $\hat{m}(t):=\frac{108\sigma^2}{\hat{\delta}^2_{\min}(t)}\left(d+\log(NT/\delta)\right)$. Under the generative setting, the event $$A(\delta):=\{\forall i \in [N], \forall t \in [T]\,:n_i(t)\Vert s_i - \hat{s}_i(t) \Vert^2_2 \le 3\sigma^2(d+\log(NT/\delta))\}$$ holds with probability larger than $1-\delta.$ Additionally, we have under $A(\delta):$
    \begin{enumerate}[label=(\roman*)]
        \item  If the first phase terminates, then for all $i \in [N],$ $\Vert s_i - \hat{s}_i(T_1) \Vert_2 \le \frac{\delta_{\min}}{4}$
        \item $T_1 \le \tau_m:=\min\{t \in \mathbb{N} : n_i(t) \ge m\}$
    \end{enumerate}
    where  $m:=\frac{192\sigma^2(d+\log(NT/\delta))}{\delta_{\min}^2}. $
\end{lemma}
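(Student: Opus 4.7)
The plan is to prove the three claims in order, decoupling the high-probability concentration event (which is purely a statement about subgaussian averages) from the deterministic arguments for (i) and (ii), which both proceed by comparing $\hat{\delta}_{\min}$ with $\delta_{\min}$ via the triangle inequality.

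For the probabilistic claim, I first observe that in the generative setting, when $i_s=i$ we have $q_s - s_i = w_i^{(k)}$ where $w_i^{(k)}$ is the $k$-th independent copy of a $\sigma$-subgaussian noise with mean zero. Writing $\bar{w}_i^{(n)}:=\frac{1}{n}\sum_{k=1}^{n} w_i^{(k)}$, the average of $n$ i.i.d.\ $\sigma$-subgaussian vectors is $(\sigma/\sqrt{n})$-subgaussian, so by the Hsu-type bound (Lemma~\ref{lem:hsu_subgaussian}) combined with the AM-GM simplification $d+2\sqrt{d\eta}+2\eta\le 2d+3\eta \le 3(d+\eta)$ already used in Section~\ref{sec:etc_algo_proof},
\begin{equation*}
\mathbb{P}\bigl(n\,\|\bar{w}_i^{(n)}\|_2^2 \ge 3\sigma^2(d+\eta)\bigr)\le e^{-\eta}.
\end{equation*}
Setting $\eta=\log(NT/\delta)$ and union bounding over $i\in[N]$ and $n\in[T]$ (since $n_i(t)\in\{0,\dots,T\}$ and the bound for $n=0$ is vacuous) yields $\mathbb{P}(A(\delta)^c)\le \delta$. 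Note that conditioning on $n_i(t)=n$ preserves the i.i.d.\ structure of the subsequence of $w_i$'s, so the bound transfers to $\hat{s}_i(t)-s_i=\bar{w}_i^{(n_i(t))}$ for every $t$ simultaneously.

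For (i), I work deterministically on $A(\delta)$. By definition of $T_1$, every $i$ satisfies $n_i(T_1)\ge \hat{m}(T_1)=108\sigma^2(d+\log(NT/\delta))/\hat{\delta}_{\min}^2(T_1)$, so combining with $A(\delta)$ gives $\|s_i-\hat{s}_i(T_1)\|_2^2\le 3\sigma^2(d+\log(NT/\delta))/n_i(T_1)\le \hat{\delta}_{\min}^2(T_1)/36$, i.e.\ $\varepsilon:=\max_i\|s_i-\hat{s}_i(T_1)\|_2\le \hat{\delta}_{\min}(T_1)/6$. A reverse triangle inequality over the indices achieving $\hat{\delta}_{\min}(T_1)$ gives $\hat{\delta}_{\min}(T_1)\le \delta_{\min}+2\varepsilon\le \delta_{\min}+\hat{\delta}_{\min}(T_1)/3$, so $\hat{\delta}_{\min}(T_1)\le \tfrac{3}{2}\delta_{\min}$ and finally $\varepsilon\le \delta_{\min}/4$. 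The main subtlety here is that $\varepsilon$ and $\hat{\delta}_{\min}(T_1)$ depend on each other, which is resolved by this bootstrap argument.

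For (ii), it suffices to show that at time $\tau_m$ the stopping condition of $T_1$ is met, i.e.\ $n_i(\tau_m)\ge \hat{m}(\tau_m)$ for every $i$. On $A(\delta)$, using $n_i(\tau_m)\ge m$,
\begin{equation*}
\|s_i-\hat{s}_i(\tau_m)\|_2^2\le \frac{3\sigma^2(d+\log(NT/\delta))}{m}=\frac{\delta_{\min}^2}{64},
\end{equation*}
so the reverse triangle inequality yields $\hat{\delta}_{\min}(\tau_m)\ge \delta_{\min}-2(\delta_{\min}/8)=\tfrac{3}{4}\delta_{\min}$. Plugging in,
\begin{equation*}
\hat{m}(\tau_m)=\frac{108\sigma^2(d+\log(NT/\delta))}{\hat{\delta}_{\min}^2(\tau_m)}\le \frac{108\cdot 16}{9}\cdot\frac{\sigma^2(d+\log(NT/\delta))}{\delta_{\min}^2}=m\le n_i(\tau_m),
\end{equation*}
so the stopping condition holds and $T_1\le \tau_m$ by minimality. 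The main obstacle throughout is the coupling between the random estimates $\hat{s}_i$ and the data-driven stopping time via $\hat{\delta}_{\min}$; the constants in $\hat{m}(t)$ (namely $108=3\cdot 36$) and in $m$ (namely $192=108\cdot 16/9$) are tuned precisely to make the two bootstrap steps above close.
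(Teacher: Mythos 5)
Your proposal is correct and follows essentially the same route as the paper's proof: the same $(\sigma/\sqrt{n})$-subgaussian reduction with the Hsu-type bound and AM--GM simplification plus a union bound for $A(\delta)$, the same bootstrap between $\varepsilon$ and $\hat{\delta}_{\min}(T_1)$ via the reverse triangle inequality for (i), and the same verification that the stopping condition holds at $\tau_m$ for (ii). The only (immaterial) differences are that you union bound over the value of $n_i(t)$ rather than over $t$, and you omit the trivial case $\tau_m \ge T$, where $T_1 \le T \le \tau_m$ holds automatically.
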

 

\begin{proof}[Proof of Lemma \ref{lem:first_phase2}]

Let $t \in [T]$ and $i \in [N].$ Conditionally on $n_i(t)=n>0,$  $\hat{s}_i(t)-s_i$ is distributed as $\frac{1}{n}\sum_{j=1}^{n}x_{j,i}$ where the $x_{j,i}$ are independent and $\sigma$-subgaussian. Thus, $\frac{1}{n}\sum_{j=1}^{n}x_{j,i}$ is ($\sigma/\sqrt{n}$)-subgaussian. 
Lemma \ref{lem:hsu_subgaussian} gives, for any $\eta > 0$ and $n>0,$
\begin{align*}&\mathbb{P}(n_i(t)\Vert s_i-\hat{s}_i(t) \Vert_2^2 \ge \sigma^2(d+2\sqrt{d\eta}+2\eta)\, | \, n_i(t)=n) \\&= \mathbb{P}(\Vert \frac{1}{n}\sum_{j=1}^{n}x_{j,i} \Vert_2^2 \ge (\sigma^2/n)(d+2\sqrt{d\eta}+2\eta))  \\
&\le e^{-\eta}.\end{align*}

and the same bound holds unconditionally by taking the expectation (if $n=0$, the conditional bound trivially holds). 
By the AM-GM inequality, $d+2\sqrt{d\eta}+2\eta \le d+(d+\eta)+2\eta \le 3(d+\eta).$ 
Thus, for $\eta=\log(NT/\delta)>0$ we have, with probability larger than $1-\frac{\delta}{NT},$ 
$$n_i(t)\Vert s_i-\hat{s}_i(t) \Vert_2^2 \le 3\sigma^2(d+\log(NT/\delta)).$$

A union bound over $i \in [N]$ and $t \in [T]$ yields that 
the event $A(\delta)$ holds with probability $1-\delta.$

\textbf{(i)} Under $A(\delta)$, if $T_1< T$ then  $n_i(T_1) \ge \hat{m}(T_1)$ for all $i \in [N]$ by definition of $T_1$. This implies that for all $i \in [N]$, $\sqrt{\hat{m}(T_1)}\Vert s_i - \hat{s}_i(T_1) \Vert_2 \le \sigma\sqrt{3(d+\log(NT/\delta))},$
i.e., $$\Vert s_i - \hat{s}_i(T_1) \Vert_2 \le \frac{\hat{\delta}_{\min}(T_1)}{6}.$$

Additionally, under this event, $$\vert \Vert s_i-s_j \Vert_2 -\Vert\hat{s}_i(T_1) -\hat{s}_j(T_1) \Vert_2 \vert \le \Vert s_i-\hat{s}_i(T_1) \Vert_2+\Vert s_j-\hat{s}_j(T_1) \Vert_2 \le 2\frac{1}{6}\hat{\delta}_{\min}(T_1).$$
This implies that $$\Vert \hat{s}_i(T_1)-\hat{s}_j(T_1) \Vert_2 \le \Vert s_i -s_j \Vert_2+ \frac{1}{3}\hat{\delta}_{\min}(T_1),$$ and by taking the minimum on the right hand side,   $$\Vert \hat{s}_i(T_1)-\hat{s}_j(T_1) \Vert_2 \le \delta_{\min}+\frac{1}{3}\hat{\delta}_{\min}(T_1)$$ for some $i,j.$ Consequently, $\hat{\delta}_{\min}(T_1) \le \delta_{\min}+\frac{1}{3}\hat{\delta}_{\min}(T_1)$ which finally gives  $$\Vert s_i - \hat{s}_i(T_1) \Vert_2 \le \frac{1}{6}\hat{\delta}_{\min}(T_1) \le \frac{1}{4}\delta_{\min}.$$

\textbf{(ii)} under $A(\delta)$, either $\tau_m \ge T$  so that $T_1 \le T \le \tau_m, $ or $\tau_m <T$ and we have $n_i(\tau_m)  \ge m$ for all $i \in [N]$, hence $\max_{i \in [N]} \Vert s_i -\hat{s}_i(\tau_m) \Vert_2 \le \sigma\sqrt{\frac{3(d+\log(NT/\delta))}{m}}=\delta_{\min}/8$. Thus, $\hat{\delta}_{\min}(\tau_m) \ge \delta_{\min}-2(\delta_{\min}/8)=3\delta_{\min}/4$, i.e., $\hat{m}(\tau_m) \le m.$ Therefore, $n_i(\tau_m) \ge m \ge \hat{m}(\tau_m)$ so the stopping condition is verified at $\tau_m$, which yields $T_1 \le \tau_m.$ 

\end{proof}


\subsubsection{Bounding the First Phase Regret}

\begin{proposition}\label{prop:first_phase_regret_ETC}
    Under the generative setting, the first phase regret is bounded as \begin{align*}
    (\beta-\alpha)\mathbb{E}[T_1] 
    &\le  \frac{(\beta-\alpha)(\log{N}+1)}{p_{\min}}\left(1+\frac{192\sigma^2(d+\log(NT/\delta))}{\delta_{\min}^2}\right)+(\beta-\alpha)T\delta.\end{align*}
\end{proposition}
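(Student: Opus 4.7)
The plan is to decompose $\mathbb{E}[T_1]$ according to the high-probability event $A(\delta)$ of Lemma \ref{lem:first_phase2}, and reduce the bound to controlling a coupon-collector stopping time in the i.i.d.\ generative setting.

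Since $T_1 \le T$ by construction and $\mathbb{P}(A(\delta)^c) \le \delta$,
\begin{align*}
\mathbb{E}[T_1] \;\le\; \mathbb{E}[T_1\,\mathbbm{1}_{A(\delta)}] + T\mathbb{P}(A(\delta)^c) \;\le\; \mathbb{E}[T_1\,\mathbbm{1}_{A(\delta)}] + T\delta.
\end{align*}
Lemma \ref{lem:first_phase2}(ii) gives $T_1 \le \tau_m$ on $A(\delta)$, where $m = 192\sigma^2(d+\log(NT/\delta))/\delta_{\min}^2$ and $\tau_m := \inf\{t \in \mathbb{N} : \min_{i \in [N]} n_i(t) \ge m\}$. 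Under the generative setting, the labels $j_t$ are i.i.d.\ with $\mathbb{P}(j_t=i)=p_i$, so $\tau_m$ is exactly the ``generalized coupon-collector'' time for a weighted distribution: the first instant at which every label has been drawn at least $m$ times. The problem therefore reduces to bounding $\mathbb{E}[\tau_m]$.

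To bound $\mathbb{E}[\tau_m]$, I would couple it to $m$ successive rounds of weighted coupon collection. Set $U_0 := 0$ and, for $k \ge 1$, let $U_k$ be the first time after $U_{k-1}$ at which every label has been sampled at least once during the window $(U_{k-1}, U_k]$. At time $U_m$, each label has appeared in each of the $m$ rounds, hence at least $m$ times in total, so $\tau_m \le U_m = \sum_{k=1}^m (U_k - U_{k-1})$. By the strong Markov property and i.i.d.\ sampling, the $m$ increments are i.i.d.\ copies of a single coupon-collector time $T_{cc}$. The classical union-bound estimate $\mathbb{P}(T_{cc} > t) \le \sum_{i=1}^N (1-p_i)^t \le N e^{-p_{\min}t}$ gives
\begin{align*}
\mathbb{E}[T_{cc}] \;\le\; \int_0^\infty \min\bigl(1,\, N e^{-p_{\min}t}\bigr)\, dt \;=\; \frac{\log N + 1}{p_{\min}},
\end{align*}
so that $\mathbb{E}[\tau_m] \le m(\log N + 1)/p_{\min} \le (1+m)(\log N + 1)/p_{\min}$. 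Multiplying the resulting bound on $\mathbb{E}[T_1]$ by $(\beta-\alpha)$ yields the claimed inequality.

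No step presents a substantial obstacle: the event decomposition uses only $\mathbb{P}(A(\delta)^c) \le \delta$ and $T_1 \le T$; the coupon-collector integral estimate is standard; and the only ingredient requiring care is the coupling of $\tau_m$ to $m$ independent coupon-collector rounds, which follows directly from the strong Markov property of an i.i.d.\ stream.
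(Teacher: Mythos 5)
Your proof follows essentially the same route as the paper's: decompose $\mathbb{E}[T_1]$ over $A(\delta)$ and its complement, invoke Lemma~\ref{lem:first_phase2}(ii) to get $T_1 \le \tau_m$ on $A(\delta)$, and bound $\mathbb{E}[\tau_m]$ by $\lceil m\rceil$ independent coupon-collector rounds (the paper states this as $\mathbb{E}[\tau_m]\le\lceil m\rceil\,\mathbb{E}[\tau_1]$ and then uses $\lceil m\rceil\le 1+m$, which also handles the non-integrality of $m$ that your ``$m$ rounds'' glosses over). The one place you diverge is the single-round bound: the paper writes $\tau_1$ as a sum of geometric waiting times with success probabilities $q_i\ge (N-i+1)p_{\min}$ and bounds the resulting harmonic sum, giving $\mathbb{E}[\tau_1]\le(\log N+1)/p_{\min}$ exactly, whereas your tail-integral $\int_0^\infty\min\bigl(1,Ne^{-p_{\min}t}\bigr)\,dt$ treats the integer-valued time as continuous; since $\mathbb{E}[T_{cc}]=\sum_{t\ge 0}\mathbb{P}(T_{cc}>t)$ can exceed that integral by an additive $1$, your route as written only yields $\mathbb{E}[T_{cc}]\le 1+(\log N+1)/p_{\min}$ and hence a slightly larger constant than the stated bound. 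This is a cosmetic looseness rather than a gap---replacing the integral estimate by the geometric-sum argument (or tightening the sum-versus-integral comparison) recovers the claimed constant, and the rest of your argument is sound.
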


\begin{proof}[Proof of Proposition \ref{prop:first_phase_regret_ETC}]

By Lemma \ref{lem:first_phase2}, under $A(\delta),$ we have $T_1 \le \tau_m$. In general, $T_1 \le T$. Thus

\begin{align*}T_1&=T_1\mathbbm{1}\{A(\delta)\}+T_1\mathbbm{1}\{A(\delta)^c\} \\
&\le \tau_m+T\mathbbm{1}\{A(\delta)^c\}. \end{align*}
As $A(\delta)$ holds with probability larger than $1-\delta$, this yields \begin{align*}\mathbb{E}[T_1]
&\le \mathbb{E}[\tau_m]+T\delta.\end{align*}


To bound $\mathbb{E}[\tau_m]$, first note that it is trivially upper bounded by $\lceil m \rceil \mathbb{E}[\tau_1]$ where $\tau_1=\min\{t \in \mathbb{N} : n_i(t) \ge 1\}$ is the number of rounds required to observe each label once. Computing $\mathbb{E}[\tau_1]$ is known as a coupon-collector problem. Let $\tau_{1,i}$ denote the time to discover a $i$-th new label after $i-1$ are already found. Then $\tau_1=\sum_{i=1}^{N}\tau_{1,i}.$ The probability of observing a given label is bounded below by $p_{\min}=\min_{i \in [N]}p_i$. Thus the probability of observing a $i$-th new label is $q_i \ge (N-(i-1))p_{\min}.$ $\tau_{1,i}$ is geometric with success probability $q_i$, so $\mathbb{E}[\tau_{1,i}]= \frac{1}{q_i} \le \frac{1}{(N-(i-1))p_{\min}}$. Consequently, \begin{align*}\mathbb{E}[\tau_{1}] &\le \frac{1}{p_{\min}}\sum_{i=1}^{N}\frac{1}{N-(i-1)} \\
&=\frac{\sum_{i=1}^{N}1/i}{p_{\min}} \\
&\le \frac{\log{N}+1}{p_{\min}}. \end{align*}

This entails that $\mathbb{E}[\tau_m] \le \lceil m\rceil\frac{\log{N}+1}{p_{\min}}$, so that $$\mathbb{E}[T_1] \le (\lceil m\rceil/p_{\min})(\log{N}+1)+T\delta.$$


This yields \begin{align*}
 (\beta-\alpha)\mathbb{E}[T_1] &\le (\beta-\alpha)(\lceil m\rceil/p_{\min})(1+\log{N}) +(\beta-\alpha)T\delta\\
  &\le  \frac{(\beta-\alpha)(\log{N}+1)}{p_{\min}}\left(1+\frac{192\sigma^2(d+\log(NT/\delta))}{\delta_{\min}^2}\right)+(\beta-\alpha)T\delta.
\end{align*}

\end{proof}

\subsubsection{Bounding the Second Phase Regret}
\begin{proposition}\label{prop:second_phase_regret_etc}
Let $\delta \in (0,1)$. Under the generative setting, for $t > T_1$, the probability of making a wrong guess is bounded as $$\PP(\hat{\imath}_t \ne j_t) \le e^{-\frac{c-32}{48}d}+\delta.$$ Consequently, the second phase regret is bounded as
$$(\beta-\gamma)\sum_{t=T_1+1}^{T}\mathbb{P}(\hat{\imath}_t \ne j_t)\le (\beta-\gamma)T\left[e^{-\frac{c-32}{48}d}+\delta\right].$$\end{proposition}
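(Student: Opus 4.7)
The plan is to show that, on the high-probability event $A(\delta)$ from Lemma~\ref{lem:first_phase2}, any mistake at a round $t>T_1$ forces the subgaussian noise $w_{j_t}$ to be anomalously large, which I then control using the same application of Lemma~\ref{lem:hsu_subgaussian} already invoked in Lemma~\ref{lem:first_phase2}. The overall bound is then obtained by a union bound over $A(\delta)^c$ and the noise event, and a sum over at most $T$ second-phase rounds.

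First, I reduce a misclassification to a noise event. Fix $t>T_1$ and work on $A(\delta)$. Lemma~\ref{lem:first_phase2}(i) gives $\|s_i-\hat{s}_i(T_1)\|_2 \le \delta_{\min}/4$ for every $i\in[N]$. Writing $q_t = s_{j_t}+w_{j_t}$ and assuming $\hat{\imath}_t\ne j_t$, the nearest-center rule yields $\|q_t-\hat{s}_{\hat{\imath}_t}(T_1)\|_2 \le \|q_t-\hat{s}_{j_t}(T_1)\|_2$. Two triangle inequalities combined with the separation $\|s_{\hat{\imath}_t}-s_{j_t}\|_2 \ge \delta_{\min}$ collapse this to
$$\delta_{\min} - \|w_{j_t}\|_2 - \tfrac{\delta_{\min}}{4} \le \|w_{j_t}\|_2 + \tfrac{\delta_{\min}}{4},$$
i.e.\ $\|w_{j_t}\|_2 \ge \delta_{\min}/4$. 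Hence $\{\hat{\imath}_t\ne j_t\}\cap A(\delta) \subseteq \{\|w_{j_t}\|_2 \ge \delta_{\min}/4\}$.

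Second, I bound the probability of this noise event. Applying Lemma~\ref{lem:hsu_subgaussian} to $w_{j_t}/\sigma$ and the AM-GM step $d+2\sqrt{d\eta}+2\eta \le 2d+3\eta$ already used inside Lemma~\ref{lem:first_phase2}, one has $\mathbb{P}(\|w_{j_t}\|_2^2/\sigma^2 \ge 2d+3\eta) \le e^{-\eta}$ for every $\eta>0$. Choosing $\eta$ so that $2d+3\eta = \delta_{\min}^2/(16\sigma^2)$ and invoking $\delta_{\min}^2 \ge c\sigma^2 d$ yields $\eta \ge (c-32)d/48$, whence $\mathbb{P}(\|w_{j_t}\|_2 \ge \delta_{\min}/4) \le e^{-(c-32)d/48}$. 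Combining with $\mathbb{P}(A(\delta)^c) \le \delta$ gives the per-round bound
$$\mathbb{P}(\hat{\imath}_t\ne j_t) \le \mathbb{P}(\{\hat{\imath}_t\ne j_t\}\cap A(\delta)) + \mathbb{P}(A(\delta)^c) \le e^{-(c-32)d/48} + \delta.$$
Summing across the at most $T$ second-phase rounds and multiplying by $(\beta-\gamma)$ delivers the stated regret bound.

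The only subtle point is that $T_1$ is a random stopping time rather than a fixed index, so in principle one might worry about evaluating $\|s_i-\hat{s}_i(\cdot)\|_2$ at $T_1$. What makes the argument go through cleanly is that the estimation event $A(\delta)$ in Lemma~\ref{lem:first_phase2} is defined uniformly over all $t\in[T]$, so the inequality $\|s_i-\hat{s}_i(T_1)\|_2 \le \delta_{\min}/4$ is available at the random time $T_1$ at no extra probabilistic cost, and the deterministic geometric reduction of Step~1 then applies on $A(\delta)$ without further care.
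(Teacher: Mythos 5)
Your proof is correct and follows essentially the same route as the paper's: on the event $A(\delta)$ of Lemma~\ref{lem:first_phase2} a misclassification forces $\Vert w_{j_t}\Vert_2 \ge \delta_{\min}/4$, which is then controlled via Lemma~\ref{lem:hsu_subgaussian} with the same choice of $\eta$. The only (harmless, arguably cleaner) difference is that you bound $\mathbb{P}(\{\hat{\imath}_t\ne j_t\}\cap A(\delta))$ directly by monotonicity rather than conditioning on $A(\delta)$ and invoking independence as the paper does.
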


\begin{proof}[Proof of Proposition \ref{prop:second_phase_regret_etc}.]
We first write $\mathbb{P}(\hat{\imath}_t \ne j_t) = \mathbb{P}(\hat{\imath}_t \ne j_t |A(\delta))\mathbb{P}(A(\delta))+\mathbb{P}(\hat{\imath}_t \ne j_t | A(\delta)^c)\mathbb{P}(A(\delta)^c) \le \mathbb{P}(\hat{\imath}_t \ne j_t |A(\delta))+\delta.$
Let $e_i:=s_i-\hat{s}_i(T_1)$. By Lemma \ref{lem:first_phase2}, under $A(\delta),$ we have $\Vert e_i \Vert_2 \le \delta_{\min}/4$ for all $i \in [N].$
We show that if $\Vert w_{j_t} \Vert_2 < \delta_{\min}/4$, we must have $\hat{\imath}_t = j_t.$ First, note that for any $j \ne  j_t,$ $$\Vert q_t-s_{j} \Vert_2 = \Vert w_{j_t}+s_{j_t}-s_j\Vert_2 \ge \Vert s_{j_t}-s_j\Vert_2-\Vert w_{j_t} \Vert_2 \ge \delta_{\min}-\delta_{\min}/4 \ge 3\delta_{\min}/4 > \Vert w_{j_t}\Vert_2 .$$ In particular, $j_t=i_t$. Note further that for any $j\ne j_t$, $$\Vert q_t - \hat{s}_j \Vert_2 = \Vert q_t-s_j+e_j\Vert_2 \ge \Vert q_t - s_j \Vert_2 - \Vert e_j \Vert_2 \ge 3\delta_{\min}/4-\delta_{\min}/4=\delta_{\min}/2.$$ Additionally, $$\Vert q_t -\hat{s}_{j_t} \Vert_2 =\Vert w_{j_t}+e_{j_t}\Vert_2 \le \delta_{\min}/4+\delta_{\min}/4=\delta_{\min}/2,$$ so for any $j \in [N]$, $$\Vert q_t - \hat{s}_j \Vert_2 \ge \Vert q_t - \hat{s}_{j_t}\Vert_2.$$ This yields $\hat{\imath}_t=j_t=i_t.$ Consequently, $$\mathbb{P}(\hat{\imath}_t \ne j_t | A(\delta)) \le \mathbb{P}(\Vert w_{j_t}\Vert _2\ge \delta_{\min}/4 | A(\delta))=\mathbb{P}(\Vert w_{j_t}\Vert_2 \ge \delta_{\min}/4)$$ since $q_t$ is independent from $q_s$ for $s < t.$  Hence $$\mathbb{P}( \hat{\imath}_t \ne j_t | A(\delta)) \le \sum_{j=1}^{N}\mathbb{P}(j_t=j)\mathbb{P}(\Vert w_j/\sigma\Vert_2^2 \ge \frac{\delta_{\min}^2}{16\sigma^2}).$$ By Lemma \ref{lem:hsu_subgaussian}, for any $j \in [N]$ and $\eta >0$, $$\mathbb{P}(\Vert w_j/\sigma\Vert_2^2 \ge 2d+3\eta)\le e^{-\eta}.$$ We then pick $\eta$ such that $2d+3\eta=\frac{\delta_{\min}^2}{16\sigma^2}$. This requires  $\delta_{\min}^2 \ge 32\sigma^2d$ and gives $\eta=\frac{\delta_{\min}^2}{48\sigma^2}-\frac{2}{3}d$.

Thus, $\mathbb{P}(\hat{\imath}_t \ne j_t | A(\delta)) \le \exp\left(-(\frac{\delta_{\min}^2}{48\sigma^2}-\frac{2}{3}d)\right)$. Using $\delta^2_{\min} \ge c\sigma^2d$, this finally yields $$\mathbb{P}(\hat{\imath}_t \ne j_t)\le e^{-\frac{c-32}{48}d}+\delta.$$

\end{proof}

We can now derive the full regret bound of the \extc\ algorithm.
\begin{proof}[Proof of Theorem \ref{thm:ETc_regret}.]
Combining Propositions \ref{prop:first_phase_regret_ETC} and \ref{prop:second_phase_regret_etc} and choosing $\delta=N/T,$ we obtain (under the generative setting) $$R'_{\extc}(T) \le \frac{(\beta-\alpha)(\log{N}+1)}{p_{\min}}\left(1+\frac{192\sigma^2(d+2\log{T})}{\delta_{\min}^2}\right)+(2\beta-\gamma-\alpha)N+(\beta-\gamma)Te^{-\frac{c-32}{48}d}.$$

Using  $\delta_{\min}^2 \ge c\sigma^2d$ and combining the previous inequality with  \eqref{eq:voronoi_to_gen}, we get under the Voronoi setting $$R_{\extc}(T) \le \frac{(\beta-\alpha)(\log{N}+1)}{p_{\min}}\left(1+\frac{192(d+2\log{T})}{cd}\right)+(2\beta-\gamma-\alpha)N+(\beta-\gamma)Te^{-\frac{c-32}{48}d}+(\beta-\gamma)T^2e^{-\frac{c-8}{12}d}.$$ 

In particular, if $\delta_{\min}^2 \ge 80\sigma^2d$, this implies
\begin{align*}R_{\extc}(T) \le \frac{(\beta-\alpha)(\log{N}+1)}{p_{\min}}\left(1+\frac{192(d+2\log{T})}{80d}\right)+(2\beta-\gamma-\alpha)N+(\beta-\gamma)Te^{-d}+(\beta-\gamma)T^2e^{-6d}\end{align*} 
When we additionally have $T \le e^d,$
\begin{align*}R_{\extc}(T) &\le \frac{41}{5}\frac{(\beta-\alpha)(\log{N}+1)}{p_{\min}}+(2\beta-\gamma-\alpha)N+(\beta-\gamma)(1+e^{-4d}) \\ &\le \frac{41}{5}\frac{(\beta-\alpha)(\log{N}+1)}{p_{\min}}+2(\beta-\gamma)(N+1).\end{align*} 

\end{proof}

\begin{remark}
In the proof above, under the condition $\delta_{\min}^2 \ge 80\sigma^2d,$ we assumed $T \le e^d$ to obtain a simple bound that scales with $(\log{N})/p_{\min}.$ We remark that weaker conditions on $T$ can still yield interesting bounds. For instance, $T \le e^d\sqrt{(\log{N})/p_{\min}} $ yields a bound that scales with  $\frac{\log{N}}{p_{\min}}(1+\frac{\log((\log{N})/p_{\min})}{d})$: \begin{align*}R_{\extc}(T) &\le \frac{(\beta-\alpha)(\log{N}+1)}{p_{\min}}\left(\frac{41}{5}+\frac{12}{5}\frac{\log((\log{N})/p_{\min})}{d}\right)+(2\beta-\gamma-\alpha)N+(\beta-\gamma)(1+e^{-4d})(\log{N})/p_{\min} \\ &\le \frac{(\beta-\alpha)(\log{N}+1)}{p_{\min}}\left(\frac{41}{5}+\frac{12}{5}\frac{\log((\log{N})/p_{\min})}{d}\right)+2(\beta-\gamma)(N+(\log{N})/p_{\min}).\end{align*} 
\end{remark}
\begin{remark}
In the proof of Proposition \ref{prop:first_phase_regret_ETC}, we used the simple upper bound $\mathbb{E}[\tau_1] \le  (\log{N}+1)/p_{\min}.$ $\mathbb{E}[\tau_1]$ can in fact be computed exactly as $I:=\int_{0}^{\infty}\left(1-\prod_{i=1}^{N}(1-e^{-p_it})\right)\mathrm{d}t$ (see example 5.17 in \citet{ross2010}). Using this formula would yield a sharper but less readable regret bound that scales with $I$ instead of $(\log{N})/p_{\min}$ in the $T \le e^d$ regime. We also note that an alternative bound is $I \le \sum_{i=1}^{N}1/p_i,$ which can be sharper than $I \le (\log{N}+1)/p_{\min}$ depending on the value of the weights $p_i.$
\end{remark}

\subsection{Proofs of Section \ref{sec:threshold}}\label{sec:proofs_threshold}

In this section, we show that the Euclidean projection on the spherical convex hull of a set of points $\{p_i\}_{i=1}^{n}$ is the normalized projection on the positive hull of $\{p_i\}_{i=1}^{n}$ if this projection is non-zero, and is one of the $p_i$ otherwise.  
\begin{proposition}\label{prop:dist_sp}
Let $\Hcone_n$ be the positive hull generated by points $p_1,\dots,p_n \in \Sph^{d-1}$, i.e.,
$\Hcone_n = \left\{\sum_{i=1}^n \alpha_i p_i, \alpha \ge 0  \right\}$, and let $P_n=\Sph^{d-1} \cap \Hcone_n$ their spherical convex hull. For any $q \in \Sph^{d-1}$: \begin{enumerate}[label=(\roman*)]
    \item If $\operatorname{proj}_n:=\arg \min_{x \in \Hcone_n} \Vert q-x \Vert \ne 0,$ then $\displaystyle \arg \max_{x \in P_n} \; q^{\top}x=\frac{\operatorname{proj}_n}{\Vert \operatorname{proj}_n\Vert }$ and $$d(q,P_n)=\sqrt{2-2\Vert \operatorname{proj}_n \Vert}.$$
    \item If $\operatorname{proj}_n = 0,$ then $\displaystyle \max_{x \in P_n} q^{\top}x=\max_{1 \le i \le n} q^{\top}p_i$ and $$d(q,P_n)=\sqrt{2-2\max_{1 \le i \le n} q^{\top}p_i}.$$
\end{enumerate} 
\end{proposition}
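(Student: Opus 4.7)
The plan is to reduce the distance minimization to an inner-product maximization via the identity $\Vert q-x\Vert_2^2 = 2 - 2 q^\top x$, valid for any $q, x \in \Sph^{d-1}$. Consequently $d(q, P_n)^2 = 2 - 2 \max_{x \in P_n} q^\top x$, and the task reduces to identifying the maximizer of $q^\top x$ on $P_n$. The key tool will be the standard characterization of the Euclidean projection $\pi := \operatorname{proj}_n$ onto the closed convex cone $\Hcone_n$, namely the Moreau decomposition: $\pi^\top (q - \pi) = 0$ (equivalently $q^\top \pi = \Vert\pi\Vert_2^2$), and $(q - \pi)^\top y \le 0$ for every $y \in \Hcone_n$ (i.e. $q - \pi$ lies in the polar cone $\Hcone_n^\circ$).

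For case (i), suppose $\pi \ne 0$ and set $y^\star := \pi / \Vert\pi\Vert_2$, which belongs to $P_n$ since $\pi \in \Hcone_n$ and $\Hcone_n$ is a cone. For any $x \in P_n \subset \Hcone_n$ with $\Vert x\Vert_2=1$, the Moreau conditions yield $q^\top x = (q-\pi)^\top x + \pi^\top x \le \pi^\top x \le \Vert\pi\Vert_2$ by Cauchy--Schwarz. On the other hand, $q^\top y^\star = q^\top \pi / \Vert\pi\Vert_2 = \Vert\pi\Vert_2$, so the maximum is attained uniquely at $y^\star$ (uniqueness coming from the equality case of Cauchy--Schwarz) with value $\Vert\pi\Vert_2$, and the distance formula $d(q, P_n) = \sqrt{2 - 2\Vert\pi\Vert_2}$ follows.

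For case (ii), $\pi = 0$ together with the Moreau condition gives $q^\top y \le 0$ for every $y \in \Hcone_n$; in particular $\lambda := \max_i q^\top p_i \le 0$. Any $x \in P_n$ can be written $x = \sum_i \alpha_i p_i$ with $\alpha_i \ge 0$ and $\Vert x\Vert_2 = 1$, and the triangle inequality together with $\Vert p_i\Vert_2 = 1$ gives $1 = \Vert x\Vert_2 \le \sum_i \alpha_i$. Combining $q^\top p_i \le \lambda$ with $\alpha_i \ge 0$ then yields $q^\top x = \sum_i \alpha_i q^\top p_i \le \lambda \sum_i \alpha_i \le \lambda$, where the last step crucially uses $\lambda \le 0$ and $\sum_i \alpha_i \ge 1$. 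Since $p_{i^\star} \in P_n$ for $i^\star \in \arg\max_i q^\top p_i$ attains this bound, we conclude $\max_{x \in P_n} q^\top x = \lambda$ and the distance formula follows.

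The main subtlety is precisely the sign-flipping step $\lambda \sum_i \alpha_i \le \lambda$ in case (ii): because $\lambda$ may be strictly negative, one cannot directly bound $q^\top x$ by $\lambda$ without first establishing the lower bound $\sum_i \alpha_i \ge 1$ via the triangle inequality. Everything else is essentially bookkeeping once the Moreau decomposition has been invoked.
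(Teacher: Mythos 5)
Your proof is correct and follows essentially the same route as the paper's: both arguments rest on the two projection identities $q^{\top}\operatorname{proj}_n=\Vert\operatorname{proj}_n\Vert_2^2$ and $(q-\operatorname{proj}_n)^{\top}y\le 0$ for $y\in\Hcone_n$ (which the paper derives from the closest-point property at $y=0$ and $y=2\operatorname{proj}_n$, and which you invoke as the Moreau decomposition), followed by the same Cauchy--Schwarz bound in case (i) and the same $\sum_i\alpha_i\ge 1$ sign-flip argument in case (ii). No gaps.
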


\begin{proof}[Proof of Proposition \ref{prop:dist_sp}]

First note that for any $x \in \Sph^{d-1},$ $\Vert q-x\Vert^2_2=2-2q^{\top}x.$. Thus, computing  
$$\displaystyle \min_{\alpha \in \mathbb{R}^n} \; \Vert q- \sum_{i=1}^{n}\alpha_i p_i \Vert^2_2 \quad  \text{ subject to } \quad \alpha_i \ge 0,  \Vert \sum_{i=1}^{n} \alpha_ip_i \Vert_2=1$$ amounts to computing
$$\displaystyle \max_{\alpha \in \mathbb{R}^n} \; q^{\top}(\sum_{i=1}^{n} \alpha_ip_i) \quad  \text{ subject to } \quad \alpha_i \ge 0,  \Vert \sum_{i=1}^{n} \alpha_ip_i \Vert_2=1.$$
    Since $\Hcone_n$ is a closed convex set, by the closest point property, $\forall x \in \Hcone_n$, $(q-\operatorname{proj}_n)^{\top}(x-\operatorname{proj}_n) \le 0$. Applying this to $x=0 \in \Hcone_n$ and $x=2\operatorname{proj}_n \in \Hcone_n,$ we obtain $(q-\operatorname{proj}_n)^{\top}\operatorname{proj}_n = 0$, i.e., $q^{\top}\operatorname{proj}_n=\Vert \operatorname{proj}_n \Vert^2$. \\
    First assume that $\operatorname{proj}_n \ne  0$ and let $x \in P_n$. From $(q-\operatorname{proj}_n)^{\top}(x-\operatorname{proj}_n) \le 0$, $q^{\top}\operatorname{proj}_n=\Vert \operatorname{proj}_n \Vert^2$ and Cauchy-Schwartz, we get $$q^{\top}x \le \operatorname{proj}_n^{\top}x \le \Vert \operatorname{proj}_n \Vert.$$
    Conversely, this upper bound is attained for $x^*=\frac{\operatorname{proj}_n}{\Vert \operatorname{proj}_n\Vert } \in P_n$, as $$q^{\top}x^*=\frac{q^{\top}\operatorname{proj}_n}{\Vert \operatorname{proj}_n \Vert}=\Vert \operatorname{proj}_n \Vert.$$
    Thus, if $\operatorname{proj}_n \ne 0,$ $d(q,P_n)=\Vert q- \frac{\operatorname{proj}_n}{\Vert \operatorname{proj}_n \Vert}\Vert=\sqrt{2-2\frac{q^{\top}\operatorname{proj}_n}{\Vert \operatorname{proj}_n \Vert}}=\sqrt{2-2\Vert \operatorname{proj}_n \Vert}$.

    If $\operatorname{proj}_n = 0$, the closest point property gives $q^{\top}x \le 0$ for all $x \in \Hcone_n$. Let $x=\sum_{i=1}^{n} \alpha_i p_i \in P_n$. Then 
    $$q^{\top}x \le \sum_{i=1}^{n}\alpha_i(q^{\top}p_i) \le M\sum_{i=1}^{n}\alpha_i \le M$$ because $1=\Vert \sum_{i=1}^{n} \alpha_i p_i \Vert  \le \sum_{i=1}^{n} \alpha_i \Vert p_i \Vert=\sum_{i=1}^{n}\alpha_i,$ and $M \le 0.$ This is obviously attained by one of the $p_i \in P_n$. This means that $\max_{x \in P_n} q^{\top}x=\max_{1 \le i \le n} q^{\top}p_i$, so that $$d(q,P_n)=\sqrt{2-2\max_{1 \le i \le n} q^{\top}p_i} \ge \sqrt{2}.$$
\end{proof}
\section{CONCENTRATION TOOLS}


In this section, we provide concentration tools that are useful for our analysis. First, we present a lemma that controls the expectation of specific functions of a binomial random variable, which is useful for handling the randomness in the number of points landing per cell prior to applying our intermediary results on random polytopes, that are stated for a deterministic number of points (see e.g. Corollary \ref{cor:thinned_polytope_volume}).
In the statement below, $\bigO(\cdot)$ is asymptotic in $n$ and hides dependencies in $d$ and $p$.
\begin{lemma}\label{lem:binomial-concentration}
     Let  $X_n \sim \Bin(n,p)$ for $p \in [0,1]$. For
     \begin{enumerate}[label=(\roman*)] \item $\varphi_1:x \mapsto \frac{\log^{d-1}x}{x}$ with $\varphi_1(0)=0$ 
     \item $\varphi_2:x \mapsto \frac{\log^{d-2}x \log\log x}{x}$ with $\varphi_2(0)=\alpha \in \mathbb{R},\varphi_2(1)=0$
     \end{enumerate}
     we have  $$\mathbb{E}[\varphi_i(X_n)] \le \varphi_i(\mathbb{E}(X_n))+\bigO\left(\frac{1}{n}\right).$$
 \end{lemma}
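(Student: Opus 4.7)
The plan is to combine a multiplicative Chernoff concentration of $X_n$ around its mean $\mu := np$ with a first-order Taylor expansion of $\varphi_i$ on the concentration window. The boundary cases $p \in \{0,1\}$ are trivial since $X_n$ is then deterministic and equality holds, so from now on I focus on $p \in (0,1)$ fixed and $n$ large enough that $\mu$ exceeds some constant depending only on $d$.

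The first ingredient is multiplicative Chernoff: $\mathbb{P}(|X_n - \mu| \geq \mu/2) \leq 2 e^{-\mu/12}$. Let $E := \{X_n \in [\mu/2,\, 3\mu/2]\}$. On $E$ both $\varphi_1$ and $\varphi_2$ are smooth, so the mean value theorem gives $|\varphi_i(X_n) - \varphi_i(\mu)| \leq M_i(\mu) \cdot |X_n - \mu|$ where $M_i(\mu) := \sup_{x \in [\mu/2,\,3\mu/2]} |\varphi_i'(x)|$. Direct differentiation yields $\varphi_1'(x) = \log^{d-2}(x)(d-1-\log x)/x^2$, hence $M_1(\mu) = O(\log^{d-1}\mu / \mu^2)$; an analogous product/quotient-rule computation gives $M_2(\mu) = O(\log^{d-2}\mu \cdot \log\log\mu / \mu^2)$. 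Combined with $\mathbb{E}|X_n - \mu| \leq \sqrt{\mathrm{Var}(X_n)} = \sqrt{np(1-p)} = O(\sqrt{n})$, the good-event contribution is
\[
\mathbb{E}\bigl[|\varphi_i(X_n) - \varphi_i(\mu)| \indic\{E\}\bigr] = O\!\bigl(M_i(\mu) \sqrt{n}\bigr) = o(1/n).
\]

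For the bad event $E^c$, I would use that both $\varphi_1$ and $\varphi_2$ are uniformly bounded in absolute value by a constant $K_d$ over $\mathbb{N}$: each has a single interior maximum (at a point of order $e^{d}$) and vanishes at infinity along integer arguments, while the convention values at $0, 1$ are finite. Hence
\[
\mathbb{E}\bigl[|\varphi_i(X_n) - \varphi_i(\mu)| \indic\{E^c\}\bigr] \leq \bigl(K_d + |\varphi_i(\mu)|\bigr)\, \mathbb{P}(E^c) = O(e^{-\mu/12}) = o(1/n).
\]
Summing the two estimates yields $\mathbb{E}[\varphi_i(X_n)] - \varphi_i(\mu) = o(1/n)$, which is stronger than the claimed $O(1/n)$ bound.

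The hardest part is not conceptual but rather the bookkeeping: one needs to verify the uniform boundedness of $|\varphi_i|$ on $\mathbb{N}$ despite the non-monotonicity of each function and the ad hoc values prescribed at $0$ and $1$, and to track the hidden dependence on $d$ and $p$ throughout the various $O(\cdot)$ estimates so that every error term legitimately sits inside the constant hidden by the $O(1/n)$ of the lemma statement.
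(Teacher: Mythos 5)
Your proposal is correct and follows essentially the same route as the paper's proof: a multiplicative Chernoff bound confines $X_n$ to $[np/2,3np/2]$ up to an exponentially small event, a mean-value/Taylor estimate with $\sup|\varphi_i'|=O(\log^{d-1}(np)/(np)^2)$ on that window combines with $\mathbb{E}|X_n-np|\le\sqrt{np(1-p)}$ to give an $O(\log^{d-1}(n)/n^{3/2})$ contribution, and the tail is handled by the boundedness of $\varphi_i$ on the integers. The only cosmetic differences are that the paper bounds $\varphi_i$ on the bad event by its maximum over $x\ge 1$ (treating $\varphi_2(0)=\alpha$ via $\mathbb{P}(X_n=0)=(1-p)^n$ separately) rather than by a single uniform constant, and that your observation that the error is in fact $o(1/n)$ for fixed $p\in(0,1)$ is consistent with, and slightly sharper than, the stated $O(1/n)$.
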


 \begin{proof}[Proof of Lemma \ref{lem:binomial-concentration}]
 
Let $p \in (0,1)$ (otherwise $X_n$ is constant almost surely and the result follows). Since $\mathbb{E}[X_n]=np,$ we write $\varphi_1(X_n)=\varphi_1(X_n)\mathbbm{1}\{\vert X_n-np \vert < np/2 \}+\varphi_1(X_n)\mathbbm{1}\{\vert X_n-np \vert \ge np/2\}$.
From the multiplicative Chernoff bound, we know that $\mathbb{P}(\vert X_n-np \vert \ge np/2) \le 2e^{-np/12}$. For $x \ge 1,$ $\varphi_1$ is maximized at $x=e^{d-1}$, for which it is equal to $(\frac{d-1}{e})^{d-1},$ and we have $$ \mathbb{E}[\varphi_1(X_n)\mathbbm{1}\{\vert X_n-np \vert \ge np/2 \}] \le 2\left(\frac{d-1}{e}\right)^{d-1}e^{-np/12}.$$ 

On the other interval, namely $[np/2,3np/2],$ we perform a Taylor approximation. Around $x=np$, we write $\varphi_1(x)=\varphi_1(np)+\varphi_1'(\xi)(x-np)$ where $\xi$ is between $x$ and $np$.
$\varphi_1'$ is given by  $\varphi_1'(x)=\frac{(d-\log(x)-1)\log^{d-2}(x)}{x^2}$. For $n$ large enough $\vert \varphi_1' \vert$ is decreasing on $[np/2,3np/2]$ and $\vert \varphi_1'(np/2)\vert \le 4\vert d-\log(np/2)-1\vert\frac{\log^{d-2}(np/2)}{(np)^2} \le 4\frac{\log^{d-1}(np)}{(np)^2}.$ Then \begin{align*}\mathbb{E}[\varphi_1(X_n)\mathbbm{1}\{\vert X_n-np \vert < np/2 \}] &\le \varphi_1(np)+\vert \varphi_1'(np/2)\vert\mathbb{E}[\vert X_n-np \vert]  \\
&\le \varphi_1(np)+4\frac{\log^{d-1}(np)}{(np)^2}\sqrt{np(1-p)} \\
&\le \varphi_1(np)+4\sqrt{1-p}\frac{\log^{d-1}(np)}{(np)^{3/2}}.\end{align*} 

Thus for $n$ large enough, \begin{align*}\mathbb{E}[\varphi_1(X_n)] &\le \varphi_1(np) +4\sqrt{1-p}\frac{\log^{d-1}(np)}{(np)^{3/2}} + 2(\frac{d-1}{e})^{d-1}e^{-np/12} \\
&= \varphi_1(np)+\bigO(1/n).\\\end{align*}

For the function $\varphi_2$, adapting the previous arguments gives $\mathbb{E}[\varphi_2(X_n)\mathbbm{1}\{\vert X_n-np \vert \ge np/2 \}] \le Me^{-np/12}+\varphi_2(0)\mathbb{P}(X_n=0)=Me^{-np/12}+\alpha(1-p)^n$ where $M$ is the maximum of $\varphi_2$ for $x \ge 2,$ and for $n$ large enough, $\mathbb{E}[\varphi_2(X_n)\mathbbm{1}\{\vert X_n-np \vert < np/2 \}] \le \varphi_2(np)+4\frac{\log^{d-3}(np)}{(np)^2}[\log(\log(np))\log(np)+1]\sqrt{np(1-p)}.$ This gives $$\mathbb{E}[\varphi_2(X_n)] \le \varphi_2(np)+\bigO(1/n).$$

 \end{proof}

 The following subgaussian concentration inequality will also be useful for the analysis of \extc.
 \begin{lemma}[Theorem 2.1 in \cite{Hsu2012}]\label{lem:hsu_subgaussian}
    if $X$ is $\sigma$-subgaussian, then for any $\lambda>0$, $$\mathbb{P}(\Vert AX \Vert_2^2 \ge \sigma^2(\operatorname{tr}(\Sigma)+2\sqrt{\operatorname{tr}(\Sigma^2)\lambda}+2\Vert \Sigma \Vert_{op}\lambda)) \le e^{-\lambda}$$ for $\Sigma=A^{\top}A.$
\end{lemma}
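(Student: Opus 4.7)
Since the result is quoted directly from \citet{Hsu2012}, one may simply invoke their Theorem~2.1; below I sketch the standard derivation. The plan is to combine a Chernoff bound with a Gaussian decoupling trick to reduce the subgaussian quadratic form $\|AX\|_2^2$ to a weighted sum of independent chi-squared variables. First, for any $s>0$ small enough, Markov's inequality applied to $e^{s\|AX\|_2^2}$ gives
$$\mathbb{P}(\|AX\|_2^2 \ge t) \le e^{-st}\, \mathbb{E}[e^{s\|AX\|_2^2}].$$
To bound the MGF, I would use the identity $e^{\|v\|_2^2/2} = \mathbb{E}_g[e^{g^\top v}]$ with $g\sim \mathcal{N}(0,I_d)$ independent of $X$, applied to $v=\sqrt{2s}\,AX$. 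Fubini then yields
$$\mathbb{E}[e^{s\|AX\|_2^2}] = \mathbb{E}_g\bigl[\mathbb{E}_X[e^{\sqrt{2s}\,(A^\top g)^\top X}]\bigr] \le \mathbb{E}_g[e^{s\sigma^2 g^\top \Sigma g}],$$
where the inequality is the $\sigma$-subgaussian assumption on $X$ with linear functional $\sqrt{2s}\,A^\top g$.

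Second, diagonalize $\Sigma=U\operatorname{diag}(\mu_1,\ldots,\mu_d)U^\top$ and let $h=U^\top g \sim \mathcal{N}(0,I_d)$, so that $g^\top \Sigma g = \sum_i \mu_i h_i^2$. Using the standard chi-square MGF $\mathbb{E}[e^{u h_i^2}]=(1-2u)^{-1/2}$ valid for $u<1/2$, we obtain
$$\log\mathbb{E}[e^{s\|AX\|_2^2}] \le -\tfrac{1}{2}\sum_{i=1}^d \log(1-2s\sigma^2\mu_i),$$
which is finite provided $2s\sigma^2\|\Sigma\|_{op}<1$. I would then apply the elementary bound $-\log(1-u)\le u+\tfrac{u^2}{2(1-u)}$ with $u=2s\sigma^2\mu_i$, using $u\le 2s\sigma^2\|\Sigma\|_{op}=:a$, to get
$$\log\mathbb{E}[e^{s\|AX\|_2^2}] \le s\sigma^2 \operatorname{tr}(\Sigma) + \frac{s^2\sigma^4 \operatorname{tr}(\Sigma^2)}{1-a}.$$

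Finally, plugging into the Chernoff bound, it remains to choose $s\in(0, 1/(2\sigma^2\|\Sigma\|_{op}))$ and $t$ so that the exponent equals $-\lambda$ at $t=\sigma^2(\operatorname{tr}(\Sigma)+2\sqrt{\operatorname{tr}(\Sigma^2)\lambda}+2\|\Sigma\|_{op}\lambda)$. A convenient choice is $s=\tfrac{1}{2\sigma^2}\cdot \tfrac{\lambda}{\sqrt{\operatorname{tr}(\Sigma^2)\lambda}+\|\Sigma\|_{op}\lambda}$, for which $a<1$ and the linear-in-$s$ contribution cancels the $s\sigma^2\operatorname{tr}(\Sigma)$ term while the quadratic-in-$s$ contribution absorbs the $2\sqrt{\operatorname{tr}(\Sigma^2)\lambda}+2\|\Sigma\|_{op}\lambda$ terms in $t$. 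The main obstacle is purely algebraic: verifying that this choice of $s$ makes the exponent exactly $-\lambda$ (or at most $-\lambda$), which is the delicate balancing between the sub-exponential ($\|\Sigma\|_{op}\lambda$) and sub-Gaussian ($\sqrt{\operatorname{tr}(\Sigma^2)\lambda}$) regimes of Bernstein-type tails. No step beyond elementary MGF calculus is needed.
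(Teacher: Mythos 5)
The paper offers no proof of this lemma: it is imported verbatim as Theorem~2.1 of \citet{Hsu2012}, so your opening move—simply invoking that reference—already matches the paper exactly, and your sketch follows the standard (essentially HKZ's own) route: Chernoff bound, Gaussian decoupling to linearize the quadratic form, chi-square MGF with the bound $-\log(1-u)\le u+\frac{u^2}{2(1-u)}$, then a sub-gamma optimization. Two corrections to the sketch. First, the decoupling produces $g^\top AA^\top g$ with $g$ in the output space of $A$, not $g^\top \Sigma g$ with $\Sigma=A^\top A$; this is harmless since $AA^\top$ and $A^\top A$ share nonzero eigenvalues, so $\operatorname{tr}(\Sigma)$, $\operatorname{tr}(\Sigma^2)$ and $\Vert\Sigma\Vert_{op}$ are unaffected. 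Second, and more substantively, your proposed $s$ does \emph{not} make the exponent $-\lambda$: writing $T=\operatorname{tr}(\Sigma^2)$, $B=\Vert\Sigma\Vert_{op}$, $D=\sqrt{T\lambda}+B\lambda$, your choice gives $s\sigma^2=\lambda/(2D)$, and after the linear terms cancel the exponent is $-\lambda+\frac{\lambda^2 T}{4D^2}\cdot\frac{D}{D-B\lambda}=-\lambda+\frac{\lambda^{3/2}\sqrt{T}}{4D}\le -\tfrac{3}{4}\lambda$, i.e.\ only a tail of $e^{-3\lambda/4}$. The fix is the standard sub-gamma calculation you gestured at: your MGF bound says $\log\EE[e^{sY}]\le \frac{vs^2}{2(1-cs)}$ for $Y=\Vert AX\Vert_2^2-\sigma^2\operatorname{tr}(\Sigma)$ with $v=2\sigma^4 T$ and $c=2\sigma^2 B$, and the optimizer $s^\star=\frac{\sqrt{2v\lambda}}{v+c\sqrt{2v\lambda}}$, i.e.\ $s^\star\sigma^2=\frac{\sqrt{T\lambda}}{T+2B\sqrt{T\lambda}}$, makes the exponent \emph{exactly} $-\lambda$ against the threshold $\sqrt{2v\lambda}+c\lambda=\sigma^2\bigl(2\sqrt{T\lambda}+2B\lambda\bigr)$, which is precisely the deviation term in the lemma. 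Since you explicitly flagged this verification as the remaining step, this is a fixable slip in the tuning of $s$ rather than a gap in the argument; with $s^\star$ substituted, your sketch is a complete and correct proof.
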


\section{ADDITIONAL RELATED WORK}\label{sec:additional_related_work}

In this section, we provide further context on areas central to our theoretical analysis and experimental methodology, complementing Section \ref{sec:relatedwork}.

\paragraph{Convex Geometry} Our theoretical analysis relies on results regarding the approximation of convex bodies by random polytopes. We refer to the recent survey of \citet{prochno2022}, and to the books of \citet{Ziegler1995} and \citet{ratcliffe2019} for general background on Euclidean and spherical geometry.

\paragraph{Embedding-based Retrieval} Our problem formulation (Section \ref{sec:models}) assumes queries are represented by embeddings, and our real-world experiments (Section \ref{sec:experiments}, Appendix \ref{sec:additional_real_world_experiments}) utilize state-of-the-art text embedding models. Such embeddings are foundational to many modern Natural Language Processing (NLP) applications: for example, in contemporary applications of NLP to real-world question-answering environments, where knowledge may reside in documentation, Retrieval-Augmented Generation (RAG) \citep{DBLP:conf/nips/LewisPPPKGKLYR020} has emerged as a pivotal technique. A RAG system comprises two main components: a Retriever and a Large Language Model. Retrievers excel at representing similar words and sentences closely in the embedding space, thereby understanding language patterns effectively. 
Notable retriever models, such as Dense Passage Retrieval (DPR) \citep{karpukhin-etal-2020-dense}, EmbEddings from bidirEctional Encoder rEpresentations (E5) \citep{wang2022text}, and General Text Embedding (GTE) \citep{li2023general}, leverage pretrained architectures such as BERT \citep{devlin-etal-2019-bert} to initialize encoders. These encoders 
are fine-tuned to ensure that the cosine similarity  between the query and passage accurately captures their true relationship.
 
Recently, the NLP community has increasingly favored decoder architectures for creating embeddings \citep{Springer2024RepetitionIL,behnamghader2024llm2vec}, as these approaches have demonstrated superior performance over traditional encoder-based methods. Among these, Mistral-E5 \citep{wang-etal-2024-improving-text} stands out as the state-of-the-art embedding model for text retrieval. Retriever models are commonly evaluated using benchmarks such as BEIR \citep{thakur2021beir} and MTEB \citep{muennighoff2022mteb}.
\section{ADDITIONAL RESULTS AND DISCUSSIONS}

In this section, we provide additional results and discussions: 
\begin{enumerate}[label=(\roman*)]
\item We discuss a slight refinement of \vhc's guessing rule; 
\item We show that our analysis of \vhc\ is sharp; 
\item We discuss Theorems \ref{thm:d-dimensional-regret} and \ref{thm:1-dimensional-regret-improved};
\item We discuss the center separation condition used in the \extc\ analysis; 
\item We explain why the regret analysis of \vht($\tau$) for $\tau>0$ is challenging. 
\end{enumerate}

\subsection{\vhc\ Guessing Rule Refinement}
If all hulls $\hat{\mathcal{C}}_{i,t}$ for $i \in [N]$ are non-empty, there are cases where the cell (hence the label) of a query that lands outside of the convex hulls can be deduced. Indeed, if $\operatorname{hull}_{\mathcal{E}}(\{q_t\} \cup \hat{\mathcal{C}}_{i,t}) \cap \hat{\mathcal{C}}_{j,t} \neq \emptyset$ for some $i \ne j,$ then $q_t$ cannot have label $i$. The decision rule of \vhc\ may thus be slightly refined by returning label $i$ when for all $k \in [N] \setminus \{i\},$ $$\operatorname{hull}_{\mathcal{E}}(\{q_t\} \cup \hat{\mathcal{C}}_{k,t}) \cap \hat{\mathcal{C}}_{j,t} \neq \emptyset$$ for some $j \ne k$. For example, in Fig. \ref{fig:mixg}, $q$ is necessarily in the top-right cell, and we know its correct label. We do not leverage this minor refinement in the analysis nor the implementation of the algorithm.

\subsection{Asymptotic Lower bound on the Regret of \vhc}\label{app:dist_independent_lower_bound}

Here, we make a slightly stronger assumption on the density of $\mu$ than Assumption \ref{as:density}(i):

\begin{assumption}\label{as:cont_density}
The distribution $\mu$ is absolutely continuous with respect to $V$ with density
$f_{\mu}=\frac{d\mu}{dV}$. Furthermore, $f_{\mu}$ is continuous on $\mathcal{E}$ and $\inf_{x\in\mathcal{E}} f_{\mu}(x)> 0\,$.
\end{assumption}
\begin{theorem}\label{thm:dist_independent_regret_bound}
Under Assumptions \ref{as:cells} and \ref{as:cont_density}, the following result holds either if $\mathcal{E}=\mathcal{I}^d$ and $d \ge 2$ with $\eta=d$, or if $\mathcal{E}=\Sph^{d-1}$, $d \ge 3$ and each cell $\mathcal{C}_i$ is contained in an open half-sphere with $\eta=d-1$:
$$\lim \inf_{T \to \infty} \frac{R_{\vhc}(T)}{\log^\eta{T}} \ge (\beta-\alpha)\frac{\sum_{i=1}^N F(\mathcal{C}_i)}{4{\eta! \eta^{\eta}}}.$$
\end{theorem}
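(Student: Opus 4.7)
The plan is to turn the upper-bound argument of Theorem \ref{thm:d-dimensional-regret} into a matching lower bound by exploiting the fact that Proposition \ref{prop:polytope_volume} (Bárány--Buchta) is in fact an asymptotic \emph{equality}, not merely an upper bound. I would treat the Euclidean case $\mathcal{E}=\mathcal{I}^d$ with $\eta=d$ first, and recover the spherical case at the end via the gnomonic projection of Corollary \ref{cor:spherical_polytope_volume}.

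I would start from the exact identity
\[
R_{\vhc}(T) \;=\; (\beta-\alpha)\sum_{t=1}^{T}\sum_{i=1}^{N}\mathbb{E}\bigl[\mu(\mathcal{C}_i\setminus\hat{\mathcal{C}}_{i,t})\bigr],
\]
derived at the start of the proof of Theorem \ref{thm:d-dimensional-regret}, and aim to lower bound each summand. Conditioning on $n_i(t)=n$, the samples falling in $\mathcal{C}_i$ are i.i.d.\ with density $f_\mu/\mu(\mathcal{C}_i)$, which by Assumption \ref{as:cont_density} and compactness of $\mathcal{C}_i$ is continuous and bounded above and away from $0$. The crucial technical step is to establish that, for any such density,
\[
\mathbb{E}\!\left[\mu(\mathcal{C}_i\setminus\hat{\mathcal{C}}_{i,t}) \mid n_i(t)=n\right]
\;=\; \frac{\mu(\mathcal{C}_i)\,F(\mathcal{C}_i)}{(d+1)^{d-1}(d-1)!}\,\frac{\log^{d-1}n}{n}\,(1+o(1)),
\]
i.e.\ the Bárány--Buchta leading constant persists beyond the uniform case. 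The dominant contribution to the missed measure comes from arbitrarily small caps around the vertices of $\mathcal{C}_i$; on these caps, continuity of $f_\mu$ makes the density effectively constant, and a direct local computation shows the density values factor out exactly at the leading order (as is already visible in dimension one, where the formula $2/(n+1)$ of Proposition \ref{prop:reduction-to-uniform} does not depend on $f_\mu$).

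To remove the conditioning on $n_i(t)\sim\Bin(t,\mu(\mathcal{C}_i))$, I would split the expectation on the ``typical'' event $\{|n_i(t)-t\mu(\mathcal{C}_i)|\le t\mu(\mathcal{C}_i)/2\}$ (whose probability tends to $1$ exponentially fast by Chernoff) and use the smoothness of $x\mapsto \log^{d-1}(x)/x$ around $x=t\mu(\mathcal{C}_i)$, exactly as in Lemma \ref{lem:binomial-concentration} but in the reverse direction, to obtain
\[
\mathbb{E}\bigl[\mu(\mathcal{C}_i\setminus\hat{\mathcal{C}}_{i,t})\bigr]
\;\ge\; \frac{F(\mathcal{C}_i)}{(d+1)^{d-1}(d-1)!}\,\frac{\log^{d-1}\!\bigl(t\mu(\mathcal{C}_i)\bigr)}{t}\,(1-o(1)).
\]
Summing over $t$ via $\sum_{t=1}^{T}\log^{d-1}(t)/t=\log^{d}(T)/d+O(\log^{d-1}T)$ and then over $i$ yields
\[
\liminf_{T\to\infty}\frac{R_{\vhc}(T)}{\log^{d}T}
\;\ge\; (\beta-\alpha)\,\frac{\sum_{i=1}^N F(\mathcal{C}_i)}{(d+1)^{d-1}\,d!}.
\]
Since $(d+1)^{d-1}\le 4 d^d$ for every $d\ge 1$, this implies the claimed bound with constant $(\beta-\alpha)\sum_i F(\mathcal{C}_i)/(4\,d!\,d^d)$. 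The spherical case is handled identically after sending each $\mathcal{C}_i$ through the gnomonic projection to a Euclidean polytope in $\mathbb{R}^{d-1}$ (Corollary \ref{cor:spherical_polytope_volume}); this drops the dimension by one and yields $\eta=d-1$.

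The main obstacle is the density-free extension of the Bárány--Buchta equality in the first display above. Unlike the upper direction, the rejection-sampling reduction of Corollary \ref{cor:thinned_polytope_volume} produces an \emph{upper} bound on the missed measure, not a lower one, so the matching lower bound must instead be obtained by a direct local analysis: one localizes to a small cap around each vertex $v$ of $\mathcal{C}_i$, replaces $f_\mu$ by $f_\mu(v)$ up to a $(1+o(1))$ factor using continuity, and then carries out the standard flag-by-flag ``wet-part'' computation that underlies Proposition \ref{prop:polytope_volume}. All subsequent steps (concentration of $n_i(t)$, summation in $t$, and the spherical reduction) are routine, and the roughly $4d/e$ factor of slack between $1/((d+1)^{d-1}d!)$ and $1/(4 d^d d!)$ comfortably absorbs the associated lower-order losses.
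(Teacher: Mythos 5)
Your overall architecture coincides with the paper's: both start from $R_{\vhc}(T)=(\beta-\alpha)\sum_{t,i}\mathbb{E}[\mu(\mathcal{C}_i\setminus\hat{\mathcal{C}}_{i,t})]$, lower bound the per-cell missed measure conditionally on $n_i(t)=n$, de-condition via binomial concentration (Lemma \ref{lem:binomial-concentration}), sum $\log^{d-1}(t)/t$ over $t$, and reduce the spherical case by the gnomonic projection. The difference lies in the key per-cell estimate, and there your argument has a genuine gap. You assert that the B\'ar\'any--Buchta asymptotic \emph{equality} of Proposition \ref{prop:polytope_volume}, with the same constant $F(P)/((d+1)^{d-1}(d-1)!)$, persists for an arbitrary continuous positive density, justified by ``a direct local computation'' near the vertices. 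This is not a routine verification: the rejection-sampling device of Corollary \ref{cor:thinned_polytope_volume} only gives the upper direction, and the symmetric coupling (embedding the $f$-distributed sample into a larger uniform sample) yields a lower bound degraded by a factor of order $c/C$, not the density-free constant you need. Obtaining the density-free constant would require redoing the flag-by-flag analysis of B\'ar\'any--Buchta with a varying density, which amounts to a new random-polytope theorem; the one-dimensional identity $2/(n+1)$ is weak evidence for $d\ge 2$, where the caps attached to different flags interact and the second-order terms must be controlled uniformly.

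The paper avoids this entirely by combining two off-the-shelf facts: the elementary inequality $\mathbb{E}[\nu(P\setminus P_n)]\ge \tfrac14\, w^{\nu}(1/n)$ (from $\PP(x\notin P_n)\ge(1-t)^n$ for $x$ in the wet part $W_{\nu}^{t}$), and the density-free asymptotics $w^{\nu}(\delta)\sim \frac{F(P)}{d!\,d^{d-1}}\,\delta\log^{d-1}(1/\delta)$ for continuous positive densities from Corollary 1.2 of \citet{besau2018}. This is exactly where the factor $4$ and the constant $\eta!\,\eta^{\eta}$ in the theorem statement come from; they are the price of not having a sharp non-uniform B\'ar\'any--Buchta equality. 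If your local-analysis lemma were actually established, your route would prove a strictly stronger statement (a lower bound matching the upper bound of Theorem \ref{thm:d-dimensional-regret} exactly in the uniform case, with no factor of $4$), and your final reduction $(d+1)^{d-1}\le 4d^{d}$ does correctly recover the stated constant from the sharper one. As written, however, the pivotal step is asserted rather than proved.
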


The lower bounds established in Theorem \ref{thm:dist_independent_regret_bound}, are independent of the distribution $\mu$ and match the upper bounds of Theorem \ref{thm:d-dimensional-regret} in the uniform case ($C/c = 1$), up to a multiplicative factor linear in the dimension $d$. This suggests that \vhc\ does not exploit favorable scenarios where $\mu$ is highly concentrated around the seed queries of the Voronoi tessellation. It is important to stress that this lower bound is specific to the \vhc\ algorithm and does not extend to all possible algorithms. The proof, presented below, relies on the connection between random polytopes and the so-called \emph{floating body} of the polytope they approximate, a notion developed in \citet{schutt1990}.

\begin{proof}[Proof of Theorem \ref{thm:dist_independent_regret_bound}]

Recall that the expected number of expert calls is driven by the probability of a query $q_t$ landing outside the current estimated hulls: 
$$\mathbb{E}[C_T] = \sum_{t=1}^T \mathbb{E}[\mathbb{P}(q_t \notin \cup_{i} \hat{\mathcal{C}}_{i,t} | \mathcal{F}_{t-1})] .$$ By equality \eqref{eq:expert-proba}, this probability is the expected measure of the uncovered region, $\sum_{i=1}^{N}\mathbb{E}[\mu(\mathcal{C}_i \setminus \hat{\mathcal{C}}_{i,t})] = \sum_{i=1}^N \mu(\mathcal{C}_i) \mathbb{E}[\mu_{\mathcal{C}_i}(\mathcal{C}_i \setminus \hat{\mathcal{C}}_{i,t})]$, where $\mu_{\mathcal{C}_i}$ is the conditional distribution of $q_t$ given that $q_t \in \mathcal{C}_i.$
This means that \begin{equation}\label{eq:regret_reformulation}R_{\vhc}(T) = (\beta-\alpha)\sum_{t=1}^{T}\sum_{i=1}^N \mu(\mathcal{C}_i) \mathbb{E}[\mu_{\mathcal{C}_i}(\mathcal{C}_i \setminus \hat{\mathcal{C}}_{i,t})].\end{equation}


\paragraph{Case $\mathcal{E}=\mathcal{I}^d$.}
Instead of applying Corollary \ref{cor:thinned_polytope_volume}, we may alternatively use the following lemma.

\begin{lemma}\label{lem:missing_measure_no_density}
Let $P$ a convex polytope in $\mathbb{R}^d$ with $d \ge 2$ and $n \ge 1$ points $p_1,\dots,p_n$ sampled  independently from a distribution $\nu$ supported on $P$ with a density w.r.t. $\lambda$ that is continuous and never zero on $P$. Denote by $P_n$ the convex hull of $p_1,\dots,p_n$. Then 
\begin{align*}\mathbb{E}[\nu(P \setminus P_n)] \ge \frac{F(P)}{4d! d^{d-1}}\frac{ \log^{d-1}{n}}{n}  +o\left(\frac{ \log^{d-1}{n}}{n}\right).\end{align*}
\end{lemma}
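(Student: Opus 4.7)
The plan is to combine a Bárány–Larman type floating body inequality with Schütt's asymptotic formula for the floating body (wet part) of a polytope under a continuous positive density. Introduce the \emph{wet part}
\[
W_\nu(t) := \{x \in P : \exists \text{ closed halfspace } H \ni x \text{ with } \nu(H \cap P) \le t\},
\]
which captures the boundary layer of $P$ that a random polytope typically fails to reach.

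First I would establish the floating body lower bound
\[
\mathbb{E}[\nu(P \setminus P_n)] \;\ge\; (1 - 1/n)^n\, \nu(W_\nu(1/n)).
\]
The argument is a one-liner: for every $x \in W_\nu(1/n)$ pick a witnessing halfspace $H$ with $\nu(H \cap P) \le 1/n$; the event $\{p_1,\ldots,p_n\} \cap H = \emptyset$ (of probability at least $(1-1/n)^n$) forces $P_n \subseteq P \setminus H$, and in particular $x \notin P_n$. Integrating $\mathbb{P}(x \notin P_n)$ against $\nu$ over $W_\nu(1/n)$ delivers the inequality. Since $(1-1/n)^n \ge 1/4$ for every $n \ge 2$, we obtain the cleaner bound $\mathbb{E}[\nu(P \setminus P_n)] \ge \tfrac14 \nu(W_\nu(1/n))$, which already accounts for the factor $1/4$ in the target constant.

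Next I would invoke Schütt's asymptotic for polytopal floating bodies: writing $f$ for the density of $\nu$, under the assumptions of the lemma,
\[
\nu(W_\nu(t)) \;=\; \frac{F(P)}{d!\, d^{d-1}}\, t\, \log^{d-1}(1/t) \;+\; o\!\left(t \log^{d-1}(1/t)\right) \qquad (t \to 0^+).
\]
The leading contribution is concentrated in shrinking neighborhoods of the vertices of $P$, where $f \approx f(v)$ by continuity. A local rescaling observation --- wet sets for the constant density $f(v)\lambda$ at level $t$ coincide with wet sets for $\lambda$ at level $t/f(v)$ --- shows that after integrating $f$ against the local wet set and summing over vertices, the factors $f(v)$ cancel to leading order, yielding a density-free constant that depends only on the combinatorics of $P$. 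Setting $t = 1/n$ and combining with the inequality of the previous step delivers the lemma.

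The main obstacle is the wet-part asymptotic. The uniform special case is classical (\citet{schutt1990}), but in order to obtain the clean density-free constant $F(P)/(d!\, d^{d-1})$ one has to verify vertex-by-vertex that the $f(v)$ dependence genuinely cancels at leading order, and that the bulk of $P$ contributes only at order $o(t\log^{d-1}(1/t))$. Technically I would decompose $P$ into a small cone at each vertex plus a bulk remainder, handle each cone via Schütt's single-vertex expansion under the constant density $f(v)$, bound the bulk contribution using the fact that every interior point of $P$ admits a fixed-radius inscribed ball, and finally use the flag decomposition $\sum_v F_v(P) = F(P)$ (partitioning flags by their vertex $F_0$) to assemble the local contributions into the stated global constant.
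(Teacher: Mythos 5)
Your proposal is correct and follows essentially the same route as the paper: the $(1-t)^n$ floating-body lower bound (with $(1-1/n)^n \ge 1/4$) combined with the density-free wet-part asymptotic $w^{\nu}(t) \sim \frac{F(P)}{d!\,d^{d-1}}\,t\log^{d-1}(1/t)$. The only difference is that the technical work you flag as the main obstacle—verifying that the density cancels at leading order—need not be redone: the paper simply cites Corollary 1.2 of \citet{besau2018}, which already establishes this asymptotic for continuous, strictly positive densities on a polytope.
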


Lemma \ref{lem:missing_measure_no_density} leverages the relationship between the uncovered region $P \setminus P_n$ and the wet part of $\nu$, which we now define. For a distribution $\nu$, the wet part of $\nu$ is defined in \citet{barany2020} as
\begin{align*}W_{\nu}^t&=\{x \in \mathbb{R}^d, \text{there is a halfspace $h$ with $x \in h$ and $\nu(h) \le t$}\},\end{align*} and the $\nu$-measure of the wet part is $$w^{\nu}(t)=\nu(W^{\nu}_t).$$ To understand this notion intuitively, consider the case where $\nu$ is uniform on the unit ball, and assume that the ball is filled with a volume $t$ of water. The wet part $W^t_{\nu}$ represents the ``trace'' left by the water as the ball rolls on a flat surface, and $w^{\nu}(t)$ is the volume of that wet part. The $\nu$-measure of the wet part is tightly linked to the expected missing volume $\mathbb{E}[1-\nu(P_n)]$, as shown in e.g. \citet{schutt1991} 
 and \citet{barany1993} for the case where $\nu$ is uniform on some convex polytope of $\mathbb{R}^d$, and in \citet{barany2020} for arbitrary measures in $\mathbb{R}^d.$ Such results are sometimes stated in terms of the \emph{floating body} of $\nu$, which is the relative complement of its wet part in its support, see \citet{schutt1990}.

\begin{proof}[Proof of Lemma \ref{lem:missing_measure_no_density}]
    By the lower bound of Theorem 1.2 in \citet{barany2020}:
\begin{equation}\label{eq:barany_bound} \mathbb{E}[\nu(P \setminus P_n)]  \ge \frac{1}{4} w^{\nu}\left(\frac{1}{n}\right). \end{equation}

This lower bound is straightfoward to derive: one can note that $\PP(x \notin P_n) \ge (1-t)^n$ for any $x \in W^t_{\nu}$, thus $\EE[\nu(P \setminus P_n)]=\int_{0}^{\infty} \PP(x \notin P_n)d\mu(x) \ge (1-t)^nw^{\nu}(t)$. Setting $t=1/n$ yields the result (see Section 3.1 in \citet{barany2020} for a more detailed proof).

Then, by Corollary 1.2 in \citet{besau2018}, under Assumption \ref{as:density}(ii),  the asymptotic behavior of $w^{\nu}$ is independent of $\nu$:  \[ w^{\nu}(\delta) \underset{\delta \to 0}{\sim} \frac{F(P)}{d! d^{d-1}} \delta\log^{d-1}(1/\delta) \]
where we recall that $F(P)$ is the number of flags of $P$. 
Consequently, the right hand side in inequality \eqref{eq:barany_bound} is asymptotically equivalent to $\frac{F(P)}{d! d^{d-1}}\frac{1}{4n}  \log^{d-1}{n}$.
\end{proof}

As in the proof of Theorem \ref{thm:d-dimensional-regret}, we may write $\mathbb{E}[\mu_{\mathcal{C}_i}(\mathcal{C}_i \setminus \hat{\mathcal{C}}_{i,t})]=\mathbb{E}[\mathbb{E}[\mu_{\mathcal{C}_i}(\mathcal{C}_i \setminus Q_{n_i(t)}) | n_i(t)]]$ where $n_i(t)$ is the number of points that landed in $\mathcal{C}_i$ up to time $t$ and $Q_n$ is the convex hull of $n$ points sampled independently from $\mu_{\mathcal{C}_i}$. Applying Lemma \ref{lem:missing_measure_no_density} to each $\mathcal{C}_i$ gives \begin{align*}\EE[\mu_{\mathcal{C}_i}(\mathcal{C}_i \setminus \hat{\mathcal{C}}_{i,t}) | n_i(t)=n] \ge\frac{F(\mathcal{C}_i)}{4d! d^{d-1}}\frac{ \log^{d-1}{n}}{n}  +o\left(\frac{ \log^{d-1}{n}}{n}\right) .\end{align*}
  
As $n_i(t)$ follows a $\Bin(t,\mu(\mathcal{C}_i))$ distribution, Lemma \ref{lem:binomial-concentration} gives that as $t \to \infty$, 
$\frac{F(\mathcal{C}_i)}{4d! d^{d-1}}\EE\left[\frac{\log^{d-1}(n_i(t))}{n_i(t)}\right]   \underset{t \to \infty}{\sim} \frac{F(\mathcal{C}_i)}{4d! d^{d-1}}\frac{ \log^{d-1}{t}}{t\mu(\mathcal{C}_i)}$, so that asymptotically in $t$, \begin{align*}\mathbb{E}[\mu_{\mathcal{C}_i}(\mathcal{C}_i \setminus \hat{\mathcal{C}}_{i,t})] \ge \frac{F(\mathcal{C}_i)}{4d! d^{d-1}}\frac{ \log^{d-1}{t}}{t\mu(\mathcal{C}_i)}  +o\left(\frac{ \log^{d-1}{t}}{t}\right)   .\end{align*} 
As $\sum_{t=1}^{T}\frac{ \log^{d-1}{t}}{t}=\frac{\log^d{T}}{d}+\bigO(1)$  for any $d \ge 1$ (with the convention $0^0=1$), combining the previous identity with equality \eqref{eq:regret_reformulation}  yields \begin{align*}R_{\vhc}(T) \ge (\beta-\alpha)\frac{\sum_{i=1}^{N}F(\mathcal{C}_i)}{4d! d^{d-1}}\frac{\log^d{T}}{d}  +o\left(\log^d{T}\right)\end{align*}  
which concludes the proof in the case  $\mathcal{E}=\mathcal{I}^d$.

\paragraph{Case $\mathcal{E}=\Sph^{d-1}$.}
The exact same gnomonic projection argument as in the proof of Corollary \ref{cor:spherical_polytope_volume} can be used to obtain a spherical analogue of Lemma \ref{lem:missing_measure_no_density}, simply noting that the density of the pushforward measure on $\mathbb{R}^d$ will remain continuous and never zero on the projected convex polytope $g_e(P)$ if the original measure on $\Sph^{d-1}$ is:
\begin{lemma}\label{lem:missing_measure_no_density_spherical}
Let $P$ a spherically convex polytope in $\Sph^{d-1}$ contained in an open halfsphere with $d \ge 2$ and $n \ge 1$ points $p_1,\dots,p_n$ sampled  independently from a distribution $\nu$ supported on $P$ with a  density w.r.t. $\omega$ that is continuous and never zero on $P$. Denote by $P_n$ the convex hull of $p_1,\dots,p_n$. Then 
\begin{align*}\mathbb{E}[\nu(P \setminus P_n)] \ge \frac{F(P)}{4(d-1)! (d-1)^{d-2}}\frac{ \log^{d-2}{n}}{n}  +o\left(\frac{ \log^{d-2}{n}}{n}\right).\end{align*}
\end{lemma}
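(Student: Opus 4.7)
The plan is to mimic the strategy used to deduce Corollary~\ref{cor:spherical_polytope_volume} from Corollary~\ref{cor:thinned_polytope_volume}: reduce the spherical estimate to its Euclidean analogue via the gnomonic projection, and then invoke Lemma~\ref{lem:missing_measure_no_density}. Concretely, I pick a pole $e\in\Sph^{d-1}$ with $P\subset\Sph_e^+$, let $g_e:\Sph_e^+\to\mathbb{R}^{d-1}$ denote the gnomonic projection $x\mapsto x/(x^\top e)-e$, and set $Q:=g_e(P)$, $q_i:=g_e(p_i)$, $Q_n:=\mathrm{conv}(q_1,\dots,q_n)$, and $\rho:=g_e\#\nu$. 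Because $g_e$ is a bijection that maps spherical geodesics to affine lines, it sends $P_n$ to $Q_n$ and in particular $\nu(P\setminus P_n)=\rho(Q\setminus Q_n)$ almost surely, so it is enough to lower bound $\mathbb{E}[\rho(Q\setminus Q_n)]$.

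Next I would check that $(Q,\rho)$ satisfies the hypotheses of Lemma~\ref{lem:missing_measure_no_density} in ambient dimension $d-1$. By the standard change-of-variables formula between the spherical and flat Lebesgue measures, the pushforward density is $f_Q(y)\propto f_\nu(g_e^{-1}(y))(1+\|y\|^2)^{-d/2}$ on $Q$. Since $f_\nu$ is continuous and strictly positive on the compact spherical polytope $P$, and since the Jacobian factor is smooth and strictly positive on the compact $Q$, the density $f_Q$ is continuous and strictly positive on $Q$. Moreover $Q$ is itself a Euclidean convex polytope in $\mathbb{R}^{d-1}$, because $g_e$ carries the bounding great hyperspheres of $P$ to affine hyperplanes.

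The last ingredient is the combinatorial identity $F(P)=F(Q)$, which I would justify by noting that the restriction of $g_e$ to $P$ induces an order isomorphism of face lattices: every spherical $k$-face of $P$ is sent bijectively to a $k$-face of $Q$ and inclusions are preserved, because $g_e$ is a projective transformation on the open halfsphere. Applying Lemma~\ref{lem:missing_measure_no_density} in dimension $d-1$ to $(Q,\rho)$ then yields
$$\mathbb{E}[\rho(Q\setminus Q_n)]\ \ge\ \frac{F(Q)}{4(d-1)!(d-1)^{d-2}}\,\frac{\log^{d-2}n}{n}\ +\ o\!\left(\frac{\log^{d-2}n}{n}\right),$$
and substituting $F(Q)=F(P)$ together with $\nu(P\setminus P_n)=\rho(Q\setminus Q_n)$ gives the announced bound.

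The main obstacle I anticipate is rigorously establishing $F(P)=F(Q)$: since the gnomonic projection is not an isometry, one cannot argue metrically and must instead work at the level of the face lattice. However this is a standard property of projective maps applied to spherical polytopes contained in an open halfsphere, and should be citable directly from Section~3.1 of \citet{Besau2014-rw}; everything else is a mechanical transcription of the Euclidean argument already used in the companion proofs.
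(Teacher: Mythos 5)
Your proposal is correct and follows essentially the same route as the paper: the paper's proof of this lemma is precisely the gnomonic-projection reduction to Lemma~\ref{lem:missing_measure_no_density} applied to the pushforward measure on $g_e(P)\subset\mathbb{R}^{d-1}$, noting that continuity and strict positivity of the density are preserved (the flag-count invariance $F(P)=F(g_e(P))$ is likewise used implicitly in the paper's Corollary~\ref{cor:spherical_polytope_volume}). The only cosmetic discrepancy is the edge case $d=2$, where the projected polytope lives in $\mathbb{R}^1$ and the Euclidean lemma's hypothesis $d\ge 2$ fails, but this is inherited from the paper's own statement and not a defect of your argument.
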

The regret analysis is then exactly the same as in the case $\mathcal{E}=\mathcal{I}^d$.
\end{proof}

\subsection{On the Error Term in Theorem \ref{thm:d-dimensional-regret}}\label{sec:wet_part}

The error term in Theorem \ref{thm:d-dimensional-regret}, e.g. $\bigO(\log^{d-1}(T)\log\log{T})$ when $\mathcal{E}=\mathcal{I}^d$, depends on the following model parameters.
\begin{enumerate}[label=(\roman*)]
\item The geometry of each cell $\mathcal{C}_i$. This dependency comes from our application of Proposition \ref{prop:polytope_volume} (Theorem 2 from \cite{barany1993}). 
\item The bounds on the density of $\mu$ (\ref{as:density}), and the volume of each $\mathcal{C}_i$. This dependency mainly comes from both the rejection sampling trick (Corollary \ref{cor:thinned_polytope_volume}) and the application of Lemma \ref{lem:binomial-concentration}—it was applied for $p=c\lambda(\mathcal{C}_i)$ in the proof of Corollary \ref{cor:thinned_polytope_volume}, and for $p=\mu(\mathcal{C}_i) \in [c\lambda(\mathcal{C}_i),C\lambda(\mathcal{C}_i)]$ in the proof of Theorem \ref{thm:d-dimensional-regret}.
\end{enumerate}

Below, we support the claim that it is very difficult to identify the precise dependency in the cell geometry (and in particular in the dimension) induced by Theorem $2$ from \cite{barany1993}, which was not tracked in their original proof. One way to attempt to identify this dependency is to exploit the relationship between random polytopes and the floating body of the polytope they approximate, as already leveraged in the proof of Theorem \ref{thm:dist_independent_regret_bound}. For example, \cite{barany2020} show in their Theorem 1.2 that if $p_1,\dots,p_n$ are sampled according to some arbitrary $\nu$ on $\mathbb{R}^d$ with convex hull $P_n$, $$\mathbb{E}[1-\nu(P_n)] \le w^{\nu}\left((d+2)\frac{\log{n}}{n}\right)+2^{d+3}e\frac{\lceil \log{n} \rceil^d}{n^2}.$$
Here, the dependency in $(\log{n})/n$ in the wet part measure is necessary because $\nu$ is arbitrary, but for more specific distributions (e.g. log-concave), this dependency may potentially be improved to $1/n$, as discussed in Section 2.4 of \cite{barany2020}. Since $\mathbb{E}[1-\nu(P_n)]$ is precisely the measure of the region uncovered by $P_n$, we may attempt to upper bound $w^{\nu}(t)$ for fixed $t$ to obtain a fully explicit upper bound on our quantity of interest. This is not too difficult when $\nu$ is uniformly distributed on the hypercube $\mathcal{I}^d$, as we show below.

\begin{proposition}
Denote by $w(t)$ the volume of the wet part of parameter $t$ for $\nu$ uniform on  $\mathcal{I}^d$. For any $n \ge 2$, $w(1/n) \le \frac{2}{n}\sum_{k=0}^{d-1}\frac{\log(n/2)^k}{k!}.$
\end{proposition}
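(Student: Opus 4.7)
My plan is to reduce $w(1/n)$ to a Gamma tail probability via the following geometric lemma: for every $x \in [0,1]^d$ and every halfspace $H$ containing $x$,
$$\nu(H) \;=\; \lambda(H \cap [0,1]^d) \;\ge\; 2^{d-1}\prod_{i=1}^d D_i(x), \qquad D_i(x) := \min(x_i, 1-x_i).$$
I would prove the lemma by first noting that the reflections $y \mapsto (y_1,\dots,1-y_i,\dots,y_d)$ are isometries of the cube that preserve both $\nu(H)$ and each $D_i$, so WLOG $x \in [0,1/2]^d$ with $D_i(x) = x_i$. I may also assume $x \in \partial H$, since shrinking $H$ only decreases $\nu(H)$. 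The decisive step is then to inscribe the box $B := \prod_{i=1}^d [0, 2x_i] \subseteq [0,1]^d$: it has centroid $x$ and volume $2^d \prod_i x_i$, and because $B$ is centrally symmetric about $x$, the involution $y \mapsto 2x - y$ maps $B$ onto itself while swapping $H$ with its complementary halfspace; hence $\lambda(B \cap H) = \lambda(B)/2 = 2^{d-1}\prod_i x_i$, and $\nu(H) \ge \lambda(B \cap H)$ because $B \subseteq [0,1]^d$.

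The lemma immediately gives $W^t_\nu \subseteq \{x \in [0,1]^d : \prod_{i=1}^d D_i(x) \le t/2^{d-1}\}$. For $X$ uniform on $[0,1]^d$, the random variables $V_i := 2 D_i(X)$ are i.i.d.\ $\mathrm{Unif}[0,1]$, so
$$w(t) \;\le\; \mathbb{P}\Bigl(\prod_{i=1}^d V_i \le 2t\Bigr) \;=\; \mathbb{P}\Bigl(-\sum_{i=1}^d \log V_i \;\ge\; \log\tfrac{1}{2t}\Bigr).$$
The right-hand variable is $\mathrm{Gamma}(d,1)$-distributed, and its classical tail formula $\mathbb{P}(\mathrm{Gamma}(d,1) \ge s) = e^{-s}\sum_{k=0}^{d-1} s^k/k!$ evaluated at $s = \log(n/2)$ (valid for $n \ge 2$) gives exactly $(2/n)\sum_{k=0}^{d-1}(\log(n/2))^k/k!$, as claimed.

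The main obstacle is the geometric lemma: the central-symmetry trick for the inscribed box $B$ is the crucial insight, and it conveniently bypasses any case analysis on which face of the cube $H$ cuts off (direct computation of $\phi(x) := \inf_{H \ni x}\nu(H)$ otherwise requires an AM--GM / Lagrange optimization over halfspaces parameterized by their flag type, plus careful handling of when the truncating hyperplane exits the cube). Everything else---the reflection reduction, the identification of $V_i$ as uniform, and the Gamma tail evaluation---is then routine.
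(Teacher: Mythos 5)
Your proof is correct and follows essentially the same route as the paper: your inscribed box $B=\prod_i[0,2x_i]$ is exactly the Macbeath region $\mathcal{I}^d\cap(2x-\mathcal{I}^d)$ for $x\in[0,1/2]^d$, and your central-symmetry argument is a direct, self-contained proof of the inequality $v(x)\ge u(x)/2$ that the paper instead cites from Lemma 4.3 of \citet{barber1996}-style convex-geometry sources (specifically \citet{barany2008random}). Likewise, your Gamma$(d,1)$ tail computation re-derives the identity $\lambda(\{y\in[0,1]^d:\prod_i y_i\le a\})=a\sum_{k=0}^{d-1}\log(1/a)^k/k!$ that the paper quotes from \citet{barany1988}, so the two arguments coincide step for step, with yours merely supplying elementary proofs of the two cited ingredients.
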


\begin{proof}
Let $v(x)=\min\{\lambda(\mathcal{I}^d \cap H), x \in H \text{ and $H$ is a halfspace}\}$, and $u(x)=\lambda(\mathcal{I}^d \setminus (2x-\mathcal{I}^d))$. This is the volume of a \emph{Macbeath} region, see e.g. \citet{barany2008random}, where they recall in their Lemma 4.3 that $u(x) \le 2v(x)$. Note also that the wet part for the uniform distribution on $\mathcal{I}^d$ is $\{x \in K, v(x) \le t\lambda(\mathcal{I}^d)\}.$ Then $$u(x)=\lambda(\mathcal{I}^d \cap (2x-\mathcal{I}^d))=\lambda([0,1] 
^d \cap ([2x_1-1,2x_1]\times \dots \times [2x_d-1,2x_d]).$$ By symmetry we may decompose the hypercube into $2^d$ subcubes based on which side of $1/2$ each coordinate lies and only consider the case $x \in S_0:=[0,1/2]^d$. For $x \in S_0$, we have $u(x)=\lambda([0,2x_1]\times \dots \times [0,2x_d])=2^d\prod_{i}x_i$.
Thus $$w(t)=\int_{\mathcal{I}^d} \mathbbm{1}_{v(x) \le t}\mathrm{d}x \le \int_{\mathcal{I}^d} \mathbbm{1}_{u(x) \le 2t}\mathrm{d}x = 2^d\int_{S_0} \mathbbm{1}_{u(x) \le 2t}\mathrm{d}x.$$ Setting $y=2x,$  this is equal to \begin{align*} 2^d\int_{S_0} \mathbbm{1}_{\prod x_i \le 2t/2^d}\mathrm{d}x &= 2^d\int_{\mathcal{I}^d} \mathbbm{1}_{\prod (y_i/2) \le 2t/2^d}(1/2^d)\mathrm{d}y \\
&= \int_{\mathcal{I}^d} \mathbbm{1}_{\prod y_i \le 2t}\mathrm{d}y \\
&=  \lambda(\{y \in [0,1]^d, \prod y_i \le 2t\}).
\end{align*} Now for any $0 < a \le 1$, 
$\lambda(\{y \in [0,1]^d, \prod y_i \le a\})=a\sum_{k=0}^{d-1}\log(1/a)^k/k!$ (see \cite{barany1988}, p.288).
Consequently, $w(t) \le 2t\sum_{k=0}^{d-1}\log(1/(2t))^k/k!$ whenever $0 < t \le 1/2$. Applying this to $t=1/n$ yields the result.
\end{proof}


When $\nu$ is uniform on some arbitrary polytope $P$, explicitly bounding the volume of the wet part of $P$ becomes significantly more difficult. \citet{schutt1991} proved in their Lemma 1.8 that for some $t_d$ and $c_d$ that only depends on $d$, if $0< t < t_d$,  $$w^{\nu}(t) \le \frac{F(P)}{d!d^{d-1}}t\log(\lambda(P)/t)^{d-1}+c_dF(P)t\log(\lambda(P)/t)^{d-2},$$ but no explicit expressions for $t_d$ and $c_d$ are provided. Their proof strategy is to first obtain an upper bound when $P$ is a simplex,  then form a partition of $P$ by $F(P)$ simplices, and apply this bound to each simplex in the partition. By carefully tracking the constant appearing in the proof of their Lemma 1.3, one can check that even in the simplex case, their arguments only yield a likely very loose  value of $c_d$ that scales roughly as $2^{d^2}.$ As such, more refined arguments are required to identify the precise dependency in $d$ of $c_d$, and to the best of our knowledge, this has not been elucidated in subsequent works.

\subsection{On Generalizing Theorem \ref{thm:1-dimensional-regret-improved} to $d \ge 2$}\label{app:dim_1_vs_dim_d}

In dimension one, we derived the sharp density-free regret bound of Theorem \ref{thm:1-dimensional-regret-improved} via a change of variable involving cumulative distribution functions (c.d.f.). Specifically, we used that:
\begin{enumerate}[label=(\roman*)]
\item If $q$ has a continuous c.d.f. $F$, $F(q)$ is uniformly distributed in $[0,1]$
\item This $F$ maps convex hulls to convex hulls: $F(\conv\{p_1,\dots,p_n\})=\conv\{F(p_1),\dots,F(p_n)\}$.

\end{enumerate}
A natural idea to generalize Theorem \ref{thm:1-dimensional-regret-improved} to higher dimensions  $d \ge 2$ is to identify a transformation $F$ in $\mathbb{R}^d$ with analogous properties. However, preserving convex hulls is a much more restrictive condition in dimensions $d \ge 2$ that for instance forces $F$ to be affine if it is one-to-one: this is the so-called fundamental theorem of affine geometry, see e.g. Theorem 2.16 in \citet{Artstein-Avidan2012}.
One may hope that a weaker but still sufficient condition for the analysis would hold: namely, by inspecting the proof of Theorem \ref{thm:1-dimensional-regret-improved}, it can be noted that the condition \begin{equation}\label{eq:weaker_conv_cond}
\lambda(F(\conv\{p_1,\dots,p_n\})) \ge\lambda(\conv\{F(p_1),\dots,F(p_n)\})
\end{equation}
would suffice.  
A natural candidate for $F$ is the Rosenblatt transformation, which maps continuous random vectors to vectors uniformly distributed on the hypercube by inductively projecting their coordinates with their conditional c.d.f. \citep{rosenblatt1952}.
We show below that unfortunately, \eqref{eq:weaker_conv_cond} does not hold for the Rosenblatt transformation.

\textbf{Counter-example for \eqref{eq:weaker_conv_cond}}. Consider a random variable with support $[0,1]^2$ and density $f(x_1,x_2)=2x_1$, $p_1=(0,0),$ $p_2=(1,0)$, $p_3=(0,1).$ Then $\conv\{p_1,p_2,p_3\}=\{(x,y) \in [0,1]^2 | x+y \le 1\}$ is a triangle.
Furthermore, the marginal and conditional c.d.f. are given by $F_{1}(x_1)=x_1^2$ and $F_{2 | 1}(x_2 | x_1)=x_2.$ The Rosenblatt transformation is then given by $F(x_1,x_2)=(x_1^2,x_2)$. In particular, $F(p_i)=p_i$ so $\conv\{p_1,p_2,p_3\}=\conv\{F(p_1),F(p_2),F(p_3)\}$, yet $F(\conv\{p_1,p_2,p_3\})=\{(x^2,y) | (x,y) \in [0,1]^2, x+y \le 1\}=\{(x,y) | (x,y) \in [0,1]^2, \sqrt{x}+y \le 1\} \subset \conv\{p_1,p_2,p_3\}.$ In fact, $\lambda(\conv\{F(p_1),F(p_2),F(p_3)\})=1/2,$ and $\lambda(F(\conv\{p_1,p_2,p_3\}))=\int_{0}^{1} (1-\sqrt{u})\mathrm{d}u=1/3.$

This suggests that this transformation argument is not fruitful in higher dimensions, and that a more refined analysis is required.

\subsection{Center Separation Assumption when $\mathcal{E}=\mathcal{I}^d$ and $\mathcal{E}=\Sph^{d-1}$}\label{app:separation_assumption}
The \extc\ regret bound of Theorem \ref{thm:ETc_regret_main} relies on the  separability assumption $\delta_{\min}^2 \ge c\sigma^2 d$ for some $c\ge 32$.

When $\mathcal{E}=\mathbb{R}^d$, this assumption is reasonable: since the expected squared Euclidean distance between two uniformly sampled points scales linearly with $d$ \citep{anderssen1976concerning}, it is plausible for $\delta_{\min}^2$ to also scale with $d$. If $\mathcal{E}=\mathcal{I}^d$, it is reasonable if $\sigma$ is small. In this context, the $d$ in the numerator of the threshold formula $$\hat{m}(t)=\frac{108\sigma^2(d+2\log{T})}{\hat{\delta}^2_{\min}(t)}$$is effectively balanced by the implicit scaling of $d$ in the denominator, making the threshold rule behave in a ``dimensionless'' manner.

However, this assumption may become unrealistic when $\mathcal{E}=\Sph^{d-1}$, as the average distance between uniformly sampled seeds is bounded by $2$, irrespective of the dimension. This average distance in fact approaches $\sqrt{2}$ as $d \to \infty$, because the probability that two seeds are orthogonal approaches one as $d \to \infty$. Therefore, we cannot expect $\delta_{\min}^2$ to scale linearly with $d$. In this setting, the numerator of $\hat{m}(t)$ scales with $d$ while the denominator does not.

Although Theorems \ref{thm:ETc_regret} and \ref{thm:ETc_regret_main} still hold as stated for subgaussian mixtures supported on the sphere, a tailored analysis would employ concentration inequalities native to that geometry. They would require a weaker separability assumption and result in a different threshold rule, likely both without a linear dependence on $d$. Such an analysis is beyond the scope of this work. Our empirical evaluation explores this by treating the constant in the definition of the threshold $\hat{m}(t)$ as a tunable hyperparameter, so that we can find a practical balance suitable for each geometric setting.

\subsection{Challenges of the \vht\ Regret Analysis}\label{sec:analysis_threshold}

Analyzing the regret of \vht($\tau$) (Algorithm \ref{alg:vht}) for $\tau > 0$ introduces significant complexities not present in the  analysis of the \vhc\ algorithm (which corresponds to \vht\ with $\tau=0$). 

\paragraph{Non i.i.d. Hull Construction} It can no longer be said that  $\hat{\mathcal{C}}_{i,t} = \text{hull}_{\mathcal{E}}(\mathcal{Q}_{i,t})$ is the convex hull of the i.i.d. samples drawn from $\mu$ that landed in $\mathcal{C}_i$, which is crucial for applying results from random polytope theory, as they are typically stated for independently sampled points.  
Indeed, when $\tau>0$, if $q_t$ results in a guess (whether correct or incorrect), since no reward feedback is observed, $q_t$ will not be used to form the hulls. $\hat{\mathcal{C}}_{i,t}$ is \emph{only} the convex hull of the queries that either triggered an expert call or landed inside the current hulls.
The queries are selected through a complex history-dependent process that depends on the geometric configuration of all estimated hulls $\hat{\mathcal{C}}_{j,t}$.

\paragraph{Bounding the Number of Incorrect Guesses} When $\tau>0$, the \vht\ may make incorrect guesses, and the regret is no longer directly proportional to the number of expert calls: rather, $$R_{\vht}(T)=(\beta-\gamma)\mathbb{E}[M_T]+(\beta-\alpha)\mathbb{E}[C_T]$$  where $M_T$ is the total number of incorrect guesses (mistakes). A mistake occurs if $q_t \in C_k$ for some $k \in [N]$ while the algorithm guesses  $i \ne k$. It can be checked that this implies $d(q_t, C_i) \le \varepsilon_{i,t} + \tau \varepsilon_{k,t}$, where $\varepsilon_{j,t} = d_H(\mathcal{C}_j, \hat{\mathcal{C}}_{j,t})$ is the Hausdorff distance between the cell $\mathcal{C}_j$ and its estimate. Bounding these Hausdorff distances is already difficult due to the non-i.i.d. hull formation. Furthermore, in the simpler i.i.d. uniform case, bounds on the Hausdorff distance between a random polytope and the polytope it approximates scale as $n^{-1/d}$ where $n$ is the number of points \citep{barany1989} . This analysis approach would yield bounds on the number of mistakes scaling with $T^{(d-1)/d}$, which, asymptotically in $T$, is significantly worse than the $\log^{d}{T}$ bound of Theorem \ref{thm:d-dimensional-regret}.

\paragraph{Bounding the Number of Expert Calls} The region triggering an expert call is also more intricate: the convex hulls $\hat{\mathcal{C}}_{i,t}$ depend on $\tau$. For a fixed hull, increasing $\tau$ reduces the probability of an expert call, but the hulls vary with $\tau$: as $\tau$ increases fewer points are used to form them. For a fixed $\tau$, reducing the hulls makes expert calls more likely. These two effects counteract each other and it is not directly obvious how to handle them.

These dynamics prevent a straightforward extension of the \vhc\ analysis. Deriving tight regret bounds for \vht\ with $\tau > 0$ would likely require more involved techniques to handle these history-dependent processes.

\paragraph{Decision Regions} To better visualize the impact of $\tau$ and the distribution of the queries in the decision rule of \vht\, we complement Fig. \ref{fig:guessing_regions} from Section \ref{sec:threshold} with Fig.  \ref{fig:guessing_regions_full}. The decision regions in round $t=250$ for a run of \vht($\tau$)  for $\tau \in \{0.1,0.2,\dots,0.9\}$ are shown, for a uniform distribution and a mixture of truncated Gaussians with small covariance.

\begin{figure}[ht]
    \centering
    \begin{subfigure}{0.83\textwidth}
        \includegraphics[width=\linewidth]{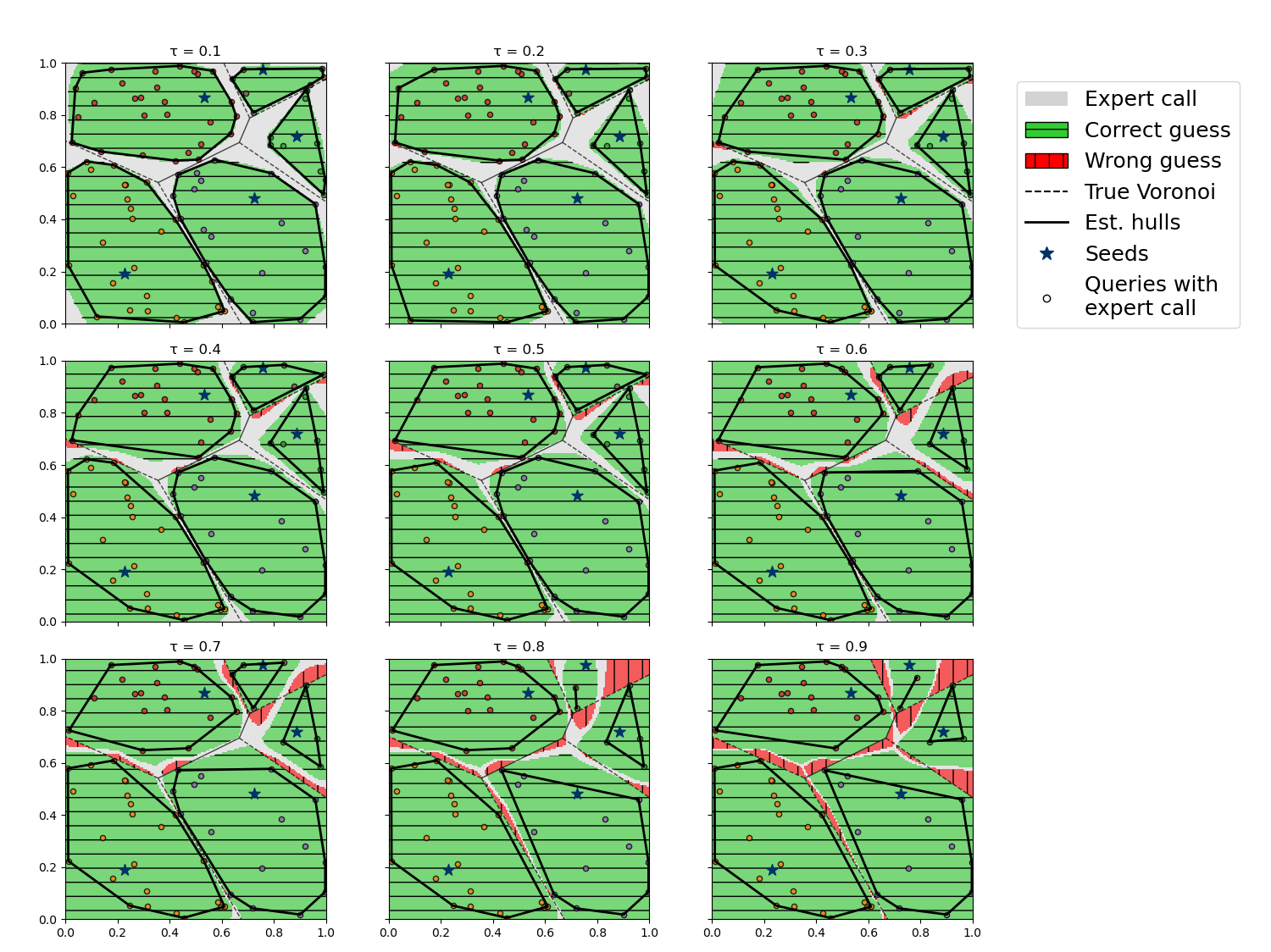}
        \caption{Uniform}
        \label{fig:misunif}
    
    \end{subfigure}\hfill
    \begin{subfigure}{0.83\textwidth}
        \includegraphics[width=\linewidth]{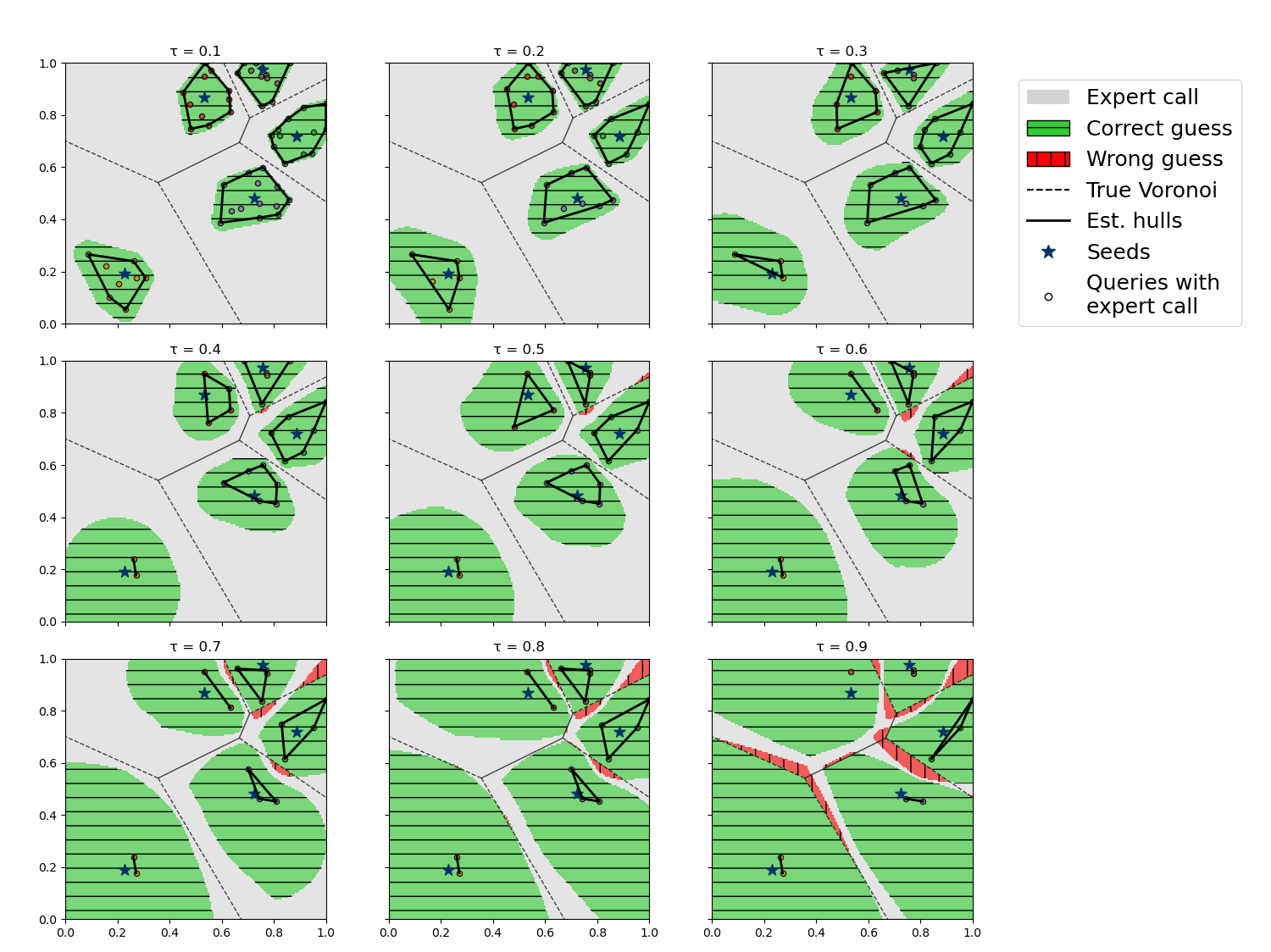}
        \caption{Mixture of truncated Gaussians, covariance matrix $0.0025I.$}
        \label{fig:mismix}
    
    \end{subfigure}\hfill
    \caption{Decision regions of \vht($\tau$) for $t=250$ and $\mathcal{E}=[0,1]^2.$}
    \label{fig:guessing_regions_full}
    \end{figure}

\clearpage

\section{EXPERIMENTS DETAILS}\label{app:experiments}

In this section, we provide the results of numerous additional synthetic and real-world experiments, along with a more detailed description of the experiments of Section \ref{sec:experiments}. Tthe reward values are always instantiated as $\alpha=-1$, $\beta=+1$, $\gamma=-10$.

\subsection{Synthetic Experiments}\label{sec:additional_synthetic_experiments}

To evaluate our algorithms in low dimensions, we perform synthetic experiments on two query domains for $N=5$ labels:
\begin{enumerate}[label=(\roman*)]
  \item $\mathcal{I}^d$, $d\in\{1,4,10,50\}$, where the seeds $s_1,\dots,s_N$ are drawn uniformly on $\mathcal{I}^d$ and $q_t$ is either drawn from the uniform distribution on $\mathcal{I}^d$ or from a homogeneous mixture of truncated Gaussians with covariance matrix $0.01I$ In this setting, \vht\ is applied using the distance to Euclidean convex hulls.
  \item $\Sph^{d-1}$, $d\in\{2,4,10,50\}$ where the seeds $s_1,\dots,s_N$ are drawn uniformly on $\Sph^{d-1}$. The query $q_t$ is sampled either uniformly on $\Sph^{d-1}$ or from a mixture, specifically
$i \in [N]$ is drawn uniformly at random and  $q_t=y_t/\Vert y_t \Vert$ where $y_t \sim \mathcal{N}(s_i,0.01I)$. In this setting, \vht\ is applied using the distance to spherical convex hulls.
\end{enumerate}

For each configuration, five independent datasets of size $T=5000$ are generated. All evaluated algorithms are run on the same datasets. The expert labeling policy is assumed to be given by the Voronoi tessellation with seeds $s_i.$ We report detailed metrics, namely:
\begin{enumerate}[label=(\roman*)]
  \item the average number of expert calls after every label has been observed at least once;
  \item the average number of incorrect guesses;
  \item the average Voronoi regret
\end{enumerate}
over the $T=5000$ rounds.

\subsubsection{Evaluation of \vhc, \vht\ and Comparison with Baselines}\label{app:synthetic_ghc_skm} First, we report the performance of \vht\ and \vhc. We recall that \vhc\ corresponds to \vht\ for $\tau=0.$  We also present the results of two baseline algorithms. 
\paragraph{Sequential $k$-Means (\texttt{SKM}) baseline.} The first baseline is based on sequential $k$-means \citep{MacQueen1967}. At each round, it stores $N$ estimated centroids $\{\hat{c}_{i}\}_{i \in [N]}$. If the algorithm guesses that the label of $q$ is $i$, the corresponding centroid is updated as $\hat{c}_{i} \leftarrow \hat{c}_{i}+(q-\hat{c}_{i})/\kappa_i$ where $\kappa_i$ is the number of times label $i$ was guessed up to the current round. This algorithm has two phases:
\begin{enumerate}[label=(\roman*)]
    \item In phase 1 (initialization phase), the expert is called until all labels are observed.
    \item In phase 2 (unsupervised phase), the algorithm predicts the label of the nearest centroid: $\hat{\imath} = \arg \min_{i \in [N]} \Vert q - \hat{c}_i \Vert_2$
\end{enumerate}
 Since this baseline never calls the expert after all labels are observed, it is a relevant comparison to \vht\ with threshold $\tau=1.$ 
 
\paragraph{Active Multiclass Perceptron (\texttt{AMP}) baseline.} The second baseline is a selective variant of the multiclass perceptron. 
The standard multiclass perceptron was studied in \cite{crammer2003}, and in the binary classification setting, an active variant was proposed in \cite{cesa2004}. Although these works study an adversarial setting under a margin assumption, whereas we study the stochastic i.i.d. regime, such a method is a reasonable baseline in practice if the classes are separated by hyperplanes passing through the origin. This is the case when the query domain is 
$\mathcal{S}^{d-1}$. \\

The \texttt{AMP} baseline takes as input a threshold $\tau \in [0,1]$ and first initializes weights $w_1,\dots,w_N$ at $0$. Then at each round $t=1,\dots,T$: 
\begin{enumerate}[label=(\roman*)]
\item It calculates normalized scores $q_t^{\top}v_i$ where $v_i = w_i / \Vert w_i \Vert$ for the query $q_t$ and identifies the best candidate label $j=\arg \max_{i} q_t^{\top}v_i$ and second-best candidate label $k= \arg \max_{i \ne j} q_t^{\top}v_{j}$.
\item If $q_t^\top(v_j-v_k)\le 2(1-\tau)$, it queries the expert for the true label $i_t$. It then updates weights by setting $w_{j} \leftarrow w_{j}+q_t$ if $j=i_t$, and $w_j \leftarrow w_j-q_t$ if $j \ne i_t$. 
\item If $q_t^\top(v_j-v_k) > 2(1-\tau)$, it predicts $\hat{\imath}_t=j$. 
\end{enumerate}
We note that typically, Perceptron (and its active variants) only update weights when the prediction is wrong \citep{cesa2004}, i.e. it would set $w_{i_t} \leftarrow w_{i_t} + q_t$ and $w_j \leftarrow w_j - q_t$ only if $j \ne i_t$. However, we found better empirical performance by also updating weights when the prediction matches the true label.

For each setting, we report the performance metrics of \texttt{AMP} on the sphere for the threshold value with lowest regret among $\{0.05k: k \in \{0,\dots,20\}\}$.
As for \vht, we do not report the number of calls in the first phase, although they are included in the regret calculation. Results are reported in Tables \ref{table:full_results_table_cube} and \ref{table:full_results_table_sphere} for $\mathcal{I}^d$ and $\Sph^{d-1}$ respectively. ``C'', ``M'' and ``R'' denote the mean $\pm$ standard deviation of the number of expert calls in the second phase, the number of incorrect guesses and the cumulative regret respectively. 

We observe that the optimal threshold increases with the dimension: in dimension one, keeping a low threshold is optimal and the regret, mostly driven by the number of expert calls, is limited. Already in dimension $10$, \vhc\ almost calls the expert for the $5,000$ queries, as expected from the discussion following Corollary \ref{cor:explicit_regret_bound}. As the dimension grows large, the optimal threshold appears to rapidly approach one. In the mixture case, the performance of \vht\ is surprisingly better in large dimension. A likely explanation in the $\mathcal{E}=\mathcal{I}^d$ is the $\sqrt{d}$ scaling of the average distance between the uniformly sampled seeds in $\mathcal{I}^d$ \citep{anderssen1976concerning}, making the Gaussian clusters more separated since their covariance is fixed. The same regret improvement is observed when $\mathcal{E}=\Sph^{d-1}$, even though the average distance between the uniformly sampled seeds will remain bounded by $2$. This average distance in fact approaches $\sqrt{2}$ as $d \to \infty$, because the probability that two seeds are orthogonal approaches one as $d \to \infty$. This entails that the centers are more evenly separated in higher dimensions and could explain why better regret is observed in these cases. 
The performance of \texttt{SKM} is worse than \vht\ for any threshold in the uniform settings, and comparable to \vht\ with high threshold in the mixture settings. \vht\ also always outperforms the \texttt{AMP} baseline, especially in lower dimensions.


\begin{table}[!htbp]
\centering
    \caption{Performance metrics of \vht\ and \texttt{SKM} on $\mathcal{I}^{d}$ ($T=5000$).}
    \label{table:full_results_table_cube}
    \begin{adjustbox}{max width=\textwidth,center}
    \begin{tblr}{colspec = { | Q[c,m]| Q[l,m]| Q[c,m]| Q[c,m]| Q[c,m]| Q[c,m]| Q[c,m]| Q[c,m]| Q[c,m]| Q[c,m]| Q[c,m]| },row{1} = {font=\small\bfseries,m},row{2-Z} = {font=\small,m},hlines,colsep=3pt, rowsep=3pt}
    Dim & Dist. & $\tau=0.00$ & $\tau=0.10$ & $\tau=0.40$ & $\tau=0.60$ & $\tau=0.80$ & $\tau=0.90$ & $\tau=0.95$ & $\tau=1.00$ & \texttt{SKM}\\
    \SetCell[r=2]{m} 1 & Unif. & \begin{tabular}{l@{\:}r@{}}
    \scriptsize C: & \scriptsize $59\,\pm\,7$ \\
    \scriptsize M: & \scriptsize $0\,\pm\,0$ \\
    \scriptsize R: & \scriptsize $142\,\pm\,11$
\end{tabular} & \SetCell{bg=gray!20} \begin{tabular}{l@{\:}r@{}}
    \scriptsize C: & \scriptsize $42\,\pm\,6$ \\
    \scriptsize M: & \scriptsize $0\,\pm\,0$ \\
    \scriptsize\textbf{R:} & \scriptsize\bm{$110\,\pm\,7$}
\end{tabular} & \begin{tabular}{l@{\:}r@{}}
    \scriptsize C: & \scriptsize $29\,\pm\,3$ \\
    \scriptsize M: & \scriptsize $3\,\pm\,3$ \\
    \scriptsize R: & \scriptsize $119\,\pm\,32$
\end{tabular} & \begin{tabular}{l@{\:}r@{}}
    \scriptsize C: & \scriptsize $26\,\pm\,2$ \\
    \scriptsize M: & \scriptsize $12\,\pm\,3$ \\
    \scriptsize R: & \scriptsize $211\,\pm\,39$
\end{tabular} & \begin{tabular}{l@{\:}r@{}}
    \scriptsize C: & \scriptsize $20\,\pm\,3$ \\
    \scriptsize M: & \scriptsize $34\,\pm\,10$ \\
    \scriptsize R: & \scriptsize $434\,\pm\,113$
\end{tabular} & \begin{tabular}{l@{\:}r@{}}
    \scriptsize C: & \scriptsize $16\,\pm\,3$ \\
    \scriptsize M: & \scriptsize $68\,\pm\,20$ \\
    \scriptsize R: & \scriptsize $799\,\pm\,212$
\end{tabular} & \begin{tabular}{l@{\:}r@{}}
    \scriptsize C: & \scriptsize $12\,\pm\,2$ \\
    \scriptsize M: & \scriptsize $101\,\pm\,33$ \\
    \scriptsize R: & \scriptsize $1161\,\pm\,365$
\end{tabular} & \begin{tabular}{l@{\:}r@{}}
    \scriptsize C: & \scriptsize $0\,\pm\,0$ \\
    \scriptsize M: & \scriptsize $375\,\pm\,126$ \\
    \scriptsize R: & \scriptsize $4152\,\pm\,1387$
\end{tabular} & \begin{tabular}{l@{\:}r@{}}
    \scriptsize C: & \scriptsize $0\,\pm\,0$ \\
    \scriptsize M: & \scriptsize $1282\,\pm\,389$ \\
    \scriptsize R: & \scriptsize $14128\,\pm\,4275$
\end{tabular}\\
     & Mix. & \begin{tabular}{l@{\:}r@{}}
    \scriptsize C: & \scriptsize $55\,\pm\,10$ \\
    \scriptsize M: & \scriptsize $0\,\pm\,0$ \\
    \scriptsize R: & \scriptsize $130\,\pm\,21$
\end{tabular} & \SetCell{bg=gray!20} \begin{tabular}{l@{\:}r@{}}
    \scriptsize C: & \scriptsize $41\,\pm\,9$ \\
    \scriptsize M: & \scriptsize $0\,\pm\,0$ \\
    \scriptsize\textbf{R:} & \scriptsize\bm{$106\,\pm\,18$}
\end{tabular} & \begin{tabular}{l@{\:}r@{}}
    \scriptsize C: & \scriptsize $34\,\pm\,4$ \\
    \scriptsize M: & \scriptsize $9\,\pm\,5$ \\
    \scriptsize R: & \scriptsize $181\,\pm\,53$
\end{tabular} & \begin{tabular}{l@{\:}r@{}}
    \scriptsize C: & \scriptsize $29\,\pm\,1$ \\
    \scriptsize M: & \scriptsize $21\,\pm\,7$ \\
    \scriptsize R: & \scriptsize $310\,\pm\,78$
\end{tabular} & \begin{tabular}{l@{\:}r@{}}
    \scriptsize C: & \scriptsize $24\,\pm\,1$ \\
    \scriptsize M: & \scriptsize $42\,\pm\,14$ \\
    \scriptsize R: & \scriptsize $525\,\pm\,148$
\end{tabular} & \begin{tabular}{l@{\:}r@{}}
    \scriptsize C: & \scriptsize $20\,\pm\,1$ \\
    \scriptsize M: & \scriptsize $87\,\pm\,22$ \\
    \scriptsize R: & \scriptsize $1013\,\pm\,246$
\end{tabular} & \begin{tabular}{l@{\:}r@{}}
    \scriptsize C: & \scriptsize $15\,\pm\,2$ \\
    \scriptsize M: & \scriptsize $155\,\pm\,31$ \\
    \scriptsize R: & \scriptsize $1752\,\pm\,336$
\end{tabular} & \begin{tabular}{l@{\:}r@{}}
    \scriptsize C: & \scriptsize $0\,\pm\,0$ \\
    \scriptsize M: & \scriptsize $823\,\pm\,389$ \\
    \scriptsize R: & \scriptsize $9070\,\pm\,4275$
\end{tabular} & \begin{tabular}{l@{\:}r@{}}
    \scriptsize C: & \scriptsize $0\,\pm\,0$ \\
    \scriptsize M: & \scriptsize $1011\,\pm\,340$ \\
    \scriptsize R: & \scriptsize $11141\,\pm\,3741$
\end{tabular}\\
    \SetCell[r=2]{m} 4 & Unif. & \begin{tabular}{l@{\:}r@{}}
    \scriptsize C: & \scriptsize $1476\,\pm\,35$ \\
    \scriptsize M: & \scriptsize $0\,\pm\,0$ \\
    \scriptsize R: & \scriptsize $2972\,\pm\,72$
\end{tabular} & \begin{tabular}{l@{\:}r@{}}
    \scriptsize C: & \scriptsize $1657\,\pm\,45$ \\
    \scriptsize M: & \scriptsize $0\,\pm\,0$ \\
    \scriptsize R: & \scriptsize $3336\,\pm\,92$
\end{tabular} & \SetCell{bg=gray!20} \begin{tabular}{l@{\:}r@{}}
    \scriptsize C: & \scriptsize $722\,\pm\,29$ \\
    \scriptsize M: & \scriptsize $12\,\pm\,4$ \\
    \scriptsize\textbf{R:} & \scriptsize\bm{$1593\,\pm\,35$}
\end{tabular} & \begin{tabular}{l@{\:}r@{}}
    \scriptsize C: & \scriptsize $478\,\pm\,25$ \\
    \scriptsize M: & \scriptsize $60\,\pm\,7$ \\
    \scriptsize R: & \scriptsize $1633\,\pm\,55$
\end{tabular} & \begin{tabular}{l@{\:}r@{}}
    \scriptsize C: & \scriptsize $291\,\pm\,13$ \\
    \scriptsize M: & \scriptsize $191\,\pm\,15$ \\
    \scriptsize R: & \scriptsize $2706\,\pm\,159$
\end{tabular} & \begin{tabular}{l@{\:}r@{}}
    \scriptsize C: & \scriptsize $194\,\pm\,10$ \\
    \scriptsize M: & \scriptsize $376\,\pm\,20$ \\
    \scriptsize R: & \scriptsize $4550\,\pm\,201$
\end{tabular} & \begin{tabular}{l@{\:}r@{}}
    \scriptsize C: & \scriptsize $132\,\pm\,5$ \\
    \scriptsize M: & \scriptsize $576\,\pm\,46$ \\
    \scriptsize R: & \scriptsize $6618\,\pm\,511$
\end{tabular} & \begin{tabular}{l@{\:}r@{}}
    \scriptsize C: & \scriptsize $0\,\pm\,0$ \\
    \scriptsize M: & \scriptsize $2415\,\pm\,324$ \\
    \scriptsize R: & \scriptsize $26584\,\pm\,3559$
\end{tabular} & \begin{tabular}{l@{\:}r@{}}
    \scriptsize C: & \scriptsize $0\,\pm\,0$ \\
    \scriptsize M: & \scriptsize $2730\,\pm\,333$ \\
    \scriptsize R: & \scriptsize $30055\,\pm\,3659$
\end{tabular}\\
     & Mix. & \begin{tabular}{l@{\:}r@{}}
    \scriptsize C: & \scriptsize $1159\,\pm\,7$ \\
    \scriptsize M: & \scriptsize $0\,\pm\,0$ \\
    \scriptsize R: & \scriptsize $2337\,\pm\,19$
\end{tabular} & \begin{tabular}{l@{\:}r@{}}
    \scriptsize C: & \scriptsize $933\,\pm\,48$ \\
    \scriptsize M: & \scriptsize $0\,\pm\,0$ \\
    \scriptsize R: & \scriptsize $1884\,\pm\,100$
\end{tabular} & \begin{tabular}{l@{\:}r@{}}
    \scriptsize C: & \scriptsize $334\,\pm\,15$ \\
    \scriptsize M: & \scriptsize $2\,\pm\,2$ \\
    \scriptsize R: & \scriptsize $712\,\pm\,28$
\end{tabular} & \SetCell{bg=gray!20} \begin{tabular}{l@{\:}r@{}}
    \scriptsize C: & \scriptsize $214\,\pm\,13$ \\
    \scriptsize M: & \scriptsize $12\,\pm\,3$ \\
    \scriptsize\textbf{R:} & \scriptsize\bm{$573\,\pm\,52$}
\end{tabular} & \begin{tabular}{l@{\:}r@{}}
    \scriptsize C: & \scriptsize $130\,\pm\,10$ \\
    \scriptsize M: & \scriptsize $43\,\pm\,11$ \\
    \scriptsize R: & \scriptsize $754\,\pm\,113$
\end{tabular} & \begin{tabular}{l@{\:}r@{}}
    \scriptsize C: & \scriptsize $83\,\pm\,8$ \\
    \scriptsize M: & \scriptsize $86\,\pm\,15$ \\
    \scriptsize R: & \scriptsize $1136\,\pm\,152$
\end{tabular} & \begin{tabular}{l@{\:}r@{}}
    \scriptsize C: & \scriptsize $55\,\pm\,7$ \\
    \scriptsize M: & \scriptsize $120\,\pm\,16$ \\
    \scriptsize R: & \scriptsize $1453\,\pm\,173$
\end{tabular} & \begin{tabular}{l@{\:}r@{}}
    \scriptsize C: & \scriptsize $0\,\pm\,0$ \\
    \scriptsize M: & \scriptsize $444\,\pm\,304$ \\
    \scriptsize R: & \scriptsize $4902\,\pm\,3346$
\end{tabular} & \begin{tabular}{l@{\:}r@{}}
    \scriptsize C: & \scriptsize $0\,\pm\,0$ \\
    \scriptsize M: & \scriptsize $70\,\pm\,7$ \\
    \scriptsize R: & \scriptsize $793\,\pm\,94$
\end{tabular}\\
    \SetCell[r=2]{m} 10 & Unif. & \begin{tabular}{l@{\:}r@{}}
    \scriptsize C: & \scriptsize $4725\,\pm\,25$ \\
    \scriptsize M: & \scriptsize $0\,\pm\,0$ \\
    \scriptsize R: & \scriptsize $9489\,\pm\,35$
\end{tabular} & \begin{tabular}{l@{\:}r@{}}
    \scriptsize C: & \scriptsize $4783\,\pm\,24$ \\
    \scriptsize M: & \scriptsize $0\,\pm\,0$ \\
    \scriptsize R: & \scriptsize $9605\,\pm\,33$
\end{tabular} & \begin{tabular}{l@{\:}r@{}}
    \scriptsize C: & \scriptsize $3762\,\pm\,47$ \\
    \scriptsize M: & \scriptsize $1\,\pm\,0$ \\
    \scriptsize R: & \scriptsize $7569\,\pm\,73$
\end{tabular} & \begin{tabular}{l@{\:}r@{}}
    \scriptsize C: & \scriptsize $2665\,\pm\,50$ \\
    \scriptsize M: & \scriptsize $22\,\pm\,5$ \\
    \scriptsize R: & \scriptsize $5607\,\pm\,74$
\end{tabular} & \SetCell{bg=gray!20} \begin{tabular}{l@{\:}r@{}}
    \scriptsize C: & \scriptsize $1517\,\pm\,30$ \\
    \scriptsize M: & \scriptsize $196\,\pm\,10$ \\
    \scriptsize\textbf{R:} & \scriptsize\bm{$5233\,\pm\,94$}
\end{tabular} & \begin{tabular}{l@{\:}r@{}}
    \scriptsize C: & \scriptsize $903\,\pm\,23$ \\
    \scriptsize M: & \scriptsize $489\,\pm\,17$ \\
    \scriptsize R: & \scriptsize $7226\,\pm\,186$
\end{tabular} & \begin{tabular}{l@{\:}r@{}}
    \scriptsize C: & \scriptsize $528\,\pm\,22$ \\
    \scriptsize M: & \scriptsize $816\,\pm\,48$ \\
    \scriptsize R: & \scriptsize $10068\,\pm\,487$
\end{tabular} & \begin{tabular}{l@{\:}r@{}}
    \scriptsize C: & \scriptsize $0\,\pm\,0$ \\
    \scriptsize M: & \scriptsize $2599\,\pm\,512$ \\
    \scriptsize R: & \scriptsize $28628\,\pm\,5600$
\end{tabular} & \begin{tabular}{l@{\:}r@{}}
    \scriptsize C: & \scriptsize $0\,\pm\,0$ \\
    \scriptsize M: & \scriptsize $3533\,\pm\,389$ \\
    \scriptsize R: & \scriptsize $38900\,\pm\,4268$
\end{tabular}\\
     & Mix. & \begin{tabular}{l@{\:}r@{}}
    \scriptsize C: & \scriptsize $4400\,\pm\,23$ \\
    \scriptsize M: & \scriptsize $0\,\pm\,0$ \\
    \scriptsize R: & \scriptsize $8821\,\pm\,48$
\end{tabular} & \begin{tabular}{l@{\:}r@{}}
    \scriptsize C: & \scriptsize $2837\,\pm\,43$ \\
    \scriptsize M: & \scriptsize $0\,\pm\,0$ \\
    \scriptsize R: & \scriptsize $5695\,\pm\,88$
\end{tabular} & \begin{tabular}{l@{\:}r@{}}
    \scriptsize C: & \scriptsize $132\,\pm\,6$ \\
    \scriptsize M: & \scriptsize $0\,\pm\,0$ \\
    \scriptsize R: & \scriptsize $284\,\pm\,9$
\end{tabular} & \begin{tabular}{l@{\:}r@{}}
    \scriptsize C: & \scriptsize $23\,\pm\,3$ \\
    \scriptsize M: & \scriptsize $0\,\pm\,0$ \\
    \scriptsize R: & \scriptsize $66\,\pm\,9$
\end{tabular} & \begin{tabular}{l@{\:}r@{}}
    \scriptsize C: & \scriptsize $3\,\pm\,2$ \\
    \scriptsize M: & \scriptsize $0\,\pm\,0$ \\
    \scriptsize R: & \scriptsize $27\,\pm\,4$
\end{tabular} & \begin{tabular}{l@{\:}r@{}}
    \scriptsize C: & \scriptsize $1\,\pm\,1$ \\
    \scriptsize M: & \scriptsize $0\,\pm\,0$ \\
    \scriptsize R: & \scriptsize $23\,\pm\,4$
\end{tabular} & \SetCell{bg=gray!20} \begin{tabular}{l@{\:}r@{}}
    \scriptsize C: & \scriptsize $1\,\pm\,1$ \\
    \scriptsize M: & \scriptsize $0\,\pm\,0$ \\
    \scriptsize\textbf{R:} & \scriptsize\bm{$23\,\pm\,4$}
\end{tabular} & \begin{tabular}{l@{\:}r@{}}
    \scriptsize C: & \scriptsize $0\,\pm\,0$ \\
    \scriptsize M: & \scriptsize $1\,\pm\,1$ \\
    \scriptsize R: & \scriptsize $31\,\pm\,7$
\end{tabular} & \begin{tabular}{l@{\:}r@{}}
    \scriptsize C: & \scriptsize $0\,\pm\,0$ \\
    \scriptsize M: & \scriptsize $0\,\pm\,0$ \\
    \scriptsize R: & \scriptsize $20\,\pm\,5$
\end{tabular}\\
    \SetCell[r=2]{m} 50 & Unif. & \begin{tabular}{l@{\:}r@{}}
    \scriptsize C: & \scriptsize $4981\,\pm\,6$ \\
    \scriptsize M: & \scriptsize $0\,\pm\,0$ \\
    \scriptsize R: & \scriptsize $10000\,\pm\,0$
\end{tabular} & \begin{tabular}{l@{\:}r@{}}
    \scriptsize C: & \scriptsize $4981\,\pm\,6$ \\
    \scriptsize M: & \scriptsize $0\,\pm\,0$ \\
    \scriptsize R: & \scriptsize $10000\,\pm\,0$
\end{tabular} & \begin{tabular}{l@{\:}r@{}}
    \scriptsize C: & \scriptsize $4981\,\pm\,6$ \\
    \scriptsize M: & \scriptsize $0\,\pm\,0$ \\
    \scriptsize R: & \scriptsize $10000\,\pm\,0$
\end{tabular} & \begin{tabular}{l@{\:}r@{}}
    \scriptsize C: & \scriptsize $4981\,\pm\,6$ \\
    \scriptsize M: & \scriptsize $0\,\pm\,0$ \\
    \scriptsize R: & \scriptsize $10000\,\pm\,0$
\end{tabular} & \begin{tabular}{l@{\:}r@{}}
    \scriptsize C: & \scriptsize $4976\,\pm\,6$ \\
    \scriptsize M: & \scriptsize $0\,\pm\,0$ \\
    \scriptsize R: & \scriptsize $9991\,\pm\,3$
\end{tabular} & \SetCell{bg=gray!20} \begin{tabular}{l@{\:}r@{}}
    \scriptsize C: & \scriptsize $4569\,\pm\,11$ \\
    \scriptsize M: & \scriptsize $35\,\pm\,5$ \\
    \scriptsize\textbf{R:} & \scriptsize\bm{$9559\,\pm\,36$}
\end{tabular} & \begin{tabular}{l@{\:}r@{}}
    \scriptsize C: & \scriptsize $3284\,\pm\,31$ \\
    \scriptsize M: & \scriptsize $356\,\pm\,21$ \\
    \scriptsize R: & \scriptsize $10525\,\pm\,177$
\end{tabular} & \begin{tabular}{l@{\:}r@{}}
    \scriptsize C: & \scriptsize $0\,\pm\,0$ \\
    \scriptsize M: & \scriptsize $3155\,\pm\,101$ \\
    \scriptsize R: & \scriptsize $34741\,\pm\,1097$
\end{tabular} & \begin{tabular}{l@{\:}r@{}}
    \scriptsize C: & \scriptsize $0\,\pm\,0$ \\
    \scriptsize M: & \scriptsize $3875\,\pm\,307$ \\
    \scriptsize R: & \scriptsize $42659\,\pm\,3378$
\end{tabular}\\
     & Mix. & \begin{tabular}{l@{\:}r@{}}
    \scriptsize C: & \scriptsize $4989\,\pm\,3$ \\
    \scriptsize M: & \scriptsize $0\,\pm\,0$ \\
    \scriptsize R: & \scriptsize $10000\,\pm\,0$
\end{tabular} & \begin{tabular}{l@{\:}r@{}}
    \scriptsize C: & \scriptsize $4989\,\pm\,3$ \\
    \scriptsize M: & \scriptsize $0\,\pm\,0$ \\
    \scriptsize R: & \scriptsize $10000\,\pm\,0$
\end{tabular} & \begin{tabular}{l@{\:}r@{}}
    \scriptsize C: & \scriptsize $7\,\pm\,2$ \\
    \scriptsize M: & \scriptsize $0\,\pm\,0$ \\
    \scriptsize R: & \scriptsize $37\,\pm\,2$
\end{tabular} & \begin{tabular}{l@{\:}r@{}}
    \scriptsize C: & \scriptsize $0\,\pm\,0$ \\
    \scriptsize M: & \scriptsize $0\,\pm\,0$ \\
    \scriptsize R: & \scriptsize $23\,\pm\,5$
\end{tabular} & \begin{tabular}{l@{\:}r@{}}
    \scriptsize C: & \scriptsize $0\,\pm\,0$ \\
    \scriptsize M: & \scriptsize $0\,\pm\,0$ \\
    \scriptsize R: & \scriptsize $23\,\pm\,5$
\end{tabular} & \begin{tabular}{l@{\:}r@{}}
    \scriptsize C: & \scriptsize $0\,\pm\,0$ \\
    \scriptsize M: & \scriptsize $0\,\pm\,0$ \\
    \scriptsize R: & \scriptsize $23\,\pm\,5$
\end{tabular} & \begin{tabular}{l@{\:}r@{}}
    \scriptsize C: & \scriptsize $0\,\pm\,0$ \\
    \scriptsize M: & \scriptsize $0\,\pm\,0$ \\
    \scriptsize R: & \scriptsize $23\,\pm\,5$
\end{tabular} & \SetCell{bg=gray!20} \begin{tabular}{l@{\:}r@{}}
    \scriptsize C: & \scriptsize $0\,\pm\,0$ \\
    \scriptsize M: & \scriptsize $0\,\pm\,0$ \\
    \scriptsize\textbf{R:} & \scriptsize\bm{$23\,\pm\,5$}
\end{tabular} & \begin{tabular}{l@{\:}r@{}}
    \scriptsize C: & \scriptsize $0\,\pm\,0$ \\
    \scriptsize M: & \scriptsize $0\,\pm\,0$ \\
    \scriptsize R: & \scriptsize $23\,\pm\,5$
\end{tabular}\\
    \end{tblr}
    \end{adjustbox}
\end{table}

\begin{table}[!htbp]
\centering
    \caption{Performance metrics of \vht, \texttt{SKM} and \texttt{AMP} on $\Sph^{d-1}$ ($T=5000$).}
    \label{table:full_results_table_sphere}
    \begin{adjustbox}{max width=\textwidth,center}
    \begin{tblr}{colspec = { | Q[c,m]| Q[l,m]| Q[c,m]| Q[c,m]| Q[c,m]| Q[c,m]| Q[c,m]| Q[c,m]| Q[c,m]| Q[c,m]| Q[c,m]| },row{1} = {font=\small\bfseries,m},row{2-Z} = {font=\small,m},hlines,colsep=3pt, rowsep=3pt}
    Dim & Dist. & $\tau=0.00$ & $\tau=0.10$ & $\tau=0.20$ & $\tau=0.40$ & $\tau=0.60$ & $\tau=0.80$ & $\tau=1.00$ & \texttt{SKM} & \texttt{AMP}\\
    \SetCell[r=2]{m} 2 & Unif. & \begin{tabular}{l@{\:}r@{}}
    \scriptsize C: & \scriptsize $50\,\pm\,7$ \\
    \scriptsize M: & \scriptsize $0\,\pm\,0$ \\
    \scriptsize R: & \scriptsize $132\,\pm\,12$
\end{tabular} & \SetCell{bg=gray!20} \begin{tabular}{l@{\:}r@{}}
    \scriptsize C: & \scriptsize $43\,\pm\,5$ \\
    \scriptsize M: & \scriptsize $1\,\pm\,1$ \\
    \scriptsize\textbf{R:} & \scriptsize\bm{$125\,\pm\,6$}
\end{tabular} & \begin{tabular}{l@{\:}r@{}}
    \scriptsize C: & \scriptsize $41\,\pm\,3$ \\
    \scriptsize M: & \scriptsize $2\,\pm\,1$ \\
    \scriptsize R: & \scriptsize $142\,\pm\,7$
\end{tabular} & \begin{tabular}{l@{\:}r@{}}
    \scriptsize C: & \scriptsize $35\,\pm\,2$ \\
    \scriptsize M: & \scriptsize $5\,\pm\,3$ \\
    \scriptsize R: & \scriptsize $157\,\pm\,30$
\end{tabular} & \begin{tabular}{l@{\:}r@{}}
    \scriptsize C: & \scriptsize $28\,\pm\,1$ \\
    \scriptsize M: & \scriptsize $18\,\pm\,6$ \\
    \scriptsize R: & \scriptsize $288\,\pm\,69$
\end{tabular} & \begin{tabular}{l@{\:}r@{}}
    \scriptsize C: & \scriptsize $22\,\pm\,2$ \\
    \scriptsize M: & \scriptsize $44\,\pm\,11$ \\
    \scriptsize R: & \scriptsize $563\,\pm\,128$
\end{tabular} & \begin{tabular}{l@{\:}r@{}}
    \scriptsize C: & \scriptsize $0\,\pm\,0$ \\
    \scriptsize M: & \scriptsize $315\,\pm\,148$ \\
    \scriptsize R: & \scriptsize $3495\,\pm\,1633$
\end{tabular} & \begin{tabular}{l@{\:}r@{}}
    \scriptsize C: & \scriptsize $0\,\pm\,0$ \\
    \scriptsize M: & \scriptsize $2431\,\pm\,174$ \\
    \scriptsize R: & \scriptsize $26804\,\pm\,1923$
\end{tabular} & \begin{tabular}{l@{\:}r@{}}
    \scriptsize C: & \scriptsize $1856\,\pm\,46$ \\
    \scriptsize M: & \scriptsize $114\,\pm\,42$ \\
    \scriptsize R: & \scriptsize $4967\,\pm\,421$
\end{tabular}\\
     & Mix. & \begin{tabular}{l@{\:}r@{}}
    \scriptsize C: & \scriptsize $61\,\pm\,3$ \\
    \scriptsize M: & \scriptsize $0\,\pm\,0$ \\
    \scriptsize R: & \scriptsize $138\,\pm\,8$
\end{tabular} & \begin{tabular}{l@{\:}r@{}}
    \scriptsize C: & \scriptsize $47\,\pm\,4$ \\
    \scriptsize M: & \scriptsize $0\,\pm\,0$ \\
    \scriptsize R: & \scriptsize $114\,\pm\,8$
\end{tabular} & \SetCell{bg=gray!20} \begin{tabular}{l@{\:}r@{}}
    \scriptsize C: & \scriptsize $38\,\pm\,2$ \\
    \scriptsize M: & \scriptsize $1\,\pm\,1$ \\
    \scriptsize\textbf{R:} & \scriptsize\bm{$104\,\pm\,13$}
\end{tabular} & \begin{tabular}{l@{\:}r@{}}
    \scriptsize C: & \scriptsize $29\,\pm\,1$ \\
    \scriptsize M: & \scriptsize $3\,\pm\,1$ \\
    \scriptsize R: & \scriptsize $110\,\pm\,17$
\end{tabular} & \begin{tabular}{l@{\:}r@{}}
    \scriptsize C: & \scriptsize $24\,\pm\,1$ \\
    \scriptsize M: & \scriptsize $9\,\pm\,3$ \\
    \scriptsize R: & \scriptsize $166\,\pm\,31$
\end{tabular} & \begin{tabular}{l@{\:}r@{}}
    \scriptsize C: & \scriptsize $18\,\pm\,2$ \\
    \scriptsize M: & \scriptsize $22\,\pm\,9$ \\
    \scriptsize R: & \scriptsize $299\,\pm\,98$
\end{tabular} & \begin{tabular}{l@{\:}r@{}}
    \scriptsize C: & \scriptsize $0\,\pm\,0$ \\
    \scriptsize M: & \scriptsize $345\,\pm\,77$ \\
    \scriptsize R: & \scriptsize $3810\,\pm\,847$
\end{tabular} & \begin{tabular}{l@{\:}r@{}}
    \scriptsize C: & \scriptsize $0\,\pm\,0$ \\
    \scriptsize M: & \scriptsize $70\,\pm\,58$ \\
    \scriptsize R: & \scriptsize $791\,\pm\,645$
\end{tabular} & \begin{tabular}{l@{\:}r@{}}
    \scriptsize C: & \scriptsize $5000\,\pm\,0$ \\
    \scriptsize M: & \scriptsize $0\,\pm\,0$ \\
    \scriptsize R: & \scriptsize $10000\,\pm\,0$
\end{tabular}\\
    \SetCell[r=2]{m} 4 & Unif. & \begin{tabular}{l@{\:}r@{}}
    \scriptsize C: & \scriptsize $602\,\pm\,15$ \\
    \scriptsize M: & \scriptsize $0\,\pm\,0$ \\
    \scriptsize R: & \scriptsize $1225\,\pm\,31$
\end{tabular} & \begin{tabular}{l@{\:}r@{}}
    \scriptsize C: & \scriptsize $514\,\pm\,11$ \\
    \scriptsize M: & \scriptsize $0\,\pm\,0$ \\
    \scriptsize R: & \scriptsize $1053\,\pm\,26$
\end{tabular} & \begin{tabular}{l@{\:}r@{}}
    \scriptsize C: & \scriptsize $438\,\pm\,11$ \\
    \scriptsize M: & \scriptsize $2\,\pm\,1$ \\
    \scriptsize R: & \scriptsize $922\,\pm\,25$
\end{tabular} & \SetCell{bg=gray!20} \begin{tabular}{l@{\:}r@{}}
    \scriptsize C: & \scriptsize $329\,\pm\,10$ \\
    \scriptsize M: & \scriptsize $14\,\pm\,3$ \\
    \scriptsize\textbf{R:} & \scriptsize\bm{$836\,\pm\,40$}
\end{tabular} & \begin{tabular}{l@{\:}r@{}}
    \scriptsize C: & \scriptsize $246\,\pm\,6$ \\
    \scriptsize M: & \scriptsize $55\,\pm\,11$ \\
    \scriptsize R: & \scriptsize $1122\,\pm\,108$
\end{tabular} & \begin{tabular}{l@{\:}r@{}}
    \scriptsize C: & \scriptsize $177\,\pm\,7$ \\
    \scriptsize M: & \scriptsize $137\,\pm\,4$ \\
    \scriptsize R: & \scriptsize $1878\,\pm\,52$
\end{tabular} & \begin{tabular}{l@{\:}r@{}}
    \scriptsize C: & \scriptsize $0\,\pm\,0$ \\
    \scriptsize M: & \scriptsize $1909\,\pm\,339$ \\
    \scriptsize R: & \scriptsize $21016\,\pm\,3728$
\end{tabular} & \begin{tabular}{l@{\:}r@{}}
    \scriptsize C: & \scriptsize $0\,\pm\,0$ \\
    \scriptsize M: & \scriptsize $2329\,\pm\,742$ \\
    \scriptsize R: & \scriptsize $25645\,\pm\,8157$
\end{tabular} & \begin{tabular}{l@{\:}r@{}}
    \scriptsize C: & \scriptsize $869\,\pm\,23$ \\
    \scriptsize M: & \scriptsize $96\,\pm\,10$ \\
    \scriptsize R: & \scriptsize $2799\,\pm\,110$
\end{tabular}\\
     & Mix. & \begin{tabular}{l@{\:}r@{}}
    \scriptsize C: & \scriptsize $531\,\pm\,29$ \\
    \scriptsize M: & \scriptsize $0\,\pm\,0$ \\
    \scriptsize R: & \scriptsize $1083\,\pm\,56$
\end{tabular} & \begin{tabular}{l@{\:}r@{}}
    \scriptsize C: & \scriptsize $127\,\pm\,4$ \\
    \scriptsize M: & \scriptsize $0\,\pm\,0$ \\
    \scriptsize R: & \scriptsize $275\,\pm\,5$
\end{tabular} & \begin{tabular}{l@{\:}r@{}}
    \scriptsize C: & \scriptsize $51\,\pm\,4$ \\
    \scriptsize M: & \scriptsize $0\,\pm\,0$ \\
    \scriptsize R: & \scriptsize $124\,\pm\,11$
\end{tabular} & \begin{tabular}{l@{\:}r@{}}
    \scriptsize C: & \scriptsize $12\,\pm\,2$ \\
    \scriptsize M: & \scriptsize $0\,\pm\,0$ \\
    \scriptsize R: & \scriptsize $46\,\pm\,7$
\end{tabular} & \begin{tabular}{l@{\:}r@{}}
    \scriptsize C: & \scriptsize $3\,\pm\,2$ \\
    \scriptsize M: & \scriptsize $0\,\pm\,0$ \\
    \scriptsize R: & \scriptsize $28\,\pm\,3$
\end{tabular} & \begin{tabular}{l@{\:}r@{}}
    \scriptsize C: & \scriptsize $0\,\pm\,0$ \\
    \scriptsize M: & \scriptsize $0\,\pm\,0$ \\
    \scriptsize R: & \scriptsize $22\,\pm\,6$
\end{tabular} & \SetCell{bg=gray!20} \begin{tabular}{l@{\:}r@{}}
    \scriptsize C: & \scriptsize $0\,\pm\,0$ \\
    \scriptsize M: & \scriptsize $0\,\pm\,0$ \\
    \scriptsize\textbf{R:} & \scriptsize\bm{$22\,\pm\,7$}
\end{tabular} & \begin{tabular}{l@{\:}r@{}}
    \scriptsize C: & \scriptsize $0\,\pm\,0$ \\
    \scriptsize M: & \scriptsize $0\,\pm\,0$ \\
    \scriptsize R: & \scriptsize $22\,\pm\,7$
\end{tabular} & \begin{tabular}{l@{\:}r@{}}
    \scriptsize C: & \scriptsize $70\,\pm\,46$ \\
    \scriptsize M: & \scriptsize $16\,\pm\,8$ \\
    \scriptsize R: & \scriptsize $312\,\pm\,170$
\end{tabular}\\
    \SetCell[r=2]{m} 10 & Unif. & \begin{tabular}{l@{\:}r@{}}
    \scriptsize C: & \scriptsize $2428\,\pm\,72$ \\
    \scriptsize M: & \scriptsize $0\,\pm\,0$ \\
    \scriptsize R: & \scriptsize $4878\,\pm\,150$
\end{tabular} & \begin{tabular}{l@{\:}r@{}}
    \scriptsize C: & \scriptsize $2167\,\pm\,74$ \\
    \scriptsize M: & \scriptsize $0\,\pm\,0$ \\
    \scriptsize R: & \scriptsize $4357\,\pm\,154$
\end{tabular} & \begin{tabular}{l@{\:}r@{}}
    \scriptsize C: & \scriptsize $1919\,\pm\,74$ \\
    \scriptsize M: & \scriptsize $0\,\pm\,0$ \\
    \scriptsize R: & \scriptsize $3862\,\pm\,155$
\end{tabular} & \begin{tabular}{l@{\:}r@{}}
    \scriptsize C: & \scriptsize $1430\,\pm\,54$ \\
    \scriptsize M: & \scriptsize $6\,\pm\,3$ \\
    \scriptsize R: & \scriptsize $2952\,\pm\,138$
\end{tabular} & \SetCell{bg=gray!20} \begin{tabular}{l@{\:}r@{}}
    \scriptsize C: & \scriptsize $1011\,\pm\,42$ \\
    \scriptsize M: & \scriptsize $46\,\pm\,6$ \\
    \scriptsize\textbf{R:} & \scriptsize\bm{$2550\,\pm\,123$}
\end{tabular} & \begin{tabular}{l@{\:}r@{}}
    \scriptsize C: & \scriptsize $632\,\pm\,21$ \\
    \scriptsize M: & \scriptsize $194\,\pm\,21$ \\
    \scriptsize R: & \scriptsize $3423\,\pm\,225$
\end{tabular} & \begin{tabular}{l@{\:}r@{}}
    \scriptsize C: & \scriptsize $0\,\pm\,0$ \\
    \scriptsize M: & \scriptsize $2990\,\pm\,94$ \\
    \scriptsize R: & \scriptsize $32914\,\pm\,1023$
\end{tabular} & \begin{tabular}{l@{\:}r@{}}
    \scriptsize C: & \scriptsize $0\,\pm\,0$ \\
    \scriptsize M: & \scriptsize $3630\,\pm\,358$ \\
    \scriptsize R: & \scriptsize $39956\,\pm\,3931$
\end{tabular} & \begin{tabular}{l@{\:}r@{}}
    \scriptsize C: & \scriptsize $1415\,\pm\,30$ \\
    \scriptsize M: & \scriptsize $97\,\pm\,14$ \\
    \scriptsize R: & \scriptsize $3902\,\pm\,158$
\end{tabular}\\
     & Mix. & \begin{tabular}{l@{\:}r@{}}
    \scriptsize C: & \scriptsize $3735\,\pm\,38$ \\
    \scriptsize M: & \scriptsize $0\,\pm\,0$ \\
    \scriptsize R: & \scriptsize $7490\,\pm\,74$
\end{tabular} & \begin{tabular}{l@{\:}r@{}}
    \scriptsize C: & \scriptsize $1439\,\pm\,36$ \\
    \scriptsize M: & \scriptsize $0\,\pm\,0$ \\
    \scriptsize R: & \scriptsize $2898\,\pm\,63$
\end{tabular} & \begin{tabular}{l@{\:}r@{}}
    \scriptsize C: & \scriptsize $430\,\pm\,13$ \\
    \scriptsize M: & \scriptsize $0\,\pm\,0$ \\
    \scriptsize R: & \scriptsize $879\,\pm\,21$
\end{tabular} & \begin{tabular}{l@{\:}r@{}}
    \scriptsize C: & \scriptsize $55\,\pm\,3$ \\
    \scriptsize M: & \scriptsize $0\,\pm\,0$ \\
    \scriptsize R: & \scriptsize $130\,\pm\,7$
\end{tabular} & \begin{tabular}{l@{\:}r@{}}
    \scriptsize C: & \scriptsize $6\,\pm\,3$ \\
    \scriptsize M: & \scriptsize $0\,\pm\,0$ \\
    \scriptsize R: & \scriptsize $31\,\pm\,7$
\end{tabular} & \begin{tabular}{l@{\:}r@{}}
    \scriptsize C: & \scriptsize $0\,\pm\,0$ \\
    \scriptsize M: & \scriptsize $0\,\pm\,0$ \\
    \scriptsize R: & \scriptsize $20\,\pm\,11$
\end{tabular} & \SetCell{bg=gray!20} \begin{tabular}{l@{\:}r@{}}
    \scriptsize C: & \scriptsize $0\,\pm\,0$ \\
    \scriptsize M: & \scriptsize $0\,\pm\,0$ \\
    \scriptsize\textbf{R:} & \scriptsize\bm{$20\,\pm\,11$}
\end{tabular} & \begin{tabular}{l@{\:}r@{}}
    \scriptsize C: & \scriptsize $0\,\pm\,0$ \\
    \scriptsize M: & \scriptsize $0\,\pm\,0$ \\
    \scriptsize R: & \scriptsize $20\,\pm\,11$
\end{tabular} & \begin{tabular}{l@{\:}r@{}}
    \scriptsize C: & \scriptsize $23\,\pm\,9$ \\
    \scriptsize M: & \scriptsize $0\,\pm\,0$ \\
    \scriptsize R: & \scriptsize $50\,\pm\,21$
\end{tabular}\\
    \SetCell[r=2]{m} 50 & Unif. & \begin{tabular}{l@{\:}r@{}}
    \scriptsize C: & \scriptsize $4979\,\pm\,5$ \\
    \scriptsize M: & \scriptsize $0\,\pm\,0$ \\
    \scriptsize R: & \scriptsize $9983\,\pm\,4$
\end{tabular} & \begin{tabular}{l@{\:}r@{}}
    \scriptsize C: & \scriptsize $4964\,\pm\,6$ \\
    \scriptsize M: & \scriptsize $0\,\pm\,0$ \\
    \scriptsize R: & \scriptsize $9953\,\pm\,6$
\end{tabular} & \begin{tabular}{l@{\:}r@{}}
    \scriptsize C: & \scriptsize $4936\,\pm\,7$ \\
    \scriptsize M: & \scriptsize $0\,\pm\,0$ \\
    \scriptsize R: & \scriptsize $9898\,\pm\,8$
\end{tabular} & \begin{tabular}{l@{\:}r@{}}
    \scriptsize C: & \scriptsize $4720\,\pm\,9$ \\
    \scriptsize M: & \scriptsize $0\,\pm\,0$ \\
    \scriptsize R: & \scriptsize $9464\,\pm\,18$
\end{tabular} & \begin{tabular}{l@{\:}r@{}}
    \scriptsize C: & \scriptsize $4119\,\pm\,20$ \\
    \scriptsize M: & \scriptsize $0\,\pm\,0$ \\
    \scriptsize R: & \scriptsize $8262\,\pm\,41$
\end{tabular} & \SetCell{bg=gray!20} \begin{tabular}{l@{\:}r@{}}
    \scriptsize C: & \scriptsize $2825\,\pm\,13$ \\
    \scriptsize M: & \scriptsize $42\,\pm\,8$ \\
    \scriptsize\textbf{R:} & \scriptsize\bm{$6135\,\pm\,73$}
\end{tabular} & \begin{tabular}{l@{\:}r@{}}
    \scriptsize C: & \scriptsize $0\,\pm\,0$ \\
    \scriptsize M: & \scriptsize $3502\,\pm\,53$ \\
    \scriptsize R: & \scriptsize $38549\,\pm\,582$
\end{tabular} & \begin{tabular}{l@{\:}r@{}}
    \scriptsize C: & \scriptsize $0\,\pm\,0$ \\
    \scriptsize M: & \scriptsize $3747\,\pm\,99$ \\
    \scriptsize R: & \scriptsize $41246\,\pm\,1091$
\end{tabular} & \begin{tabular}{l@{\:}r@{}}
    \scriptsize C: & \scriptsize $2879\,\pm\,32$ \\
    \scriptsize M: & \scriptsize $94\,\pm\,17$ \\
    \scriptsize R: & \scriptsize $6796\,\pm\,156$
\end{tabular}\\
     & Mix. & \begin{tabular}{l@{\:}r@{}}
    \scriptsize C: & \scriptsize $4990\,\pm\,2$ \\
    \scriptsize M: & \scriptsize $0\,\pm\,0$ \\
    \scriptsize R: & \scriptsize $10000\,\pm\,0$
\end{tabular} & \begin{tabular}{l@{\:}r@{}}
    \scriptsize C: & \scriptsize $4990\,\pm\,2$ \\
    \scriptsize M: & \scriptsize $0\,\pm\,0$ \\
    \scriptsize R: & \scriptsize $10000\,\pm\,0$
\end{tabular} & \begin{tabular}{l@{\:}r@{}}
    \scriptsize C: & \scriptsize $4971\,\pm\,4$ \\
    \scriptsize M: & \scriptsize $0\,\pm\,0$ \\
    \scriptsize R: & \scriptsize $9961\,\pm\,12$
\end{tabular} & \begin{tabular}{l@{\:}r@{}}
    \scriptsize C: & \scriptsize $1650\,\pm\,28$ \\
    \scriptsize M: & \scriptsize $0\,\pm\,0$ \\
    \scriptsize R: & \scriptsize $3319\,\pm\,56$
\end{tabular} & \begin{tabular}{l@{\:}r@{}}
    \scriptsize C: & \scriptsize $109\,\pm\,7$ \\
    \scriptsize M: & \scriptsize $0\,\pm\,0$ \\
    \scriptsize R: & \scriptsize $237\,\pm\,15$
\end{tabular} & \begin{tabular}{l@{\:}r@{}}
    \scriptsize C: & \scriptsize $5\,\pm\,2$ \\
    \scriptsize M: & \scriptsize $0\,\pm\,0$ \\
    \scriptsize R: & \scriptsize $28\,\pm\,5$
\end{tabular} & \SetCell{bg=gray!20} \begin{tabular}{l@{\:}r@{}}
    \scriptsize C: & \scriptsize $0\,\pm\,0$ \\
    \scriptsize M: & \scriptsize $0\,\pm\,0$ \\
    \scriptsize\textbf{R:} & \scriptsize\bm{$19\,\pm\,5$}
\end{tabular} & \begin{tabular}{l@{\:}r@{}}
    \scriptsize C: & \scriptsize $0\,\pm\,0$ \\
    \scriptsize M: & \scriptsize $0\,\pm\,0$ \\
    \scriptsize R: & \scriptsize $19\,\pm\,5$
\end{tabular} & \begin{tabular}{l@{\:}r@{}}
    \scriptsize C: & \scriptsize $10\,\pm\,0$ \\
    \scriptsize M: & \scriptsize $1\,\pm\,1$ \\
    \scriptsize R: & \scriptsize $28\,\pm\,8$
\end{tabular}\\
    \end{tblr}
    \end{adjustbox}
\end{table}


\subsubsection{Evaluation of \extc\ and Comparison with \vht}\label{app:synthetic_etc_ghc}  Then, we evaluate the performance of the \extc\ algorithm described in Section \ref{sec:etc_algo}. Its exploration phase length is determined by the threshold
$$\hat{m}(t)=\frac{C\sigma^2}{{\hat{\delta}^2_{\min}(t)}}(d+2\log{T}),$$
where $\sigma=0.1$ and $C=108$. As discussed in Appendix \ref{app:separation_assumption}, this rule is best suited for the Euclidean setting $\mathcal{E}=\mathcal{I}^d$. We therefore treat $C$ as a tunable parameter to find a practical balance in all settings. The results for various values of $C$ are reported in Tables \ref{table:etc_results_table_cube} and \ref{table:etc_results_table_sphere}. To allow a direct comparison with the performance of previous algorithms, we only report in ``C'' the number of expert calls after each label has been observed at least once (this corresponds to the number of expert calls in the second phase for the \vht\ and \texttt{SKM} algorithms).

Interestingly, our experiments suggest that a much more aggressive choice for $C$ is preferable in practice, even in the hypercube setting where the theoretical analysis suggested $C=108$. We also observe that as for the other algorithms, in the mixture settings, performance generally improves with dimension. Additionally, even in low dimensions, performance is poor in the uniform setting, which is expected since \extc\ is tailored to settings where the sample mean of expert-labeled queries is a reliable estimate of the corresponding seed.

\begin{table}[!htbp]
\centering
    \caption{Performance Metrics of \extc\ on $\mathcal{I}^d$ ($T=5000$).}
    \label{table:etc_results_table_cube}
    \begin{adjustbox}{max width=\textwidth,center}
    \begin{tblr}{colspec = { | Q[c,m]| Q[l,m]| Q[c,m]| Q[c,m]| Q[c,m]| Q[c,m]| Q[c,m]| Q[c,m]| Q[c,m]| Q[c,m]| },row{1} = {font=\small\bfseries,m},row{2-Z} = {font=\small,m},hlines,colsep=3pt, rowsep=3pt}
    Dim & Dist. & $C=0.25$ & $C=1$ & $C=5$ & $C=10$ & $C=25$ & $C=50$ & $C=100$ & $C=108$\\
    \SetCell[r=2]{m} 1 & Unif. & \SetCell{bg=gray!20} \begin{tabular}{l@{\:}r@{}}
    \scriptsize C: & \scriptsize $29\,\pm\,20$ \\
    \scriptsize M: & \scriptsize $549\,\pm\,38$ \\
    \scriptsize\textbf{R:} & \scriptsize\bm{$6123\,\pm\,390$}
\end{tabular} & \begin{tabular}{l@{\:}r@{}}
    \scriptsize C: & \scriptsize $98\,\pm\,12$ \\
    \scriptsize M: & \scriptsize $541\,\pm\,29$ \\
    \scriptsize R: & \scriptsize $6174\,\pm\,333$
\end{tabular} & \begin{tabular}{l@{\:}r@{}}
    \scriptsize C: & \scriptsize $445\,\pm\,50$ \\
    \scriptsize M: & \scriptsize $483\,\pm\,40$ \\
    \scriptsize R: & \scriptsize $6231\,\pm\,376$
\end{tabular} & \begin{tabular}{l@{\:}r@{}}
    \scriptsize C: & \scriptsize $851\,\pm\,138$ \\
    \scriptsize M: & \scriptsize $447\,\pm\,43$ \\
    \scriptsize R: & \scriptsize $6650\,\pm\,243$
\end{tabular} & \begin{tabular}{l@{\:}r@{}}
    \scriptsize C: & \scriptsize $2120\,\pm\,83$ \\
    \scriptsize M: & \scriptsize $310\,\pm\,21$ \\
    \scriptsize R: & \scriptsize $7676\,\pm\,113$
\end{tabular} & \begin{tabular}{l@{\:}r@{}}
    \scriptsize C: & \scriptsize $4173\,\pm\,114$ \\
    \scriptsize M: & \scriptsize $90\,\pm\,16$ \\
    \scriptsize R: & \scriptsize $9364\,\pm\,94$
\end{tabular} & \begin{tabular}{l@{\:}r@{}}
    \scriptsize C: & \scriptsize $4985\,\pm\,5$ \\
    \scriptsize M: & \scriptsize $0\,\pm\,0$ \\
    \scriptsize R: & \scriptsize $10000\,\pm\,0$
\end{tabular} & \begin{tabular}{l@{\:}r@{}}
    \scriptsize C: & \scriptsize $4985\,\pm\,5$ \\
    \scriptsize M: & \scriptsize $0\,\pm\,0$ \\
    \scriptsize R: & \scriptsize $10000\,\pm\,0$
\end{tabular}\\
     & Mix. & \SetCell{bg=gray!20} \begin{tabular}{l@{\:}r@{}}
    \scriptsize C: & \scriptsize $19\,\pm\,9$ \\
    \scriptsize M: & \scriptsize $386\,\pm\,71$ \\
    \scriptsize\textbf{R:} & \scriptsize\bm{$4305\,\pm\,770$}
\end{tabular} & \begin{tabular}{l@{\:}r@{}}
    \scriptsize C: & \scriptsize $71\,\pm\,16$ \\
    \scriptsize M: & \scriptsize $407\,\pm\,65$ \\
    \scriptsize R: & \scriptsize $4643\,\pm\,718$
\end{tabular} & \begin{tabular}{l@{\:}r@{}}
    \scriptsize C: & \scriptsize $329\,\pm\,22$ \\
    \scriptsize M: & \scriptsize $349\,\pm\,30$ \\
    \scriptsize R: & \scriptsize $4517\,\pm\,320$
\end{tabular} & \begin{tabular}{l@{\:}r@{}}
    \scriptsize C: & \scriptsize $657\,\pm\,37$ \\
    \scriptsize M: & \scriptsize $309\,\pm\,14$ \\
    \scriptsize R: & \scriptsize $4734\,\pm\,156$
\end{tabular} & \begin{tabular}{l@{\:}r@{}}
    \scriptsize C: & \scriptsize $1566\,\pm\,73$ \\
    \scriptsize M: & \scriptsize $252\,\pm\,22$ \\
    \scriptsize R: & \scriptsize $5923\,\pm\,163$
\end{tabular} & \begin{tabular}{l@{\:}r@{}}
    \scriptsize C: & \scriptsize $3119\,\pm\,146$ \\
    \scriptsize M: & \scriptsize $141\,\pm\,11$ \\
    \scriptsize R: & \scriptsize $7812\,\pm\,246$
\end{tabular} & \begin{tabular}{l@{\:}r@{}}
    \scriptsize C: & \scriptsize $4990\,\pm\,2$ \\
    \scriptsize M: & \scriptsize $0\,\pm\,0$ \\
    \scriptsize R: & \scriptsize $10000\,\pm\,0$
\end{tabular} & \begin{tabular}{l@{\:}r@{}}
    \scriptsize C: & \scriptsize $4990\,\pm\,2$ \\
    \scriptsize M: & \scriptsize $0\,\pm\,0$ \\
    \scriptsize R: & \scriptsize $10000\,\pm\,0$
\end{tabular}\\
    \SetCell[r=2]{m} 4 & Unif. & \begin{tabular}{l@{\:}r@{}}
    \scriptsize C: & \scriptsize $0\,\pm\,0$ \\
    \scriptsize M: & \scriptsize $2080\,\pm\,344$ \\
    \scriptsize R: & \scriptsize $22913\,\pm\,3771$
\end{tabular} & \begin{tabular}{l@{\:}r@{}}
    \scriptsize C: & \scriptsize $3\,\pm\,3$ \\
    \scriptsize M: & \scriptsize $1890\,\pm\,408$ \\
    \scriptsize R: & \scriptsize $20827\,\pm\,4476$
\end{tabular} & \begin{tabular}{l@{\:}r@{}}
    \scriptsize C: & \scriptsize $60\,\pm\,29$ \\
    \scriptsize M: & \scriptsize $1184\,\pm\,170$ \\
    \scriptsize R: & \scriptsize $13175\,\pm\,1814$
\end{tabular} & \begin{tabular}{l@{\:}r@{}}
    \scriptsize C: & \scriptsize $123\,\pm\,26$ \\
    \scriptsize M: & \scriptsize $1017\,\pm\,170$ \\
    \scriptsize R: & \scriptsize $11464\,\pm\,1843$
\end{tabular} & \begin{tabular}{l@{\:}r@{}}
    \scriptsize C: & \scriptsize $333\,\pm\,44$ \\
    \scriptsize M: & \scriptsize $825\,\pm\,81$ \\
    \scriptsize R: & \scriptsize $9771\,\pm\,857$
\end{tabular} & \begin{tabular}{l@{\:}r@{}}
    \scriptsize C: & \scriptsize $627\,\pm\,51$ \\
    \scriptsize M: & \scriptsize $773\,\pm\,44$ \\
    \scriptsize R: & \scriptsize $9785\,\pm\,448$
\end{tabular} & \begin{tabular}{l@{\:}r@{}}
    \scriptsize C: & \scriptsize $1307\,\pm\,101$ \\
    \scriptsize M: & \scriptsize $632\,\pm\,34$ \\
    \scriptsize R: & \scriptsize $9598\,\pm\,390$
\end{tabular} & \SetCell{bg=gray!20} \begin{tabular}{l@{\:}r@{}}
    \scriptsize C: & \scriptsize $1411\,\pm\,113$ \\
    \scriptsize M: & \scriptsize $610\,\pm\,44$ \\
    \scriptsize\textbf{R:} & \scriptsize\bm{$9563\,\pm\,486$}
\end{tabular}\\
     & Mix. & \begin{tabular}{l@{\:}r@{}}
    \scriptsize C: & \scriptsize $0\,\pm\,1$ \\
    \scriptsize M: & \scriptsize $252\,\pm\,89$ \\
    \scriptsize R: & \scriptsize $2802\,\pm\,976$
\end{tabular} & \begin{tabular}{l@{\:}r@{}}
    \scriptsize C: & \scriptsize $9\,\pm\,4$ \\
    \scriptsize M: & \scriptsize $178\,\pm\,53$ \\
    \scriptsize R: & \scriptsize $2006\,\pm\,587$
\end{tabular} & \begin{tabular}{l@{\:}r@{}}
    \scriptsize C: & \scriptsize $49\,\pm\,12$ \\
    \scriptsize M: & \scriptsize $98\,\pm\,20$ \\
    \scriptsize R: & \scriptsize $1207\,\pm\,239$
\end{tabular} & \SetCell{bg=gray!20} \begin{tabular}{l@{\:}r@{}}
    \scriptsize C: & \scriptsize $102\,\pm\,23$ \\
    \scriptsize M: & \scriptsize $76\,\pm\,11$ \\
    \scriptsize\textbf{R:} & \scriptsize\bm{$1064\,\pm\,142$}
\end{tabular} & \begin{tabular}{l@{\:}r@{}}
    \scriptsize C: & \scriptsize $238\,\pm\,30$ \\
    \scriptsize M: & \scriptsize $70\,\pm\,8$ \\
    \scriptsize R: & \scriptsize $1279\,\pm\,124$
\end{tabular} & \begin{tabular}{l@{\:}r@{}}
    \scriptsize C: & \scriptsize $491\,\pm\,52$ \\
    \scriptsize M: & \scriptsize $51\,\pm\,8$ \\
    \scriptsize R: & \scriptsize $1568\,\pm\,174$
\end{tabular} & \begin{tabular}{l@{\:}r@{}}
    \scriptsize C: & \scriptsize $944\,\pm\,58$ \\
    \scriptsize M: & \scriptsize $43\,\pm\,9$ \\
    \scriptsize R: & \scriptsize $2395\,\pm\,193$
\end{tabular} & \begin{tabular}{l@{\:}r@{}}
    \scriptsize C: & \scriptsize $1037\,\pm\,47$ \\
    \scriptsize M: & \scriptsize $43\,\pm\,8$ \\
    \scriptsize R: & \scriptsize $2571\,\pm\,150$
\end{tabular}\\
    \SetCell[r=2]{m} 10 & Unif. & \begin{tabular}{l@{\:}r@{}}
    \scriptsize C: & \scriptsize $0\,\pm\,0$ \\
    \scriptsize M: & \scriptsize $2481\,\pm\,356$ \\
    \scriptsize R: & \scriptsize $27328\,\pm\,3902$
\end{tabular} & \begin{tabular}{l@{\:}r@{}}
    \scriptsize C: & \scriptsize $0\,\pm\,0$ \\
    \scriptsize M: & \scriptsize $2481\,\pm\,356$ \\
    \scriptsize R: & \scriptsize $27328\,\pm\,3902$
\end{tabular} & \begin{tabular}{l@{\:}r@{}}
    \scriptsize C: & \scriptsize $45\,\pm\,17$ \\
    \scriptsize M: & \scriptsize $1657\,\pm\,205$ \\
    \scriptsize R: & \scriptsize $18348\,\pm\,2213$
\end{tabular} & \begin{tabular}{l@{\:}r@{}}
    \scriptsize C: & \scriptsize $68\,\pm\,20$ \\
    \scriptsize M: & \scriptsize $1452\,\pm\,205$ \\
    \scriptsize R: & \scriptsize $16143\,\pm\,2212$
\end{tabular} & \begin{tabular}{l@{\:}r@{}}
    \scriptsize C: & \scriptsize $199\,\pm\,43$ \\
    \scriptsize M: & \scriptsize $1034\,\pm\,93$ \\
    \scriptsize R: & \scriptsize $11808\,\pm\,976$
\end{tabular} & \begin{tabular}{l@{\:}r@{}}
    \scriptsize C: & \scriptsize $395\,\pm\,51$ \\
    \scriptsize M: & \scriptsize $893\,\pm\,100$ \\
    \scriptsize R: & \scriptsize $10644\,\pm\,1026$
\end{tabular} & \SetCell{bg=gray!20} \begin{tabular}{l@{\:}r@{}}
    \scriptsize C: & \scriptsize $834\,\pm\,92$ \\
    \scriptsize M: & \scriptsize $735\,\pm\,45$ \\
    \scriptsize\textbf{R:} & \scriptsize\bm{$9782\,\pm\,462$}
\end{tabular} & \begin{tabular}{l@{\:}r@{}}
    \scriptsize C: & \scriptsize $880\,\pm\,95$ \\
    \scriptsize M: & \scriptsize $731\,\pm\,46$ \\
    \scriptsize R: & \scriptsize $9831\,\pm\,510$
\end{tabular}\\
     & Mix. & \SetCell{bg=gray!20} \begin{tabular}{l@{\:}r@{}}
    \scriptsize C: & \scriptsize $0\,\pm\,0$ \\
    \scriptsize M: & \scriptsize $1\,\pm\,1$ \\
    \scriptsize\textbf{R:} & \scriptsize\bm{$33\,\pm\,7$}
\end{tabular} & \begin{tabular}{l@{\:}r@{}}
    \scriptsize C: & \scriptsize $0\,\pm\,0$ \\
    \scriptsize M: & \scriptsize $1\,\pm\,1$ \\
    \scriptsize R: & \scriptsize $33\,\pm\,7$
\end{tabular} & \begin{tabular}{l@{\:}r@{}}
    \scriptsize C: & \scriptsize $8\,\pm\,4$ \\
    \scriptsize M: & \scriptsize $0\,\pm\,0$ \\
    \scriptsize R: & \scriptsize $44\,\pm\,7$
\end{tabular} & \begin{tabular}{l@{\:}r@{}}
    \scriptsize C: & \scriptsize $25\,\pm\,5$ \\
    \scriptsize M: & \scriptsize $0\,\pm\,0$ \\
    \scriptsize R: & \scriptsize $75\,\pm\,9$
\end{tabular} & \begin{tabular}{l@{\:}r@{}}
    \scriptsize C: & \scriptsize $62\,\pm\,13$ \\
    \scriptsize M: & \scriptsize $0\,\pm\,0$ \\
    \scriptsize R: & \scriptsize $149\,\pm\,24$
\end{tabular} & \begin{tabular}{l@{\:}r@{}}
    \scriptsize C: & \scriptsize $116\,\pm\,13$ \\
    \scriptsize M: & \scriptsize $0\,\pm\,0$ \\
    \scriptsize R: & \scriptsize $257\,\pm\,27$
\end{tabular} & \begin{tabular}{l@{\:}r@{}}
    \scriptsize C: & \scriptsize $229\,\pm\,10$ \\
    \scriptsize M: & \scriptsize $0\,\pm\,0$ \\
    \scriptsize R: & \scriptsize $483\,\pm\,24$
\end{tabular} & \begin{tabular}{l@{\:}r@{}}
    \scriptsize C: & \scriptsize $248\,\pm\,7$ \\
    \scriptsize M: & \scriptsize $0\,\pm\,0$ \\
    \scriptsize R: & \scriptsize $520\,\pm\,14$
\end{tabular}\\
    \SetCell[r=2]{m} 50 & Unif. & \begin{tabular}{l@{\:}r@{}}
    \scriptsize C: & \scriptsize $0\,\pm\,0$ \\
    \scriptsize M: & \scriptsize $3020\,\pm\,115$ \\
    \scriptsize R: & \scriptsize $33260\,\pm\,1267$
\end{tabular} & \begin{tabular}{l@{\:}r@{}}
    \scriptsize C: & \scriptsize $0\,\pm\,0$ \\
    \scriptsize M: & \scriptsize $3020\,\pm\,115$ \\
    \scriptsize R: & \scriptsize $33260\,\pm\,1267$
\end{tabular} & \begin{tabular}{l@{\:}r@{}}
    \scriptsize C: & \scriptsize $35\,\pm\,18$ \\
    \scriptsize M: & \scriptsize $2579\,\pm\,205$ \\
    \scriptsize R: & \scriptsize $28483\,\pm\,2223$
\end{tabular} & \begin{tabular}{l@{\:}r@{}}
    \scriptsize C: & \scriptsize $160\,\pm\,78$ \\
    \scriptsize M: & \scriptsize $1985\,\pm\,364$ \\
    \scriptsize R: & \scriptsize $22197\,\pm\,3857$
\end{tabular} & \begin{tabular}{l@{\:}r@{}}
    \scriptsize C: & \scriptsize $466\,\pm\,82$ \\
    \scriptsize M: & \scriptsize $1277\,\pm\,91$ \\
    \scriptsize R: & \scriptsize $15027\,\pm\,887$
\end{tabular} & \begin{tabular}{l@{\:}r@{}}
    \scriptsize C: & \scriptsize $955\,\pm\,96$ \\
    \scriptsize M: & \scriptsize $909\,\pm\,52$ \\
    \scriptsize R: & \scriptsize $11957\,\pm\,541$
\end{tabular} & \begin{tabular}{l@{\:}r@{}}
    \scriptsize C: & \scriptsize $2176\,\pm\,164$ \\
    \scriptsize M: & \scriptsize $497\,\pm\,44$ \\
    \scriptsize R: & \scriptsize $9866\,\pm\,252$
\end{tabular} & \SetCell{bg=gray!20} \begin{tabular}{l@{\:}r@{}}
    \scriptsize C: & \scriptsize $2380\,\pm\,203$ \\
    \scriptsize M: & \scriptsize $436\,\pm\,38$ \\
    \scriptsize\textbf{R:} & \scriptsize\bm{$9597\,\pm\,143$}
\end{tabular}\\
     & Mix. & \SetCell{bg=gray!20} \begin{tabular}{l@{\:}r@{}}
    \scriptsize C: & \scriptsize $0\,\pm\,0$ \\
    \scriptsize M: & \scriptsize $0\,\pm\,0$ \\
    \scriptsize\textbf{R:} & \scriptsize\bm{$24\,\pm\,9$}
\end{tabular} & \begin{tabular}{l@{\:}r@{}}
    \scriptsize C: & \scriptsize $0\,\pm\,0$ \\
    \scriptsize M: & \scriptsize $0\,\pm\,0$ \\
    \scriptsize R: & \scriptsize $24\,\pm\,9$
\end{tabular} & \begin{tabular}{l@{\:}r@{}}
    \scriptsize C: & \scriptsize $0\,\pm\,0$ \\
    \scriptsize M: & \scriptsize $0\,\pm\,0$ \\
    \scriptsize R: & \scriptsize $24\,\pm\,9$
\end{tabular} & \begin{tabular}{l@{\:}r@{}}
    \scriptsize C: & \scriptsize $7\,\pm\,4$ \\
    \scriptsize M: & \scriptsize $0\,\pm\,0$ \\
    \scriptsize R: & \scriptsize $39\,\pm\,15$
\end{tabular} & \begin{tabular}{l@{\:}r@{}}
    \scriptsize C: & \scriptsize $24\,\pm\,8$ \\
    \scriptsize M: & \scriptsize $0\,\pm\,0$ \\
    \scriptsize R: & \scriptsize $72\,\pm\,15$
\end{tabular} & \begin{tabular}{l@{\:}r@{}}
    \scriptsize C: & \scriptsize $43\,\pm\,12$ \\
    \scriptsize M: & \scriptsize $0\,\pm\,0$ \\
    \scriptsize R: & \scriptsize $110\,\pm\,20$
\end{tabular} & \begin{tabular}{l@{\:}r@{}}
    \scriptsize C: & \scriptsize $81\,\pm\,8$ \\
    \scriptsize M: & \scriptsize $0\,\pm\,0$ \\
    \scriptsize R: & \scriptsize $186\,\pm\,17$
\end{tabular} & \begin{tabular}{l@{\:}r@{}}
    \scriptsize C: & \scriptsize $90\,\pm\,12$ \\
    \scriptsize M: & \scriptsize $0\,\pm\,0$ \\
    \scriptsize R: & \scriptsize $205\,\pm\,21$
\end{tabular}\\
    \end{tblr}
    \end{adjustbox}
\end{table}

\begin{table}[!htbp]
\centering
    \caption{Performance Metrics of \extc\ on $\Sph^{d-1}$ ($T=5000$).}
    \label{table:etc_results_table_sphere}
    \begin{adjustbox}{max width=\textwidth,center}
    \begin{tblr}{colspec = { | Q[c,m]| Q[l,m]| Q[c,m]| Q[c,m]| Q[c,m]| Q[c,m]| Q[c,m]| Q[c,m]| Q[c,m]| Q[c,m]| },row{1} = {font=\small\bfseries,m},row{2-Z} = {font=\small,m},hlines,colsep=3pt, rowsep=3pt}
    Dim & Dist. & $C=0.25$ & $C=1$ & $C=5$ & $C=10$ & $C=25$ & $C=50$ & $C=100$ & $C=108$\\
    \SetCell[r=2]{m} 2 & Unif. & \begin{tabular}{l@{\:}r@{}}
    \scriptsize C: & \scriptsize $0\,\pm\,0$ \\
    \scriptsize M: & \scriptsize $983\,\pm\,117$ \\
    \scriptsize R: & \scriptsize $10877\,\pm\,1270$
\end{tabular} & \begin{tabular}{l@{\:}r@{}}
    \scriptsize C: & \scriptsize $25\,\pm\,17$ \\
    \scriptsize M: & \scriptsize $947\,\pm\,68$ \\
    \scriptsize R: & \scriptsize $10525\,\pm\,778$
\end{tabular} & \begin{tabular}{l@{\:}r@{}}
    \scriptsize C: & \scriptsize $131\,\pm\,61$ \\
    \scriptsize M: & \scriptsize $905\,\pm\,88$ \\
    \scriptsize R: & \scriptsize $10274\,\pm\,920$
\end{tabular} & \begin{tabular}{l@{\:}r@{}}
    \scriptsize C: & \scriptsize $299\,\pm\,63$ \\
    \scriptsize M: & \scriptsize $829\,\pm\,69$ \\
    \scriptsize R: & \scriptsize $9775\,\pm\,714$
\end{tabular} & \SetCell{bg=gray!20} \begin{tabular}{l@{\:}r@{}}
    \scriptsize C: & \scriptsize $807\,\pm\,85$ \\
    \scriptsize M: & \scriptsize $697\,\pm\,54$ \\
    \scriptsize\textbf{R:} & \scriptsize\bm{$9347\,\pm\,472$}
\end{tabular} & \begin{tabular}{l@{\:}r@{}}
    \scriptsize C: & \scriptsize $1574\,\pm\,104$ \\
    \scriptsize M: & \scriptsize $579\,\pm\,39$ \\
    \scriptsize R: & \scriptsize $9573\,\pm\,354$
\end{tabular} & \begin{tabular}{l@{\:}r@{}}
    \scriptsize C: & \scriptsize $2895\,\pm\,213$ \\
    \scriptsize M: & \scriptsize $355\,\pm\,27$ \\
    \scriptsize R: & \scriptsize $9754\,\pm\,319$
\end{tabular} & \begin{tabular}{l@{\:}r@{}}
    \scriptsize C: & \scriptsize $3135\,\pm\,211$ \\
    \scriptsize M: & \scriptsize $316\,\pm\,26$ \\
    \scriptsize R: & \scriptsize $9810\,\pm\,331$
\end{tabular}\\
     & Mix. & \begin{tabular}{l@{\:}r@{}}
    \scriptsize C: & \scriptsize $10\,\pm\,5$ \\
    \scriptsize M: & \scriptsize $256\,\pm\,122$ \\
    \scriptsize R: & \scriptsize $2853\,\pm\,1341$
\end{tabular} & \begin{tabular}{l@{\:}r@{}}
    \scriptsize C: & \scriptsize $46\,\pm\,24$ \\
    \scriptsize M: & \scriptsize $171\,\pm\,169$ \\
    \scriptsize R: & \scriptsize $1993\,\pm\,1825$
\end{tabular} & \SetCell{bg=gray!20} \begin{tabular}{l@{\:}r@{}}
    \scriptsize C: & \scriptsize $223\,\pm\,21$ \\
    \scriptsize M: & \scriptsize $45\,\pm\,15$ \\
    \scriptsize\textbf{R:} & \scriptsize\bm{$963\,\pm\,173$}
\end{tabular} & \begin{tabular}{l@{\:}r@{}}
    \scriptsize C: & \scriptsize $422\,\pm\,67$ \\
    \scriptsize M: & \scriptsize $33\,\pm\,8$ \\
    \scriptsize R: & \scriptsize $1224\,\pm\,176$
\end{tabular} & \begin{tabular}{l@{\:}r@{}}
    \scriptsize C: & \scriptsize $1020\,\pm\,94$ \\
    \scriptsize M: & \scriptsize $26\,\pm\,3$ \\
    \scriptsize R: & \scriptsize $2347\,\pm\,209$
\end{tabular} & \begin{tabular}{l@{\:}r@{}}
    \scriptsize C: & \scriptsize $1976\,\pm\,75$ \\
    \scriptsize M: & \scriptsize $19\,\pm\,4$ \\
    \scriptsize R: & \scriptsize $4177\,\pm\,115$
\end{tabular} & \begin{tabular}{l@{\:}r@{}}
    \scriptsize C: & \scriptsize $3962\,\pm\,125$ \\
    \scriptsize M: & \scriptsize $6\,\pm\,2$ \\
    \scriptsize R: & \scriptsize $8011\,\pm\,230$
\end{tabular} & \begin{tabular}{l@{\:}r@{}}
    \scriptsize C: & \scriptsize $4304\,\pm\,103$ \\
    \scriptsize M: & \scriptsize $4\,\pm\,1$ \\
    \scriptsize R: & \scriptsize $8666\,\pm\,207$
\end{tabular}\\
    \SetCell[r=2]{m} 4 & Unif. & \begin{tabular}{l@{\:}r@{}}
    \scriptsize C: & \scriptsize $0\,\pm\,0$ \\
    \scriptsize M: & \scriptsize $1714\,\pm\,437$ \\
    \scriptsize R: & \scriptsize $18873\,\pm\,4810$
\end{tabular} & \begin{tabular}{l@{\:}r@{}}
    \scriptsize C: & \scriptsize $0\,\pm\,0$ \\
    \scriptsize M: & \scriptsize $1714\,\pm\,437$ \\
    \scriptsize R: & \scriptsize $18873\,\pm\,4810$
\end{tabular} & \begin{tabular}{l@{\:}r@{}}
    \scriptsize C: & \scriptsize $8\,\pm\,4$ \\
    \scriptsize M: & \scriptsize $1359\,\pm\,308$ \\
    \scriptsize R: & \scriptsize $14987\,\pm\,3391$
\end{tabular} & \begin{tabular}{l@{\:}r@{}}
    \scriptsize C: & \scriptsize $25\,\pm\,13$ \\
    \scriptsize M: & \scriptsize $959\,\pm\,202$ \\
    \scriptsize R: & \scriptsize $10622\,\pm\,2195$
\end{tabular} & \begin{tabular}{l@{\:}r@{}}
    \scriptsize C: & \scriptsize $42\,\pm\,13$ \\
    \scriptsize M: & \scriptsize $823\,\pm\,123$ \\
    \scriptsize R: & \scriptsize $9160\,\pm\,1328$
\end{tabular} & \begin{tabular}{l@{\:}r@{}}
    \scriptsize C: & \scriptsize $79\,\pm\,15$ \\
    \scriptsize M: & \scriptsize $635\,\pm\,83$ \\
    \scriptsize R: & \scriptsize $7165\,\pm\,904$
\end{tabular} & \begin{tabular}{l@{\:}r@{}}
    \scriptsize C: & \scriptsize $152\,\pm\,21$ \\
    \scriptsize M: & \scriptsize $529\,\pm\,98$ \\
    \scriptsize R: & \scriptsize $6148\,\pm\,1053$
\end{tabular} & \SetCell{bg=gray!20} \begin{tabular}{l@{\:}r@{}}
    \scriptsize C: & \scriptsize $159\,\pm\,18$ \\
    \scriptsize M: & \scriptsize $524\,\pm\,83$ \\
    \scriptsize\textbf{R:} & \scriptsize\bm{$6099\,\pm\,889$}
\end{tabular}\\
     & Mix. & \SetCell{bg=gray!20} \begin{tabular}{l@{\:}r@{}}
    \scriptsize C: & \scriptsize $0\,\pm\,0$ \\
    \scriptsize M: & \scriptsize $0\,\pm\,0$ \\
    \scriptsize\textbf{R:} & \scriptsize\bm{$22\,\pm\,7$}
\end{tabular} & \begin{tabular}{l@{\:}r@{}}
    \scriptsize C: & \scriptsize $0\,\pm\,0$ \\
    \scriptsize M: & \scriptsize $0\,\pm\,0$ \\
    \scriptsize R: & \scriptsize $22\,\pm\,7$
\end{tabular} & \begin{tabular}{l@{\:}r@{}}
    \scriptsize C: & \scriptsize $4\,\pm\,4$ \\
    \scriptsize M: & \scriptsize $0\,\pm\,0$ \\
    \scriptsize R: & \scriptsize $30\,\pm\,9$
\end{tabular} & \begin{tabular}{l@{\:}r@{}}
    \scriptsize C: & \scriptsize $12\,\pm\,6$ \\
    \scriptsize M: & \scriptsize $0\,\pm\,0$ \\
    \scriptsize R: & \scriptsize $46\,\pm\,12$
\end{tabular} & \begin{tabular}{l@{\:}r@{}}
    \scriptsize C: & \scriptsize $34\,\pm\,10$ \\
    \scriptsize M: & \scriptsize $0\,\pm\,0$ \\
    \scriptsize R: & \scriptsize $89\,\pm\,15$
\end{tabular} & \begin{tabular}{l@{\:}r@{}}
    \scriptsize C: & \scriptsize $75\,\pm\,17$ \\
    \scriptsize M: & \scriptsize $0\,\pm\,0$ \\
    \scriptsize R: & \scriptsize $172\,\pm\,34$
\end{tabular} & \begin{tabular}{l@{\:}r@{}}
    \scriptsize C: & \scriptsize $140\,\pm\,16$ \\
    \scriptsize M: & \scriptsize $0\,\pm\,0$ \\
    \scriptsize R: & \scriptsize $301\,\pm\,30$
\end{tabular} & \begin{tabular}{l@{\:}r@{}}
    \scriptsize C: & \scriptsize $154\,\pm\,14$ \\
    \scriptsize M: & \scriptsize $0\,\pm\,0$ \\
    \scriptsize R: & \scriptsize $329\,\pm\,25$
\end{tabular}\\
    \SetCell[r=2]{m} 10 & Unif. & \begin{tabular}{l@{\:}r@{}}
    \scriptsize C: & \scriptsize $0\,\pm\,0$ \\
    \scriptsize M: & \scriptsize $2920\,\pm\,269$ \\
    \scriptsize R: & \scriptsize $32141\,\pm\,2957$
\end{tabular} & \begin{tabular}{l@{\:}r@{}}
    \scriptsize C: & \scriptsize $0\,\pm\,0$ \\
    \scriptsize M: & \scriptsize $2920\,\pm\,269$ \\
    \scriptsize R: & \scriptsize $32141\,\pm\,2957$
\end{tabular} & \begin{tabular}{l@{\:}r@{}}
    \scriptsize C: & \scriptsize $11\,\pm\,7$ \\
    \scriptsize M: & \scriptsize $2271\,\pm\,434$ \\
    \scriptsize R: & \scriptsize $25020\,\pm\,4758$
\end{tabular} & \begin{tabular}{l@{\:}r@{}}
    \scriptsize C: & \scriptsize $52\,\pm\,15$ \\
    \scriptsize M: & \scriptsize $1563\,\pm\,238$ \\
    \scriptsize R: & \scriptsize $17316\,\pm\,2602$
\end{tabular} & \begin{tabular}{l@{\:}r@{}}
    \scriptsize C: & \scriptsize $115\,\pm\,26$ \\
    \scriptsize M: & \scriptsize $1096\,\pm\,37$ \\
    \scriptsize R: & \scriptsize $12300\,\pm\,379$
\end{tabular} & \begin{tabular}{l@{\:}r@{}}
    \scriptsize C: & \scriptsize $229\,\pm\,34$ \\
    \scriptsize M: & \scriptsize $796\,\pm\,121$ \\
    \scriptsize R: & \scriptsize $9235\,\pm\,1272$
\end{tabular} & \begin{tabular}{l@{\:}r@{}}
    \scriptsize C: & \scriptsize $421\,\pm\,25$ \\
    \scriptsize M: & \scriptsize $654\,\pm\,52$ \\
    \scriptsize R: & \scriptsize $8060\,\pm\,536$
\end{tabular} & \SetCell{bg=gray!20} \begin{tabular}{l@{\:}r@{}}
    \scriptsize C: & \scriptsize $460\,\pm\,29$ \\
    \scriptsize M: & \scriptsize $611\,\pm\,54$ \\
    \scriptsize\textbf{R:} & \scriptsize\bm{$7657\,\pm\,544$}
\end{tabular}\\
     & Mix. & \SetCell{bg=gray!20} \begin{tabular}{l@{\:}r@{}}
    \scriptsize C: & \scriptsize $0\,\pm\,0$ \\
    \scriptsize M: & \scriptsize $0\,\pm\,0$ \\
    \scriptsize\textbf{R:} & \scriptsize\bm{$22\,\pm\,9$}
\end{tabular} & \begin{tabular}{l@{\:}r@{}}
    \scriptsize C: & \scriptsize $0\,\pm\,0$ \\
    \scriptsize M: & \scriptsize $0\,\pm\,0$ \\
    \scriptsize R: & \scriptsize $22\,\pm\,9$
\end{tabular} & \begin{tabular}{l@{\:}r@{}}
    \scriptsize C: & \scriptsize $1\,\pm\,2$ \\
    \scriptsize M: & \scriptsize $0\,\pm\,0$ \\
    \scriptsize R: & \scriptsize $24\,\pm\,12$
\end{tabular} & \begin{tabular}{l@{\:}r@{}}
    \scriptsize C: & \scriptsize $12\,\pm\,3$ \\
    \scriptsize M: & \scriptsize $0\,\pm\,0$ \\
    \scriptsize R: & \scriptsize $46\,\pm\,9$
\end{tabular} & \begin{tabular}{l@{\:}r@{}}
    \scriptsize C: & \scriptsize $29\,\pm\,9$ \\
    \scriptsize M: & \scriptsize $0\,\pm\,0$ \\
    \scriptsize R: & \scriptsize $79\,\pm\,12$
\end{tabular} & \begin{tabular}{l@{\:}r@{}}
    \scriptsize C: & \scriptsize $55\,\pm\,4$ \\
    \scriptsize M: & \scriptsize $0\,\pm\,0$ \\
    \scriptsize R: & \scriptsize $132\,\pm\,7$
\end{tabular} & \begin{tabular}{l@{\:}r@{}}
    \scriptsize C: & \scriptsize $114\,\pm\,6$ \\
    \scriptsize M: & \scriptsize $0\,\pm\,0$ \\
    \scriptsize R: & \scriptsize $249\,\pm\,14$
\end{tabular} & \begin{tabular}{l@{\:}r@{}}
    \scriptsize C: & \scriptsize $123\,\pm\,9$ \\
    \scriptsize M: & \scriptsize $0\,\pm\,0$ \\
    \scriptsize R: & \scriptsize $268\,\pm\,14$
\end{tabular}\\
    \SetCell[r=2]{m} 50 & Unif. & \begin{tabular}{l@{\:}r@{}}
    \scriptsize C: & \scriptsize $0\,\pm\,0$ \\
    \scriptsize M: & \scriptsize $3672\,\pm\,197$ \\
    \scriptsize R: & \scriptsize $40407\,\pm\,2165$
\end{tabular} & \begin{tabular}{l@{\:}r@{}}
    \scriptsize C: & \scriptsize $3\,\pm\,5$ \\
    \scriptsize M: & \scriptsize $3604\,\pm\,287$ \\
    \scriptsize R: & \scriptsize $39669\,\pm\,3151$
\end{tabular} & \begin{tabular}{l@{\:}r@{}}
    \scriptsize C: & \scriptsize $190\,\pm\,20$ \\
    \scriptsize M: & \scriptsize $1882\,\pm\,125$ \\
    \scriptsize R: & \scriptsize $21104\,\pm\,1337$
\end{tabular} & \begin{tabular}{l@{\:}r@{}}
    \scriptsize C: & \scriptsize $422\,\pm\,13$ \\
    \scriptsize M: & \scriptsize $1338\,\pm\,52$ \\
    \scriptsize R: & \scriptsize $15576\,\pm\,552$
\end{tabular} & \begin{tabular}{l@{\:}r@{}}
    \scriptsize C: & \scriptsize $1099\,\pm\,42$ \\
    \scriptsize M: & \scriptsize $760\,\pm\,31$ \\
    \scriptsize R: & \scriptsize $10582\,\pm\,284$
\end{tabular} & \SetCell{bg=gray!20} \begin{tabular}{l@{\:}r@{}}
    \scriptsize C: & \scriptsize $2200\,\pm\,82$ \\
    \scriptsize M: & \scriptsize $412\,\pm\,38$ \\
    \scriptsize\textbf{R:} & \scriptsize\bm{$8947\,\pm\,296$}
\end{tabular} & \begin{tabular}{l@{\:}r@{}}
    \scriptsize C: & \scriptsize $4632\,\pm\,53$ \\
    \scriptsize M: & \scriptsize $39\,\pm\,7$ \\
    \scriptsize R: & \scriptsize $9715\,\pm\,71$
\end{tabular} & \begin{tabular}{l@{\:}r@{}}
    \scriptsize C: & \scriptsize $4947\,\pm\,62$ \\
    \scriptsize M: & \scriptsize $2\,\pm\,3$ \\
    \scriptsize R: & \scriptsize $9938\,\pm\,89$
\end{tabular}\\
     & Mix. & \SetCell{bg=gray!20} \begin{tabular}{l@{\:}r@{}}
    \scriptsize C: & \scriptsize $0\,\pm\,0$ \\
    \scriptsize M: & \scriptsize $0\,\pm\,0$ \\
    \scriptsize\textbf{R:} & \scriptsize\bm{$26\,\pm\,12$}
\end{tabular} & \begin{tabular}{l@{\:}r@{}}
    \scriptsize C: & \scriptsize $0\,\pm\,0$ \\
    \scriptsize M: & \scriptsize $0\,\pm\,0$ \\
    \scriptsize R: & \scriptsize $26\,\pm\,12$
\end{tabular} & \begin{tabular}{l@{\:}r@{}}
    \scriptsize C: & \scriptsize $14\,\pm\,5$ \\
    \scriptsize M: & \scriptsize $0\,\pm\,0$ \\
    \scriptsize R: & \scriptsize $54\,\pm\,2$
\end{tabular} & \begin{tabular}{l@{\:}r@{}}
    \scriptsize C: & \scriptsize $32\,\pm\,8$ \\
    \scriptsize M: & \scriptsize $0\,\pm\,0$ \\
    \scriptsize R: & \scriptsize $89\,\pm\,7$
\end{tabular} & \begin{tabular}{l@{\:}r@{}}
    \scriptsize C: & \scriptsize $82\,\pm\,18$ \\
    \scriptsize M: & \scriptsize $0\,\pm\,0$ \\
    \scriptsize R: & \scriptsize $189\,\pm\,32$
\end{tabular} & \begin{tabular}{l@{\:}r@{}}
    \scriptsize C: & \scriptsize $169\,\pm\,27$ \\
    \scriptsize M: & \scriptsize $0\,\pm\,0$ \\
    \scriptsize R: & \scriptsize $364\,\pm\,47$
\end{tabular} & \begin{tabular}{l@{\:}r@{}}
    \scriptsize C: & \scriptsize $334\,\pm\,19$ \\
    \scriptsize M: & \scriptsize $0\,\pm\,0$ \\
    \scriptsize R: & \scriptsize $694\,\pm\,42$
\end{tabular} & \begin{tabular}{l@{\:}r@{}}
    \scriptsize C: & \scriptsize $359\,\pm\,24$ \\
    \scriptsize M: & \scriptsize $0\,\pm\,0$ \\
    \scriptsize R: & \scriptsize $743\,\pm\,49$
\end{tabular}\\
    \end{tblr}
    \end{adjustbox}
\end{table}
Finally, we provide in Tables \ref{tab:nearestquery_results_table_cube} and \ref{tab:nearestquery_results_table_sphere} detailed results for the nearest-query distance modification of \vht\ introduced in Appendix \ref{sec:additional_real_world_experiments}, where the distance to the closest point in each class is used instead of the distance to convex hulls. As expected, this version performs slightly worse than \vht, but it still outperforms \extc\ in almost all settings, while being just as computationally efficient. Overall, these numerical results suggest using \vht\ when additional computation is affordable, and otherwise using nearest-query \vht. We summarize the results of all synthetic experiments in Table \ref{tab:synth_summary}.
\begin{table}[!htbp]
\centering
    \caption{Performance Metrics on $[0,1]^d$ for \vht\ with nearest-query distance ($T=5000$).}
    \label{tab:nearestquery_results_table_cube}
    \begin{adjustbox}{max width=\textwidth,center}
    \begin{tblr}{colspec = { | Q[c,m]| Q[l,m]| Q[c,m]| Q[c,m]| Q[c,m]| Q[c,m]| Q[c,m]| Q[c,m]| Q[c,m]| },row{1} = {font=\small\bfseries,m},row{2-Z} = {font=\small,m},hlines,colsep=3pt, rowsep=3pt}
    Dim & Dist. & $\tau=0.00$ & $\tau=0.20$ & $\tau=0.40$ & $\tau=0.60$ & $\tau=0.80$ & $\tau=0.95$ & $\tau=1.00$\\
    \SetCell[r=2]{m} 1 & Unif. & \begin{tabular}{l@{\:}r@{}}
    \scriptsize C: & \scriptsize $4983\,\pm\,5$ \\
    \scriptsize M: & \scriptsize $0\,\pm\,0$ \\
    \scriptsize R: & \scriptsize $9996\,\pm\,3$
\end{tabular} & \begin{tabular}{l@{\:}r@{}}
    \scriptsize C: & \scriptsize $124\,\pm\,3$ \\
    \scriptsize M: & \scriptsize $2\,\pm\,2$ \\
    \scriptsize R: & \scriptsize $302\,\pm\,19$
\end{tabular} & \SetCell{bg=gray!20} \begin{tabular}{l@{\:}r@{}}
    \scriptsize C: & \scriptsize $59\,\pm\,3$ \\
    \scriptsize M: & \scriptsize $3\,\pm\,3$ \\
    \scriptsize\textbf{R:} & \scriptsize\bm{$186\,\pm\,33$}
\end{tabular} & \begin{tabular}{l@{\:}r@{}}
    \scriptsize C: & \scriptsize $35\,\pm\,3$ \\
    \scriptsize M: & \scriptsize $12\,\pm\,3$ \\
    \scriptsize R: & \scriptsize $237\,\pm\,33$
\end{tabular} & \begin{tabular}{l@{\:}r@{}}
    \scriptsize C: & \scriptsize $21\,\pm\,4$ \\
    \scriptsize M: & \scriptsize $34\,\pm\,10$ \\
    \scriptsize R: & \scriptsize $445\,\pm\,110$
\end{tabular} & \begin{tabular}{l@{\:}r@{}}
    \scriptsize C: & \scriptsize $12\,\pm\,2$ \\
    \scriptsize M: & \scriptsize $101\,\pm\,33$ \\
    \scriptsize R: & \scriptsize $1168\,\pm\,363$
\end{tabular} & \begin{tabular}{l@{\:}r@{}}
    \scriptsize C: & \scriptsize $0\,\pm\,0$ \\
    \scriptsize M: & \scriptsize $375\,\pm\,126$ \\
    \scriptsize R: & \scriptsize $4160\,\pm\,1384$
\end{tabular}\\
     & Mix. & \begin{tabular}{l@{\:}r@{}}
    \scriptsize C: & \scriptsize $4987\,\pm\,5$ \\
    \scriptsize M: & \scriptsize $0\,\pm\,0$ \\
    \scriptsize R: & \scriptsize $9994\,\pm\,5$
\end{tabular} & \begin{tabular}{l@{\:}r@{}}
    \scriptsize C: & \scriptsize $136\,\pm\,9$ \\
    \scriptsize M: & \scriptsize $1\,\pm\,1$ \\
    \scriptsize R: & \scriptsize $307\,\pm\,21$
\end{tabular} & \SetCell{bg=gray!20} \begin{tabular}{l@{\:}r@{}}
    \scriptsize C: & \scriptsize $65\,\pm\,5$ \\
    \scriptsize M: & \scriptsize $9\,\pm\,5$ \\
    \scriptsize\textbf{R:} & \scriptsize\bm{$245\,\pm\,50$}
\end{tabular} & \begin{tabular}{l@{\:}r@{}}
    \scriptsize C: & \scriptsize $40\,\pm\,1$ \\
    \scriptsize M: & \scriptsize $21\,\pm\,7$ \\
    \scriptsize R: & \scriptsize $333\,\pm\,78$
\end{tabular} & \begin{tabular}{l@{\:}r@{}}
    \scriptsize C: & \scriptsize $27\,\pm\,1$ \\
    \scriptsize M: & \scriptsize $42\,\pm\,14$ \\
    \scriptsize R: & \scriptsize $532\,\pm\,151$
\end{tabular} & \begin{tabular}{l@{\:}r@{}}
    \scriptsize C: & \scriptsize $15\,\pm\,2$ \\
    \scriptsize M: & \scriptsize $155\,\pm\,31$ \\
    \scriptsize R: & \scriptsize $1753\,\pm\,337$
\end{tabular} & \begin{tabular}{l@{\:}r@{}}
    \scriptsize C: & \scriptsize $0\,\pm\,0$ \\
    \scriptsize M: & \scriptsize $823\,\pm\,389$ \\
    \scriptsize R: & \scriptsize $9071\,\pm\,4275$
\end{tabular}\\
    \SetCell[r=2]{m} 4 & Unif. & \begin{tabular}{l@{\:}r@{}}
    \scriptsize C: & \scriptsize $4986\,\pm\,4$ \\
    \scriptsize M: & \scriptsize $0\,\pm\,0$ \\
    \scriptsize R: & \scriptsize $10000\,\pm\,0$
\end{tabular} & \begin{tabular}{l@{\:}r@{}}
    \scriptsize C: & \scriptsize $4723\,\pm\,8$ \\
    \scriptsize M: & \scriptsize $0\,\pm\,0$ \\
    \scriptsize R: & \scriptsize $9477\,\pm\,17$
\end{tabular} & \begin{tabular}{l@{\:}r@{}}
    \scriptsize C: & \scriptsize $3530\,\pm\,16$ \\
    \scriptsize M: & \scriptsize $12\,\pm\,3$ \\
    \scriptsize R: & \scriptsize $7224\,\pm\,65$
\end{tabular} & \SetCell{bg=gray!20} \begin{tabular}{l@{\:}r@{}}
    \scriptsize C: & \scriptsize $2190\,\pm\,24$ \\
    \scriptsize M: & \scriptsize $90\,\pm\,6$ \\
    \scriptsize\textbf{R:} & \scriptsize\bm{$5396\,\pm\,95$}
\end{tabular} & \begin{tabular}{l@{\:}r@{}}
    \scriptsize C: & \scriptsize $1068\,\pm\,29$ \\
    \scriptsize M: & \scriptsize $348\,\pm\,16$ \\
    \scriptsize R: & \scriptsize $5988\,\pm\,188$
\end{tabular} & \begin{tabular}{l@{\:}r@{}}
    \scriptsize C: & \scriptsize $319\,\pm\,16$ \\
    \scriptsize M: & \scriptsize $894\,\pm\,30$ \\
    \scriptsize R: & \scriptsize $10496\,\pm\,305$
\end{tabular} & \begin{tabular}{l@{\:}r@{}}
    \scriptsize C: & \scriptsize $0\,\pm\,0$ \\
    \scriptsize M: & \scriptsize $2155\,\pm\,181$ \\
    \scriptsize R: & \scriptsize $23729\,\pm\,1988$
\end{tabular}\\
     & Mix. & \begin{tabular}{l@{\:}r@{}}
    \scriptsize C: & \scriptsize $4986\,\pm\,7$ \\
    \scriptsize M: & \scriptsize $0\,\pm\,0$ \\
    \scriptsize R: & \scriptsize $10000\,\pm\,0$
\end{tabular} & \begin{tabular}{l@{\:}r@{}}
    \scriptsize C: & \scriptsize $3450\,\pm\,45$ \\
    \scriptsize M: & \scriptsize $0\,\pm\,0$ \\
    \scriptsize R: & \scriptsize $6928\,\pm\,81$
\end{tabular} & \begin{tabular}{l@{\:}r@{}}
    \scriptsize C: & \scriptsize $1575\,\pm\,31$ \\
    \scriptsize M: & \scriptsize $2\,\pm\,1$ \\
    \scriptsize R: & \scriptsize $3196\,\pm\,49$
\end{tabular} & \begin{tabular}{l@{\:}r@{}}
    \scriptsize C: & \scriptsize $760\,\pm\,26$ \\
    \scriptsize M: & \scriptsize $13\,\pm\,4$ \\
    \scriptsize R: & \scriptsize $1690\,\pm\,79$
\end{tabular} & \SetCell{bg=gray!20} \begin{tabular}{l@{\:}r@{}}
    \scriptsize C: & \scriptsize $343\,\pm\,22$ \\
    \scriptsize M: & \scriptsize $59\,\pm\,7$ \\
    \scriptsize\textbf{R:} & \scriptsize\bm{$1366\,\pm\,99$}
\end{tabular} & \begin{tabular}{l@{\:}r@{}}
    \scriptsize C: & \scriptsize $98\,\pm\,9$ \\
    \scriptsize M: & \scriptsize $187\,\pm\,24$ \\
    \scriptsize R: & \scriptsize $2283\,\pm\,267$
\end{tabular} & \begin{tabular}{l@{\:}r@{}}
    \scriptsize C: & \scriptsize $0\,\pm\,0$ \\
    \scriptsize M: & \scriptsize $422\,\pm\,199$ \\
    \scriptsize R: & \scriptsize $4666\,\pm\,2188$
\end{tabular}\\
    \SetCell[r=2]{m} 10 & Unif. & \begin{tabular}{l@{\:}r@{}}
    \scriptsize C: & \scriptsize $4983\,\pm\,10$ \\
    \scriptsize M: & \scriptsize $0\,\pm\,0$ \\
    \scriptsize R: & \scriptsize $10000\,\pm\,0$
\end{tabular} & \begin{tabular}{l@{\:}r@{}}
    \scriptsize C: & \scriptsize $4983\,\pm\,10$ \\
    \scriptsize M: & \scriptsize $0\,\pm\,0$ \\
    \scriptsize R: & \scriptsize $10000\,\pm\,0$
\end{tabular} & \begin{tabular}{l@{\:}r@{}}
    \scriptsize C: & \scriptsize $4972\,\pm\,11$ \\
    \scriptsize M: & \scriptsize $0\,\pm\,0$ \\
    \scriptsize R: & \scriptsize $9978\,\pm\,2$
\end{tabular} & \begin{tabular}{l@{\:}r@{}}
    \scriptsize C: & \scriptsize $4704\,\pm\,24$ \\
    \scriptsize M: & \scriptsize $18\,\pm\,4$ \\
    \scriptsize R: & \scriptsize $9641\,\pm\,38$
\end{tabular} & \SetCell{bg=gray!20} \begin{tabular}{l@{\:}r@{}}
    \scriptsize C: & \scriptsize $3235\,\pm\,22$ \\
    \scriptsize M: & \scriptsize $276\,\pm\,7$ \\
    \scriptsize\textbf{R:} & \scriptsize\bm{$9544\,\pm\,91$}
\end{tabular} & \begin{tabular}{l@{\:}r@{}}
    \scriptsize C: & \scriptsize $1027\,\pm\,29$ \\
    \scriptsize M: & \scriptsize $1242\,\pm\,24$ \\
    \scriptsize R: & \scriptsize $15751\,\pm\,316$
\end{tabular} & \begin{tabular}{l@{\:}r@{}}
    \scriptsize C: & \scriptsize $0\,\pm\,0$ \\
    \scriptsize M: & \scriptsize $2731\,\pm\,236$ \\
    \scriptsize R: & \scriptsize $30075\,\pm\,2577$
\end{tabular}\\
     & Mix. & \begin{tabular}{l@{\:}r@{}}
    \scriptsize C: & \scriptsize $4988\,\pm\,3$ \\
    \scriptsize M: & \scriptsize $0\,\pm\,0$ \\
    \scriptsize R: & \scriptsize $10000\,\pm\,0$
\end{tabular} & \begin{tabular}{l@{\:}r@{}}
    \scriptsize C: & \scriptsize $4635\,\pm\,21$ \\
    \scriptsize M: & \scriptsize $0\,\pm\,0$ \\
    \scriptsize R: & \scriptsize $9294\,\pm\,45$
\end{tabular} & \begin{tabular}{l@{\:}r@{}}
    \scriptsize C: & \scriptsize $1081\,\pm\,24$ \\
    \scriptsize M: & \scriptsize $0\,\pm\,0$ \\
    \scriptsize R: & \scriptsize $2187\,\pm\,54$
\end{tabular} & \begin{tabular}{l@{\:}r@{}}
    \scriptsize C: & \scriptsize $174\,\pm\,5$ \\
    \scriptsize M: & \scriptsize $0\,\pm\,0$ \\
    \scriptsize R: & \scriptsize $372\,\pm\,13$
\end{tabular} & \begin{tabular}{l@{\:}r@{}}
    \scriptsize C: & \scriptsize $26\,\pm\,3$ \\
    \scriptsize M: & \scriptsize $0\,\pm\,0$ \\
    \scriptsize R: & \scriptsize $78\,\pm\,15$
\end{tabular} & \SetCell{bg=gray!20} \begin{tabular}{l@{\:}r@{}}
    \scriptsize C: & \scriptsize $1\,\pm\,1$ \\
    \scriptsize M: & \scriptsize $0\,\pm\,0$ \\
    \scriptsize\textbf{R:} & \scriptsize\bm{$29\,\pm\,10$}
\end{tabular} & \begin{tabular}{l@{\:}r@{}}
    \scriptsize C: & \scriptsize $0\,\pm\,0$ \\
    \scriptsize M: & \scriptsize $1\,\pm\,1$ \\
    \scriptsize R: & \scriptsize $40\,\pm\,12$
\end{tabular}\\
    \SetCell[r=2]{m} 50 & Unif. & \begin{tabular}{l@{\:}r@{}}
    \scriptsize C: & \scriptsize $4978\,\pm\,3$ \\
    \scriptsize M: & \scriptsize $0\,\pm\,0$ \\
    \scriptsize R: & \scriptsize $10000\,\pm\,0$
\end{tabular} & \begin{tabular}{l@{\:}r@{}}
    \scriptsize C: & \scriptsize $4978\,\pm\,3$ \\
    \scriptsize M: & \scriptsize $0\,\pm\,0$ \\
    \scriptsize R: & \scriptsize $10000\,\pm\,0$
\end{tabular} & \begin{tabular}{l@{\:}r@{}}
    \scriptsize C: & \scriptsize $4978\,\pm\,3$ \\
    \scriptsize M: & \scriptsize $0\,\pm\,0$ \\
    \scriptsize R: & \scriptsize $10000\,\pm\,0$
\end{tabular} & \SetCell{bg=gray!20} \begin{tabular}{l@{\:}r@{}}
    \scriptsize C: & \scriptsize $4978\,\pm\,3$ \\
    \scriptsize M: & \scriptsize $0\,\pm\,0$ \\
    \scriptsize\textbf{R:} & \scriptsize\bm{$10000\,\pm\,0$}
\end{tabular} & \begin{tabular}{l@{\:}r@{}}
    \scriptsize C: & \scriptsize $4974\,\pm\,4$ \\
    \scriptsize M: & \scriptsize $2\,\pm\,1$ \\
    \scriptsize R: & \scriptsize $10019\,\pm\,8$
\end{tabular} & \begin{tabular}{l@{\:}r@{}}
    \scriptsize C: & \scriptsize $3551\,\pm\,32$ \\
    \scriptsize M: & \scriptsize $738\,\pm\,25$ \\
    \scriptsize R: & \scriptsize $15268\,\pm\,221$
\end{tabular} & \begin{tabular}{l@{\:}r@{}}
    \scriptsize C: & \scriptsize $0\,\pm\,0$ \\
    \scriptsize M: & \scriptsize $3344\,\pm\,52$ \\
    \scriptsize R: & \scriptsize $36828\,\pm\,570$
\end{tabular}\\
     & Mix. & \begin{tabular}{l@{\:}r@{}}
    \scriptsize C: & \scriptsize $4988\,\pm\,4$ \\
    \scriptsize M: & \scriptsize $0\,\pm\,0$ \\
    \scriptsize R: & \scriptsize $10000\,\pm\,0$
\end{tabular} & \begin{tabular}{l@{\:}r@{}}
    \scriptsize C: & \scriptsize $4987\,\pm\,5$ \\
    \scriptsize M: & \scriptsize $0\,\pm\,0$ \\
    \scriptsize R: & \scriptsize $9998\,\pm\,3$
\end{tabular} & \begin{tabular}{l@{\:}r@{}}
    \scriptsize C: & \scriptsize $254\,\pm\,17$ \\
    \scriptsize M: & \scriptsize $0\,\pm\,0$ \\
    \scriptsize R: & \scriptsize $532\,\pm\,31$
\end{tabular} & \begin{tabular}{l@{\:}r@{}}
    \scriptsize C: & \scriptsize $0\,\pm\,0$ \\
    \scriptsize M: & \scriptsize $0\,\pm\,0$ \\
    \scriptsize R: & \scriptsize $24\,\pm\,9$
\end{tabular} & \begin{tabular}{l@{\:}r@{}}
    \scriptsize C: & \scriptsize $0\,\pm\,0$ \\
    \scriptsize M: & \scriptsize $0\,\pm\,0$ \\
    \scriptsize R: & \scriptsize $24\,\pm\,9$
\end{tabular} & \begin{tabular}{l@{\:}r@{}}
    \scriptsize C: & \scriptsize $0\,\pm\,0$ \\
    \scriptsize M: & \scriptsize $0\,\pm\,0$ \\
    \scriptsize R: & \scriptsize $24\,\pm\,9$
\end{tabular} & \SetCell{bg=gray!20} \begin{tabular}{l@{\:}r@{}}
    \scriptsize C: & \scriptsize $0\,\pm\,0$ \\
    \scriptsize M: & \scriptsize $0\,\pm\,0$ \\
    \scriptsize\textbf{R:} & \scriptsize\bm{$24\,\pm\,9$}
\end{tabular}\\
    \end{tblr}
    \end{adjustbox}
\end{table}

\begin{table}[!htbp]
\centering
    \caption{Performance Metrics on $\Sph^{d-1}$ for \vht\ with nearest-query distance ($T=5000$).}
    \label{tab:nearestquery_results_table_sphere}
    \begin{adjustbox}{max width=\textwidth,center}
    \begin{tblr}{colspec = { | Q[c,m]| Q[l,m]| Q[c,m]| Q[c,m]| Q[c,m]| Q[c,m]| Q[c,m]| Q[c,m]| },row{1} = {font=\small\bfseries,m},row{2-Z} = {font=\small,m},hlines,colsep=3pt, rowsep=3pt}
    Dim & Dist. & $\tau=0.00$ & $\tau=0.20$ & $\tau=0.40$ & $\tau=0.60$ & $\tau=0.80$ & $\tau=1.00$\\
    \SetCell[r=2]{m} 2 & Unif. & \begin{tabular}{l@{\:}r@{}}
    \scriptsize C: & \scriptsize $4969\,\pm\,10$ \\
    \scriptsize M: & \scriptsize $0\,\pm\,0$ \\
    \scriptsize R: & \scriptsize $9998\,\pm\,1$
\end{tabular} & \begin{tabular}{l@{\:}r@{}}
    \scriptsize C: & \scriptsize $142\,\pm\,7$ \\
    \scriptsize M: & \scriptsize $2\,\pm\,1$ \\
    \scriptsize R: & \scriptsize $371\,\pm\,15$
\end{tabular} & \SetCell{bg=gray!20} \begin{tabular}{l@{\:}r@{}}
    \scriptsize C: & \scriptsize $69\,\pm\,5$ \\
    \scriptsize M: & \scriptsize $5\,\pm\,3$ \\
    \scriptsize\textbf{R:} & \scriptsize\bm{$254\,\pm\,42$}
\end{tabular} & \begin{tabular}{l@{\:}r@{}}
    \scriptsize C: & \scriptsize $39\,\pm\,3$ \\
    \scriptsize M: & \scriptsize $18\,\pm\,6$ \\
    \scriptsize R: & \scriptsize $340\,\pm\,73$
\end{tabular} & \begin{tabular}{l@{\:}r@{}}
    \scriptsize C: & \scriptsize $24\,\pm\,2$ \\
    \scriptsize M: & \scriptsize $44\,\pm\,11$ \\
    \scriptsize R: & \scriptsize $596\,\pm\,144$
\end{tabular} & \begin{tabular}{l@{\:}r@{}}
    \scriptsize C: & \scriptsize $0\,\pm\,0$ \\
    \scriptsize M: & \scriptsize $315\,\pm\,148$ \\
    \scriptsize R: & \scriptsize $3524\,\pm\,1630$
\end{tabular}\\
     & Mix. & \begin{tabular}{l@{\:}r@{}}
    \scriptsize C: & \scriptsize $4988\,\pm\,4$ \\
    \scriptsize M: & \scriptsize $0\,\pm\,0$ \\
    \scriptsize R: & \scriptsize $9996\,\pm\,3$
\end{tabular} & \begin{tabular}{l@{\:}r@{}}
    \scriptsize C: & \scriptsize $100\,\pm\,5$ \\
    \scriptsize M: & \scriptsize $1\,\pm\,1$ \\
    \scriptsize R: & \scriptsize $231\,\pm\,17$
\end{tabular} & \SetCell{bg=gray!20} \begin{tabular}{l@{\:}r@{}}
    \scriptsize C: & \scriptsize $50\,\pm\,5$ \\
    \scriptsize M: & \scriptsize $3\,\pm\,1$ \\
    \scriptsize\textbf{R:} & \scriptsize\bm{$155\,\pm\,19$}
\end{tabular} & \begin{tabular}{l@{\:}r@{}}
    \scriptsize C: & \scriptsize $30\,\pm\,2$ \\
    \scriptsize M: & \scriptsize $9\,\pm\,3$ \\
    \scriptsize R: & \scriptsize $180\,\pm\,30$
\end{tabular} & \begin{tabular}{l@{\:}r@{}}
    \scriptsize C: & \scriptsize $18\,\pm\,3$ \\
    \scriptsize M: & \scriptsize $22\,\pm\,9$ \\
    \scriptsize R: & \scriptsize $302\,\pm\,100$
\end{tabular} & \begin{tabular}{l@{\:}r@{}}
    \scriptsize C: & \scriptsize $0\,\pm\,0$ \\
    \scriptsize M: & \scriptsize $345\,\pm\,77$ \\
    \scriptsize R: & \scriptsize $3812\,\pm\,849$
\end{tabular}\\
    \SetCell[r=2]{m} 4 & Unif. & \begin{tabular}{l@{\:}r@{}}
    \scriptsize C: & \scriptsize $4989\,\pm\,2$ \\
    \scriptsize M: & \scriptsize $0\,\pm\,0$ \\
    \scriptsize R: & \scriptsize $10000\,\pm\,0$
\end{tabular} & \begin{tabular}{l@{\:}r@{}}
    \scriptsize C: & \scriptsize $4117\,\pm\,19$ \\
    \scriptsize M: & \scriptsize $1\,\pm\,1$ \\
    \scriptsize R: & \scriptsize $8272\,\pm\,46$
\end{tabular} & \begin{tabular}{l@{\:}r@{}}
    \scriptsize C: & \scriptsize $2526\,\pm\,21$ \\
    \scriptsize M: & \scriptsize $22\,\pm\,6$ \\
    \scriptsize R: & \scriptsize $5319\,\pm\,79$
\end{tabular} & \SetCell{bg=gray!20} \begin{tabular}{l@{\:}r@{}}
    \scriptsize C: & \scriptsize $1425\,\pm\,9$ \\
    \scriptsize M: & \scriptsize $103\,\pm\,11$ \\
    \scriptsize\textbf{R:} & \scriptsize\bm{$4005\,\pm\,126$}
\end{tabular} & \begin{tabular}{l@{\:}r@{}}
    \scriptsize C: & \scriptsize $696\,\pm\,9$ \\
    \scriptsize M: & \scriptsize $307\,\pm\,9$ \\
    \scriptsize R: & \scriptsize $4793\,\pm\,92$
\end{tabular} & \begin{tabular}{l@{\:}r@{}}
    \scriptsize C: & \scriptsize $0\,\pm\,0$ \\
    \scriptsize M: & \scriptsize $1939\,\pm\,308$ \\
    \scriptsize R: & \scriptsize $21348\,\pm\,3381$
\end{tabular}\\
     & Mix. & \begin{tabular}{l@{\:}r@{}}
    \scriptsize C: & \scriptsize $4989\,\pm\,3$ \\
    \scriptsize M: & \scriptsize $0\,\pm\,0$ \\
    \scriptsize R: & \scriptsize $10000\,\pm\,0$
\end{tabular} & \begin{tabular}{l@{\:}r@{}}
    \scriptsize C: & \scriptsize $121\,\pm\,10$ \\
    \scriptsize M: & \scriptsize $0\,\pm\,0$ \\
    \scriptsize R: & \scriptsize $264\,\pm\,19$
\end{tabular} & \begin{tabular}{l@{\:}r@{}}
    \scriptsize C: & \scriptsize $20\,\pm\,3$ \\
    \scriptsize M: & \scriptsize $0\,\pm\,0$ \\
    \scriptsize R: & \scriptsize $62\,\pm\,11$
\end{tabular} & \begin{tabular}{l@{\:}r@{}}
    \scriptsize C: & \scriptsize $4\,\pm\,3$ \\
    \scriptsize M: & \scriptsize $0\,\pm\,0$ \\
    \scriptsize R: & \scriptsize $31\,\pm\,3$
\end{tabular} & \begin{tabular}{l@{\:}r@{}}
    \scriptsize C: & \scriptsize $0\,\pm\,0$ \\
    \scriptsize M: & \scriptsize $0\,\pm\,0$ \\
    \scriptsize R: & \scriptsize $22\,\pm\,6$
\end{tabular} & \SetCell{bg=gray!20} \begin{tabular}{l@{\:}r@{}}
    \scriptsize C: & \scriptsize $0\,\pm\,0$ \\
    \scriptsize M: & \scriptsize $0\,\pm\,0$ \\
    \scriptsize\textbf{R:} & \scriptsize\bm{$22\,\pm\,7$}
\end{tabular}\\
    \SetCell[r=2]{m} 10 & Unif. & \begin{tabular}{l@{\:}r@{}}
    \scriptsize C: & \scriptsize $4990\,\pm\,3$ \\
    \scriptsize M: & \scriptsize $0\,\pm\,0$ \\
    \scriptsize R: & \scriptsize $10000\,\pm\,0$
\end{tabular} & \begin{tabular}{l@{\:}r@{}}
    \scriptsize C: & \scriptsize $4990\,\pm\,3$ \\
    \scriptsize M: & \scriptsize $0\,\pm\,0$ \\
    \scriptsize R: & \scriptsize $10000\,\pm\,0$
\end{tabular} & \begin{tabular}{l@{\:}r@{}}
    \scriptsize C: & \scriptsize $4983\,\pm\,5$ \\
    \scriptsize M: & \scriptsize $1\,\pm\,0$ \\
    \scriptsize R: & \scriptsize $9993\,\pm\,7$
\end{tabular} & \SetCell{bg=gray!20} \begin{tabular}{l@{\:}r@{}}
    \scriptsize C: & \scriptsize $4766\,\pm\,7$ \\
    \scriptsize M: & \scriptsize $17\,\pm\,5$ \\
    \scriptsize\textbf{R:} & \scriptsize\bm{$9743\,\pm\,51$}
\end{tabular} & \begin{tabular}{l@{\:}r@{}}
    \scriptsize C: & \scriptsize $3412\,\pm\,25$ \\
    \scriptsize M: & \scriptsize $284\,\pm\,12$ \\
    \scriptsize R: & \scriptsize $9971\,\pm\,142$
\end{tabular} & \begin{tabular}{l@{\:}r@{}}
    \scriptsize C: & \scriptsize $0\,\pm\,0$ \\
    \scriptsize M: & \scriptsize $3039\,\pm\,213$ \\
    \scriptsize R: & \scriptsize $33446\,\pm\,2337$
\end{tabular}\\
     & Mix. & \begin{tabular}{l@{\:}r@{}}
    \scriptsize C: & \scriptsize $4989\,\pm\,4$ \\
    \scriptsize M: & \scriptsize $0\,\pm\,0$ \\
    \scriptsize R: & \scriptsize $10000\,\pm\,0$
\end{tabular} & \begin{tabular}{l@{\:}r@{}}
    \scriptsize C: & \scriptsize $1769\,\pm\,45$ \\
    \scriptsize M: & \scriptsize $0\,\pm\,0$ \\
    \scriptsize R: & \scriptsize $3560\,\pm\,86$
\end{tabular} & \begin{tabular}{l@{\:}r@{}}
    \scriptsize C: & \scriptsize $105\,\pm\,5$ \\
    \scriptsize M: & \scriptsize $0\,\pm\,0$ \\
    \scriptsize R: & \scriptsize $231\,\pm\,7$
\end{tabular} & \begin{tabular}{l@{\:}r@{}}
    \scriptsize C: & \scriptsize $3\,\pm\,2$ \\
    \scriptsize M: & \scriptsize $0\,\pm\,0$ \\
    \scriptsize R: & \scriptsize $28\,\pm\,8$
\end{tabular} & \begin{tabular}{l@{\:}r@{}}
    \scriptsize C: & \scriptsize $0\,\pm\,0$ \\
    \scriptsize M: & \scriptsize $0\,\pm\,0$ \\
    \scriptsize R: & \scriptsize $22\,\pm\,9$
\end{tabular} & \SetCell{bg=gray!20} \begin{tabular}{l@{\:}r@{}}
    \scriptsize C: & \scriptsize $0\,\pm\,0$ \\
    \scriptsize M: & \scriptsize $0\,\pm\,0$ \\
    \scriptsize\textbf{R:} & \scriptsize\bm{$22\,\pm\,9$}
\end{tabular}\\
    \SetCell[r=2]{m} 50 & Unif. & \begin{tabular}{l@{\:}r@{}}
    \scriptsize C: & \scriptsize $4990\,\pm\,2$ \\
    \scriptsize M: & \scriptsize $0\,\pm\,0$ \\
    \scriptsize R: & \scriptsize $10000\,\pm\,0$
\end{tabular} & \begin{tabular}{l@{\:}r@{}}
    \scriptsize C: & \scriptsize $4990\,\pm\,2$ \\
    \scriptsize M: & \scriptsize $0\,\pm\,0$ \\
    \scriptsize R: & \scriptsize $10000\,\pm\,0$
\end{tabular} & \begin{tabular}{l@{\:}r@{}}
    \scriptsize C: & \scriptsize $4990\,\pm\,2$ \\
    \scriptsize M: & \scriptsize $0\,\pm\,0$ \\
    \scriptsize R: & \scriptsize $10000\,\pm\,0$
\end{tabular} & \SetCell{bg=gray!20} \begin{tabular}{l@{\:}r@{}}
    \scriptsize C: & \scriptsize $4990\,\pm\,2$ \\
    \scriptsize M: & \scriptsize $0\,\pm\,0$ \\
    \scriptsize\textbf{R:} & \scriptsize\bm{$10000\,\pm\,0$}
\end{tabular} & \begin{tabular}{l@{\:}r@{}}
    \scriptsize C: & \scriptsize $4987\,\pm\,2$ \\
    \scriptsize M: & \scriptsize $1\,\pm\,1$ \\
    \scriptsize R: & \scriptsize $10005\,\pm\,10$
\end{tabular} & \begin{tabular}{l@{\:}r@{}}
    \scriptsize C: & \scriptsize $0\,\pm\,0$ \\
    \scriptsize M: & \scriptsize $3549\,\pm\,83$ \\
    \scriptsize R: & \scriptsize $39061\,\pm\,911$
\end{tabular}\\
     & Mix. & \begin{tabular}{l@{\:}r@{}}
    \scriptsize C: & \scriptsize $4987\,\pm\,6$ \\
    \scriptsize M: & \scriptsize $0\,\pm\,0$ \\
    \scriptsize R: & \scriptsize $10000\,\pm\,0$
\end{tabular} & \begin{tabular}{l@{\:}r@{}}
    \scriptsize C: & \scriptsize $4987\,\pm\,6$ \\
    \scriptsize M: & \scriptsize $0\,\pm\,0$ \\
    \scriptsize R: & \scriptsize $10000\,\pm\,0$
\end{tabular} & \begin{tabular}{l@{\:}r@{}}
    \scriptsize C: & \scriptsize $4983\,\pm\,7$ \\
    \scriptsize M: & \scriptsize $0\,\pm\,0$ \\
    \scriptsize R: & \scriptsize $9992\,\pm\,1$
\end{tabular} & \begin{tabular}{l@{\:}r@{}}
    \scriptsize C: & \scriptsize $1287\,\pm\,44$ \\
    \scriptsize M: & \scriptsize $0\,\pm\,0$ \\
    \scriptsize R: & \scriptsize $2600\,\pm\,97$
\end{tabular} & \begin{tabular}{l@{\:}r@{}}
    \scriptsize C: & \scriptsize $12\,\pm\,3$ \\
    \scriptsize M: & \scriptsize $0\,\pm\,0$ \\
    \scriptsize R: & \scriptsize $49\,\pm\,10$
\end{tabular} & \SetCell{bg=gray!20} \begin{tabular}{l@{\:}r@{}}
    \scriptsize C: & \scriptsize $0\,\pm\,0$ \\
    \scriptsize M: & \scriptsize $0\,\pm\,0$ \\
    \scriptsize\textbf{R:} & \scriptsize\bm{$26\,\pm\,12$}
\end{tabular}\\
    \end{tblr}
    \end{adjustbox}
\end{table}

\begin{table}[!htbp]
\centering
    \caption{Voronoi regret of all algorithms for each experimental setup ($T=5000$). Results for \extc, \vht\ and \texttt{AMP} are reported for the best-performing threshold for that setup.}
    \label{tab:synth_summary}
    \begin{adjustbox}{max width=\textwidth,center}
    \begin{tblr}{
      colspec = { | Q[c,m] | Q[c,m] | Q[l,m] | Q[c,m] | Q[c,m] | Q[c,m] | Q[c,m] | Q[c,m] | Q[c,m] | },
      row{1} = {font=\small\bfseries,m},
      row{2-Z} = {font=\small,m},
      hlines,
      colsep=3pt, 
      rowsep=3pt
    }
    $\mathcal{E}$ & Dim. & Dist. & \extc\  & \vhc\ & \vht\ & Nearest-query \vht & \texttt{SKM} & \texttt{AMP} \\
    \SetCell[r=8]{m} $\mathcal{I}^d$ & \SetCell[r=2]{m} 1 & Unif. & $6123\,\pm\,390$ & $142\,\pm\,11$ & \SetCell{bg=gray!20} \bm{$110\,\pm\,7$} & $186\,\pm\,33$ & $14128\,\pm\,4275$ & N/A \\
     & & Mix. & $4305\,\pm\,770$ & $130\,\pm\,21$ & \SetCell{bg=gray!20} \bm{$106\,\pm\,18$} & $245\,\pm\,50$ & $11141\,\pm\,3741$ & N/A \\
     & \SetCell[r=2]{m} 4 & Unif. & $9563\,\pm\,486$ & $2972\,\pm\,72$ & \SetCell{bg=gray!20} \bm{$1593\,\pm\,35$} & $5396\,\pm\,95$ & $30055\,\pm\,3659$ & N/A \\
     & & Mix. & $1064\,\pm\,142$ & $2337\,\pm\,19$ & \SetCell{bg=gray!20} \bm{$573\,\pm\,52$} & $1366\,\pm\,99$ & $793\,\pm\,94$ & N/A \\
     & \SetCell[r=2]{m} 10 & Unif. & $9782\,\pm\,462$ & $9489\,\pm\,35$ & \SetCell{bg=gray!20} \bm{$5233\,\pm\,94$} & $9544\,\pm\,91$ & $38900\,\pm\,4268$ & N/A \\
     & & Mix. & $33\,\pm\,7$ & $8821\,\pm\,48$ & $23\,\pm\,4$ & $29\,\pm\,10$ & \SetCell{bg=gray!20} \bm{$20\,\pm\,5$} & N/A \\
     & \SetCell[r=2]{m} 50 & Unif. & $9597\,\pm\,143$ & $10000\,\pm\,0$ & \SetCell{bg=gray!20} \bm{$9559\,\pm\,36$} & $10000\,\pm\,0$ & $42659\,\pm\,3378$ & N/A \\
     & & Mix. & $24\,\pm\,9$ & $10000\,\pm\,0$ & \SetCell{bg=gray!20} \bm{$23\,\pm\,5$} & $24\,\pm\,9$ & \SetCell{bg=gray!20} \bm{$23\,\pm\,5$} & N/A \\
    \SetCell[r=8]{m} $\Sph^{d-1}$ & \SetCell[r=2]{m} 2 & Unif. & $9347\,\pm\,472$ & $132\,\pm\,12$ & \SetCell{bg=gray!20} \bm{$125\,\pm\,6$} & $254\,\pm\,42$ & $26804\,\pm\,1923$ & $4967\,\pm\,421$ \\
     & & Mix. & $963\,\pm\,173$ & $138\,\pm\,8$ & \SetCell{bg=gray!20} \bm{$104\,\pm\,13$} & $155\,\pm\,19$ & $791\,\pm\,645$ & $10000\,\pm\,0$ \\
     & \SetCell[r=2]{m} 4 & Unif. & $6099\,\pm\,889$ & $1225\,\pm\,31$ & \SetCell{bg=gray!20} \bm{$836\,\pm\,40$} & $4005\,\pm\,126$ & $25645\,\pm\,8157$ & $2799\,\pm\,110$ \\
     & & Mix. & \SetCell{bg=gray!20} \bm{$22\,\pm\,7$} & $1083\,\pm\,56$ & \SetCell{bg=gray!20} \bm{$22\,\pm\,7$} & \SetCell{bg=gray!20} \bm{$22\,\pm\,7$} & \SetCell{bg=gray!20} \bm{$22\,\pm\,7$} & $312\,\pm\,170$ \\
     & \SetCell[r=2]{m} 10 & Unif. & $7657\,\pm\,544$ & $4878\,\pm\,150$ & \SetCell{bg=gray!20} \bm{$2550\,\pm\,123$} & $9743\,\pm\,51$ & $39956\,\pm\,3931$ & $3902\,\pm\,158$ \\
     & & Mix. & $22\,\pm\,9$ & $7490\,\pm\,74$ & \SetCell{bg=gray!20} \bm{$20\,\pm\,11$} & $22\,\pm\,9$ & \SetCell{bg=gray!20} \bm{$20\,\pm\,11$} & $50\,\pm\,21$ \\
     & \SetCell[r=2]{m} 50 & Unif. & $8947\,\pm\,296$ & $9983\,\pm\,4$ & \SetCell{bg=gray!20} \bm{$6135\,\pm\,73$} & $10000\,\pm\,0$ & $41246\,\pm\,1091$ & $6796\,\pm\,156$ \\
     & & Mix. & $26\,\pm\,12$ & $10000\,\pm\,0$ & \SetCell{bg=gray!20} \bm{$19\,\pm\,5$} & $26\,\pm\,12$ & \SetCell{bg=gray!20} \bm{$19\,\pm\,5$} & $28\,\pm\,8$ \\
    \end{tblr}
    \end{adjustbox}
\end{table}

\clearpage
\subsection{Real-World Experiments}
\subsubsection{Real-World Datasets}
\label{Datasets}

\begin{figure}[ht!]
  \centering
  \begin{subfigure}[b]{0.33\textwidth}
    \centering
    \includegraphics[width=\linewidth]{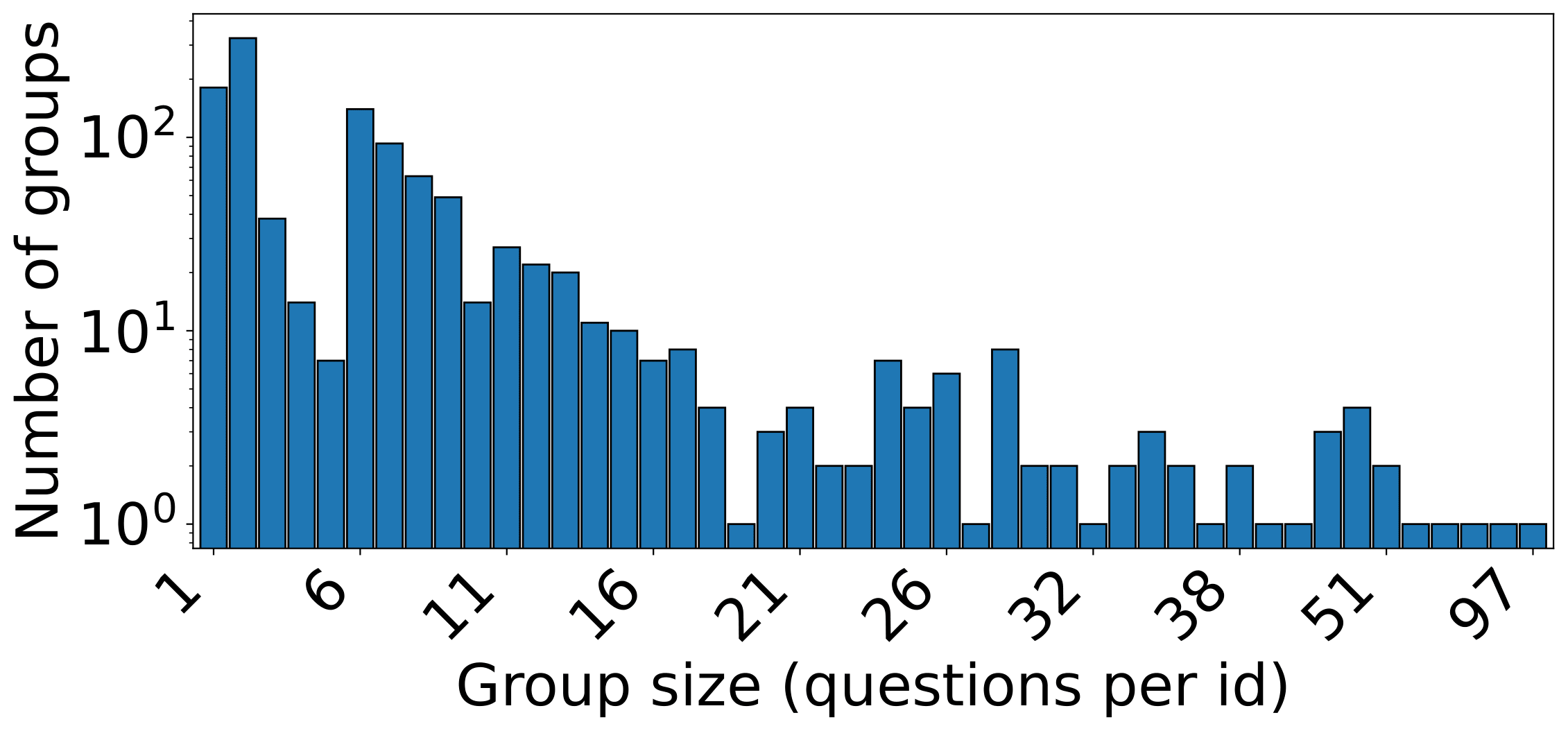}
    \caption{Quora Question Groups}
    \label{fig:img1}
  \end{subfigure}\hfill
  \begin{subfigure}[b]{0.33\textwidth}
    \centering
    \includegraphics[width=\linewidth]{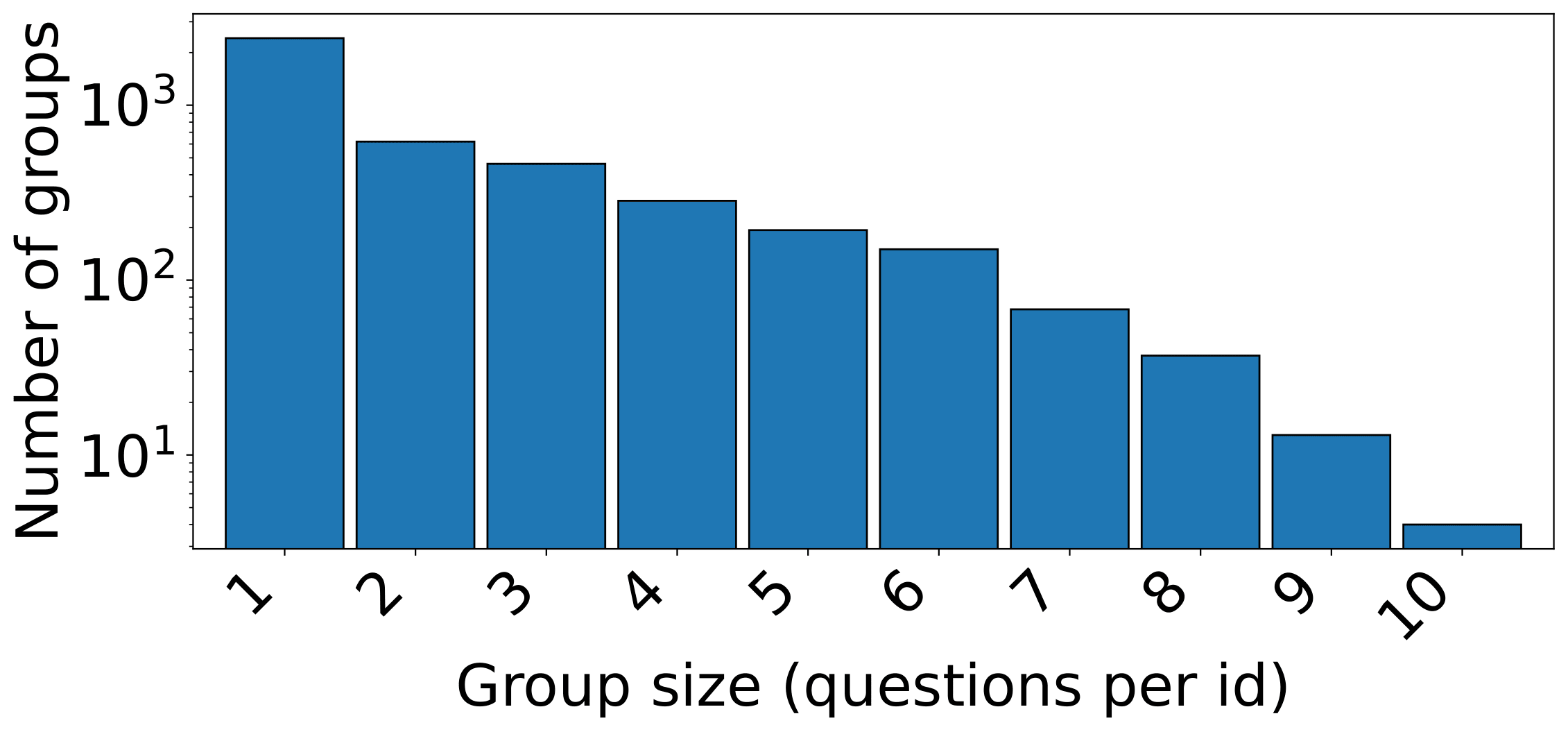}
    \caption{ComQA}
    \label{fig:img2_comqa}
  \end{subfigure}\hfill
  \begin{subfigure}[b]{0.33\textwidth}
    \centering
    \includegraphics[width=\linewidth]{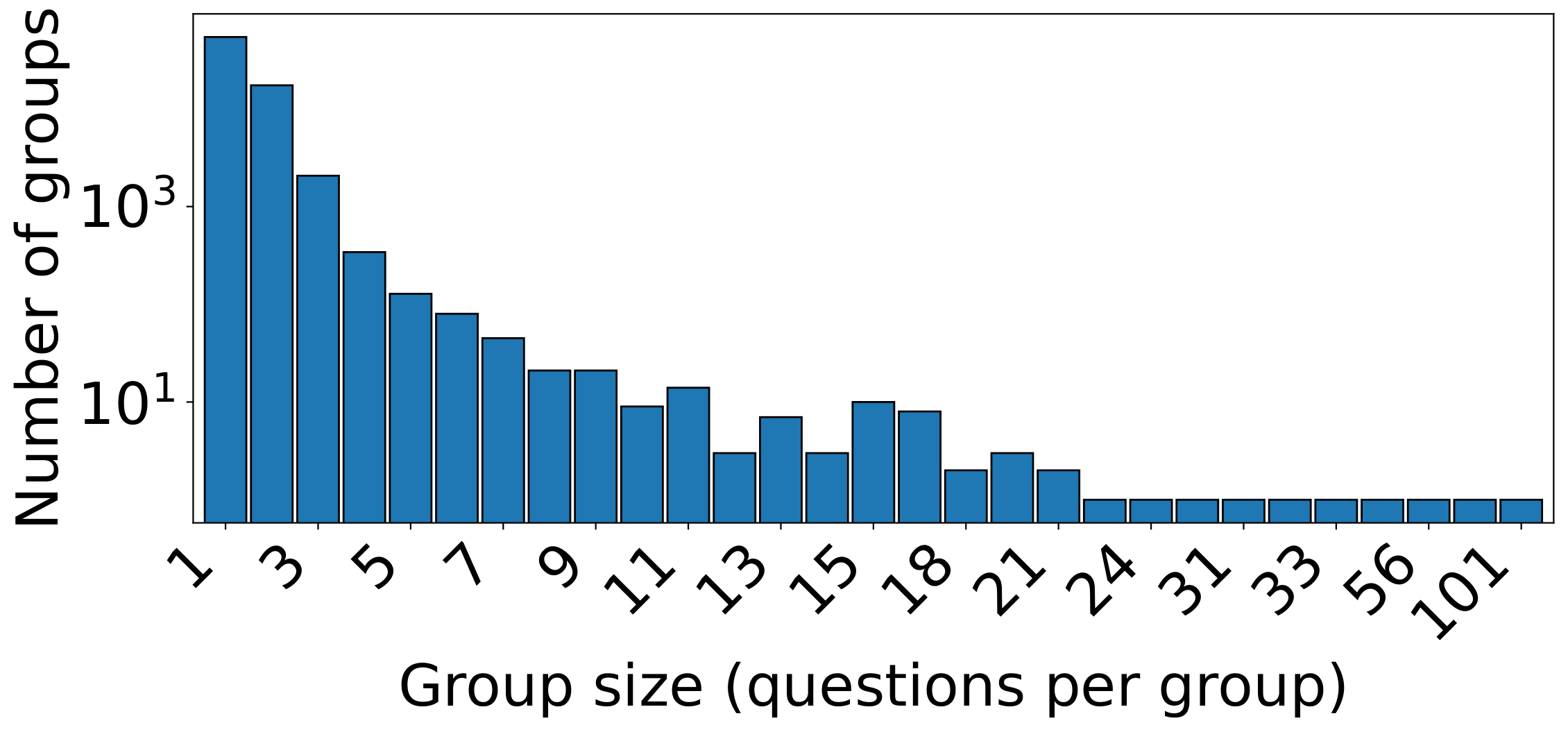}
    \caption{CQADupStack}
    \label{fig:img2_cqa}
  \end{subfigure}
  \caption{Distributions of group sizes: number of questions per group (with the same answer) within the datasets.}
  \label{fig:Distribution of our datasets}
\end{figure}

In this section, we describe the question-answer datasets used in our real-world experiments. We say that two questions belong to the same \emph{question group} if a single answer can adequately address both, i.e., if their correct answer label is identical.

\textbf{Quora Question Groups (QQG):} Quora Question Groups is derived from the Quora Question Pairs dataset~\citep{DBLP:journals/corr/WangHF17}. The original dataset comprises over 400{,}000 question pairs, each annotated with a binary label indicating whether the questions are paraphrases of each other. To transform this pairwise paraphrase data into coherent question groups, we represented the dataset as a graph where nodes correspond to questions and edges indicate paraphrase relationships.

To construct the Quora Question Groups dataset, we first extracted all connected components from the graph, treating each component as a candidate group. From these, we sampled a subset with the goal of obtaining approximately 1,200 groups, favoring components with larger group sizes. We then manually reviewed and cleaned the selected groups to ensure that all questions within a group could be addressed by the same answer. This involved discarding noisy or weakly related links and merging or splitting groups where appropriate. Throughout this process, we preserved the structural assumptions of the original dataset while focusing on high-quality, semantically coherent groupings. At the end of this process, we obtained $1,103$ distinct groups comprising a total of $7,365$ curated questions.

\textbf{ComQA\footnote{https://paperswithcode.com/dataset/comqa} \citep{abujabal2019comqa}}:
ComQA consists of 11{,}214 English questions collected from the WikiAnswers forum and grouped into 4{,}834 paraphrase clusters by crowd workers. By design, ComQA emphasizes reasoning-intensive phenomena—namely compositional queries, explicit temporal constraints, and comparative or superlative constructions—making it a rigorous benchmark for research on complex factoid question answering. 
The middle panel of Fig.  \ref{fig:Distribution of our datasets} presents the group size distribution of this dataset.

\textbf{CQADupStack \footnote{https://github.com/D1Doris/CQADupStack} \citep{hoogeven2015}}: CQADupStack is a public benchmark available to researchers working on question-answering learning tasks. It consists of complete threads from twelve StackExchange subforums (android, english, gis, mathematica, physics, programmers, statistics, superuser, tex, unix, webmasters, wordpress) annotated with community‐marked duplicate links. To evaluate our classification approach across distinct expert domains, we apply it to the Mathematica, Physics, and Statistics parts of the dataset. The right panel of Fig.  \ref{fig:Distribution of our datasets} presents the group size distribution of this dataset.
The subset of the dataset we use comprises 99,785 questions organized into 74,519 groups. CQADupStack is the most challenging dataset used in our experiments, because of its relatively small average group size.

\subsubsection{Text Embedding Models}\label{OurLLMS}

In this section, we describe the text embedding models selected for our experiments, along with the way they were trained and fine-tuned. We chose three models that are widely recognized in the literature and frequently cited due to their strong performance across a range of Natural Language Processing tasks.

\textbf{E5 \citep{wang2022text}:} EmbEddings from bidirEctional Encoder rEpresentations (E5) is available in three versions. The first version is initialized with MiniLM \citep{DBLP:journals/corr/abs-2002-10957}, the second is initialized with BERT-base \citep{reimers-gurevych-2019-sentence}, and the third is initialized with BERT-large. This model follows a bi-encoder architecture, where both the query and passage encoders are initialized with BERT. The training process consists of two stages. The first stage, called ``unsupervised'', uses a large number of unlabeled data, including title and passage pairs from Wikipedia, questions and answers from Reddit, and more. The InfoNCE contrastive loss \citep{DBLP:journals/corr/abs-1807-03748} is employed to minimize the distance between related queries and passages, while maximizing the distance between unrelated queries and passages. The second stage involves training the model on high-quality human-annotated data, such as NLI \citep{gao-etal-2021-simcse}, MS-MARCO \citep{bajaj2016ms}, and Natural Questions \citep{karpukhin-etal-2020-dense}. During this stage, the model is trained with a loss that combines the KL divergence between the probability distribution of the label, as given by a cross-encoder teacher model, and the probability distribution generated by our E5 student model, along with the InfoNCE contrastive loss. This second stage further improves model performance on benchmarks such as BEIR \citep{thakur2021beir} and MTEB \citep{muennighoff2022mteb}.

\medskip
\noindent
\textbf{NOMIC \citep{nussbaum2024nomic}:} Nomic is initialized from BERT and modified to address long-context retrieval. Nomic consists of 100 million parameters and supports a sequence length of up to 2048. Nomic's pretraining is divided into three stages. The first stage focuses on Masked Language Modeling to learn longer sequence representations. The subsequent stages are supervised and unsupervised, both employing the InfoNCE contrastive loss. This model was trained on a significantly larger dataset compared to the previous versions, encompassing both supervised and unsupervised phases. Nomic uses task-specific prefixes—such as search, search query document, classification, and clustering—to distinguish between the behaviors of each task. For the purpose of our work, we used the clustering prefix.

\medskip
\noindent
\textbf{Mistral\_E5 \citep{wang-etal-2024-improving-text}:} This is the first unidirectional decoder architecture we use for our work. The model is initialized from Mistral 7B \cite{jiang2023mistral} and consists of 7 billion parameters. The model takes as input the query \(q^+\) and the task\_definition and generates the instruction template:
\[
q^+_{inst} = \text{‘Instruct: \{task\_definition\} \textbackslash n Query: \{$q^+\}$’}
\]

Where the task definition is:

``Given a web search query, retrieve relevant search queries that paraphrase the query.''

The query and document instruction templates are then passed to the model. We obtain the query and document embeddings, \(h^+_q\) and \(h^+_d\), from the last layer of the model at the [EOS] position. The model was trained on a large corpus of both original and synthetic data. The synthetic data was generated using advanced large language models such as GPT-4.
\subsubsection{Additional Real-World 
Experiments on QQG}\label{sec:additional_real_world_experiments}
In this section, we evaluate the performance of \vht\ (Algorithm \ref{alg:vht}), \extc\ and the Sequential $k$-Means (SKM) baseline using the datasets and embedding models described in Sections \ref{Datasets} and \ref{OurLLMS} respectively. The models we utilize are typically trained using a loss function based on cosine similarity. Since they output unit-normalized embeddings, maximizing cosine similarity is equivalent to minimizing Euclidean distance. Hence, the reliance of \vht\ on Euclidean distance naturally leverages the models’ cosine-trained embeddings. As explained in Section \ref{sec:experiments}, we always initialize the algorithms by manually picking an example question from each group. For \vht, this skips the first phase where \vhc\ is applied until every group is represented. We then run our algorithms on the remaining questions in the dataset considered. 

To align with the Retrieval-Augmented Generation (RAG) literature~\citep{DBLP:conf/nips/LewisPPPKGKLYR020} and improve computational efficiency, it is sensible to consider the following modification of the \vht\ algorithm: instead of computing distances to the convex hulls $\hat{\mathcal{C}}_{i,t}$ of each set of expert queries \(\mathcal{Q}_{i,t}\), we compute the distance to the nearest query in \(\mathcal{Q}_{i,t}\). We refer to this version of \vht\ as “nearest-query \vht”. We compare the performance of \vht\ with three distances: the distance to the nearest query, the distance to Euclidean convex hulls, and the distance to spherical convex hulls. 

In Table ~\ref{regrettable}, we present the average cumulative regret for a range of thresholds across all three retriever models—E5, Nomic, and Mistral\_E5 on the QQG dataset. We observe that  changing the distance does not meaningfully affect the regret. This is expected in high dimension, as the query hulls $\hat{\mathcal{C}}_{i,_t}$ then typically occupy a negligible portion of the space, see discussion below Corollary \ref{cor:explicit_regret_bound}. We have made the same observation for the two other datasets. We also observe that the performance of each model is highly sensitive to the threshold, with optimal performance often concentrated around specific values (namely 0.85–0.90). This suggests the importance of careful threshold tuning for maximizing algorithm performance. Lastly, Mistral\_E5, owing to its recent development and high-dimensional latent space, demonstrates lower cumulative regret across all distance metrics for the QQG dataset. This effect is particularly pronounced at higher thresholds, indicating its robustness to the choice of metric and its superior capacity for generalization across different geometric interpretations of similarity.


\begin{table}[h!]
\centering
    
\caption{Average cumulative regret of \vht\  using different distance metrics after 6262 steps on QQG dataset.}
\label{regrettable}
\begin{tabular}{|c|c|c|c|c|c|c|c|c|c|c|}
\hline
\textbf{Threshold ($\tau$)} &  \textbf{0.10} & \textbf{0.20} & \textbf{0.40} & \textbf{0.60} & \textbf{0.70} & \textbf{0.80} & \textbf{0.85} & \textbf{0.90} & \textbf{0.95} & \textbf{1.00} \\
\hline
\multicolumn{11}{|c|}{\textbf{Nearest-query}} \\
\hline
E5 &    6722 &   6632   &  5750     &   3756   &  2479 &    1365   &   1132   &    \textbf{1062} &    1964    &   5822\\
\hline
Nomic   & 6790 &  6624 &  6709  &  5499  &  4098  &  2398  & 1677 &  \textbf{1239}  & 1442  & 5852  \\
\hline
Mistral\_E5   &  8192     &  6932 &  6401 &  4616 & 3157 &  1596 & 1196 &  \textbf{921} &  1269 &  3109 \\
\hline
\multicolumn{11}{|c|}{\textbf{Spherical hull}} \\
\hline
E5   &   7942   &  7582   &  6060 &  3398 &  2163 & 1422 & \textbf{ 1101} &  1192  & 1927 &  5734 \\
\hline
Nomic    & 7920 &  7936 & 7706 &  5410 & 3608 & 1996 & 1442 & \textbf{ 1140} & 1422 & 4928 \\
\hline
Mistral\_E5    &  7888 &  7735 &  6723 & 4137 & 2637 & 1414 & 1062  & \textbf{ 1002} &  1293 & 4620 \\
\hline
\multicolumn{11}{|c|}{\textbf{Euclidean hull}} \\
\hline
E5    & 7949 & 7628 & 6073 & 3424 & 2238 & 1358 & \textbf{ 1165} & 1197 &    1773 &  5016 \\
\hline
Nomic    &  7874 & 7916 & 7818 &  5652 & 3849 & 2178 & 1555 &      \textbf{ 1215} & 1445 &  6457 \\
\hline
Mistral\_E5  & 7898 & 7742 & 6749 & 4217 & 2705 & 1476 & 1074 & \textbf{ 836} & 1204 &  4048 \\
\hline

\end{tabular}

\end{table}

Next, we evaluate the Sequential $k$-Means (\texttt{SKM}) baseline described in Appendix \ref{app:synthetic_ghc_skm} and the tunable version of \extc\ described in Appendix \ref{app:synthetic_etc_ghc}, for each embedding model on the QQG dataset. The length of the exploration phase of \extc\ is determined by the threshold $$\hat{m}(t)=\frac{C\sigma^2}{\hat{\delta}^2_{\min}(t)}(d+2\log{T})$$ where we have chosen $\sigma=0.1$ to allow easier comparison with our findings on synthetic datasets (note that the value of $\sigma$ can be chosen arbitrarily as it amounts to scaling $C$ by a fixed constant). 

We report the regrets of both algorithms in Table \ref{table:baselines_regret}. We observe that \extc\ has two distinct behaviors depending on the value of $C$: if $C$ is sufficiently small, $\hat{m}(t)$ typically remains bounded by $1$ at each round $t$. As the algorithm is initialized with one example per group, this means that the expert is never called and the regret solely comes from the wrong guesses. If instead $C$ is sufficiently large,  $\hat{m}(t)$ typically remains larger than $1$ and \extc\ calls the expert up until another query for each group is observed. Given that on the QQG dataset, the number of groups $N=1103$ is non-negligible compared to size of the dataset $T=6262$, this results in \extc\ calling the expert at each round $t$ and high regret. This highlights that \extc\ is poorly suited to settings where $T$ is comparable to $N\log{N}$, as mentioned in Section \ref{sec:threshold}. The \texttt{SKM} baseline performs better than \extc\ for two out of three embedding models. Both algorithms are still significantly outperformed by properly tuned \vht\ using any of the distance metrics reported in Table \ref{regrettable}.

\begin{table}[h!]
\centering
\caption{Average cumulative regret of \extc\ and \texttt{SKM} baseline after 6262 steps on QQG dataset.}
\label{table:baselines_regret}
\begin{tabular}{|l|c|c|c|c|c|c|}
\hline
\textbf{Algorithm} & \multicolumn{5}{c|}{\textbf{\extc}} & \textbf{\texttt{SKM}} \\
\hline
\textbf{Threshold ($C$)} & \textbf{0.0} & \textbf{0.01} & \textbf{0.025} & \textbf{0.05} & \textbf{0.1} & \textbf{N/A} \\
\hline
E5 & \textbf{5533} & 5533 & 12524 & 12524 & 12524 & 3531 \\
\hline
Nomic & \textbf{5654} & 5654 & 5654 & 5654 & 12524 & 6204 \\
\hline
Mistral\_E5 & \textbf{3289} & 12524 & 12524 & 12524 & 12524 & 2442 \\
\hline
\end{tabular}
\end{table}

In Fig. \ref{fig:results_on_COMQA_and_CQADupStack}, we present the performance of our models on the ComQA and CQADupStack datasets using the nearest-query version of \vht. Nomic surprisingly outperforms Mistral\_E5 on both datasets, achieving a lower accumulated regret of 1,870 compared to 2,081 on ComQA, and 7,549 compared to 7,586 on CQADupStack. We also observe that Mistral\_E5 accumulates less regret than E5-large, which is an expected outcome given the design and capabilities of the two models.
 
Furthermore, by comparing the performance of the models between datasets, we can observe that the task becomes more challenging as the average number of questions per group decreases. Indeed, the lowest accumulated regret for QQG was 921 after handling around 6,200 questions, whereas it reached 1,870 (resp. 2,070) for ComQA (resp. CQADupStack) after handling 4,800 questions only. We also observe that the optimal threshold for QQG is 0.9, while it is 0.8 for ComQA. Among all datasets, CQADupStack is in principle the most challenging, with an average of 1.34 questions per group. This is reflected in the optimal threshold, which is significantly lower at 0.3. This suggests that the fewer the questions per group, the lower the confidence our model should have in its predictions. 

We note that even in the challenging datasets, ComQA and CQADupStack, the growth of the accumulated regret decreases over time. This indicates that the algorithm is learning effectively and consistently improving its performance over time. More specifically, in the CQADupStack dataset, the average regret per step decreases from about one-half after 3,000 steps to about one-third after 24,000 steps, indicating a clear improvement in performance over time even in the most challenging scenario. This trend is observed across all our experiments.

\begin{figure}[h!]
    \centering

    \begin{subfigure}{\textwidth}
        \centering
        \begin{subfigure}{0.49\textwidth}
            \includegraphics[width=\linewidth]{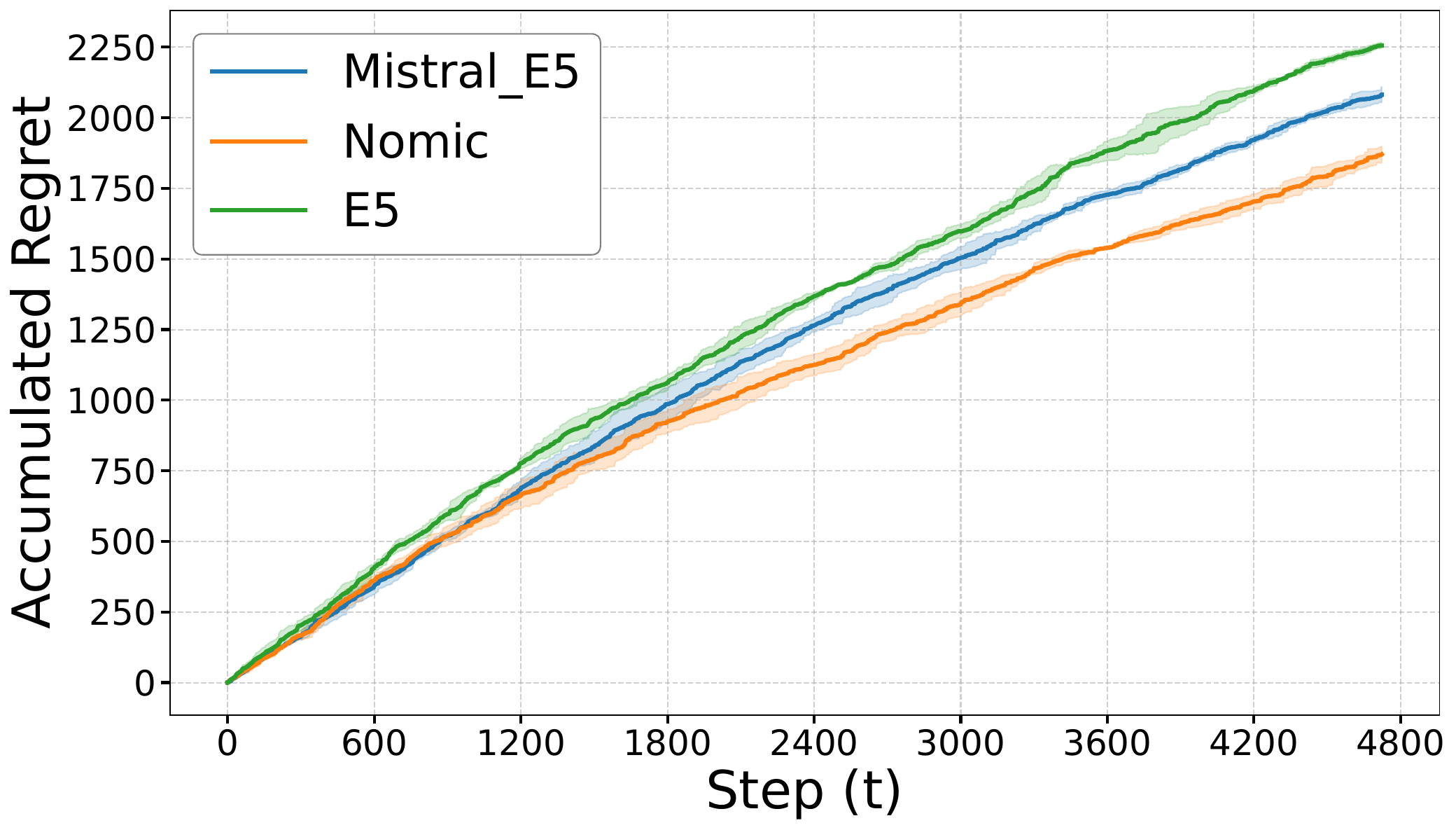}
            \caption{Regret vs. step, ComQA.}
            \label{fig:image1}
        \end{subfigure}\hfill
        \begin{subfigure}{0.49\textwidth}
            \includegraphics[width=\linewidth]{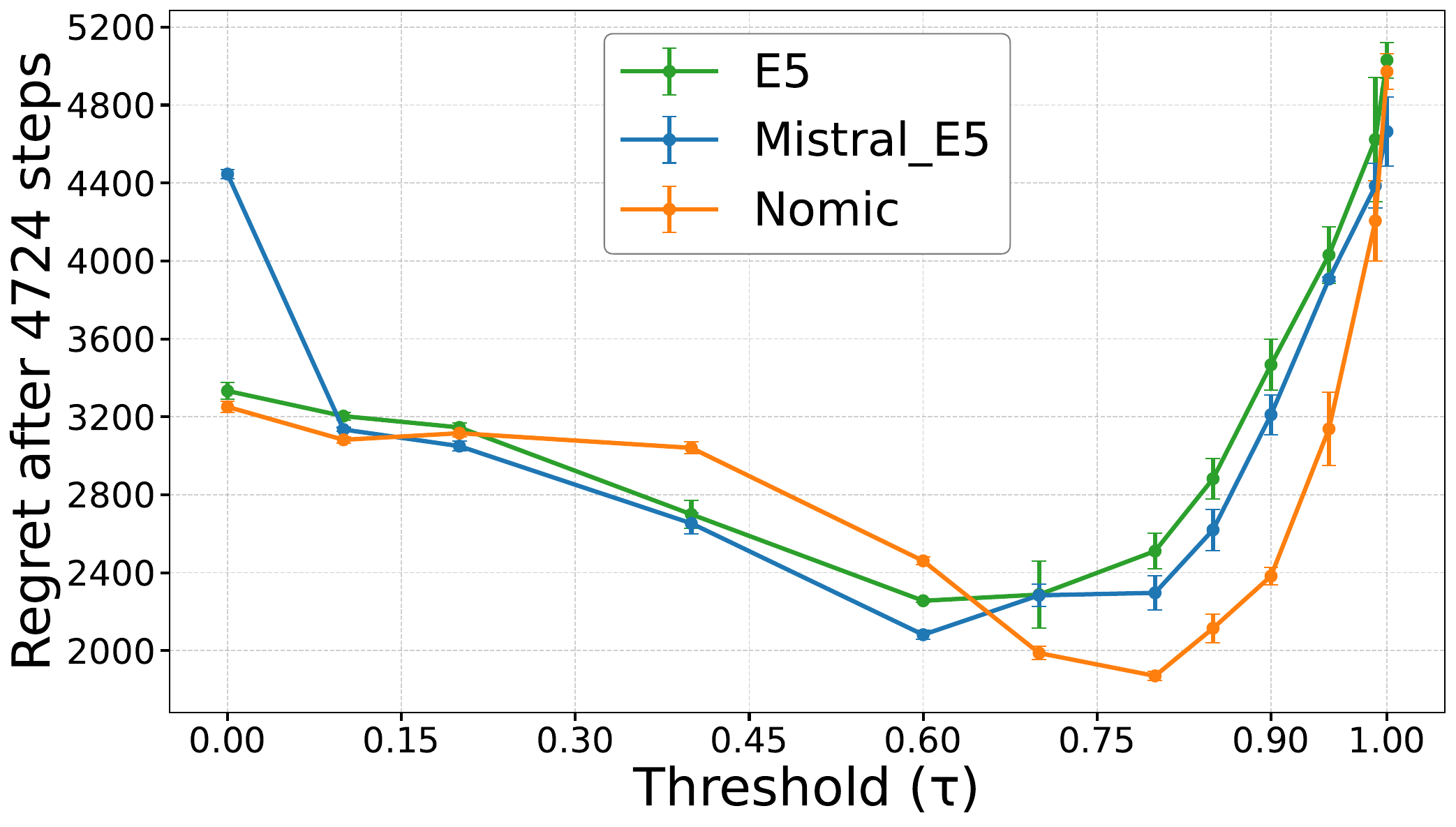}
            \caption{Regret vs. threshold ($T=4724$), ComQA.}
            \label{fig:image2}
        \end{subfigure}
    \end{subfigure}

    \vspace{0.5cm}

    \begin{subfigure}{\textwidth}
        \centering
        \begin{subfigure}{0.49\textwidth}
            \includegraphics[width=\linewidth]{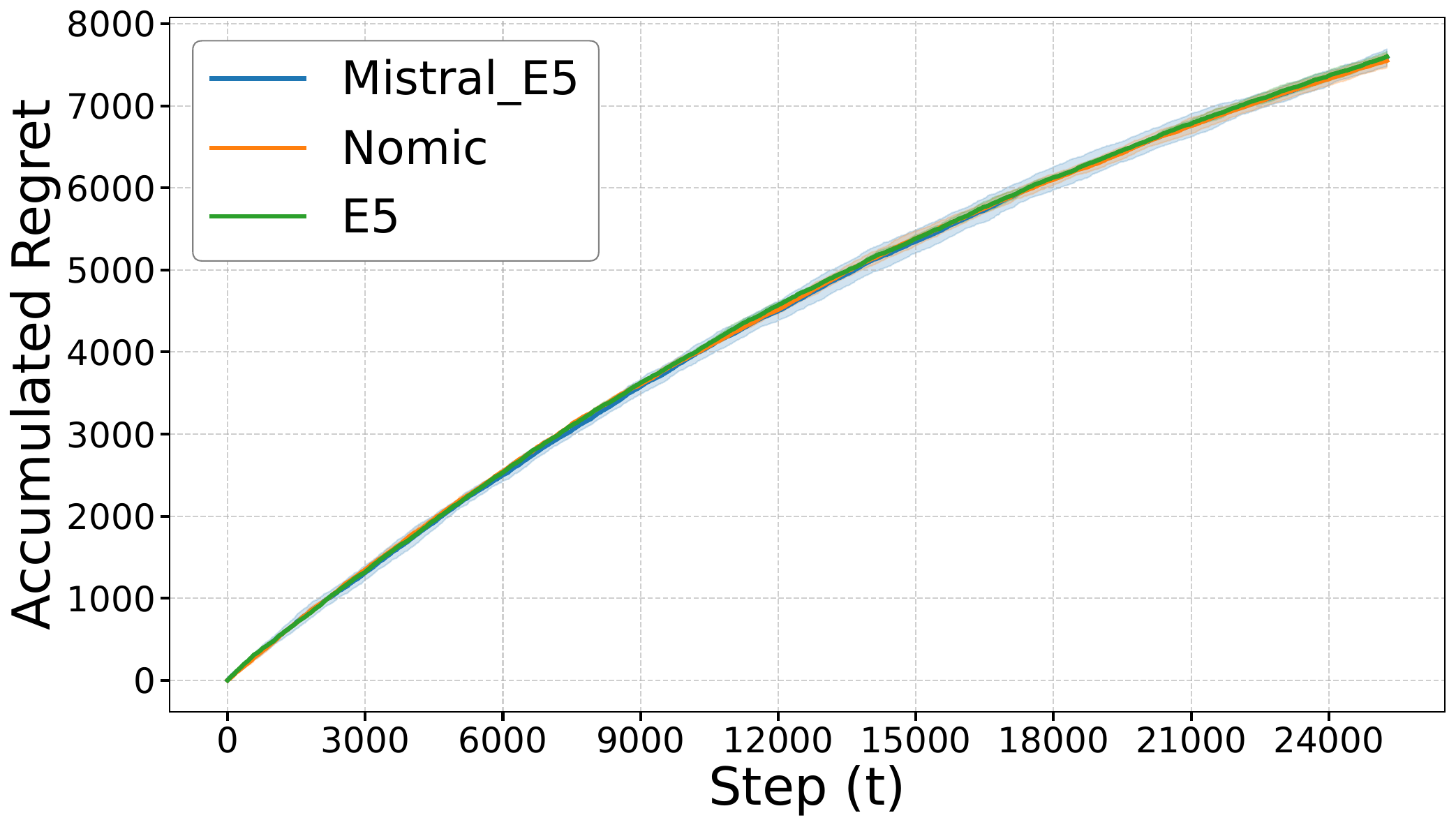}
            \caption{Regret vs. step, CQADupStack.}
            \label{fig:image3}
        \end{subfigure}\hfill
        \begin{subfigure}{0.49\textwidth}
            \includegraphics[width=\linewidth]{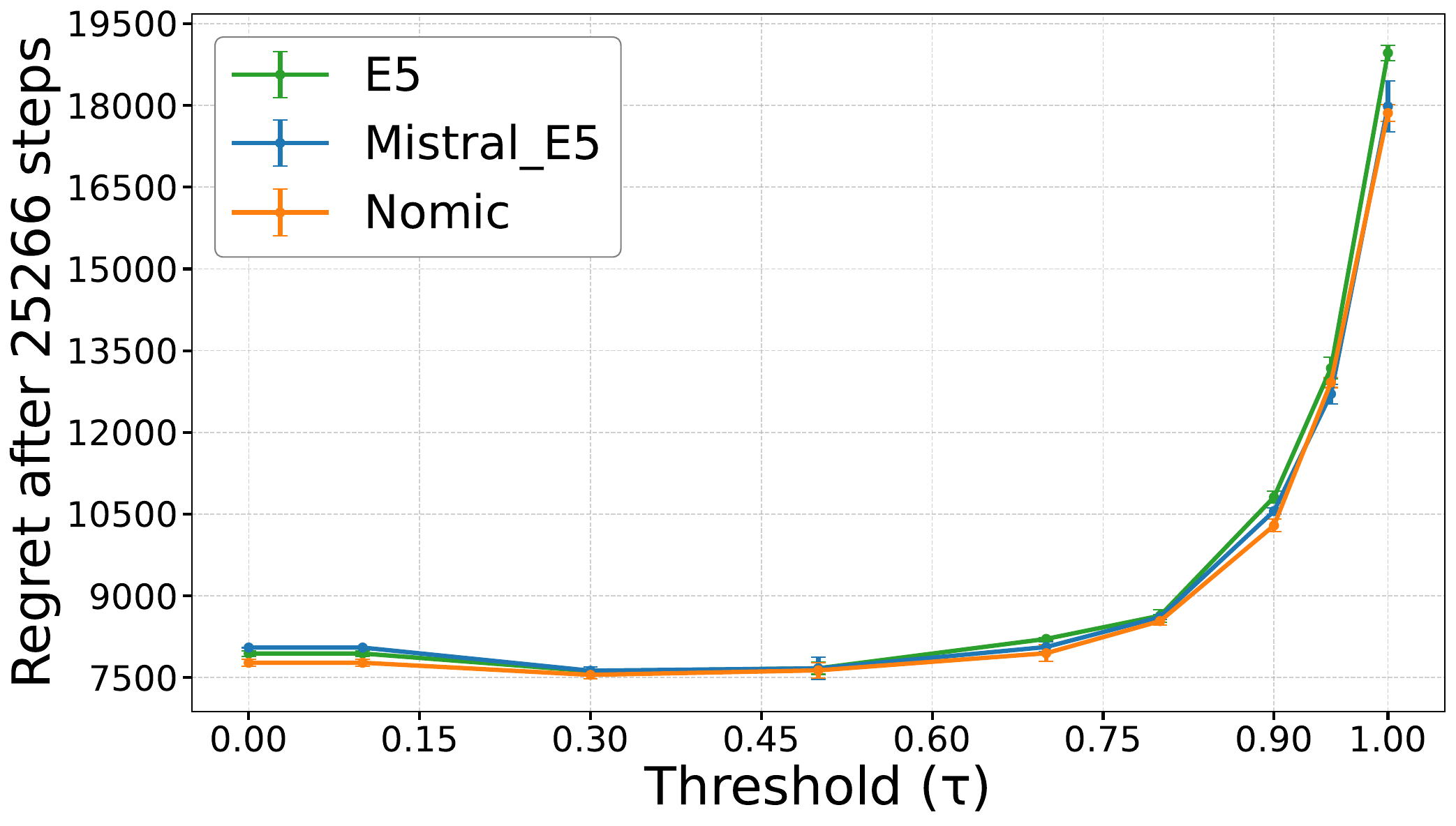}
            \caption{Regret vs. threshold ($T=25266$), CQADupStack.}
            \label{fig:image4}
        \end{subfigure}
    \end{subfigure}

    \caption{Comparison of \vht\ with different text embedding models on ComQA and CQADupStack datasets.}
    \label{fig:results_on_COMQA_and_CQADupStack}
\end{figure}

\subsubsection{Comparison of \vht\ with \extc\ and baselines}\label{sec:comparison_of_different_algorithm}

In this section, we extend the comparative analysis of Section \ref{sec:experiments} and compare the nearest-query version of \vht\ (see Section \ref{sec:additional_real_world_experiments}) with \extc\ for $C=0$ (see Section \ref{sec:etc_algo}), \texttt{AMP} and \texttt{SKM} \footnote{See Appendix \ref{app:synthetic_ghc_skm} for a description of these baselines.} on the QQG and ComQA datasets, using all three embedding models. As in Section \ref{sec:experiments}, we perform three independent runs, and only report the average cumulative regret of \vht\ and \texttt{AMP} for the best performing threshold among specific ranges, which are:
\begin{enumerate}[label=(\roman*)]
\item $S=\{0,0.10,0.20,0.40,0.60,0.70,0.80,0.85,0.90,0.95,0.99,1\}$ for \vht\
\item $S \cup \{0.92,0.94,0.96,0.98\}$ for \texttt{AMP} (because \texttt{AMP} exhibited better performance with very high thresholds)
\end{enumerate}
The results, which are reported in Fig. \ref{fig:results_on_QQG_and_comqa_all_algorithms}, provide additional insight into the relative difficulty of the QQG and ComQA datasets across different algorithms and embedding models. Overall, ComQA appears to be a more challenging dataset, as evidenced by consistently higher regret for all algorithms and all models when compared to QQG. This trend holds across both Mistral and Nomic models, indicating that the increased difficulty is dataset-intrinsic rather than model-specific. When comparing embedding models, Mistral\_E5 achieves the best overall performance on QQG, yielding lower accumulated regret across all algorithms. In contrast, Nomic performs best on the ComQA dataset, suggesting stronger robustness to the increased complexity of ComQA-style queries. In terms of algorithmic performance, the \texttt{AMP} baseline exhibits sublinear regret on QQG, while showing approximately linear regret  on ComQA, highlighting its difficulty in adapting to the harder setting. This indicates that \texttt{AMP} struggles to effectively leverage feedback under increased ambiguity and noise. In contrast, our proposed algorithm \vht\ consistently achieves the lowest regret across all datasets and embedding models. Moreover, the performance gap between \vht\ and competing baselines increases over time, particularly on ComQA, demonstrating superior robustness in challenging environments.
\begin{figure}[h!]
    \centering

    \begin{subfigure}{\textwidth}
        \centering
        \begin{subfigure}{0.49\textwidth}
            \includegraphics[width=\linewidth]{figures/All_algorithms/Mistral_with_legend.pdf}
            \caption{Regret vs. step, Mistral\_E5, QQG.}
            \label{fig:image1}
        \end{subfigure}\hfill
        \begin{subfigure}{0.49\textwidth}
            \includegraphics[width=\linewidth]{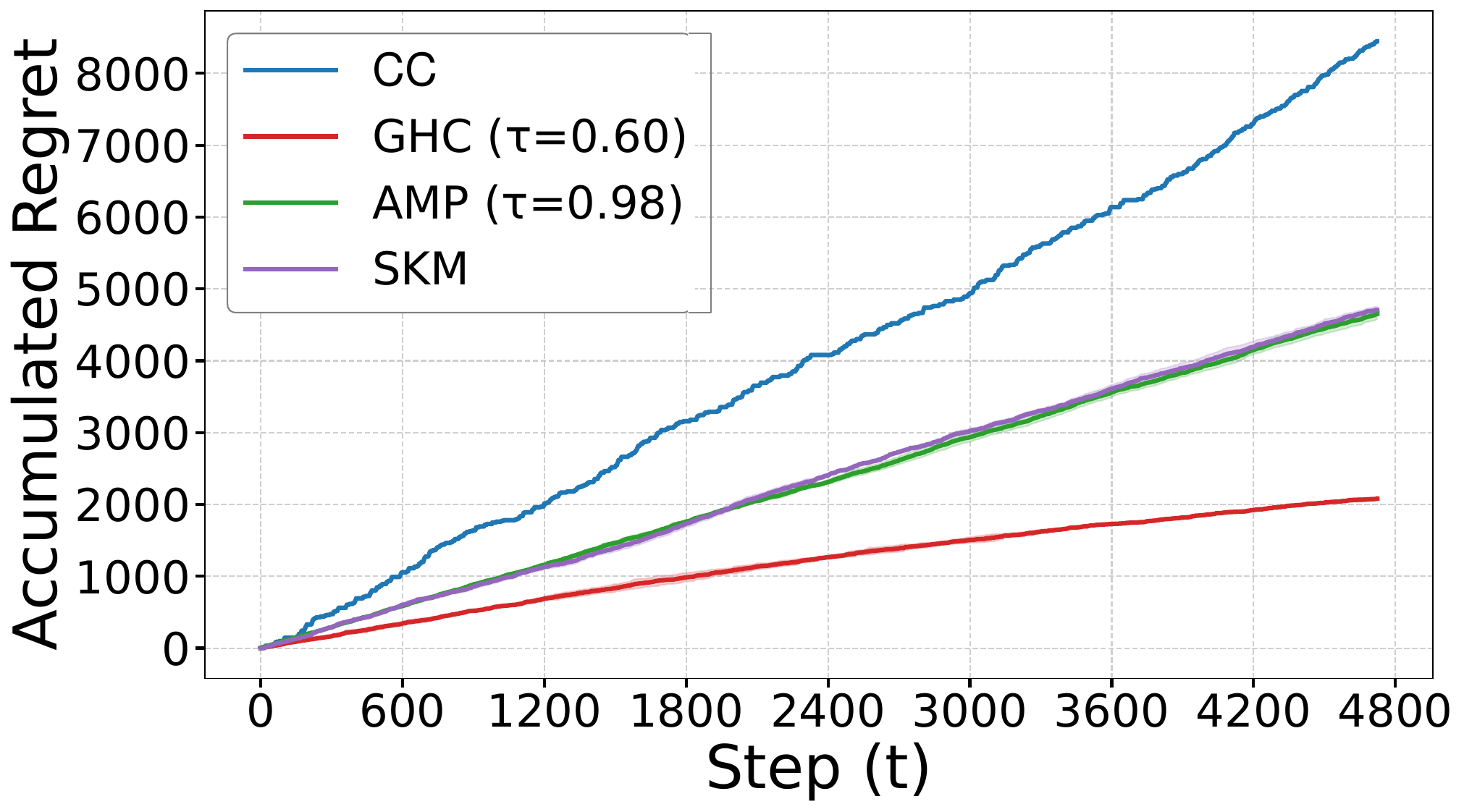}
            \caption{Regret vs. step, Mistral\_E5, ComQA.}
            \label{fig:image2}
        \end{subfigure}
    \end{subfigure}

    \vspace{0.5cm}

    \begin{subfigure}{\textwidth}
        \centering
        \begin{subfigure}{0.49\textwidth}
            \includegraphics[width=\linewidth]{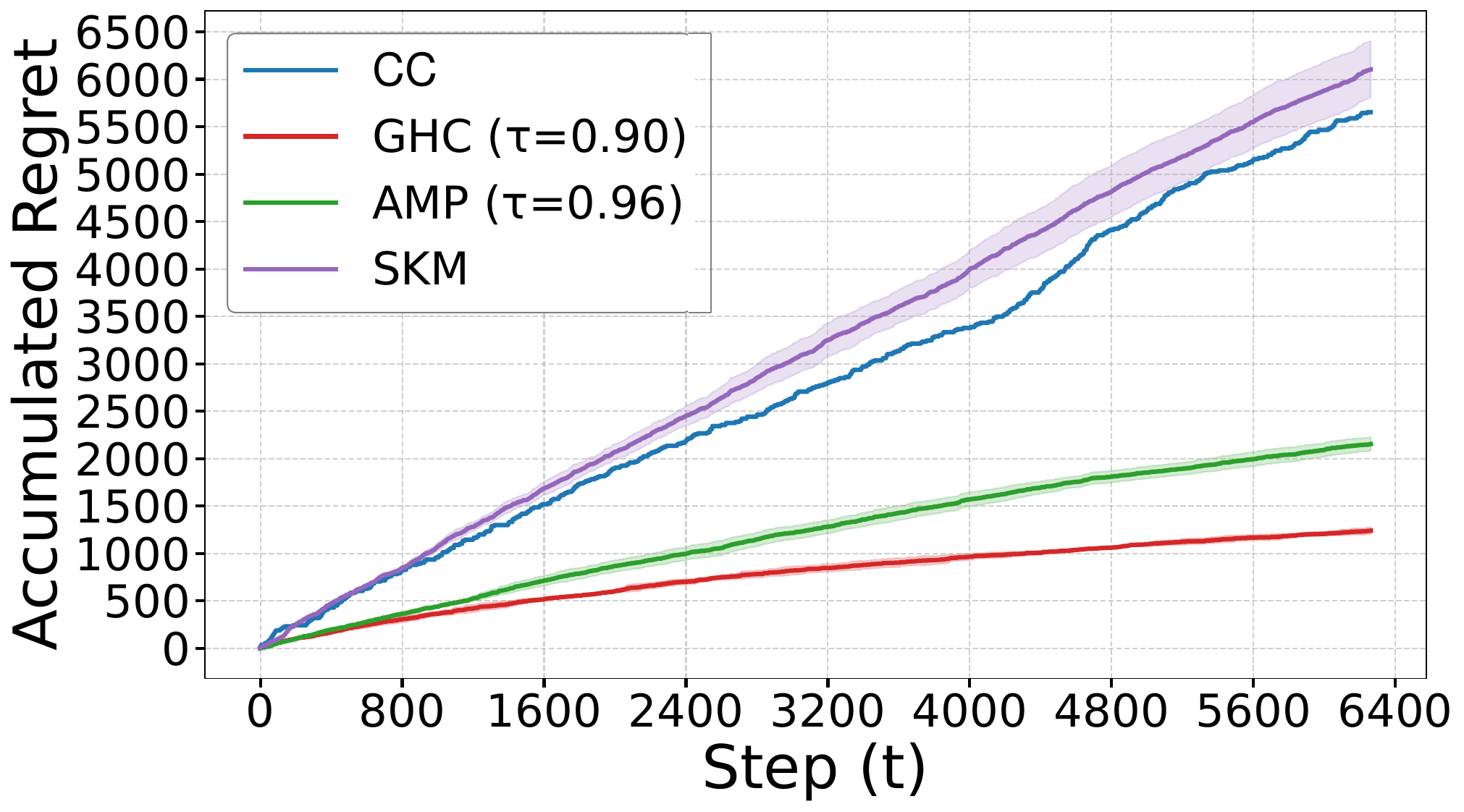}
            \caption{Regret vs. step, Nomic, QQG.}
            \label{fig:image3}
        \end{subfigure}\hfill
        \begin{subfigure}{0.49\textwidth}
            \includegraphics[width=\linewidth]{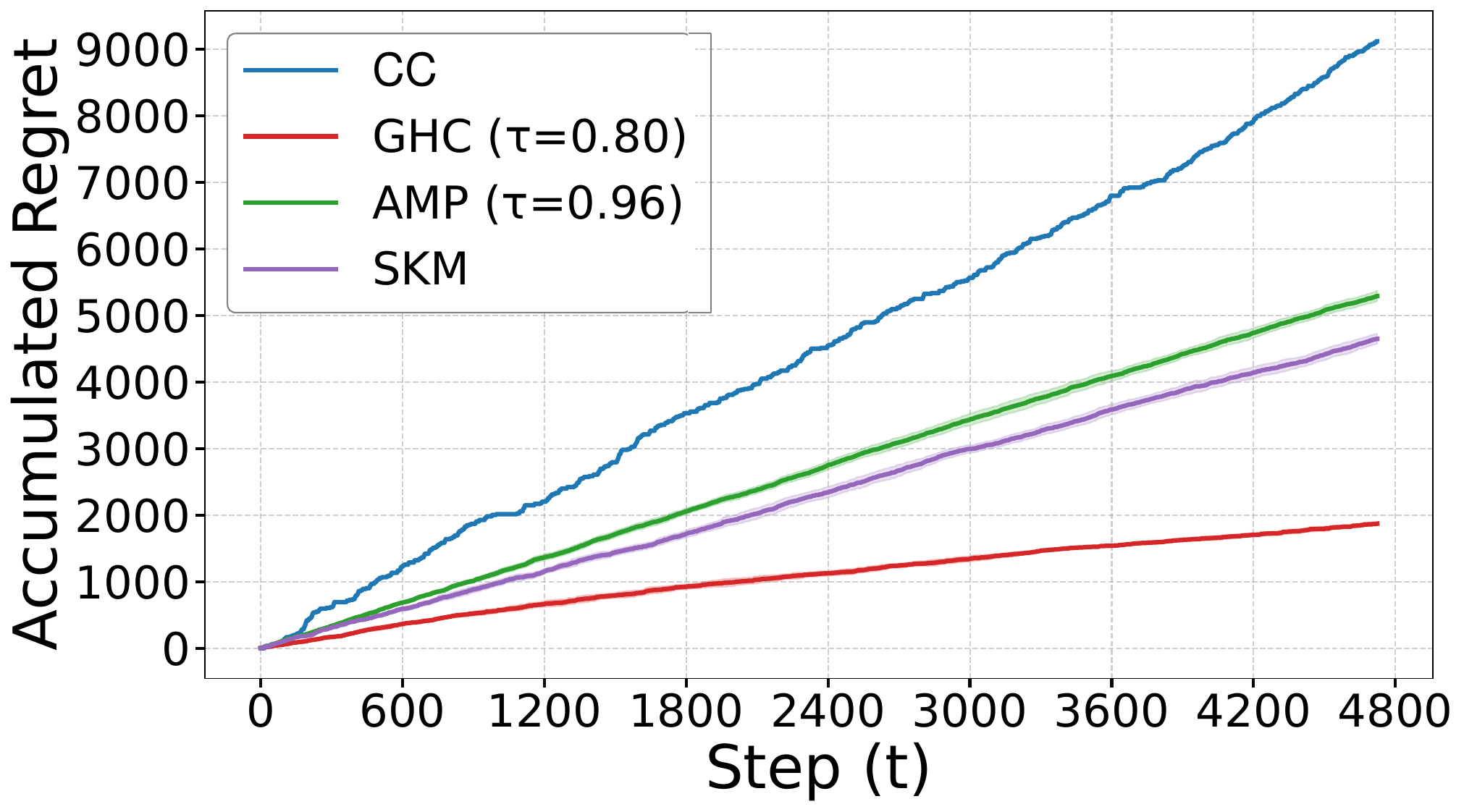}
            \caption{Regret vs. step, Nomic, ComQA.}
            \label{fig:image4}
        \end{subfigure}
    \end{subfigure}

    \vspace{0.5cm}

    \begin{subfigure}{\textwidth}
        \centering
        \begin{subfigure}{0.49\textwidth}
            \includegraphics[width=\linewidth]{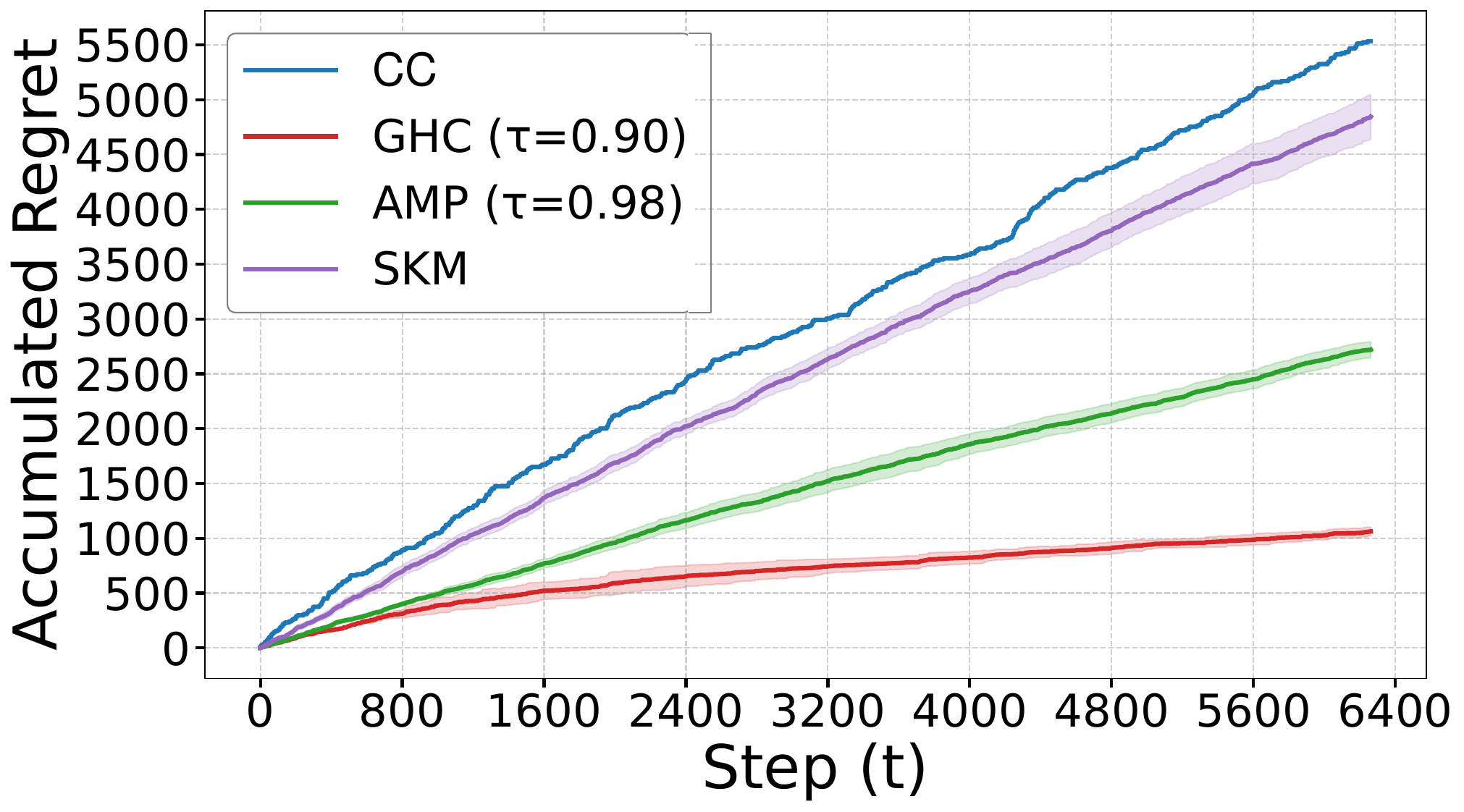}
            \caption{Regret vs. step, E5, QQG.}
            \label{fig:image3}
        \end{subfigure}\hfill
        \begin{subfigure}{0.49\textwidth}
            \includegraphics[width=\linewidth]{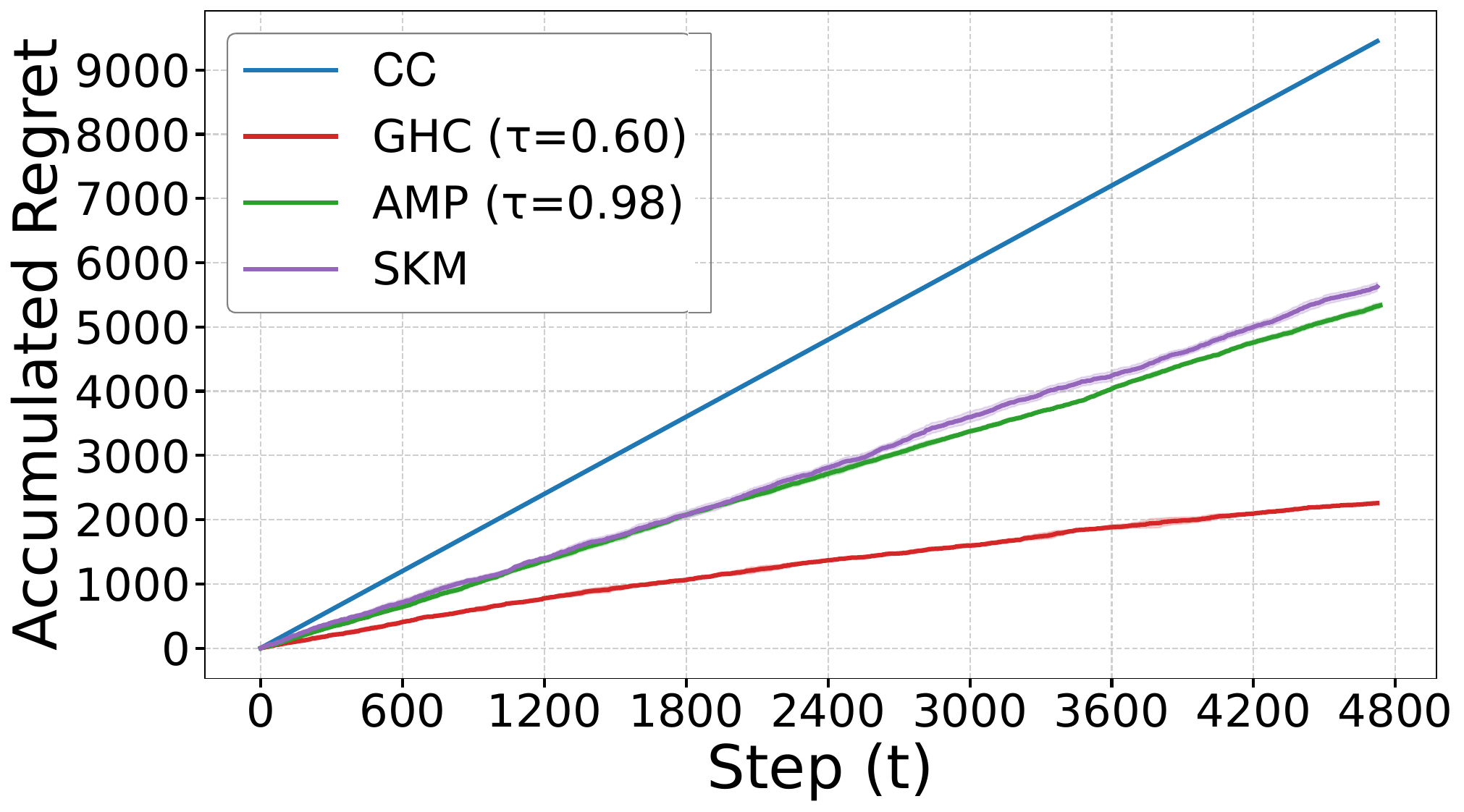}
            \caption{Regret vs. step, E5, ComQA.}
            \label{fig:image4}
        \end{subfigure}
    \end{subfigure}
    \caption{Comparison of \vht\ with baselines using different text embedding models on QQG and ComQA datasets.}
    \label{fig:results_on_QQG_and_comqa_all_algorithms}
\end{figure}
\subsection{Compute Resources for Experiments}\label{sec:compute_resources}
Throughout our experiments, the only component that required substantial computational resources was the encoding of our datasets using Mistral\_E5. As Mistral\_E5 is a text embedding model comprising 7 billion parameters, it necessitates the use of a high-memory GPU for inference, such as the NVIDIA H100 with 80 GB of memory. The encoding of QQG and ComQA takes approximately 20 minutes while the encoding of CQADupStack takes approximately 4 hours. Every other component of the experiments can be executed on a single workstation, provided it is capable of running continuously for up to two days to accommodate the more computationally intensive tasks.

\end{document}